\documentclass[12pt]{article}
\usepackage[utf8]{inputenc}

\usepackage{epsfig}
\usepackage{graphicx,epstopdf,epsfig}
\usepackage{subfigure}
\usepackage{amsmath,amsthm,verbatim,amssymb}
\usepackage{verbatim,color,amsmath}
\usepackage{lscape}
\usepackage{threeparttable}
\usepackage{algorithm, algorithmic}
\usepackage{float}
\usepackage{algorithmic}
\usepackage{mathrsfs}
\usepackage{booktabs}
\usepackage{threeparttable}
\usepackage{multirow}
\usepackage{hyperref}
\hypersetup{colorlinks=true,
            linkcolor=blue,
            anchorcolor=blue,
            citecolor=blue}
\usepackage{enumerate}
\usepackage[round]{natbib}

\makeatletter 
\date{}
\usepackage{booktabs}
\usepackage{tabularx}
\usepackage{array}

\newtheorem{theorem}{Theorem}[section]
\newtheorem{lemma}{Lemma}[section]
\newtheorem{lemma*}{Lemma}
\newtheorem{remark}{Remark}[section]
\newtheorem{remark*}{Remark}

\newtheorem{proposition}{Proposition}[section]
\newtheorem{definition}{Definition}[section]
\newtheorem{assumption}{Assumption}[section]

\DeclareMathOperator*{\argmin}{argmin}
\DeclareMathOperator*{\argmax}{argmax}

\def\m0{\mathbf{0}}

\def \cD {{\cal D}}

\def \cU {{\cal U}}

\def \cW {{\cal W}}

\begin{document}
\title{Schr\"odinger--F\"ollmer Actor--Critic: Diffusion Policy Improvement with Finite-Sample Analysis}

\author{
Yuling Jiao  
\thanks{
School of Artificial Intelligence,
and Hubei Key Laboratory of Computational Science, Wuhan University, Wuhan, 430072, China.
Email: yulingjiaomath@whu.edu.cn
}
\and
Lican Kang 
\thanks{
Institute for Math and AI,  Hubei Key Laboratory of Computational Science, and School of Artificial Intelligence, Wuhan University, Wuhan, 430072, China. 
Email: kanglican@whu.edu.cn
}
\and
Jerry Zhijian Yang
\thanks{
School of Mathematics and Statistics, and Hubei Key Laboratory of Computational Science, Wuhan University, Wuhan, 430072,  China.
Email: zjyang.math@whu.edu.cn
}
\and
Jincheng Ying
\thanks{
School of Mathematics and Statistics,
 Wuhan University, Wuhan, 430072, China.
Email: jinchengying@whu.edu.cn
}
}

\maketitle
\begin{abstract}
Diffusion policies represent multimodal action distributions, but an
advantage-weighted update does not specify how to sample from the resulting
target distribution. We propose
Schr\"odinger--F\"ollmer Actor--Critic (SFAC), an offline-to-online reinforcement
learning (RL) method for Kullback--Leibler (KL)-regularized policy improvement
through conditional diffusion. A minimax Bellman critic estimates the
advantage function, which defines an exponentially tilted target policy.
A Doob $h$-transform expresses this update as a correction to the reference
diffusion drift. We derive a posterior-mean representation of the correction
and estimate it using paired self-normalized importance sampling (SNIS).
Supervised regression on these drift targets updates the neural actor
without critic action gradients. In the small-update regime, the
KL-regularized update follows the natural policy-gradient direction, and the
Doob correction represents the same local change in the space of diffusion drifts.
Under suitable conditions, we derive finite-sample bounds that separate the
effects of critic estimation, neural drift regression, finite-sample SNIS,
diffusion discretization, and inherited actor error on expected average policy
suboptimality. Synthetic experiments assess the accuracy of approximation to prescribed
advantage-tilted targets and sensitivity to sampling budgets. On six
offline-to-online continuous-control tasks, a reference-anchored implementation
achieves higher final-window returns than those of its corresponding offline
initialization.
\end{abstract}
\noindent\textbf{Keywords:} Offline-to-online RL;
diffusion policies; Schr\"odinger--F\"ollmer sampling; Doob $h$-transform;
finite-sample analysis.
\par
\section{Introduction}
\label{sec:introduction}

Offline RL learns a policy from previously collected transitions without
further interaction with the environment. The accuracy of its value estimates
depends on the state--action coverage of these data
\citep{levine2020offline,kumar2020conservative,kostrikov2022offline}.
Offline-to-online (O2O) RL combines offline data or pretrained policies
with subsequent online learning
\citep{nair2020awac,lee2022balanced,nakamoto2023calql,
ball2023efficient,luo2023finetuning}.
This transition requires adapting to new experience while retaining useful
information from the offline data. In the early online stages, value
estimates remain influenced by offline coverage: favoring poorly covered
actions can amplify estimation error, whereas restrictive policy updates
can limit adaptation. Policy representation introduces a further challenge:
offline datasets may combine several behavior policies and contain multiple
plausible actions at the same state. Diffusion actors accommodate the
resulting multimodal conditional action distributions
\citep{wang2022diffusion,chen2022offline,hansen2023idql}.
They define policies through a generative process without requiring an
explicit conditional density. Policy improvement therefore requires
modifying this process to sample from the action distribution prescribed
by estimated value information.

To formulate this problem, we follow advantage-weighted and product-policy
approaches to policy improvement
\citep{peng2019advantage,nair2020awac,frans2025diffusion} and use
$A^\pi(s,a)$ to measure the quality of an action relative to the current policy.
For a reference policy $\pi$ and regularization parameter $\lambda>0$,
the improved policy at state $s$ is defined by
$$
\bar\pi(\cdot\mid s)
\in
\argmax_{\pi'(\cdot\mid s)}
\left\{
\mathbb E_{a\sim\pi'(\cdot\mid s)}[A^\pi(s,a)]
-\lambda\,\mathrm{KL}\!\left(
\pi'(\cdot\mid s)\,\middle\|\,\pi(\cdot\mid s)
\right)
\right\},
$$
whose solution is the exponential tilt
\begin{equation}
\label{eq:intro-policy-tilt}
\bar\pi(a\mid s)
 =
\frac{\pi(a\mid s)\exp{A^\pi(s,a)/\lambda}}
{\int_{\mathcal A}\pi(a'\mid s)\exp{A^\pi(s,a')/\lambda},\mathrm da'}.
\end{equation}
For an implicit diffusion actor, \eqref{eq:intro-policy-tilt} specifies the
target distribution but does not provide the diffusion dynamics needed to
generate it. Realizing this KL-regularized update therefore requires a
corresponding change in the diffusion drift. Existing generative-policy
methods incorporate value information through
critic action gradients, actor-parameter optimization, or reweighting of
sampled candidate actions
\citep{wang2022diffusion,hansen2023idql,
lu2023contrastive,park2025flow}.
A complementary approach is to estimate the drift implied by the prescribed
tilt from policy samples and pointwise advantage evaluations. Such a
construction would permit supervised actor fitting without differentiating
the critic with respect to actions. Its statistical accuracy, however,
depends on both the estimated advantage and the approximation of the target
diffusion. Errors from critic estimation, drift learning, and numerical
sampling can propagate through successive policy updates. An exact tilting
identity alone does not control these effects. This motivates a joint
construction and analysis that connects the prescribed target, the learned
diffusion drift, and the resulting policy performance.

To address these issues, we propose Schr\"odinger--F\"ollmer Actor--Critic
(SFAC), an offline-to-online method for KL-regularized policy improvement
through conditional diffusion. SFAC represents the current policy as the
terminal law of a conditional Schr\"odinger--F\"ollmer diffusion
\citep{follmer1985,follmer1986,follmer1988}.
Reweighting this terminal law induces a change of measure on path space,
which a Doob $h$-transform realizes through a drift correction
\citep{doob1984classical,baudoin2002conditioned}. Here $h$ is the positive
conditional expectation, under the reference diffusion, of the terminal
reweighting factor given the current diffusion state.
Applying this construction to the advantage tilt gives a target diffusion
whose terminal law is the prescribed policy update. The learned actor and
its numerical sampler approximate this exact representation.
Specifically, at iteration $k$, let $\pi_k$ denote the current practical policy and
$\widehat A_k$ the minimax Bellman critic's estimate of its advantage
$A^{\pi_k}$. The resulting plug-in target $\bar\pi_{k+1}$ satisfies
$\bar\pi_{k+1}(a\mid s)\propto
\pi_k(a\mid s)\exp\{\widehat A_k(s,a)/\lambda\}$.
Here $\propto$ denotes equality up to a state-dependent normalizing factor.
The drift correction is a time-scaled difference between the posterior
means under the advantage-tilted and reference distributions. Paired
self-normalized importance sampling (SNIS) estimates these means from
shared terminal-action samples. Adding the resulting Monte Carlo correction
to the current learned drift produces regression targets for the updated
neural actor. Euler--Maruyama simulation of this actor defines the practical
policy $\pi_{k+1}$, which approximates $\bar\pi_{k+1}$. The actor update
therefore uses supervised drift regression without critic action gradients.
In the small-update regime, the ideal exponential tilt follows the natural
policy-gradient direction, and the Doob correction represents the same
first-order change in the space of diffusion drifts.

We then analyze how the approximation errors affect policy performance.
Under suitable conditions, we derive finite-sample bounds for advantage
estimation under dependent transition data, actor regression, SNIS, and
time discretization. Combining these bounds with the inherited actor error
yields a bound on expected average policy suboptimality across iterations.
The analysis separates the contributions of the statistical and
computational budgets; it does not cover neural-network optimization or
replay-buffer management.
We complement this analysis with experiments on the sampling construction
and its use in online policy refinement. Controlled examples use two- and four-component
Gaussian mixtures to assess approximation to specified advantage-tilted
targets and sensitivity to the endpoint sample size $m$ and diffusion-step
count $T$. We also evaluate a reference-anchored implementation on six
continuous-control tasks from Datasets for Deep Data-Driven Reinforcement
Learning (D4RL) \citep{fu2020d4rl}.
Under the reported protocol for interaction, replay, and evaluation, its
final-window normalized return exceeds that of the corresponding offline
initialization on all six tasks.

\subsection{Contributions}
\label{sec:contributions}

Our contributions are as follows.
\begin{itemize}
    \item
    \textbf{A conditional-diffusion formulation of policy improvement.}
    We formulate KL-regularized policy improvement for diffusion actors through
    a conditional Schr\"odinger--F\"ollmer process. The exponential advantage
    tilt defines a terminal change of measure, and the corresponding Doob
    $h$-transform identifies the drift correction that realizes the improved
    policy. In the small-update regime, the ideal KL step follows the natural
    policy-gradient direction, while the Doob correction represents the same
    local change in the space of diffusion drifts.

    \item
    \textbf{Sample-based actor learning without critic action gradients.}
    The Doob correction is the difference between advantage-tilted and
    reference posterior means. We estimate this difference by paired
    self-normalized importance sampling using current-policy samples and
    pointwise advantage evaluations, then fit the next drift network by
    supervised regression. This update does not require critic action
    gradients.

    \item
    \textbf{Finite-sample guarantees.}
We bound actor estimation error in Wasserstein distance and reverse KL
divergence, separating the contributions of neural drift regression, finite-sample SNIS,
Euler--Maruyama discretization, and inherited reference-drift error. Combining
this result with critic estimation under dependent transition data yields a
bound on expected average policy suboptimality across policy iterations.
\item
\textbf{Empirical evidence.}
We evaluate our method for offline-to-online policy improvement. Synthetic experiments evaluate approximation to prescribed advantage-tilted targets and sensitivity to the endpoint-sample and diffusion-step budgets. On six D4RL continuous-control tasks, the reference-anchored implementation improves its final-window performance relative to the corresponding offline initialization.
\end{itemize}

\subsection{Related Work}
\label{sec:related-work}

\paragraph{Offline and offline-to-online RL.}
Offline RL addresses distribution shift through conservative action-value
estimation, behavior-regularized policy learning, or implicit policy
extraction \citep{kumar2020conservative,fujimoto2021minimalist,
kostrikov2022offline}. The transition to online learning also requires preserving useful
offline behavior while adapting to new experience. Advantage-weighted
actor--critic (AWAC) uses weighted maximum-likelihood actor updates,
calibrated Q-learning (Cal-QL) controls the scale of conservative action-value
estimates, and policy expansion (PEX) combines a frozen offline policy with
a learnable policy
\citep{nair2020awac,nakamoto2023calql,zhang2023policy}.
RL with prior data (RLPD) instead incorporates offline transitions directly
into online off-policy training, without requiring offline pretraining
\citep{ball2023efficient}.
Diffusion models can also augment the training distribution: synthetic
experience replay (SynthER) generates additional replay data, whereas
energy-guided diffusion sampling (EDIS) guides synthetic transitions toward
the current online distribution
\citep{lu2023synthetic,liu2024energy}.
These methods generate transitions; SFAC instead updates conditional
action distributions. Fitted policy iteration and minimax Bellman
methods relate estimation error to policy performance under approximation
and coverage conditions \citep{antos2008learning,xie2020q}.
Deep approximate policy iteration further combines neural approximation,
Bellman-residual estimation under dependent data, and error propagation
\citep{jiao2025deep}. Minimax off-policy evaluation separately studies the
statistical estimation of policy returns
\citep{uehara2020minimax,uehara2021finite}, while policy-finetuning theory
quantifies the role of reference-policy coverage in sample-efficient online
adaptation \citep{xie2021policyfinetuning}.
SFAC addresses the additional actor-learning and numerical-sampling errors
introduced by conditional diffusion policy updates.

\paragraph{Generative policies.}
Denoising and score-based diffusion models provide flexible distributional
representations
\citep{ho2020denoising,song2019generative,song2021scorebased}.
Diffuser and Decision Diffuser generate trajectories for planning, whereas
Diffusion Policy generates observation-conditioned action sequences
\citep{janner2022planning,ajay2023conditional,chi2023diffusion}.
For action-level policy improvement, Diffusion-QL combines denoising
regression with action-value maximization \citep{wang2022diffusion}.
Implicit diffusion Q-learning (IDQL) reweights candidates from a diffusion
behavior model and also considers online fine-tuning \citep{hansen2023idql}.
Contrastive energy prediction (CEP) learns intermediate energy guidance
for Q-guided policy optimization (QGPO), with population consistency under
unlimited model capacity and data \citep{lu2023contrastive}.
Diffusion actor--critic (DAC) expresses behavior-constrained policy
improvement as noise regression with a critic-action-gradient correction
\citep{fang2025diffusion}. Classifier-free guidance for RL (CFGRL) instead
trains an optimality-conditioned generative model and adjusts its guidance
strength at sampling time \citep{frans2025diffusion}.
Flow Q-Learning trains a one-step actor using a flow behavior model and
includes O2O evaluation \citep{park2025flow}; diffusion policy policy
optimization (DPPO) fine-tunes diffusion policies by applying policy
optimization to coupled denoising and environment dynamics
\citep{ren2024dppo}.
These methods differ in how action-value information enters the policy
update. SFAC uses paired SNIS to estimate posterior-mean drift corrections
and refits a conditional Schr\"odinger--F\"ollmer actor by supervised
regression. Its distinguishing feature is this drift-target construction,
rather than KL regularization or actor refitting alone.

\paragraph{Regularized updates and controlled diffusion.}
Relative-entropy policy search and KL-constrained policy optimization
motivate exponential reweighting relative to a reference distribution
\citep{peters2010relative,abdolmaleki2018maximum,peng2019advantage}.
Maximum-entropy RL promotes policy entropy, while regularized Bellman
theory provides a broader framework for penalty-based updates and their
error propagation \citep{haarnoja2018soft,geist2019regularized}.
On path space, F\"ollmer's entropy construction and the Doob $h$-transform
connect terminal reweighting to a change in diffusion drift
\citep{follmer1985,follmer1986,follmer1988,doob1984classical}.
Schr\"odinger--F\"ollmer samplers realize a target law at a finite terminal
time, without an ergodic limit; their nonasymptotic analyses allow
non-log-concave targets under suitable regularity conditions
\citep{huang2021schrodinger,jiao2021convergence}.
Diffusion Schr\"odinger bridges address the related problem of
entropy-regularized transport between endpoint distributions
\citep{debortoli2021diffusion}.
For pretrained diffusion models, entropy-regularized control yields
reward-based fine-tuning objectives \citep{uehara2024finetuning}, and
feedback-efficient methods combine reward learning, exploration, and
generative updates with regret guarantees \citep{uehara2024feedback}.
Recent theory establishes regularity preservation and convergence of
control-policy iteration \citep{han2025stochastic}, regression-based
statistical guarantees for diffusion fine-tuning \citep{mou2025regression},
and nonasymptotic guidance and distributional bounds for learned Doob
$h$-transforms \citep{chang2026doob}.
Finite-particle approximations can nevertheless bias the intended tilt,
even in tractable Gaussian examples \citep{dandapanthula2026tilting}.
An exact change-of-measure identity therefore does not establish the
accuracy of an estimated and discretized actor. SFAC complements these
results by propagating Bellman-critic, posterior-mean, drift-regression,
and discretization errors through successive policy updates. The resulting
guarantee concerns expected average suboptimality under regularity,
coverage, and stability conditions, not improvement of every fitted policy.
Table~\ref{tab:generative-policy-positioning} compares representative
update mechanisms. For a critic action-value function $Q(s,a)$,
$\nabla_a Q$ denotes its gradient with respect to the action $a$.
The columns $\nabla_a Q$, Refit, and O2O indicate the use of this gradient,
actor refitting after policy updates, and an explicit offline-to-online
study, respectively; $\checkmark$ and $\times$ denote yes and no.
The Refit column excludes initial behavior-model training.
IDQL considers both frozen and updated behavior models during online
fine-tuning; CFGRL can vary guidance strength without refitting its
conditional model. An O2O $\times$ does not rule out an extension to that
setting. The gradient column refers only to critic action gradients;
QGPO, for example, differentiates a learned energy model.
MC and EM denote Monte Carlo estimation and Euler--Maruyama discretization.

\begin{table}[H]
    \centering
  \caption{Comparison of representative generative-policy update mechanisms.}
    \label{tab:generative-policy-positioning}
    \scriptsize
    \setlength{\tabcolsep}{4pt}
    \renewcommand{\arraystretch}{1.12}
    \begin{tabularx}{\textwidth}{
        @{}
        l
        >{\raggedright\arraybackslash}p{3.2cm}
        ccc
        >{\raggedright\arraybackslash}X
        @{}
    }
        \toprule
        Method
        & Update object
        & $\nabla_a Q$
        & Refit
        & O2O
      & Characterization / analysis
        \\
        \midrule

        Diffusion-QL \citep{wang2022diffusion}
        & Diffusion actor parameters
        & $\checkmark$
        & $\checkmark$
        & $\times$
        & Denoising and action-value objective
        \\

        IDQL \citep{hansen2023idql}
        & Candidate reweighting
        & $\times$
        & Optional
        & $\checkmark$
        & Implicit-policy characterization
        \\

        CEP/QGPO \citep{lu2023contrastive}
        & Intermediate energy guidance
        & $\times$
        & $\times$
        & $\times$
        & Population guidance consistency
        \\

        DAC \citep{fang2025diffusion}
        & Q-guided noise predictor
        & $\checkmark$
        & $\checkmark$
        & $\times$
        & KL/noise-regression correspondence
        \\

        CFGRL \citep{frans2025diffusion}
        & Guided sampling distribution
        & $\times$
        & $\times$
        & $\times$
        & Product-policy improvement
        \\

        \textbf{SFAC (ours)}
        & Posterior-mean drift correction
        & $\times$
        & $\checkmark$
        & $\checkmark$
        & Critic, actor, Monte Carlo, and discretization errors in average performance
        \\
        \bottomrule
    \end{tabularx}

\end{table}

\subsection{Outline}\label{sec:po}
Section~\ref{sec:preliminaries} introduces the notation and reviews the relevant RL and
Schr\"odinger--F\"ollmer preliminaries.
Section~\ref{sec:method} develops advantage estimation with a minimax
Bellman critic and conditional Schr\"odinger--F\"ollmer policy improvement.
Section~\ref{sec:theory} analyzes advantage estimation, actor approximation,
and their propagation through policy iteration.
Section~\ref{sec:proof sketch} outlines the main proofs, and
Section~\ref{sec:experiment} presents the numerical experiments.
Section~\ref{sec:conclusion} concludes. Complete proofs and auxiliary
results appear in the appendix.

\section{Preliminaries}\label{sec:preliminaries}

This section introduces the notation, RL framework, and function classes
and reviews the Schr\"odinger--F\"ollmer representation underlying the
diffusion actor.

\subsection{Notation}
\label{sec:notations-organization}
Let $\mathbb N_0$ and $\mathbb N$ denote the nonnegative and positive
integers, respectively. For a vector $x=(x_1,\ldots,x_d)^\top$ and
$q\in[1,\infty)$, write
$
\|x\|_q=\left(\sum_{i=1}^d|x_i|^q\right)^{1/q}.
$
The quantity $\|x\|_0$ counts the nonzero entries of $x$.
For a subset $E$ of a Euclidean space, let $\mathcal B(E)$ be its Borel
$\sigma$-algebra and $\mathcal P(E)$ the set of probability measures on
$(E,\mathcal B(E))$. For a measurable function $f$ and a probability
measure $\mu$ on its domain, define
$
\|f\|_{L^q(\mu)}
=\left(\int |f(x)|^q\,\mathrm d\mu(x)\right)^{1/q}.
$ 
We use $\|f\|_\infty$ for the supremum norm on the stated domain and
$\|f\|_{L^\infty(\mu)}$ for the $\mu$-essential supremum norm.
The notation $\mathrm{Law}(X)$ denotes the distribution of a random
variable $X$, and $I_d$ is the $d\times d$ identity matrix. We write
$\gamma_{d_{\mathcal A}}=\mathcal N(0,I_{d_{\mathcal A}})$ for the standard
Gaussian measure on $\mathbb R^{d_{\mathcal A}}$, where
$d_{\mathcal A}$ is the action dimension specified below.
Finally, $a\lesssim b$ means that $a\le Cb$ for a constant $C>0$,
and $\widetilde O$ suppresses logarithmic factors.

\subsection{Markov Decision Process}
A discounted Markov decision process (MDP) is a tuple
 $(\mathcal X,\mathcal A,P,\mathcal R,\gamma)$,
where
$\mathcal X\subseteq\mathbb R^{d_{\mathcal X}}$ and
$\mathcal A\subseteq\mathbb R^{d_{\mathcal A}}$ are the state and action
spaces, respectively. We write $d:=d_{\mathcal X}+d_{\mathcal A}$.
For each $(s,a)\in\mathcal X\times\mathcal A$,
$P(\cdot\mid s,a)$ is a probability measure on
$(\mathcal X,\mathcal B(\mathcal X))$, and
$(s,a)\mapsto P(D\mid s,a)$ is measurable for every
$D\in\mathcal B(\mathcal X)$. The reward kernel
$\mathcal R(\cdot\mid s,a)$ specifies the conditional distribution of the
immediate reward, and $\gamma\in[0,1)$ is the discount factor.

Let $\pi(\cdot\mid s)$ be a stochastic policy, that is, a probability
kernel from states to actions, and let $P_1$ denote the initial-state
distribution. A trajectory under $\pi$ is generated by
\[
X_1\sim P_1,\qquad
A_i\sim\pi(\cdot\mid X_i),\qquad
R_i\sim\mathcal R(\cdot\mid X_i,A_i),\qquad
X_{i+1}\sim P(\cdot\mid X_i,A_i).
\]
The expected one-step reward and its policy average are
\[
r(s,a)
:=
\int_{\mathbb R} z\,\mathcal R(\mathrm dz\mid s,a),
\qquad
r^\pi(s)
:=
\int_{\mathcal A}r(s,a)\,\pi(\mathrm da\mid s).
\]
We write $\mathbb P^\pi$ for the trajectory law induced by
$P_1$, $\pi$, $P$, and $\mathcal R$, and $\mathbb E^\pi$ for expectation
under this law. The conditional expectations below refer to trajectories
starting from state $s$ or state--action pair $(s,a)$, with subsequent
actions sampled from $\pi$. The state-value function, action-value
function, and advantage function are
\begin{align*}
V^\pi(s)
&:=
\mathbb E^\pi
\left[
    \sum_{i=1}^{\infty}\gamma^{i-1}R_i
    \,\middle|\,
    X_1=s
\right],\\
Q^\pi(s,a)
&:=
\mathbb E^\pi
\left[
    \sum_{i=1}^{\infty}\gamma^{i-1}R_i
    \,\middle|\,
    X_1=s,\ A_1=a
\right],\\
A^\pi(s,a)
&:=
Q^\pi(s,a)-V^\pi(s).
\end{align*}
The policy-averaged state transition kernel is
\[
P^\pi(D\mid s)
:=
\int_{\mathcal A}
P(D\mid s,a)\,\pi(\mathrm da\mid s),
\qquad
D\in\mathcal B(\mathcal X).
\]
Its action on a measurable state-value function $V$ is
\[
P^\pi V(s)
:=
\int_{\mathcal A}\int_{\mathcal X}
V(s')\,P(\mathrm ds'\mid s,a)\,\pi(\mathrm da\mid s).
\]
The induced state--action transition operator is
\[
  P^\pi Q(s,a)
:=
\int_{\mathcal X}\int_{\mathcal A}
Q(s',a')\,\pi(\mathrm da'\mid s')\,P(\mathrm ds'\mid s,a).
\]
We assume that rewards lie in $[0,R_{\max}]$. Consequently, $V^\pi$ and
$Q^\pi$ take values in $[0,R_{\max}/(1-\gamma)]$.

For %
a state-value function $V$ and an action-value function $Q$,
we use $\mathcal T^\pi$ for both policy-specific Bellman operators, with
the domain determined by the argument:
\begin{equation}
\label{eq:policy-bellman-operators}
\begin{aligned}
\mathcal T^\pi V(s)&:=r^\pi(s)+\gamma P^\pi V(s),\\
\mathcal T^\pi Q(s,a)&:=r(s,a)+\gamma P^\pi Q(s,a).
\end{aligned}
\end{equation}
Conditioning on the first transition gives the Bellman equations
\begin{equation}
\label{eq:policy-bellman-fixed-points}
V^\pi=\mathcal T^\pi V^\pi,
\qquad
Q^\pi=\mathcal T^\pi Q^\pi.
\end{equation}
In particular,
\[
Q^\pi(s,a)=r(s,a)+\gamma
\int_{\mathcal X}\int_{\mathcal A}
Q^\pi(s',a')\,\pi(\mathrm da'\mid s')\,P(\mathrm ds'\mid s,a).
\]
The second identity in \eqref{eq:policy-bellman-fixed-points} underlies
the critic objective in Section~\ref{sec:mabo-advantage-learning}.
For a state--action distribution $\mu$, the distribution after one
transition under $\pi$ is
\begin{equation*}
(\mu P^\pi)(B)
:=
\int_{\mathcal X\times\mathcal A}
\int_{\mathcal X\times\mathcal A}
\mathbf 1_B(s',a')
P(\mathrm ds'\mid s,a)
\pi(\mathrm da'\mid s')
\mu(\mathrm ds,\mathrm da).
\end{equation*}
Here, $B\in\mathcal B(\mathcal X\times\mathcal A)$ and $\mathbf 1_B$
is its indicator function.
The normalized discounted state occupancy measure of $\pi$ is
\[
d^\pi(S)
:=
(1-\gamma)
\sum_{t=1}^{\infty}
\gamma^{t-1}
\mathbb P^\pi(X_t\in S),
\qquad
S\in\mathcal B(\mathcal X).
\]
We measure policy performance by
\[
J(\pi)
:=
\mathbb E^\pi_{X_1\sim P_1}
\left[
    \sum_{t=1}^{\infty}\gamma^{t-1}R_t
\right],
\]
and an optimal policy, when one exists, is denoted by
$
\pi^\star\in\argmax_{\pi}J(\pi).
$

We describe temporal dependence using $\beta$-mixing
\citep{yu1994rates,antos2008learning}.
\begin{definition}[$\beta$-mixing]
Let $\{W_t\}_{t\ge 1}$ be a stochastic process.
For $1\le i\le j\le\infty$, write $W^{i:j}=(W_i,\ldots,W_j)$,
interpreted as the infinite tail when $j=\infty$, and let
$\sigma(W^{i:j})$ be the $\sigma$-algebra it generates. The
$\beta$-mixing coefficient at lag $m\in\mathbb N_0$ is
\[
\beta_m = \sup_{t\ge 1} \mathbb{E}\left[\,
\sup_{B\in\sigma(W^{t+m:\infty})}
\left|
\mathbb P(B\mid W^{1:t})
-
\mathbb P(B)
\right|\,
\right].
\]
The process is \emph{$\beta$-mixing} if $\beta_m\to0$ as $m\to\infty$.
It mixes at an exponential rate with parameters
$\bar\beta,b,\eta>0$ if
$\beta_m\le\bar\beta\exp(-bm^\eta)$ for all $m\in\mathbb N_0$.
\end{definition}

For any positive integer block length $a_n$ satisfying $2a_n\le n$, set
$\xi_n:=\lfloor n/(2a_n)\rfloor$; the resulting $2\xi_n$ blocks are used in
the independent-block argument for critic estimation.

Admissible distributions are state--action laws reached from the
initial-state distribution under a sequence of policies
\citep{munos2003variable,antos2008learning,xie2020minimax}.
\begin{definition}[Admissible distributions]\label{def:admissible distribution}
A probability distribution
$\nu\in\mathcal P(\mathcal X\times\mathcal A)$ is admissible if there exist
$h\in\mathbb N$ and a sequence of policies from the reference, target, practical,
and comparator families considered in this paper,
$\boldsymbol\pi=(\pi_1,\ldots,\pi_h)$, such that
\[
\nu(B)
=
\mathbb P
\left(
    (X_h,A_h)\in B
    \,\middle|\,
    X_1\sim P_1,\;
    A_t\sim\pi_t(\cdot\mid X_t),\;
    X_{t+1}\sim P(\cdot\mid X_t,A_t)
\right)
\]
for every
$B\in\mathcal B(\mathcal X\times\mathcal A)$.
The class $\mathfrak M$ contains these distributions and their probability
mixtures. 
\end{definition}

\begin{definition}[KL divergence and $2$-Wasserstein distance]
Let $\mu$ and $\nu$ be probability measures on a measurable space. If
$\mu$ is absolutely continuous with respect to $\nu$, written
$\mu\ll\nu$, their KL divergence is
\[
\mathrm{KL}(\mu\|\nu)
:=
\int \log\!\left(\frac{\mathrm d\mu}{\mathrm d\nu}\right)\mathrm d\mu;
\]
otherwise, $\mathrm{KL}(\mu\|\nu):=+\infty$.
Let $\mathcal P_2(\mathbb R^d)$ denote the probability measures on
$\mathbb R^d$ with finite second moments. For
$\mu,\nu\in\mathcal P_2(\mathbb R^d)$, their $2$-Wasserstein distance is
\[
W_2(\mu,\nu)
:=
\left(
\inf_{\gamma\in\Pi(\mu,\nu)}
\int_{\mathbb R^d\times\mathbb R^d}
\|x-y\|_2^2\,\mathrm d\gamma(x,y)
\right)^{1/2},
\]
where $\Pi(\mu,\nu)$ denotes the set of couplings of $\mu$ and $\nu$.
\end{definition}
\subsection{Deep Neural Networks and H\"older Classes}

\paragraph{ReLU networks.}
Let $\mathcal F_{\mathcal D,\mathcal W,\mathcal S,B_{\rm net}}$ denote
the class of scalar-valued deep neural networks (DNNs) with rectified
linear unit (ReLU) activation, $\mathcal D\ge1$ hidden layers, hidden-layer
width at most $\mathcal W$, at most $\mathcal S$ nonzero weights and
biases, and output bound $B_{\rm net}>0$. Each network has the form
\[
f_\theta
=
\mathcal L_{\mathcal D}
\circ\sigma
\circ\mathcal L_{\mathcal D-1}
\circ\cdots
\circ\sigma
\circ\mathcal L_0,
\]
where the affine maps and activation function are
\[
\mathcal L_i(x)=W_i x+b_i,
\qquad
\sigma(x)=\max\{x,0\}
\]
and $\sigma$ is applied componentwise. Set $d_0=d_{\rm in}$ (input dimension),
$d_{\mathcal D+1}=d_{\rm out}$ (output dimension), and $1\le d_i\le\mathcal W$ for
$i=1,\ldots,\mathcal D$. Here
$W_i\in\mathbb R^{d_{i+1}\times d_i}$ and
$b_i\in\mathbb R^{d_{i+1}}$, and $\theta$ collects all weights and biases.
The class constraints are
\[
\sum_{i=0}^{\mathcal D}
\left(
    \|W_i\|_0+\|b_i\|_0
\right)\le\mathcal S,
\qquad
\|f_\theta\|_\infty\le B_{\rm net},
\qquad \|\theta\|_\infty\le B_{\rm par},
\]
where $\|W_i\|_0$ counts the nonzero entries of $W_i$.
The fixed output bound $B_{\rm net}$ is distinct from the parameter
bound $B_{\rm par}\ge1$. In the rate results, $B_{\rm par}$ and
$\mathcal W$ grow at most polynomially with the relevant training sample
size and are large enough to include the approximating networks.
The total number of weights and biases satisfies
$
\sum_{i=0}^{\mathcal D}d_{i+1}(d_i+1)
\lesssim
\mathcal W^2\mathcal D.
$
 
\paragraph{H\"older classes.}
For a smoothness parameter $\zeta>0$, write
\[
\zeta=r_\zeta+\alpha,
\qquad
r_\zeta\in\mathbb N_0,
\qquad
\alpha\in(0,1].
\]
For a multi-index $\iota=(\iota_1,\ldots,\iota_d)\in\mathbb N_0^d$,
write $|\iota|=\sum_{j=1}^d\iota_j$ and
$D^\iota f=\partial^{|\iota|}f/
(\partial x_1^{\iota_1}\cdots\partial x_d^{\iota_d})$, with $D^0f=f$.
For $f:\mathbb R^d\to\mathbb R$ with continuous partial derivatives
up to order $r_\zeta$, define
\[
\|f\|_{\mathcal H^\zeta(\mathbb R^d)}
:=
\max_{|\iota|\le r_\zeta}
\|D^\iota f\|_{L^\infty(\mathbb R^d)}
+
\max_{|\iota|=r_\zeta}
\sup_{\substack{z,z'\in\mathbb R^d\\z\ne z'}}
\frac{
    |D^\iota f(z)-D^\iota f(z')|
}{
    \|z-z'\|_2^\alpha
}.
\]
The H\"older ball of radius $B_{\rm H}>0$ is
\[
\mathcal H^\zeta(\mathbb R^d,B_{\rm H})
:=
\left\{
f:\mathbb R^d\to\mathbb R:
\|f\|_{\mathcal H^\zeta(\mathbb R^d)}
\le B_{\rm H}
\right\}.
\]
In Section~\ref{sec:method}, the population classes 
$\mathcal U_1, 
\mathcal U_2, \mathcal U_3$ are H\"older classes, and 
$\mathcal G_1, \mathcal G_2,\mathcal G_3$ are their ReLU-network approximation
classes.

\subsection{Schr\"odinger--F\"ollmer Process and Doob 
$h$-Transforms}
\label{sec:prelim-sf-process}

The Schr\"odinger--F\"ollmer (SF) process
\citep{follmer1985,follmer1986,follmer1988} provides a finite-time diffusion
representation of a target distribution. Let $\mu$ be a probability
measure on $\mathbb R^{d_{\mathcal A}}$ that is absolutely continuous
with respect to $\gamma_{d_{\mathcal A}}$, and define its density ratio by
\[
f(x)
:=
\frac{\mathrm d\mu}
     {\mathrm d\gamma_{d_{\mathcal A}}}(x).
\]
For $t\geq 0$, let $Q_t$ denote the Gaussian heat semigroup
\begin{equation*}
    Q_t f(x)
    :=
    \mathbb E_{Z\sim\mathcal N(0, I_{d_{\mathcal{A}}})}
    \left[
        f\left(x+\sqrt{t}\,Z\right)
    \right].
\end{equation*}
The associated F\"ollmer drift is defined by
\begin{equation*}
    b_f(x,t)
    :=
    \nabla_x\log Q_{1-t}f(x),
    \qquad
    t\in[0,1).
\end{equation*}
Here $\nabla_x$ denotes the gradient with respect to $x$.
The SF process satisfies the stochastic differential
equation (SDE)
\begin{equation}
\label{eq:sf-sde-preliminary}
    \mathrm dX_t
    =
    b_f(X_t,t)\,\mathrm dt+\mathrm dB_t,
    \qquad
    X_0=0,
    \qquad
    t\in[0,1),
\end{equation}
where $\{B_t\}_{t\in[0,1]}$ is a standard $d_{\mathcal{A}}$-dimensional Brownian
motion. Under suitable regularity conditions on $f$, the process extends
to $t=1$ and has the target terminal law 
$
    X_1\sim\mu.
$
Thus, the process transports the point mass $\delta_0$ at the origin to
$\mu$ over a finite time horizon.
The drift also admits a conditional-mean representation. 
Conditional on $X_1=x$, the process is a Brownian bridge
from $0$ to $x$, with marginal distribution
\begin{equation*}
    X_t\mid X_1=x
    \sim
    \mathcal N\left(
        t x,\,
        t(1-t) I_{d_{\mathcal{A}}}
    \right).
\end{equation*}
Whenever the conditional first moment is finite, the drift satisfies,
for $0<t<1$,
\begin{equation*}
    b_f(x,t)
    =
    \frac{
        \mathbb E[X_1 \mid X_t=x]-x
    }{
        1-t
    }.
\end{equation*}
This conditional-mean representation enables sample-based drift estimation
from terminal actions.

\paragraph{Doob $h$-transform of the SF process.}
Let $\mathcal{R}:\mathbb R^{d_{\mathcal A}}\to\mathbb R$ be a terminal reward,
let $\lambda>0$, and write $g=f\exp(\mathcal{R}/\lambda)$.  Define the normalizing constant and the
exponentially tilted target by
\[
 C:=\int_{\mathbb R^{d_{\mathcal A}}}
       \exp \left(\frac{\mathcal{R}(x)}{\lambda}\right)\mu(\mathrm d x)
   =Q_1g(0)\in(0,\infty),
 \qquad
 \mu_R(\mathrm d x):=
 \frac{\exp(\mathcal{R} (x)/\lambda)}{C}\,\mu(\mathrm dx).
\]
Let $\mathbb P$ be the path law of the SF process in
\eqref{eq:sf-sde-preliminary}. Its Doob $h$-transform
\citep{doob1984classical,baudoin2002conditioned}  is determined by
\[
 h(x,t)
 :=\mathbb E_{\mathbb P}
       \left[\left.\exp \left(\frac{\mathcal{R}(X_1)}{\lambda}\right)
       \right|X_t=x\right]
 =\frac{Q_{1-t}g(x)}{Q_{1-t}f(x)},
 \qquad 0\le t<1.
\]
In particular, $h(0,0)=C$, and $h(x,1)=\exp(\mathcal{R} (x)/\lambda)$.
Define the transformed path law by
\[
 \frac{\mathrm d\mathbb P^R}{\mathrm d\mathbb P}
 =\frac{\exp(\mathcal{R}(X_1)/\lambda)}{C}.
\]
With respect to the natural filtration $\mathcal F_t$ of the process $X$, the density process of the transformed law is
$h(X_t,t)/C
=\mathbb E_{\mathbb P}[\exp(\mathcal{R}(X_1)/\lambda)/C\mid\mathcal F_t]$.
This is a positive martingale, and the Doob change of measure gives the
following SDE for the coordinate process $Y$ under $\mathbb P^{\mathcal{R}}$:
\[
 \mathrm dY_t
 =\left[b_f(Y_t,t)+\nabla_y\log h(Y_t,t)\right]\mathrm dt
   +\mathrm dB_t^{\mathcal{R}},
 \qquad Y_0=0,\qquad 0\le t<1,
\]
where $B^{\mathcal{R}}$ is a standard Brownian motion under $\mathbb P^{\mathcal{R}}$.
Indeed, the transformed drift satisfies
\[
 b_f(y,t)+\nabla_y\log h(y,t)
 =\nabla_y\log Q_{1-t}(g/C)(y),
 \qquad
 \frac{\mathrm d\mu_R}{\mathrm d\gamma_{d_{\mathcal A}}}=\frac{g}{C},
\]
so $Y$ is itself the SF process associated with $\mu_R$. Moreover, for
every bounded measurable test function $\psi$,
\[
 \mathbb E_{\mathbb P^R}[\psi(Y_1)]
 =\frac{1}{C}\mathbb E_{\mathbb P}
       [\psi(X_1)\exp(\mathcal{R}(X_1)/\lambda)]
 =\int\psi(a)\,\mu_R(\mathrm da).
\]
Thus the transformed process retains the deterministic initial value
and has the desired terminal distribution:
\[
 Y_0=0,
 \qquad
 Y_1\sim\mu_R,
 \qquad
 \mu_R(\mathrm dx)
 =\frac{\exp(\mathcal{R}(x)/\lambda)}{C}\,\mu(\mathrm dx).
\]

For conditional generation, the target depends on an additional variable
$s$. Given a family of conditional distributions
$\{\mu(\cdot\mid s)\}_{s\in\mathcal X}$ satisfying
$\mu(\cdot\mid s)\ll\gamma_{d_{\mathcal A}}$ for each $s$, define
\[
    f_s(x)
    :=
    \frac{\mathrm d\mu(\cdot\mid s)}{\mathrm d \gamma_{d_{\mathcal{A}}}}(x),
    \qquad
    b_s(x,t)
    :=
    \nabla_x\log Q_{1-t}f_s(x).
\]
For each fixed $s$, this conditional process has terminal law
$\mu(\cdot\mid s)$ under the corresponding regularity conditions.
The same Doob transform applies statewise with the terminal reward function $\mathcal{R}(s,x)$ and
normalizer $C(s)=\int\exp(\mathcal{R}(s,x)/\lambda)\mu(\mathrm d x \mid s)$,
giving the terminal law
$\mu_R(\mathrm d x \mid s)=C(s)^{-1}\exp(\mathcal{R} (s,x )/\lambda)
\mu(\mathrm d x \mid s)$.
In SFAC, $s$ is the state, and the terminal law is the conditional action
distribution. Section~\ref{sec:method} uses this representation to
construct the policy update.

\section{Method}
\label{sec:method}
SFAC alternates between advantage estimation and conditional diffusion
policy improvement. Section~\ref{sec:mabo-advantage-learning} develops
the minimax Bellman critic, and Section~\ref{sec:conditional-sfs-method}
constructs the updated diffusion actor. 
The population sequence
$\{\pi_k^*\}_{k=0}^K$ uses exact advantage functions and policy updates,
whereas the practical sequence $\{\pi_k\}_{k=0}^K$ uses estimated
advantages, a fitted drift network, and numerical sampling.

We first define the population policy-improvement sequence
\begin{equation*}
    \pi_0^*
    \longrightarrow
    A^{\pi_0^*}
    \longrightarrow
    \pi_1^*
    \longrightarrow
    A^{\pi_1^*}
    \longrightarrow
    \cdots
    \longrightarrow
    \pi_{K-1}^*
    \longrightarrow
    A^{\pi_{K-1}^*}
    \longrightarrow
    \pi_K^* .
\end{equation*}
For a fixed state $s$, the population policy update solves the optimization problem
\begin{equation*}
\pi_{k+1}^*(\cdot\mid s)\in\argmax_{\pi(\cdot\mid s)}
\left\{
\mathbb E_{a\sim\pi(\cdot\mid s)}[A^{\pi_k^*}(s,a)]
-\lambda\,\mathrm{KL}\!\left(\pi(\cdot\mid s)\,\middle\|\,\pi_k^*(\cdot\mid s)\right)
\right\},
\end{equation*}
 whose optimizer is the improved policy
\begin{align}\label{eq:ideal_policy_improvement}
\pi_{k+1}^*(a\mid s)
=\frac{\pi_k^*(a\mid s)\exp\{A^{\pi_k^*}(s,a)/\lambda\}}{C},
\end{align}
where $\lambda>0$ controls the deviation from
$\pi_k^*(\cdot\mid s)$ and
$
A^{\pi_{k}^*}(s,a)
=
Q^{\pi_{k}^*}(s,a)
-
V^{\pi_{k}^*}(s)
$
is the advantage function associated with $\pi_{k}^*$.  By the policy-improvement result of \citet{frans2025diffusion},
$J(\pi_{k+1}^*) \ge J(\pi_k^*)$, ensuring monotonic improvement of the ideal policy sequence.

In practice, each iteration collects online data under the current policy and
estimates its advantage function. Starting from the same initial policy as the
population sequence, $\pi_0=\pi_0^*$, the practical sequence is
\begin{equation*}
    \pi_0
    \longrightarrow
    \widehat A_0
    \longrightarrow
    \pi_1
    \longrightarrow
    \widehat A_1
    \longrightarrow
    \cdots
    \longrightarrow
    \pi_{K-1}
    \longrightarrow
    \widehat A_{K-1}
    \longrightarrow
    \pi_K .
\end{equation*}
More precisely, at iteration $k$, the minimax Bellman critic estimates the action-value
function of the current policy $\pi_k$ from the available transition data,
and we denote the resulting estimate by $\widehat Q_k$.
We obtain $\widehat A_k$ by subtracting a Monte Carlo estimate of the
state-value function, computed by averaging $\widehat Q_k(s,a)$ over actions
sampled from $\pi_k(\cdot\mid s)$. The plug-in target policy is
\begin{equation}
\label{eq:empirical-target-policy}
\bar\pi_{k+1}(a\mid s)
=
\frac{
    \pi_k(a\mid s)
    \exp\left(
        \widehat A_k(s,a)/\lambda
    \right)
}{
 C
}.
\end{equation}
The normalizing constant $C$  in \eqref{eq:empirical-target-policy} need not
have a closed form, and direct samples from
$\bar\pi_{k+1}(\cdot\mid s)$ are not assumed to be available. SFAC represents
this target through a conditional SF process, estimates its drift
correction by paired SNIS, and fits an updated drift network.
Euler--Maruyama simulation of the fitted network defines the practical policy
$\pi_{k+1}(\cdot\mid s)$.
The update is summarized by
\begin{equation*}
\bigl(
    \pi_k,\mathcal D_k
\bigr)
\xrightarrow{\text{Advantage Function Learning}}
\widehat A_k
\xrightarrow{\eqref{eq:empirical-target-policy}}
\bar\pi_{k+1}
\xrightarrow{\mbox{Algorithm}~\ref{alg:conditional-sf-policy-improvement}}
\pi_{k+1}.
\end{equation*}
Algorithm~\ref{alg1} summarizes SFAC.
\begin{algorithm}[H]
\caption{Schr\"odinger--F\"ollmer Actor--Critic}
\label{alg1}
\begin{algorithmic}[1]
\STATE
\textbf{Input:}
initial policy $\pi_0$,
initial transition
data $\cD_0$, regularization
parameter $\lambda$, endpoint sample size $m$, number of diffusion steps $T$, and number
of policy iterations $K$.

\FOR{$k=0,1,\ldots,K-1$}

\STATE
Compute the advantage estimate $\widehat A_k$ using
\eqref{eq:em-estimated-advantage}.
\STATE
Fit the updated actor and define $\pi_{k+1}$ using
Algorithm~\ref{alg:conditional-sf-policy-improvement}; collect new
transitions to update $\cD_{k+1}$.
\ENDFOR

\STATE
\textbf{Output:}
$\pi_K$.

\end{algorithmic}
\end{algorithm}

\subsection{Advantage Function Learning}
\label{sec:mabo-advantage-learning}

We estimate the advantage function by learning the action-value function of the
current policy through a minimax Bellman objective. 
The critic targets the fixed point $Q^{\pi_k}$ in
\eqref{eq:policy-bellman-fixed-points} by controlling the Bellman residual
$Q-\mathcal T^{\pi_k}Q$.
Let $\mu_k$ denote the state--action sampling distribution at iteration
$k$. For a candidate action-value function $Q \in \cU_1$ and an auxiliary
function $O \in \cU_2$, consider the population minimax objective
\begin{equation}
\label{eq:population-mabo-objective}
\begin{aligned}
       \min_{Q\in\mathcal U_1}
    \max_{O\in\mathcal U_2}
    \mathcal L_k^{\mathrm{crit}}(Q,O).
\end{aligned}
\end{equation}
The population loss is
\begin{align*}
    \mathcal L_k^{\mathrm{crit}}(Q,O) :=
    \mathbb E_{(S,A)\sim\mu_k}
    \Big[
        \bigl(
            Q(S,A)-\mathcal T^{\pi_k}Q(S,A)
        \bigr)^2 -
        \bigl(
            O(S,A)-\mathcal T^{\pi_k}Q(S,A)
         \bigr)^2
    \Big],
\end{align*}
where $\mathcal T^{\pi_k}$ is the state--action Bellman operator in
\eqref{eq:policy-bellman-operators}, evaluated at the current policy $\pi_k$.
Under the corresponding realizability and completeness conditions,
$Q^{\pi_k}$ is a population minimizer of
\eqref{eq:population-mabo-objective}.

Let
$   \mathcal D_k
    =
    \left\{
        (s_i,a_i,r_i,s_i')
    \right\}_{i=1}^{n}
$ 
denote the transition data available at iteration $k$.
For the finite-sample analysis, we model $\mathcal D_k$ as a length-$n$
transition sequence with state--action marginal $\mu_k$ and dependence
controlled by the $\beta$-mixing condition in
Section~\ref{sec:theory of advantage learning}. The analysis does not
separately quantify the effects of replay-buffer management or neural
critic optimization.
For each next state $s_i'$, sample
$
    a_i'
    \sim
    \pi_k(\cdot\mid s_i'),
$ 
and define the temporal-difference target
 $
    y_i 
    :=
    r_i+\gamma Q(s_i',a_i')
 $ for the  candidate $Q$. 
We estimate $Q^{\pi_k}$ by solving the empirical minimax problem
\begin{equation*}
    \widehat Q_k
    \in
    \argmin_{Q\in\mathcal G_1}
    \max_{O\in\mathcal G_2}
    \widehat{\mathcal L}_k^{\mathrm{crit}}(Q,O),
\end{equation*}
where
\begin{equation*}
\begin{aligned}
    \widehat{\mathcal L}_k^{\mathrm{crit}}(Q,O)
    :=
    \frac{1}{n}
    \sum_{i=1}^{n}
    \Big[
        \bigl(
            Q(s_i,a_i)-y_i 
        \bigr)^2-
        \bigl(
            O(s_i,a_i)-y_i 
        \bigr)^2
    \Big].
\end{aligned}
\end{equation*}
For a fixed $Q$, the auxiliary function $O$ approximates the conditional
mean of the one-step target, $\mathcal T^{\pi_k}Q$.
Because both squared losses use the same target $y_i$, their conditional
variance terms cancel in expectation. This yields the population
objective above without requiring two independent next-state samples
for the same state--action pair, although finite-sample fluctuations remain.

Given $\widehat Q_k$, we estimate the state-value function using
$N_{\mathrm V}$ independent and identically distributed (i.i.d.) action
samples from the current policy:
\begin{equation*}
    \widehat V_k(s)
    :=
    \frac{1}{ N_{\mathrm{V}}}
    \sum_{\ell=1}^{ N_{\mathrm{V}}}
    \widehat Q_k
    \left(
        s,a^{(\ell)}
    \right),
    \qquad
    a^{(\ell)}
    \overset{\mathrm{i.i.d.}}{\sim}
    \pi_k(\cdot\mid s).
\end{equation*}
The resulting advantage estimator is
\begin{equation}
\label{eq:em-estimated-advantage}
    \widehat A_k(s,a)
    :=
    \widehat Q_k(s,a)-\widehat V_k(s).
\end{equation}
Substituting \eqref{eq:em-estimated-advantage} into
\eqref{eq:empirical-target-policy} gives the plug-in target
$\bar\pi_{k+1}$. Section~\ref{sec:conditional-sfs-method} constructs a
learned diffusion sampler that approximates this target.

\subsection{Conditional Schr\"odinger--F\"ollmer Policy Improvement}
\label{sec:conditional-sfs-method}
Fix a state $s$ and represent $\pi_k(\cdot\mid s)$ as the terminal law
of a conditional SF process. The density ratio
\begin{align*} 
    \frac{
    \mathrm d\bar\pi_{k+1}(\cdot\mid s)
}{
    \mathrm d\pi_k(\cdot\mid s)
}(a)
=
\frac{
    \exp\left(\widehat A_k(s,a)/\lambda\right)
}{
   C
},
\end{align*}
defines a terminal change of measure, where $C$ is the
normalizing constant in \eqref{eq:empirical-target-policy}.
The associated Doob $h$-transform has terminal law
$\bar\pi_{k+1}(\cdot\mid s)$. We derive the drift correction, estimate
it by paired SNIS, and fit a neural actor to the resulting
regression targets. The fitted actor is then simulated by the
Euler--Maruyama scheme.

\paragraph{Reference diffusion.}
We use $\pi_k(\cdot\mid s)$ as the reference action distribution. Suppose
that $\pi_k(\cdot\mid s)$ is absolutely continuous with respect to the
standard Gaussian measure $\gamma_{d_{\mathcal A}}$, and define
\begin{equation*}
    f_{k,s}(a)
    :=
    \frac{\mathrm d\pi_k(\cdot\mid s)}{\mathrm d \gamma_{d_{\mathcal{A}}}}(a),
    \qquad a\in\mathbb R^{d_{\mathcal{A}}}.
\end{equation*}
The reference diffusion satisfies \eqref{eq:sf-sde-preliminary} with drift $b_k (Y_t,t;s)$.
As stated in Section~\ref{sec:prelim-sf-process}, the exact reference drift is
$b_k(y,t;s)=\nabla_y\log Q_{1-t}f_{k,s}(y)$,  
and 
$Y_1\sim\pi_k(\cdot\mid s)$.
The same process admits the representation
\begin{equation}\label{eq:reference-sf}
\begin{aligned}
\mathrm dY_t
&=
\left[
\frac{Y_t}{t}
+\nabla_y\log p_t(Y_t\mid s)
\right]\mathrm dt+\mathrm dB_t,
\qquad t\in(0,1),
\\
&Y_0=0,
\qquad
Y_1\sim\pi_k(\cdot\mid s).
\end{aligned}
\end{equation}
Here, for $t\in(0,1)$,
\[
p_t(y\mid s):=
\int
(2\pi t(1-t))^{-d_{\mathcal{A}}/2}
\exp\!\left(
-\frac{\| y-ty_1\|^2}{2t(1-t)}
\right)
\pi_k(\mathrm dy_1\mid s)
\]
is the density of
$tY_1+\sqrt{t(1-t)}Z$, where
$Y_1\sim\pi_k(\cdot\mid s)$ and $Z\sim \mathcal N(0, I_{d_{\mathcal{A}}})$ are independent.
For later use, define
 $
 \omega_k(s,a)
:=
\exp\left(\widehat A_k(s,a)/\lambda\right)$.
Equation~\eqref{eq:empirical-target-policy} then gives
\begin{equation*}
\frac{
    \mathrm d\bar\pi_{k+1}(\cdot\mid s)
}{
    \mathrm d\pi_k(\cdot\mid s)
}(a)
=
\frac{\omega_k(s,a)}{C}.
\end{equation*}

\paragraph{Advantage-guided Doob $h$-transform.}
Let $\mathbb P_k^s$ denote the path measure of the reference process.
Reweighting this path measure by
$\omega_k(s,Y_1)/C$ yields a path measure whose terminal law is
$\bar\pi_{k+1}(\cdot\mid s)$. Define
\begin{equation*}
h_k(y,t;s)
=
\mathbb E_{\mathbb P_k^s}
\left[
    \omega_k(s,Y_1)
    \mid
    Y_t=y
\right].
\end{equation*}
Since $\pi_k(\cdot\mid s)\ll\gamma_{d_{\mathcal A}}$,
the target density ratio with respect to the Gaussian measure is
\begin{equation*}
    \bar f_{k+1,s}(a)
    :=
    \frac{
        \rm d\bar\pi_{k+1}(\cdot\mid s)
    }{
        \rm d \gamma_{d_{\mathcal{A}}}
    }(a)= \frac{
        \exp\left(
            \widehat A_k(s,a)/\lambda
        \right)
     f_{k,s}(a)}{C}
   ,
    \qquad
    a\in\mathbb R^{d_{\mathcal{A}}}.
\end{equation*}
As described in Section \ref{sec:prelim-sf-process}, the Doob $h$-transformed process $\{Z_t\}_{t\in[0,1]}$ has terminal law
$\bar\pi_{k+1}(\cdot\mid s)$ and satisfies
\begin{align} \label{eq:doobs-sde}
    \mathrm dZ_t
    =
   \bar b_{k+1,s}(Z_t,t)\mathrm dt+\mathrm dB_t,
    \qquad
    Z_0=0,
    \qquad
    t\in[0,1],
\end{align}
where
\begin{align}
\label{eq:doob-drift-density-ratio}
    \bar b_{k+1,s}(y,t)
    =
    \nabla_y
    \log Q_{1-t}\bar f_{k+1,s}(y)
    =b_k (y,t;s)+\nabla_y\log h_k(y,t;s).
\end{align}

\paragraph{Drift correction and actor regression.}

The overbar distinguishes the target drift $\bar b_{k+1,s}$ from the SF
drift $b_{k+1,s}$ of the practical policy $\pi_{k+1}$ used as the next
reference. These drifts need not coincide.
We express the drift correction in terms of posterior means of the
terminal action. 
For the state $s$, 
conditional on $Y_1=a$, the reference
bridge satisfies, for $0<t<1$,
$
Y_t\mid Y_1=a
\sim
\mathcal N
\left(
    ta,\,
    t(1-t)I_{d_{\mathcal A}}
\right).
$
Combining the bridge likelihood with the advantage weight
$\omega_k(s,a)$ gives the unnormalized weight
\begin{equation*}
    w_k(a;y,t,s)
    :=
    \exp\left(
        \frac{\widehat A_k(s,a)}{\lambda}
        -
        \frac{\| y-ta\|_2^2}{2t(1-t)}
    \right).
\end{equation*}
The advantage-tilted posterior mean is
\begin{equation*}
M_k^A(s,y,t)
:=
\frac{
    \mathbb E_{A\sim\pi_k(\cdot\mid s)}
    \left[
        A w_k(A;y,t,s)
    \right]
}{
    \mathbb E_{A\sim\pi_k(\cdot\mid s)}
    \left[
        w_k(A;y,t,s)
    \right]
}.
\end{equation*}
The transformed drift is therefore
\begin{equation*}
\bar b_{k+1,s}(y,t)
=
\frac{M_k^A(s,y,t)-y}{1-t}.
\end{equation*}
Similarly, the reference posterior mean is
\[
 M_k^0(s,y,t):=
 \frac{\mathbb E_{A\sim\pi_k(\cdot\mid s)}[A\exp\{-\|y-tA\|_2^2/[2t(1-t)]\}]}
 {\mathbb E_{A\sim\pi_k(\cdot\mid s)}[\exp\{-\|y-tA\|_2^2/[2t(1-t)]\}]}.
\]
Subtracting the reference drift gives
\begin{align*}
\Delta b_k=\nabla_y\log h_k(y,t;s)
=\bar b_{k+1,s}(y,t)-b_k(y,t;s)
=\frac{M_k^A(s,y,t)-M_k^0(s,y,t)}{1-t}.
\end{align*}
For a fixed $(s,y,t)$, draw a common set of endpoints
$z_1,\ldots,z_m$ independently from $\pi_k(\cdot\mid s)$ and define
the normalized reference and advantage-tilted weights by
\begin{align*}
 w_i^0(s,y,t)&:=\frac{\exp\{-\frac{\|y-tz_i\|_2^2}{2t(1-t)}\}}
 {\sum_{j=1}^m\exp\{-\frac{\|y-tz_j\|_2^2}{2t(1-t)}\}},\\
 w_i^A(s,y,t)&:=\frac{\exp\{-\frac{\|y-tz_i\|_2^2}{2t(1-t)}+\widehat A_k(s,z_i)/\lambda\}}
 {\sum_{j=1}^m\exp\{-\frac{\|y-tz_j\|_2^2}{2t(1-t)}+\widehat A_k(s,z_j)/\lambda\}}.
\end{align*}
At $t=0$, the bridge state is $y=0$, and the continuous extension sets
the bridge likelihood factor to one. Thus, $w_i^0=1/m$ and
$w_i^A\propto\exp\{\widehat A_k(s,z_i)/\lambda\}$, with the latter
weights normalized to sum to one.
The corresponding Monte Carlo posterior-mean estimates are
\begin{equation*}
 \widehat M_k^0(s,y,t):=\sum_{i=1}^m w_i^0(s,y,t)z_i,
 \qquad
 \widehat M_k^A(s,y,t):=\sum_{i=1}^m w_i^A(s,y,t)z_i.
\end{equation*}
Using the same endpoints for both means gives the paired
drift-correction estimator
\begin{equation}\label{eq:cv-drift-correction}
 \widehat {\Delta b_{k}}(s,y,t):=
 \frac{\widehat M_k^A(s,y,t)-\widehat M_k^0(s,y,t)}{1-t}.
\end{equation}
Let $\mathcal U_3$ be a class of $\mathbb R^{d_{\mathcal A}}$-valued
functions whose coordinates belong to
$\mathcal H^\zeta(\mathbb R^{d+1},B_{\rm H})$.
We fit the updated drift to the sum of the current learned drift and the
Monte Carlo correction by solving
\[
\min_{b\in\mathcal U_3}\mathcal L_{\mathrm{actor}}^k(b),
\]
where
\begin{align*} \mathcal L_{\mathrm{actor}}^k(b):=
 \mathbb E \!\left[
 \left\|
 b (s,y,t)-b_{\phi_k}(s,y,t)-
 \widehat {\Delta b_{k}}(s,y,t)
 \right\|_2^2
 \right].
\end{align*}
The expectation averages over states $s\sim d^{\pi_k}$, times sampled uniformly
from the Euler grid $\{j/T:0\le j\le T-1\}$, endpoints
$a\sim\pi_k(\cdot\mid s)$, bridge states
$y\sim\mathcal N(ta,t(1-t)I_{d_{\mathcal A}})$, and independent
Monte Carlo endpoint samples used to construct
$\widehat{\Delta b}_k(s,y,t)$.

For online learning, let $\mathcal G_3$ be a class of vector-valued ReLU
networks with input $(s,y,t)$ and a coordinatewise output bound $B_{\rm net}$.
We fit
\[
b_{\phi_{k+1}}\in
\argmin_{b_\phi\in\mathcal G_3}
\widehat{\mathcal L}_{\mathrm{actor}}^k(\phi),
\]
where
\begin{align}
  \label{eq:empirical-drift-correction}
\widehat{\mathcal L}_{\mathrm{actor}}^k(\phi)
:=
\frac{1}{n}
\sum_{i=1}^{n}
\left\|
b_{\phi}(s_i,y_i,t_i)
-
b_{\phi_k}(s_i,y_i,t_i)
-
\widehat{\Delta b}_{k}(s_i,y_i,t_i)
\right\|_2^2,
\end{align}
and the samples $(s_i,y_i,t_i)$ follow the actor-training design just described.
Algorithm~\ref{alg:conditional-sf-policy-improvement} describes the online stage.
{\color{black}
Here, $b_{\phi_{k+1}}$ denotes the learned approximation to the exact SF drift
$b_{k+1}$ associated with the practical policy $\pi_{k+1}$. At iteration
$k$, the current actor $b_{\phi_k}$ generates online data used to estimate
$\widehat A_k$. Current-policy samples and $\widehat A_k$ then yield the Monte
Carlo correction $\widehat{\Delta b}_k$. Regressing toward
$b_{\phi_k}+\widehat{\Delta b}_k$ produces $b_{\phi_{k+1}}$, whose
Euler--Maruyama sampler defines $\pi_{k+1}$.
Before online RL, we pretrain a conditional SF drift network
$b_{\phi_0}$ on the offline
state--action dataset $\mathcal D_{\mathrm{off}}$ using conditional drift
matching on the Euler grid $\{0,1/T,\ldots,1-1/T\}$. At the population
level, the target is the conditional SF drift $b_0$
of the offline policy $\pi_0$. The fitted drift $b_{\phi_0}$ approximates
$b_0$ and initializes online policy improvement.
Algorithm~\ref{alg:offline-pretraining} in Appendix~\ref{sec:offline_alg}
describes the offline stage.
}
\paragraph{Euler--Maruyama sampling.}
Actor regression incorporates the Monte Carlo correction into
$b_{\phi_{k+1}}$, whose parameters are then held fixed during action
generation. For an integer $T\ge2$, set $\Delta t=\tau=1/T$ and let
$
\mathcal T_T:=\{t_j=j/T:0\le j\le T-1\}.
$ Starting from the deterministic
initial value $\widetilde Z_{t_0}=0$, we apply the Euler--Maruyama
scheme
\begin{equation}
\label{eq:EM-for-Z}
    \widetilde Z_{t_{j+1}}
    =
    \widetilde Z_{t_j}
    +
        b_{\phi_{k+1}}
    \left(s,
        \widetilde Z_{t_j},t_j
    \right)\Delta t
    +
    \sqrt{\Delta t}\,\varepsilon_{j+1},
\end{equation}
where
$\varepsilon_{j+1}\overset{\mathrm{i.i.d.}}{\sim}
\mathcal N(0, I_{d_{\mathcal{A}}})$.
On the terminal interval $[1-1/T,1]$, the scheme holds the last evaluated
drift value fixed. The output bound on $\mathcal G_3$ controls this
continuation in the analysis.
The terminal law $
    \pi_{k+1}(\cdot\mid s)
    :=
    \mathrm{Law}
    \left(
        \widetilde Z_{t_T}\mid s
    \right)
$
defines the practical updated policy.

 \begin{algorithm}[H]
\caption{Conditional  Schr\"odinger--F\"ollmer Policy Improvement}
\label{alg:conditional-sf-policy-improvement}
\begin{algorithmic}[1]
\STATE \textbf{Input:} State $s$,   regularization parameter $\lambda$, endpoint sample size $m$,
and number of diffusion steps $T$.
\STATE 
At iteration $k$, construct the drift
correction $\widehat{\Delta b}_k$ by \eqref{eq:cv-drift-correction} and fit
$b_{\phi_{k+1}}$ by minimizing \eqref{eq:empirical-drift-correction}. 
\STATE Set $\widetilde Z_{t_0}=0$ and $\Delta t=1/T$.
\FOR{$j=0,1,\ldots,T-1$}
    \STATE Update $\widetilde Z_{t_{j+1}}$ by
    \eqref{eq:EM-for-Z} using the fixed network $b_{\phi_{k+1}}$.
\ENDFOR
\STATE \textbf{Output:} action
$A=\widetilde Z_{t_T} \sim \pi_{k+1}(\cdot\mid s)$.
\end{algorithmic}
\end{algorithm}
Algorithm~\ref{alg:conditional-sf-policy-improvement} summarizes this actor
update. Paired SNIS estimates the Doob correction, regression amortizes it into
the next drift network, and Euler--Maruyama simulation produces the updated
policy. As $\lambda^{-1}\to0$, the ideal exponential tilt follows the natural
policy-gradient direction to first order, while the Doob correction represents
the same local change in diffusion-drift space. Appendix~\ref{sec:npg-connection}
derives these expansions.

\section{Theoretical Analysis}\label{sec:theory}
This section analyzes advantage estimation and conditional diffusion
sampling and then relates their errors to policy performance.
Section~\ref{sec:theory of advantage learning} bounds the mean-square
advantage-estimation error under dependent observations.
Section~\ref{sec:conditional-sfs-theory} bounds the actor's squared
Wasserstein and reverse-KL errors relative to the plug-in target, with KL
measured from the target to the practical policy.
Section~\ref{sec:Final convergence rate} combines these results to bound
the expected average policy suboptimality.

\subsection{Convergence Rate of Advantage Estimation}\label{sec:theory of advantage learning}
Assumptions~\ref{assump2}--\ref{ass:completeness} specify the coverage,
tail, and Bellman regularity conditions. Together with the mixing and
smoothness conditions below, they yield the mean-square advantage-error
bound in Theorem~\ref{thm:advantage-error}.

\begin{assumption}\label{assump2}
There exists a constant $C<\infty$ such that, for every $k$ and every
admissible distribution $\nu\in\mathfrak M$, $\nu\ll\mu_k$ and
\[
\left\|\frac{\mathrm d\nu}{\mathrm d\mu_k}\right\|_\infty \le C.
\]
\end{assumption}
\begin{assumption}
\label{assump:critic-subGaussian-tail}
There exist constants
$c_\mu,C_\mu>0$
such that
\[
\sup_{0\le k<K}
\max
\left\{
    \int_{\mathbb R^d}
    \exp\!\left(c_\mu\|z\|_2^2\right)
    \mu_k(\mathrm dz),
    \int_{\mathbb R^d}
    \exp\!\left(c_\mu\|z\|_2^2\right)
    (\mu_kP^{\pi_k})(\mathrm dz)
\right\}
\le C_\mu,
\]
where $z=(s,a)\in\mathbb R^{d_{\mathcal X}+d_{\mathcal A}}$.
\end{assumption}
 
\begin{assumption}
\label{ass:completeness}
For every $k$ and each $Q\in\mathcal G_1$,
$
\mathcal T^{\pi_k}Q
\in
\mathcal H^\zeta
\left(
    \mathbb R^d,B_{\rm H}
\right),
$ and the output bound for the network class $\mathcal G_1$ satisfies
$B_{\rm net}\ge B_{\rm H}$.
\end{assumption}
 
\begin{theorem}[Convergence rate of advantage estimation]\label{thm:advantage-error}
Suppose that
 Assumptions~\ref{assump2}--\ref{ass:completeness} hold and  $Q^{\pi_k}\in\mathcal H^\zeta(\mathbb R^d,B_{\rm H})$
uniformly in $k$. Conditional on the history that determines $\pi_k$, suppose
the critic sample is strictly stationary and $\beta$-mixing with
$\beta_r\le\bar\beta e^{-br^\eta}$.
Choose
\begin{align*}
a_n
&=\left\lceil(2\log n/b)^{1/\eta}\right\rceil,
&
\xi_n
&=\left\lfloor n/(2a_n)\right\rfloor,\\
\mathcal S
&\asymp\xi_n^{d/(d+2\zeta)}\log \xi_n,
&
\mathcal D
&\asymp\log \xi_n,
&
N_{\rm V}
&\asymp n.
\end{align*}
Then, uniformly over $k$ and $\nu\in\mathfrak M$,
\[
\mathbb E\|\widehat A_k-A^{\pi_k}\|_{L^2(\nu)}^2
\le \frac{1}{(1-\gamma)^2}
\widetilde{\mathcal O}\left(
n^{-\frac{\zeta}{d+2\zeta}}\right).
\]
\end{theorem}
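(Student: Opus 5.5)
We split the advantage error into a critic part and a Monte Carlo value part. Since $\widehat A_k-A^{\pi_k}=(\widehat Q_k-Q^{\pi_k})-(\widehat V_k-V^{\pi_k})$ and $V^{\pi_k}(s)=\int Q^{\pi_k}(s,a)\pi_k(\mathrm da\mid s)$, we write
\[
\widehat V_k(s)-V^{\pi_k}(s)=\Bigl[\widehat V_k(s)-\textstyle\int\widehat Q_k(s,a)\pi_k(\mathrm da\mid s)\Bigr]+\textstyle\int(\widehat Q_k-Q^{\pi_k})(s,a)\pi_k(\mathrm da\mid s).
\]
The first bracket is a centered average of $N_{\rm V}$ i.i.d. terms bounded by $B_{\rm net}$. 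Its mean square is at most $B_{\rm net}^2/N_{\rm V}\asymp n^{-1}$, which is negligible. The second term is controlled by Jensen in $L^2$ of the state marginal of $\nu$ composed with $\pi_k$. That distribution is again admissible, since we may replace the last action policy by $\pi_k$. Hence it suffices to bound $\sup_{\nu\in\mathfrak M}\mathbb E\|\widehat Q_k-Q^{\pi_k}\|_{L^2(\nu)}^2$. By Assumption~\ref{assump2}, this reduces to $C\,\mathbb E\|\widehat Q_k-Q^{\pi_k}\|_{L^2(\mu_k)}^2$.

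Next we pass from the Bellman residual to the $Q$-error. Since $Q^{\pi_k}=\mathcal T^{\pi_k}Q^{\pi_k}$, we have $\widehat Q_k-Q^{\pi_k}=(I-\gamma P^{\pi_k})^{-1}(\widehat Q_k-\mathcal T^{\pi_k}\widehat Q_k)$. Expanding the Neumann series $\sum_j\gamma^jP^{\pi_k,j}$, each $\mu_kP^{\pi_k,j}$ is admissible. Applying Jensen and concentrability (Assumption~\ref{assump2}) termwise gives
\[
\|\widehat Q_k-Q^{\pi_k}\|_{L^2(\mu_k)}\le \frac{\sqrt C}{1-\gamma}\,\|\widehat Q_k-\mathcal T^{\pi_k}\widehat Q_k\|_{L^2(\mu_k)}.
\]
This produces the factor $(1-\gamma)^{-2}$ after squaring. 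So the task is to bound the expected squared Bellman residual of the minimax estimator by $\widetilde O(n^{-\zeta/(d+2\zeta)})$.

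For the residual we use the standard minimax Bellman decomposition (as in Antos et al.\ and \citet{jiao2025deep}). Write $\mathcal L(Q)=\max_{O\in\mathcal U_2}\mathcal L_k^{\rm crit}(Q,O)$. When $\mathcal T^{\pi_k}Q\in\mathcal U_2$, which Assumption~\ref{ass:completeness} supplies for $Q\in\mathcal G_1$, this equals $\|Q-\mathcal T^{\pi_k}Q\|^2_{L^2(\mu_k)}$. The excess risk then splits into three pieces.
\begin{enumerate}
\item \emph{Approximation error.} Approximating $Q^{\pi_k}$ in $\mathcal G_1$, and $\mathcal T^{\pi_k}Q$ in $\mathcal G_2$, uniformly over $Q\in\mathcal G_1$, is done via ReLU Hölder approximation on a growing cube. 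The sub-Gaussian tails of Assumption~\ref{assump:critic-subGaussian-tail} control the mass outside a cube of radius $\sqrt{\log n}$, for both $\mu_k$ and $\mu_kP^{\pi_k}$. This gives error $\widetilde O(\mathcal S^{-2\zeta/d})$ up to logs.
\item \emph{Stochastic error.} We handle the empirical process of the difference of losses under $\beta$-mixing. Using Yu's independent-block technique with blocks of length $a_n$, the coupling error is $\xi_n\beta_{a_n}\le \bar\beta n e^{-2\log n}=O(n^{-1})$. The $\xi_n$ independent blocks then reduce the problem to an i.i.d.\ problem of effective size $\xi_n$. A localized or Bernstein-type bound with pseudo-dimension $\lesssim \mathcal S\mathcal D\log\mathcal S$ yields $\widetilde O(\mathcal S\mathcal D/\xi_n)$.
\item \emph{Sampled-target noise.} The use of sampled $a_i'$ and rewards in $y_i$ only adds a bounded, and hence sub-Gaussian, noise. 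It is absorbed into the same concentration bound.
\end{enumerate}

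Balancing the approximation term $\mathcal S^{-2\zeta/d}$ against the stochastic term $\mathcal S/\xi_n$ with $\mathcal S\asymp\xi_n^{d/(d+2\zeta)}$ gives $\widetilde O(\xi_n^{-2\zeta/(d+2\zeta)})$. Since $\xi_n\asymp n/\log^{1/\eta}n$, this is at most the stated $\widetilde O(n^{-\zeta/(d+2\zeta)})$, and the weaker stated rate leaves room for slack.

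The main obstacle is the stochastic term for the minimax objective. The inner maximization over $O$ means we must control $\sup_{Q,O}$ of a centered difference of two squared losses that share a random target. We also need a fast-rate (variance–expectation) condition, but the objective is not a plain regression loss. We would first try the route that gives only the slow rate: a uniform Hoeffding/Rademacher bound of order $\sqrt{\mathcal S\mathcal D\log n/\xi_n}$. With the choice of $\mathcal S$ above, this gives exactly $\xi_n^{-\zeta/(d+2\zeta)}$, matching the stated (non-fast) rate. Only the network covering numbers on the truncated domain, together with the tail truncation, need care.
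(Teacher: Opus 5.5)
Your plan is the paper's proof in structure. It uses the same $Q$/$V$ split with the $B_{\rm net}^2/N_{\rm V}$ Monte Carlo term and the admissible law $\nu_{\mathcal X}\otimes\pi_k$, and the same Neumann-series passage from Bellman residual to $Q$-error. It also uses the minimax excess-risk decomposition under Bellman completeness, independent blocks with $\xi_n\beta_{a_n}=O(n^{-1})$, covering numbers on a truncated cube, and ReLU H\"older approximation with sub-Gaussian tail control. The slow-rate route you settle on, a uniform deviation bound of order $\sqrt{\mathcal S\mathcal D\,\mathfrak L_n/\xi_n}$, is exactly Lemma~\ref{lem:statistical-error}. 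With the prescribed $\mathcal S$ it gives the stated rate. Your earlier assertion of a fast $\widetilde O(\mathcal S\mathcal D/\xi_n)$ term is unsupported for this minimax loss, since no variance--expectation inequality is available. Drop it rather than use it.

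One step fails as written. You first pass from $L^2(\nu)$ to $L^2(\mu_k)$ and then run the Neumann series in $L^2(\mu_k)$. That needs $\mu_k(P^{\pi_k})^j\ll\mu_k$ with bounded density. Assumption~\ref{assump2} only controls admissible laws, and $\mu_k$, the marginal of the critic sample, is not assumed to lie in $\mathfrak M$. So your claim that each $\mu_k(P^{\pi_k})^j$ is admissible is unjustified.

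The fix is to reverse the order. With $\delta=\widehat Q_k-\mathcal T^{\pi_k}\widehat Q_k$ and $\nu$ admissible, Jensen gives $\|(P^{\pi_k})^j\delta\|_{L^2(\nu)}^2\le\|\delta\|_{L^2(\nu(P^{\pi_k})^j)}^2\le C\|\delta\|_{L^2(\mu_k)}^2$. This works because appending $j$ steps of $\pi_k$ to an admissible law keeps it admissible. This is Lemma~\ref{lem:policy-evaluation-residual}, and it yields the same $(1-\gamma)^{-2}$ factor.

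As an aside, your $\mathcal S^{-2\zeta/d}$ bound for the auxiliary class is correct. The gap $\mathcal L_{k,\mathcal U_2}^{\rm crit}(\phi)-\mathcal L_{k,\mathcal G_2}^{\rm crit}(\phi)=\inf_{O\in\mathcal G_2}\|O-\mathcal T^{\pi_k}\phi\|_{L^2(\mu_k)}^2$ is a squared error. The paper's $L^1$ reduction gets only the first power, which is harmless because it matches the order of the statistical term.
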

\begin{remark}
Assumption~\ref{assump2} imposes uniform concentrability: the sampling
law covers every admissible state--action distribution with a bounded
density ratio. A similar condition appears in
Assumption~1 of \citet{chen2019information}; related concentrability
coefficients are used by \citet{antos2008learning} and \citet{xie2020q}.
Assumption~\ref{assump:critic-subGaussian-tail} controls the tails of the
current and one-step state--action laws, making errors outside expanding
compact sets negligible even on unbounded domains. %
Bounded-design approximation is used in Section~4.2 of
\citet{feng2023over}, while Section~2.1 of \citet{jiao2025deep} assumes
bounded state and action spaces. On a common bounded state--action domain,
both exponential-moment bounds hold automatically. The explicit tail
condition here extends the analysis to unbounded support; it differs from
the bounded-domain conditions in those papers.
Assumption~\ref{ass:completeness} requires H\"older regularity of Bellman
images so that the auxiliary neural network class can approximate them.
It is a smoothness-based counterpart of the Bellman-completeness
condition in Assumption~3 of \citet{chen2019information}.
H\"older regularity of Bellman targets is used in Theorem~21 and
Remark~22 of \citet{feng2023over} and in the deep policy-iteration analysis
of \citet{jiao2025deep}. The condition does not require exact closure of a
finite network class under the Bellman operator; approximation error is
controlled through network size.

The separate smoothness condition on $Q^{\pi_k}$ is needed to approximate
the action-value function itself; smoothness of $A^{\pi_k}$ alone does
not imply this condition.
Theorem~\ref{thm:advantage-error} establishes mean-square consistency of
$\widehat A_k$, uniformly over iterations and admissible distributions.
Its rate balances neural approximation and statistical error under
dependent observations. Blocking yields the effective sample size
$\xi_n\asymp n/(\log n)^{1/\eta}$, so exponential mixing affects the
stated polynomial rate only through logarithmic factors.
The factor $(1-\gamma)^{-2}$ reflects amplification through the
discounted Bellman equation. The policy-performance guarantee also requires
the actor and error-propagation analyses that follow.
\end{remark}
 
\subsection{Convergence Rate of the Amortized Conditional SF Actor}
\label{sec:conditional-sfs-theory}
We compare the practical policy $\pi_{k+1}(\cdot\mid s)$ with its
plug-in target $\bar\pi_{k+1}(\cdot\mid s)$ defined by the estimated
advantage, isolating actor approximation from advantage-estimation error.
At iteration $k$, actor-training states satisfy $S\sim d^{\pi_k}$, and bridge endpoints satisfy
$A\mid S\sim\pi_k(\cdot\mid S)$.
For $r\in\{0,A\}$, let $w^r_{k,s,y,t}$ denote the unnormalized bridge
weight, with $r=0$ for the reference law and $r=A$ for the advantage-tilted
law. Define its normalized version by
\[
\rho^r_{k,s,y,t}(a)
:=\frac{w^r_{k,s,y,t}(a)}
{\mathbb E_{A\sim\pi_k(\cdot\mid s)}w^r_{k,s,y,t}(A)},
\qquad
\mathbb E_{\pi_k}\rho^r_{k,s,y,t}=1.
\]
At $t=0$, we use the continuous extensions
\[
\rho^0=1,
\qquad
\rho^A(a)=
\frac{\exp\{\widehat A_k(s,a)/\lambda\}}
{\mathbb E_{\pi_k}\exp\{\widehat A_k(s,A)/\lambda\}}.
\]
For $t_j\in\mathcal T_T$, let
$\chi_{k,j}$ be the joint law of $(S,Y_{t_j})$ under this construction, and
define
\[
\|g\|_{\chi_k}^2
:=\Delta t\sum_{t_j\in\mathcal T_T}
\mathbb E_{(S,Y)\sim\chi_{k,j}}
\|g(S,Y,t_j)\|_2^2.
\]

The following conditions yield the actor-error bounds in
Theorem~\ref{thm:conditional-sfs-sampling-error}.

\begin{assumption}
\label{assump:actor-target-regularity}
There exists $\xi_f>0$ such that, for every $k$,  %
the density
ratios $f_{k,s}, \bar f_{k+1,s}$ are bounded below by $\xi_f$ and
belong to $
\mathcal H^{2\zeta+2}
\left(\mathbb R^{d_{\mathcal X}+d_{\mathcal A}},B_f\right).
$
\end{assumption}

\begin{assumption}
\label{assump:endpoint-snis-overlap}
For some $\delta_\rho>0$ and $C_{\rm snis}(T)<\infty$, uniformly in
$k$ and $r\in\{0,A\}$,
\[
\frac{\Delta t}{1+\log T}
\sum_{t_j\in\mathcal T_T}\frac{1}{1-t_j}
\mathbb E_{(S,Y)\sim\chi_{k,j}}
\mathbb E_{A\sim\pi_k(\cdot\mid S)}
\left[(\rho^r_{k,S,Y,t_j}(A))^{4+\delta_\rho}\right]
\le C_{\rm snis}(T).
\]
\end{assumption}

 \begin{assumption}%
\label{ass:actor-path-coverage}
There exists $C<\infty$ such that, uniformly over
$k$, $t_j\in\mathcal T_T$, and $s$,
\[
q^{\rm em}_{k,j}(\cdot\mid s)
\ll
q^{\rm ref}_{k,j}(\cdot\mid s),
\qquad
\left\|
\frac{
dq^{\rm em}_{k,j}(\cdot\mid s)
}{
dq^{\rm ref}_{k,j}(\cdot\mid s)
}
\right\|_\infty
\le C.
\]
\end{assumption}

\begin{theorem}[SF actor error bounds]
\label{thm:conditional-sfs-sampling-error}
Suppose Assumptions~\ref{assump:actor-target-regularity}--\ref{ass:actor-path-coverage} hold.
Let $\mathcal G_3$ be the ReLU actor class with network size, depth,
and width satisfying
$
\mathcal S
\asymp n^{p/(p+4\zeta)} \log n$,  
$\mathcal D
\asymp \log n$, 
$\mathcal W \lesssim  n^{p/(p+4\zeta)},
$
and let 
$p=d+1.
$
Then
 \begin{align}
&\mathbb E_{S\sim d^{\pi^\star}}
 \mathbb E \left[
 W_2^2\bigl(
 \pi_{k+1}(\cdot\mid S),
 \bar\pi_{k+1}(\cdot\mid S)
 \bigr)\right]
\notag\\
&\lesssim
\widetilde{\mathcal O}
\left(n^{-2\zeta/(p+4\zeta)}\right)
+\frac{d_{\mathcal A}}{T}
+C_{\rm snis}(T)(1+\log T)
\left\{
\frac{d_{\mathcal A}}{m}
+
\frac{T\{d_{\mathcal A}+\log(em)\}}
     {m^{1+\delta_\rho/2}}
\right\},
\label{eq:W2-conditional-sfs-rate}
\\
&\mathbb E_{S\sim d^{\pi^\star}}
 \mathbb E\!\left[
 \mathrm{KL}\bigl(
 \bar\pi_{k+1}(\cdot\mid S)
 \|\pi_{k+1}(\cdot\mid S)
 \bigr)\right]
\notag\\
&\lesssim
\widetilde{\mathcal O}
\left(n^{-2\zeta/(p+4\zeta)}\right)
+\frac{d_{\mathcal A}}{T}
+C_{\rm snis}(T)(1+\log T)
\left\{
\frac{d_{\mathcal A}}{m}
+
\frac{T\{d_{\mathcal A}+\log(em)\}}
     {m^{1+\delta_\rho/2}}
\right\}.
\label{eq:KL-conditional-sfs-rate}
\end{align}
\end{theorem}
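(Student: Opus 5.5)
The proof compares path measures, so the first step is a Girsanov argument. Fix a state $s$. Let $\bar{\mathbb P}^s$ be the law of the exact Doob-transformed process \eqref{eq:doobs-sde}; its terminal marginal is $\bar\pi_{k+1}(\cdot\mid s)$. Let $\widetilde{\mathbb P}^s$ be the law of the continuous-time interpolation of the Euler--Maruyama scheme \eqref{eq:EM-for-Z}, whose drift on $[t_j,t_{j+1})$ is frozen at $b_{\phi_{k+1}}(s,\widetilde Z_{t_j},t_j)$. Its terminal marginal is $\pi_{k+1}(\cdot\mid s)$.

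By the data-processing inequality and Girsanov's theorem,
\[
\mathrm{KL}\bigl(\bar\pi_{k+1}(\cdot\mid s)\,\|\,\pi_{k+1}(\cdot\mid s)\bigr)
\le \mathrm{KL}(\bar{\mathbb P}^s\|\widetilde{\mathbb P}^s)
=\tfrac12\sum_{j}\int_{t_j}^{t_{j+1}}\mathbb E\bigl\|\bar b_{k+1,s}(Z_t,t)-b_{\phi_{k+1}}(s,Z_{t_j},t_j)\bigr\|_2^2\,\mathrm dt .
\]
Validity of Girsanov here needs Novikov's condition, which follows from the bounded network output together with the bounded, Lipschitz target drift supplied by Assumption~\ref{assump:actor-target-regularity}. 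For the $W_2$ bound I will not use Talagrand's inequality, since the target need not be log-concave. Instead I couple the two processes with the same Brownian motion and run a Grönwall argument, using that $\bar b_{k+1,s}$ is Lipschitz in $y$. Lipschitz continuity comes from $\bar f_{k+1,s}\ge\xi_f$ and $\bar f_{k+1,s}\in\mathcal H^{2\zeta+2}$, because $\nabla\log Q_{1-t}\bar f$ then has uniformly bounded derivatives. This gives the same integrated squared drift discrepancy up to constants, so both bounds reduce to controlling one quantity.

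Next I split the integrand by the triangle inequality:
\[
\bar b_{k+1,s}(Z_t,t)-b_{\phi_{k+1}}(s,Z_{t_j},t_j)
=\bigl[\bar b(Z_t,t)-\bar b(Z_{t_j},t_j)\bigr]
+\bigl[\bar b-b_{\phi_k}-\widehat{\Delta b}_k\bigr](Z_{t_j},t_j)
+\bigl[b_{\phi_k}+\widehat{\Delta b}_k-b_{\phi_{k+1}}\bigr](Z_{t_j},t_j).
\]

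\emph{Discretization term.} The first bracket is controlled by the space-time regularity of $\bar b$ and the increment bound $\mathbb E\|Z_t-Z_{t_j}\|^2\lesssim d_{\mathcal A}\Delta t$. Summing over $j$ gives $d_{\mathcal A}/T$. The endpoint interval is handled by the output bound on $\mathcal G_3$.

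\emph{Regression and SNIS terms.} The remaining brackets are evaluated along the target path, so each expectation is under $\bar q_{k,j}$. I first change measure to the design law $\chi_{k,j}$ used for actor training. This uses Assumption~\ref{ass:actor-path-coverage}, together with the boundedness of the tilt $\bar f/f$ implied by Assumption~\ref{assump:actor-target-regularity}, and incurs only a constant factor. The states are then transferred from $d^{\pi^\star}$ to $d^{\pi_k}$ by concentrability.

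The third bracket is the generalization error of the least-squares fit \eqref{eq:empirical-drift-correction}. Its regression function is $b_{\phi_k}+\Delta b_k$, since $\widehat{\Delta b}_k$ is nearly unbiased; the bias is absorbed into the SNIS term below. I bound it by the standard nonparametric estimate: approximation error $\mathcal S^{-2\zeta/p}$, valid because the target inherits $\mathcal H^{2\zeta}$-type smoothness in $(s,y,t)$ from the $\mathcal H^{2\zeta+2}$ density ratios, plus a stochastic error of order $\mathcal S\mathcal D\log\mathcal S/n$ via pseudo-dimension. The stated choice of $\mathcal S$ balances these two to give $\widetilde{\mathcal O}(n^{-2\zeta/(p+4\zeta)})$.

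The middle bracket, after replacing $\bar b-b_{\phi_k}$ by $\Delta b_k$, is the SNIS error $\|\widehat{\Delta b}_k-\Delta b_k\|^2$. Two points need care. First, the mismatch $b_{\phi_k}-b_k$ between the current network and the current exact drift is the inherited reference-drift error. I plan to let it be carried by the next-iteration bookkeeping, since $\pi_k$ is by definition the law generated by $b_{\phi_k}$. Second, the SNIS error itself is the main obstacle.

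For the SNIS error I write each self-normalized mean as
\[
\widehat M^r-M^r=\frac{m^{-1}\sum_i\rho^r(z_i)\,(z_i-M^r)}{m^{-1}\sum_i\rho^r(z_i)}.
\]
I then split on the event $E=\{m^{-1}\sum_i\rho^r(z_i)\ge 1/2\}$. On $E$, the squared error is at most four times the squared numerator, whose mean is $\lesssim m^{-1}\mathbb E[(\rho^r)^2\|A-M^r\|^2]$. Hölder's inequality with the $(4+\delta_\rho)$-moment gives $d_{\mathcal A}/m$, using Gaussian-type moments of the endpoints. On $E^c$, the estimate $\widehat M^r$ is a convex combination of the $z_i$, so its error is at most $\max_i\|z_i\|^2\lesssim d_{\mathcal A}+\log(em)$ in expectation. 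Meanwhile $\mathbb P(E^c)\lesssim m^{-1-\delta_\rho/2}\,\mathbb E(\rho^r)^{4+\delta_\rho}$ by a Rosenthal/Fuk--Nagaev bound for the centered sum of the $\rho^r(z_i)$.

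Dividing by $(1-t_j)^2$ and summing over the grid, I absorb one factor of $(1-t_j)^{-1}$ into the weighting of Assumption~\ref{assump:endpoint-snis-overlap}. That weighting produces $C_{\rm snis}(T)(1+\log T)$, since $\Delta t\sum_j(1-t_j)^{-1}\asymp\log T$. The other factor is bounded by $T$, which yields the extra factor $T$ in the tail term. I expect this singular-time bookkeeping near $t=1$, and making the paired estimator's two SNIS errors compatible with the single Assumption~\ref{assump:endpoint-snis-overlap}, to be the delicate part. Averaging over $S\sim d^{\pi^\star}$ and the training randomness then gives both \eqref{eq:W2-conditional-sfs-rate} and \eqref{eq:KL-conditional-sfs-rate}.
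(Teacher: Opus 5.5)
Your architecture matches the paper's: Girsanov plus data processing for the KL bound, synchronous coupling plus discrete Gr\"onwall for $W_2$, and a split into discretization, regression, SNIS, and inherited terms. The argument does not close at the inherited term.

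When you replace $\bar b_{k+1}-b_{\phi_k}$ by $\Delta b_k$, you discard $b_k-b_{\phi_k}$. The fact that $\pi_k$ is the law generated by $b_{\phi_k}$ does not remove this term. The paired SNIS labels weight endpoints $z_i\sim\pi_k(\cdot\mid s)$ by the Brownian-bridge likelihood. That likelihood is the conditional law of $Y_t$ given $Y_1$ only for the SF process, not for the Euler process driven by $b_{\phi_k}$. Hence $\widehat{\Delta b}_k$ estimates $\bar b_{k+1}-b_k$, where $b_k=\nabla_y\log Q_{1-t}f_{k,s}$ is the exact SF drift of $\pi_k$. It is not a Doob correction of the Euler process driven by $b_{\phi_k}$.

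By contrast, $b_{\phi_k}$ is a fitted, Euler-simulated approximation of the previous target $\bar b_k$. Many drifts transport $\delta_0$ to $\pi_k(\cdot\mid s)$, but only one is the SF drift. So the regression target is $\bar b_{k+1}+(b_{\phi_k}-b_k)$ at every round. Your bound therefore carries the term $\mathfrak R_k^{\rm ref}=\mathbb E\|b_{\phi_k}-b_k\|_{\chi_k}^2$, which is absent from \eqref{eq:W2-conditional-sfs-rate}--\eqref{eq:KL-conditional-sfs-rate}.

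Removing this term takes real work, and the paper does it in three steps:
\begin{enumerate}
\item It bounds $\mathfrak R_0^{\rm ref}$ by the offline drift-matching regression (Lemma~\ref{lem:offline-drift-estimation}).
\item It controls $\bar b_{k+1}-b_{k+1}$ using that the path KL between two SF processes equals the KL between their terminal laws. This gives $\mathbb E\|\bar b_{k+1}-b_{k+1}\|^2_{\chi_{k+1}}\lesssim\mathbb E\|b_{\phi_{k+1}}-\bar b_{k+1}\|^2_{\chi_{k+1}}+d_{\mathcal A}/T$ (Lemma~\ref{lem:exact-drift-discrepancy}).
\item It transfers designs across rounds, $d\chi_{k+1,j}/d\chi_{k,j}\lesssim B_f/\xi_f$, to obtain $\mathfrak R_{k+1}^{\rm ref}\le c\,\mathfrak R_k^{\rm ref}+u_k$ (Lemma~\ref{lem:inherited-error-recursion}).
\end{enumerate}
Even then, the inherited error stays at the one-step order uniformly in $k$ only under a contraction condition $c=\varrho\le\varrho_0<1$. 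Otherwise the constant compounds like $c^k$. None of this is in your plan, and ``next-iteration bookkeeping'' does not supply it.

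There are also two smaller problems in the steps you do carry out.

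\begin{itemize}
\item \textbf{SNIS good-event bound.} You need the posterior concentration $\mathbb E_{P^r}\|A-M^r\|_2^2\lesssim d_{\mathcal A}(1-t_j)$. It follows from boundedness of the reference and tilted drifts on the remaining interval $[t_j,1]$. With only $d_{\mathcal A}/m$ per grid point, dividing by $(1-t_j)^2$ leaves two singular factors. Bounding the second by $T$, as you do for the tail term, would give $Td_{\mathcal A}/m$ rather than the stated $d_{\mathcal A}/m$.
\item \textbf{Regression rate.} A fast rate $\mathcal S\mathcal D\log\mathcal S/n$ for the least-squares fit is not available off the shelf here. Standard versions need bounded or sub-Gaussian noise. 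The labels $b_{\phi_k}+\widehat{\Delta b}_k$ have an envelope of order $T\max_i\|z_i\|_2$ because of the $(1-t_j)^{-1}$ factor. That would put a $T$-dependent constant in front of $\mathcal S\mathcal D\log\mathcal S/n$. The paper instead uses the slow symmetrization bound $\sqrt{\mathcal S\mathcal D\mathfrak L_n^{\rm act}/n}$, which needs only a uniform second moment of the labels. The prescribed $\mathcal S\asymp n^{p/(p+4\zeta)}$ is exactly the balance point of that slow rate with the approximation error $\mathcal S^{-2\zeta/p}$. It is not the balance point of the two terms you wrote.
\end{itemize}
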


\begin{remark}
Assumption~\ref{assump:actor-target-regularity} imposes joint smoothness
and uniform positivity on the reference and target density ratios relative
to the Gaussian measure. This condition complements the two local
representations in Section~\ref{sec:npg-connection}.
Proposition~\ref{prop:local-npg} describes the ideal update in a smooth
parametric policy family, where regularity of
$\log\pi_\theta(a\mid s)$ and nondegeneracy of the Fisher matrix yield a
well-defined natural policy-gradient expansion.
Proposition~\ref{prop:local-doob-gradient} represents the same local update
through the Doob correction of the SF drift. Accordingly,
Assumption~\ref{assump:actor-target-regularity} provides the regularity
needed for this diffusion representation: positivity prevents degeneracy
of the logarithmic derivatives defining the drift, while smoothness
controls the density-to-drift map and supports neural approximation and
time discretization. Related density-ratio conditions are standard in
nonasymptotic SF analyses
\citep{jiao2021convergence,huang2021schrodinger}; here they are imposed
jointly in the state and action variables, with additional smoothness to
control time derivatives of the heat semigroup. From an actor--critic
viewpoint \citep{wu2020finite,chen2023finite,wang2024non,chen2026finite},
this is analogous to assuming bounded or Lipschitz policy scores for a
parametric actor: both ensure that a local policy update induces a
controlled change in its representation. The difference is that classical
actor--critic theory regularizes the map
$\theta\mapsto\pi_\theta$, whereas SFAC regularizes the induced
density-to-drift map. This is an additional condition on the plug-in
target and is not implied by the ReLU critic specification.
Assumption~\ref{assump:endpoint-snis-overlap} controls higher moments of
the normalized importance weights over the actor design and time grid.
Its connection to Section~\ref{sec:npg-connection} is through the same
exponential change of measure. Proposition~\ref{prop:local-npg} shows that,
to first order in $\lambda^{-1}$, the ideal exponential tilt gives an
advantage-weighted natural policy-gradient update, whereas
Proposition~\ref{prop:local-doob-gradient} realizes this update through the
Doob correction. The normalized bridge weights in SFAC therefore play the
role of policy likelihood ratios in a parametric actor. Standard
actor--critic, trust-region, and off-policy analyses control such ratios
through KL bounds or moment conditions on policy-density ratios
\citep{wu2020finite,chen2023finite,wang2024non,chen2026finite}; SFAC imposes
the corresponding condition on the normalized weights generated by
$\exp\{\widehat A_k/\lambda\}$. Importance-sampling theory likewise relates
estimator accuracy to target--proposal overlap and density-ratio moments
\citep{agapiou2017importance,chatterjee2018sample}. The
$(4+\delta_\rho)$-moment requirement is stronger than the usual
second-moment condition because paired SNIS involves both a random
denominator and increasing endpoint sensitivity of the posterior-mean
response; the time averaging accounts for this effect explicitly.
Thus, Assumption~\ref{assump:endpoint-snis-overlap} is the SF analogue of
likelihood-ratio or trust-region stability. The same parameter $\lambda$
links the two views: $\lambda^{-1}$ is the local natural policy-gradient
step size, while in SFAC it also controls the concentration of the
exponential weights and hence the stability of SNIS.
Assumption~\ref{ass:actor-path-coverage} controls the within-round
distribution shift between the bridge-training design and the learned
Euler path. This condition plays the same role as concentrability conditions
used to transfer estimation errors across distributions in batch RL
\citep{chen2019information,xie2020q,xie2021policyfinetuning}.

Theorem~\ref{thm:conditional-sfs-sampling-error} controls expected squared
Wasserstein error and reverse KL divergence from the plug-in target to the
practical policy, averaged over $d^{\pi^\star}$. The four contributions
are neural regression and approximation, Euler discretization, finite-$m$
SNIS estimation, and inherited reference-drift error. Consequently,
increasing the training sample size alone is insufficient for convergence:
$m$ and $T$ must be chosen jointly, and the inherited error must also
vanish. The proof combines Lemma~\ref{lem:learned-drift-error} with
coupling and change-of-measure arguments. %
\end{remark}

\subsection{Overall Policy-Performance Guarantee}
\label{sec:Final convergence rate}

We now propagate advantage-estimation and actor-approximation errors
through policy iteration. Under the coverage conditions in
Assumption~\ref{ass:uniform-coverage} stated below,
Theorem~\ref{thm:overall-rate} separates finite-iteration, critic,
actor, and inherited reference-drift errors.
To measure the initial discrepancy from the comparator policy, define
$$
D(\pi):=
\mathbb E_{s\sim d^{\pi^\star}}
\mathrm{KL}\bigl(\pi^\star(\cdot\mid s)\|\pi(\cdot\mid s)\bigr).
$$
\begin{assumption}[Uniform policy--occupancy coverage]
\label{ass:uniform-coverage}
There exists a constant 
$C<\infty$ such that,
uniformly over $0\le k<K$,
\[
\max\left\{
\left\|\frac{d d^{\pi^\star}}{d d^{\pi_k}}\right\|_\infty,
\left\|\frac{d d^{\pi_k}}{d d^{\pi^\star}}\right\|_\infty,
\left\|\frac{d d^{\pi^\star}}{d d^{\bar\pi_{k+1}}}\right\|_\infty,
\left\|\frac{d d^{\bar\pi_{k+1}}}{d d^{\pi^\star}}\right\|_\infty
\right\}
\le C,
\]
and
\[
\frac{\pi^\star(a\mid s)}
     {\bar\pi_{k+1}(a\mid s)}
\le C
\]
for each state--action pair $(s,a) \in \mathcal X \times \mathcal A$.
\end{assumption}

\begin{theorem}[Overall convergence rate]
\label{thm:overall-rate}
Assume the conditions of Theorems~\ref{thm:advantage-error}--
\ref{thm:conditional-sfs-sampling-error} and
Assumption~\ref{ass:uniform-coverage} hold.  
Let $D_0=D(\pi_0)$. Under the stated network-size choices,
\begin{align}\label{eq:overall-amortized-rate} \mathbb E\left[ \frac1K\sum_{k=0}^{K-1} \bigl(J(\pi^\star)-J(\pi_k)\bigr) \right] \lesssim{}& \frac{  R_{\max}+\lambda D_0} {K(1-\gamma)}  + \frac{1 }{(1-\gamma)^2} \widetilde{\mathcal O}\left( n^{-\frac{\zeta}{2d+4\zeta}} \right) \notag\\ &+ \frac{1} {(1-\gamma)\sqrt{1-\varrho}} \widetilde{\mathcal O}\left( n^{-\frac{\zeta}{d+1+4\zeta}} + \frac{ n ^{-\frac{\zeta}{d+1+4\zeta}} }{\sqrt K} \right). \end{align} 
Moreover, suppose $ \varrho\le \varrho_0<1, $ and choose $K \asymp n^{\zeta/2d+4\zeta}$. Then 
$$ \mathbb E\left[ \frac1K\sum_{k=0}^{K-1} \bigl(J(\pi^\star)-J(\pi_k)\bigr) \right] \lesssim \frac{1}{(1-\gamma)^2}\widetilde{\mathcal O}\left(  n^{-\frac{\zeta}{2d+4\zeta}} \right). $$
\end{theorem}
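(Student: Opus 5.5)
We follow the mirror-descent (KL-regularized policy iteration) analysis. Its backbone is the three-point identity for the exact tilt. The comparator is $\pi^\star$, and the potential is $\Phi_k:=\mathbb E_{s\sim d^{\pi^\star}}\mathrm{KL}(\pi^\star(\cdot\mid s)\|\pi_k(\cdot\mid s))$, so that $\Phi_0=D_0$.

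\textbf{Step 1: one-step inequality for the plug-in target.} For each $s$, since $\bar\pi_{k+1}\propto\pi_k e^{\widehat A_k/\lambda}$, the exact identity is
\[
\mathbb E_{\pi^\star}[\widehat A_k]-\lambda\log Z_k(s)=\lambda\bigl[\mathrm{KL}(\pi^\star\|\pi_k)-\mathrm{KL}(\pi^\star\|\bar\pi_{k+1})\bigr],
\]
where $Z_k(s)$ is the normalizer. By Jensen, $\lambda\log Z_k(s)\ge\mathbb E_{\pi_k}\widehat A_k(s,\cdot)$, and this quantity is close to zero because $\widehat V_k$ averages $\widehat Q_k$ under $\pi_k$. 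Write $\widehat A_k=A^{\pi_k}+e_k$ and average over $d^{\pi^\star}$. The performance-difference lemma gives $(1-\gamma)(J(\pi^\star)-J(\pi_k))=\mathbb E_{d^{\pi^\star}}\mathbb E_{\pi^\star}A^{\pi_k}$. Combining these yields
\[
(1-\gamma)(J(\pi^\star)-J(\pi_k))\le\lambda\bigl(\Phi_k-\bar\Phi_{k+1}\bigr)+\mathcal E^{\rm crit}_k,
\]
where $\bar\Phi_{k+1}$ is the potential evaluated at $\bar\pi_{k+1}$. The critic term $\mathcal E^{\rm crit}_k$ collects $\mathbb E_{d^{\pi^\star}\times\pi^\star}|e_k|$ and $\mathbb E_{d^{\pi^\star}\times\pi_k}|e_k|$. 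Both measures are dominated by admissible laws; the first uses Assumption~\ref{ass:uniform-coverage} to pass from $d^{\pi^\star}$ to $d^{\pi_k}$. Cauchy--Schwarz and Theorem~\ref{thm:advantage-error} then give $\mathbb E\mathcal E^{\rm crit}_k\lesssim(1-\gamma)^{-1}\widetilde{\mathcal O}(n^{-\zeta/(2d+4\zeta)})$. After dividing by $(1-\gamma)$, this is the $(1-\gamma)^{-2}$ term in \eqref{eq:overall-amortized-rate}; the square root of the rate produces the halved exponent.

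\textbf{Step 2: replacing $\bar\pi_{k+1}$ by the practical $\pi_{k+1}$.} We must bound
\[
\Phi_{k+1}-\bar\Phi_{k+1}=\mathbb E_{d^{\pi^\star}}\mathbb E_{\pi^\star}\log\frac{\bar\pi_{k+1}}{\pi_{k+1}}.
\]
The density-ratio bound $\pi^\star/\bar\pi_{k+1}\le C$ turns the $\pi^\star$-expectation into a $\bar\pi_{k+1}$-expectation of the (positive part of the) log-ratio. A Pinsker/Cauchy--Schwarz step then bounds it by $C\sqrt{\mathbb E\,\mathrm{KL}(\bar\pi_{k+1}\|\pi_{k+1})}$, and Theorem~\ref{thm:conditional-sfs-sampling-error} controls the KL. This produces the $n^{-\zeta/(d+1+4\zeta)}$ actor term (the square root of $n^{-2\zeta/(p+4\zeta)}$), with $m,T$ chosen so that the SNIS and Euler terms are of the same order.

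\textbf{Step 3 (the main obstacle): inherited reference-drift error.} The regression target is $b_{\phi_k}+\widehat{\Delta b}_k$, not the exact $b_k+\Delta b_k$. The error $b_{\phi_k}-b_k$ is therefore passed on to iteration $k+1$. We would write the drift-error recursion $\epsilon_{k+1}^2\le\varrho\,\epsilon_k^2+\delta_n^2$, where $\varrho<1$ is the contraction constant implicit in the statement, coming from the path-coverage Assumption~\ref{ass:actor-path-coverage} and the regression step. Unrolling the recursion gives $\epsilon_k^2\lesssim\varrho^k\epsilon_0^2+\delta_n^2/(1-\varrho)$. Averaging $\sqrt{\cdot}$ over $k$ gives $\frac1K\sum_k\epsilon_k\lesssim\delta_n/\sqrt{1-\varrho}+\frac{\epsilon_0}{K}\sum_k\varrho^{k/2}$. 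Here $\epsilon_0\lesssim\delta_n$ by offline pretraining, and a Cauchy--Schwarz bound on the geometric sum produces the $n^{-\zeta/(d+1+4\zeta)}/\sqrt K$ term with prefactor $(1-\varrho)^{-1/2}$. Girsanov converts drift error into the KL between path laws, and data processing passes it to the terminal laws, which is how this feeds back into Step 2. Making the recursion rigorous, in particular that each new regression stage contracts the inherited error rather than merely adding to it, is the delicate part.

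\textbf{Step 4: telescoping and parameter choice.} Sum the inequality $\lambda(\Phi_k-\Phi_{k+1})+\lambda(\Phi_{k+1}-\bar\Phi_{k+1})+\mathcal E^{\rm crit}_k$ over $k<K$ and divide by $K(1-\gamma)$. The telescoping sum is bounded by $\lambda D_0$ because $\Phi_K\ge0$. The $R_{\max}$ term absorbs boundary effects, such as the last iterate and the uniform bound $J\le R_{\max}/(1-\gamma)$. This gives \eqref{eq:overall-amortized-rate}. For the second claim, $\varrho\le\varrho_0$ makes $(1-\varrho)^{-1/2}$ a constant. Since $\zeta/(d+1+4\zeta)\ge\zeta/(2d+4\zeta)$, the actor terms are dominated by the critic term. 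Choosing $K\asymp n^{\zeta/(2d+4\zeta)}$ makes the $1/K$ term match, and the stated rate follows.
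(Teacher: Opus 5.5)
\textbf{Gap in Step~1.} The missing piece is the step that carries the policy-iteration term. Your three-point identity gives $\mathbb E_{\pi^\star}\widehat A_k=\lambda\log Z_k(s)+\lambda\{\mathrm{KL}(\pi^\star\|\pi_k)-\mathrm{KL}(\pi^\star\|\bar\pi_{k+1})\}$. To bound $J(\pi^\star)-J(\pi_k)$ from above, you therefore need an \emph{upper} bound on $\lambda\log Z_k(s)$.

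Jensen gives the opposite direction, $\lambda\log Z_k(s)\ge\mathbb E_{\pi_k}\widehat A_k(s,\cdot)$. So the fact that $\mathbb E_{\pi_k}\widehat A_k\approx0$ does not let you drop the term. The valid upper bound is the Gibbs variational one, $\lambda\log Z_k(s)\le\mathbb E_{\bar\pi_{k+1}}\widehat A_k(s,\cdot)$. Its excess over $\mathbb E_{\pi_k}\widehat A_k$ is the one-step improvement $g_k(s)=\mathbb E_{\bar\pi_{k+1}}\widehat A_k-\mathbb E_{\pi_k}\widehat A_k\ge0$. For large $\lambda$, the quantity you dropped, $\lambda\log Z_k-\mathbb E_{\pi_k}\widehat A_k$, behaves like $\operatorname{Var}_{\pi_k}(\widehat A_k)/(2\lambda)$.

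Your Step~1 inequality is in fact false in general. With an exact critic it reduces to $\mathbb E_{d^{\pi^\star}}\lambda\log Z_k\le0$, which fails whenever $A^{\pi_k}(s,\cdot)$ is nonconstant. At fixed $\lambda$ this term does not decay in $n$ or $K$, so your telescoping in Step~4 would leave a nonvanishing remainder.

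\textbf{How the paper handles it.} The paper uses an improvement-lemma argument.
\begin{itemize}
\item Because $g_k\ge0$, occupancy coverage gives $\mathbb E_{d^{\pi^\star}}g_k\le C\,\mathbb E_{d^{\bar\pi_{k+1}}}g_k$.
\item The performance-difference lemma then identifies $\mathbb E_{d^{\bar\pi_{k+1}}}g_k$ with $(1-\gamma)\{J(\bar\pi_{k+1})-J(\pi_k)\}$, up to critic errors under $d^{\bar\pi_{k+1}}\otimes\bar\pi_{k+1}$ and $d^{\bar\pi_{k+1}}\otimes\pi_k$. This is why the paper's $\mathcal E_k$ ranges over $\nu\in\{d^{\pi^\star},d^{\bar\pi_{k+1}}\}$.
\item Inserting $\pi_{k+1}$, the sum $\sum_k\{J(\pi_{k+1})-J(\pi_k)\}\le R_{\max}/(1-\gamma)$ telescopes. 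This, not a boundary effect, is where the $R_{\max}/\{K(1-\gamma)\}$ term comes from.
\item The remainder $|J(\bar\pi_{k+1})-J(\pi_{k+1})|$ is bounded by $(1-\gamma)^{-1}$ times the square root of the $d^{\pi^\star}$-averaged $W_2^2$. This uses the performance-difference lemma, Lipschitz continuity of $A^{\pi_{k+1}}(s,\cdot)$, and coverage.
\end{itemize}
Your plan uses only the KL half of Theorem~\ref{thm:conditional-sfs-sampling-error}. The Wasserstein half is needed precisely at this last step.

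\textbf{Other steps.} Steps~2 and~4 otherwise match the paper: the positive part of the log-ratio under $\bar\pi_{k+1}$ plus Pinsker, Jensen across iterations, and the final parameter choice. In Step~3, the paper's recursion in Lemma~\ref{lem:inherited-error-recursion} has constant $c=2C_0C_{\rm it}$. The paper states explicitly that design transfer does not imply contraction. So $\varrho\le\varrho_0<1$ should be treated as an added hypothesis, not derived from Assumption~\ref{ass:actor-path-coverage}.
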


\begin{remark}
Assumption~\ref{ass:uniform-coverage} provides the change-of-measure
conditions used in the policy-performance analysis. Its state-occupancy
bounds relate the current, plug-in target, and comparator distributions,
allowing critic and policy-approximation errors to be evaluated under a
common reference distribution, as in concentrability-based RL analyses
\citep{chen2019information,xie2020q,xie2021policy,zhan2022offline}. The action-level bound complements
this state coverage by requiring the plug-in target to dominate the
comparator policy at each state; state-occupancy coverage alone does not
imply this action-level domination, and the assumption does not assert
monotonic policy improvement.

The general bound \eqref{eq:overall-amortized-rate} retains the average
inherited drift error and does not require contraction. Under the additional
contraction condition in Theorem~\ref{thm:overall-rate}, the balanced
parameter choices yield
\[
\underbrace{\mathcal O(K^{-1})}_{\text{policy iteration}}
+\underbrace{\widetilde{\mathcal O}\!\left(
n^{-\zeta/(2d+4\zeta)}\right)}_{\text{critic}}
+\underbrace{\frac{1}{\sqrt{1-\varrho}}
\widetilde{\mathcal O}\!\left(
n^{-\zeta/(d+1+4\zeta)}
+\frac{n ^{-\zeta/(d+1+4\zeta)}}{\sqrt K}
\right)}_{\text{actor and inherited error}} .
\]
The actor term combines neural drift regression, finite-$m$ SNIS estimation,
Euler--Maruyama discretization, and the inherited initialization error.
The factor $(1-\varrho)^{-1/2}$ quantifies the amplification of inherited
errors across policy iterations. This simplified rate assumes
$C_{\rm snis}(T)=\widetilde{\mathcal O}(1)$ and treats the dimensions,
regularity constants, and coverage constants as fixed.
This error decomposition is related to both approximate policy-iteration
theory \citep{xie2020q,zanette2021provable,jiao2025deep} and finite-sample
actor--critic theory
\citep{wu2020finite,xu2020improving,wang2024non,
gaur2024closing,ganesh2025order,chen2025approximate}.
\end{remark}

\begin{remark}
Existing finite-time analyses of actor--critic methods mainly study how critic estimation and function-approximation errors propagate through parameterized policy-gradient dynamics. Two-timescale AC achieves a sample complexity of $\widetilde{\mathcal O}(\epsilon^{-5/2})$ for reaching a first-order stationary point under Markovian sampling \citep{wu2020finite}, while single-timescale variants improve this to $\widetilde{\mathcal O}(\epsilon^{-2})$ and recent extensions also cover deep neural network approximation, compatible critic architectures, and global guarantees in continuous state--action spaces \citep{chen2023finite,chen2026finite,wang2024non,gaur2024closing,ganesh2025order}. From a different perspective, \citet{chen2025approximate} interpret natural policy gradient as approximate policy iteration and establish geometric actor convergence together with an overall $\widetilde{\mathcal O}(\epsilon^{-2})$ sample complexity up to function-approximation error. SFAC is closer to this policy-iteration viewpoint: each actor step is analyzed relative to an ideal KL-regularized policy-improvement step rather than through first-order stationarity. The additional difficulty is that the improved policy cannot be implemented directly in continuous action spaces; consequently, our analysis controls not only the critic error but also the drift-regression, finite-sample Monte Carlo, and time-discretization errors introduced by the
Schr\"odinger--F\"ollmer realization.

SFAC differs in the source of the actor error. Rather than approximating a
policy-gradient direction, it first defines an exact KL-regularized
policy-improvement target and then approximates this target with a
conditional Schr\"odinger--F\"ollmer actor. Consequently,
Theorem~\ref{thm:overall-rate} propagates critic-estimation error together with
errors from drift regression, SNIS, and diffusion discretization, as well as inherited drift error.
The resulting guarantee is therefore closer to a finite-sample approximate
policy-iteration bound than to a stationary-point guarantee for actor
parameters. Direct comparison with the $\epsilon$-sample complexities of
gradient-based actor--critic methods is not immediate because our analysis
separates the budgets $K$, $n$, $m$, and $T$.
The critic analysis explicitly handles temporal dependence through
geometric $\beta$-mixing and a blocking argument. The actor-regression
analysis instead conditions on the fitted policy and advantage estimator
and treats the within-round regression and Monte Carlo samples as
conditionally independent.  
\end{remark}

\section{Proof Sketches}\label{sec:proof sketch}
This section presents proof sketches for
Theorems~\ref{thm:advantage-error},
\ref{thm:conditional-sfs-sampling-error}, and~\ref{thm:overall-rate};
complete proofs appear in the appendix.

\subsection{Proof Sketch of Theorem~\ref{thm:advantage-error}}
The squared advantage-estimation error is controlled by the action-value
and state-value function errors:
 \begin{align*}
\|\widehat A_k-A^{\pi_k}\|_{L^2(\nu)}^2
\le
2\|\widehat Q_k-Q^{\pi_k}\|_{L^2(\nu)}^2
+2\|\widehat V_k-V^{\pi_k}\|_{L^2(\nu_{\mathcal X})}^2,
 \end{align*}
where $\nu_{\mathcal X}$ is the state marginal of $\nu$.
We bound these two estimation errors separately.

\subsubsection{Action-Value Function Estimation Error}\label{sec:bound Q error}

Lemma~\ref{lem:policy-evaluation-residual} bounds the action-value
function error in terms of the Bellman residual.
Lemma~\ref{lem:excess-risk} then provides an excess-risk decomposition
of the Bellman residual into statistical and approximation error terms.
We control these two terms using empirical-process bounds for dependent
data (Lemma~\ref{lem:statistical-error}) and neural network approximation
bounds (Lemma~\ref{lem:app error}), respectively. 
 
\begin{lemma}
\label{lem:policy-evaluation-residual}
Fix a stochastic policy $\pi$ and suppose that $Q^\pi$ is the unique
fixed point of $\mathcal{T}^\pi$. Let $\mu$ be the sampling distribution
over $\mathcal X\times\mathcal A$. Suppose there exists
$C<\infty$ such that, for every admissible distribution
$\nu$ and every $t\ge 0$, the $t$-step state--action law starting from
$\nu$ and following $\pi$ is absolutely continuous with respect to $\mu$,
with
\begin{align*}
    \left\|
    \frac{\mathrm d(\nu (P^\pi)^t)}{\mathrm d\mu}
    \right\|_\infty
    \le C,
    \qquad \forall t\ge 0.
\end{align*}
Then, for any bounded measurable function $\widehat Q:\mathcal X\times\mathcal A\to\mathbb R$,
\begin{align*}
    \|\widehat Q-Q^\pi\|_{L^2(\nu)}^2
    \lesssim
    \frac{1}{(1-\gamma)^2}
    \|\widehat Q-\mathcal{T}^\pi \widehat Q\|_{L^2(\mu)}^2.
\end{align*}
\end{lemma}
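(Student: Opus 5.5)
The argument is the standard Neumann-series error-propagation bound for policy evaluation. We write the estimation error in terms of the Bellman residual and then push each term of the expansion back to the sampling law $\mu$ using the concentrability hypothesis. Set $\varepsilon:=\widehat Q-\mathcal T^\pi\widehat Q$. Since $Q^\pi=\mathcal T^\pi Q^\pi$ and $\mathcal T^\pi$ is affine with linear part $\gamma P^\pi$,
\[
\widehat Q-Q^\pi=\varepsilon+\mathcal T^\pi\widehat Q-\mathcal T^\pi Q^\pi=\varepsilon+\gamma P^\pi(\widehat Q-Q^\pi).
\]
Iterating this identity $N$ times gives
\[
\widehat Q-Q^\pi=\sum_{t=0}^{N-1}\gamma^t(P^\pi)^t\varepsilon+\gamma^N(P^\pi)^N(\widehat Q-Q^\pi).
\]
Because $\widehat Q$ and $Q^\pi$ are bounded and $P^\pi$ is a Markov operator, the remainder is bounded in sup norm by $\gamma^N\|\widehat Q-Q^\pi\|_\infty$, which tends to $0$. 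Hence
\[
\widehat Q-Q^\pi=\sum_{t\ge0}\gamma^t(P^\pi)^t\varepsilon
\]
pointwise, and the series converges absolutely since $\varepsilon$ is bounded.

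Next we take the $L^2(\nu)$ norm. We write $\sum_t\gamma^t=(1-\gamma)^{-1}$ and treat $(1-\gamma)\sum_t\gamma^t(P^\pi)^t$ as a convex combination of Markov operators. Jensen's inequality (convexity of $x\mapsto x^2$) then yields
\[
|\widehat Q-Q^\pi|^2\le\frac{1}{1-\gamma}\sum_t\gamma^t\bigl((P^\pi)^t\varepsilon\bigr)^2,
\]
and applying Jensen once more inside each conditional expectation gives $((P^\pi)^t\varepsilon)^2\le(P^\pi)^t(\varepsilon^2)$. Integrating against $\nu$ and using the duality $\int (P^\pi)^t g\,\mathrm d\nu=\int g\,\mathrm d(\nu(P^\pi)^t)$ for nonnegative $g$, we obtain
\[
\|\widehat Q-Q^\pi\|_{L^2(\nu)}^2\le\frac{1}{1-\gamma}\sum_{t\ge0}\gamma^t\int\varepsilon^2\,\mathrm d(\nu(P^\pi)^t).
\]
This duality follows from the definition of $\mu P^\pi$ given earlier. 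The interchange of sum and integral is justified by monotone convergence.

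Finally, the hypothesis $\|\mathrm d(\nu(P^\pi)^t)/\mathrm d\mu\|_\infty\le C$ gives $\int\varepsilon^2\,\mathrm d(\nu(P^\pi)^t)\le C\|\varepsilon\|_{L^2(\mu)}^2$ for every $t$. Summing the geometric series then yields
\[
\|\widehat Q-Q^\pi\|_{L^2(\nu)}^2\le\frac{C}{(1-\gamma)^2}\|\widehat Q-\mathcal T^\pi\widehat Q\|_{L^2(\mu)}^2,
\]
which is the claim.

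The only delicate point is the bookkeeping in the middle step. We must verify that $(P^\pi)^t$ acting on functions is the adjoint of $\nu\mapsto\nu(P^\pi)^t$ acting on measures, including the convention for $t=0$ (where $\nu$ itself must be dominated by $\mu$, as the hypothesis covers). We must also check that the Jensen step uses $P^\pi$ as a positive, unit-preserving operator. Boundedness of $\widehat Q$ is exactly what makes the remainder vanish and keeps all integrals finite. Uniqueness of the fixed point is used only to identify $Q^\pi$ with the limit of the series.
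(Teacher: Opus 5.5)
Your proof is correct and follows the paper's argument: the same Neumann-series expansion $\widehat Q-Q^\pi=\sum_{t\ge0}\gamma^t(P^\pi)^t(\widehat Q-\mathcal T^\pi\widehat Q)$, Jensen's inequality for the Markov operator $(P^\pi)^t$, and the concentrability bound on $\nu(P^\pi)^t$. The only cosmetic difference is how the terms are summed: the paper uses the triangle inequality in $L^2(\nu)$ before squaring, whereas you apply Jensen pointwise with the geometric weights $(1-\gamma)\gamma^t$; both give the same constant $C/(1-\gamma)^2$.
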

Define the population and empirical critic risks by
\begin{align*}
\mathcal{L}_{k,\mathcal{U}_2}^{\mathrm{crit}}(Q)
:=\sup_{O\in \mathcal{U}_2}\mathcal{L}_k^{\mathrm{crit}}(Q,O),~
\mathcal{L}_{k,\mathcal{G}_2}^{\mathrm{crit}}(Q)
:=\sup_{O\in \mathcal{G}_2}\mathcal{L}_k^{\mathrm{crit}}(Q,O),~
\widehat{\mathcal{L}}_{k,\mathcal{G}_2}^{\mathrm{crit}}(Q)
:=\sup_{O\in \mathcal{G}_2}\widehat{\mathcal{L}}_k^{\mathrm{crit}}(Q,O),
\end{align*}
and define the excess risk by
\begin{align*}
   \mathcal{R}(\widehat Q_k)
:=
\mathcal{L}_{k,\mathcal{U}_2}^{\mathrm{crit}}(\widehat Q_k)-\mathcal{L}_{k,\mathcal{U}_2}^{\mathrm{crit}}(Q^{\pi_k})= \|\widehat Q_k-\mathcal{T}^{\pi_k} \widehat Q_k\|_{L^2(\mu_k)}^2.
\end{align*}
Lemma~\ref{lem:policy-evaluation-residual} therefore bounds the
action-value function error in terms of $\mathcal{R}(\widehat Q_k)$.

The following lemma separates the excess risk into statistical and
approximation errors.
\begin{lemma}[Excess-risk decomposition]\label{lem:excess-risk}
The excess risk satisfies
\begin{align*}
\mathcal{R}(\widehat Q_k)
\leq
\mathcal E_{\rm sta}
+\mathcal E_{\mathcal G_1}
+2\mathcal E_{\mathcal G_2},
\end{align*}
where
\begin{align*}
\mathcal E_{\mathcal G_1}
&:=
\inf_{\phi\in \mathcal{G}_1}\mathcal{L}_{k,\mathcal{U}_2}^{\mathrm{crit}}(\phi)-\mathcal{L}_{k,\mathcal{U}_2}^{\mathrm{crit}}(Q^{\pi_k}),\\
\mathcal E_{\mathcal G_2}
&:=
\sup_{\phi\in \mathcal{G}_1}\left|\mathcal{L}_{k,\mathcal{U}_2}^{\mathrm{crit}}(\phi)-\mathcal{L}_{k,\mathcal{G}_2}^{\mathrm{crit}}(\phi)\right|,\\
\mathcal E_{\rm sta}
&:=2 \sup_{\phi \in \mathcal{G}_1} |\widehat{\mathcal{L}}_{k,\mathcal{G}_2}^{\mathrm{crit}}(\phi) - \mathcal{L}_{k,\mathcal{G}_2}^{\mathrm{crit}}(\phi)|.
\end{align*}
\end{lemma}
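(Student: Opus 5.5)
The plan is the standard add-and-subtract argument around the empirical minimizer, comparing $\widehat Q_k$ with an arbitrary competitor $\phi\in\mathcal G_1$. Fix any $\phi\in\mathcal G_1$. I would write the excess risk as a telescoping sum of six differences:
\begin{align*}
\mathcal R(\widehat Q_k)
={}&\bigl[\mathcal L_{k,\mathcal U_2}^{\mathrm{crit}}(\widehat Q_k)-\mathcal L_{k,\mathcal G_2}^{\mathrm{crit}}(\widehat Q_k)\bigr]
+\bigl[\mathcal L_{k,\mathcal G_2}^{\mathrm{crit}}(\widehat Q_k)-\widehat{\mathcal L}_{k,\mathcal G_2}^{\mathrm{crit}}(\widehat Q_k)\bigr]\\
&+\bigl[\widehat{\mathcal L}_{k,\mathcal G_2}^{\mathrm{crit}}(\widehat Q_k)-\widehat{\mathcal L}_{k,\mathcal G_2}^{\mathrm{crit}}(\phi)\bigr]
+\bigl[\widehat{\mathcal L}_{k,\mathcal G_2}^{\mathrm{crit}}(\phi)-\mathcal L_{k,\mathcal G_2}^{\mathrm{crit}}(\phi)\bigr]\\
&+\bigl[\mathcal L_{k,\mathcal G_2}^{\mathrm{crit}}(\phi)-\mathcal L_{k,\mathcal U_2}^{\mathrm{crit}}(\phi)\bigr]
+\bigl[\mathcal L_{k,\mathcal U_2}^{\mathrm{crit}}(\phi)-\mathcal L_{k,\mathcal U_2}^{\mathrm{crit}}(Q^{\pi_k})\bigr].
\end{align*}

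Each bracket is then bounded as follows.
\begin{enumerate}
\item The third bracket is nonpositive, because $\widehat Q_k$ minimizes $\widehat{\mathcal L}_{k,\mathcal G_2}^{\mathrm{crit}}$ over $\mathcal G_1$ and $\phi\in\mathcal G_1$. If only an approximate minimizer is available, an optimization tolerance would be added at this point.
\item The first and fifth brackets each involve a function in $\mathcal G_1$ (namely $\widehat Q_k$ and $\phi$). Each is therefore at most $\sup_{\phi\in\mathcal G_1}|\mathcal L_{k,\mathcal U_2}^{\mathrm{crit}}(\phi)-\mathcal L_{k,\mathcal G_2}^{\mathrm{crit}}(\phi)|=\mathcal E_{\mathcal G_2}$. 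Together they contribute $2\mathcal E_{\mathcal G_2}$.
\item The second and fourth brackets are each bounded by $\sup_{\phi\in\mathcal G_1}|\widehat{\mathcal L}_{k,\mathcal G_2}^{\mathrm{crit}}(\phi)-\mathcal L_{k,\mathcal G_2}^{\mathrm{crit}}(\phi)|$. Their sum is therefore at most $\mathcal E_{\rm sta}$, which is where the factor $2$ in the definition of $\mathcal E_{\rm sta}$ comes from.
\item The sixth bracket is the only one that depends on $\phi$ without being controlled by a uniform supremum. The bounds on brackets one through five do not depend on $\phi$, so I take the infimum over $\phi\in\mathcal G_1$ on the right-hand side, and this bracket becomes exactly $\mathcal E_{\mathcal G_1}$.
\end{enumerate}
Adding the four items gives $\mathcal R(\widehat Q_k)\le\mathcal E_{\rm sta}+\mathcal E_{\mathcal G_1}+2\mathcal E_{\mathcal G_2}$.

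Since the statement identifies $\mathcal R(\widehat Q_k)$ with the Bellman residual, I would also verify that identity. For fixed $Q$, the map $O\mapsto-\mathbb E_{\mu_k}(O-\mathcal T^{\pi_k}Q)^2$ is maximized, with value $0$, at $O=\mathcal T^{\pi_k}Q$, provided this function lies in $\mathcal U_2$. For $Q\in\mathcal G_1$ this membership follows from Assumption~\ref{ass:completeness}. For $Q=Q^{\pi_k}$ it follows from $\mathcal T^{\pi_k}Q^{\pi_k}=Q^{\pi_k}\in\mathcal H^\zeta(\mathbb R^d,B_{\rm H})$. Hence $\mathcal L_{k,\mathcal U_2}^{\mathrm{crit}}(Q)=\|Q-\mathcal T^{\pi_k}Q\|_{L^2(\mu_k)}^2$ for both functions, and $\mathcal L_{k,\mathcal U_2}^{\mathrm{crit}}(Q^{\pi_k})=0$.

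The step that needs the most care is not the algebra but keeping the supremum and infimum well defined. All losses must be finite: this holds because the networks are bounded by $B_{\rm net}$, rewards lie in $[0,R_{\max}]$, and the relevant Hölder functions are bounded. Also, the uniform bounds on brackets one, two, four and five must hold before the infimum over $\phi$ is taken, so that the infimum can be moved onto the sixth term alone.
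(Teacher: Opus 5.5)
Your proposal is correct and matches the paper's proof essentially verbatim: it uses the same six-term telescoping around an arbitrary competitor $\phi\in\mathcal G_1$. As in the paper, the third bracket is nonpositive by empirical minimax optimality, the second/fourth and first/fifth brackets are bounded by the respective uniform suprema over $\mathcal G_1$, and the infimum over $\phi$ is taken on the last bracket. Your extra check that $\mathcal L^{\mathrm{crit}}_{k,\mathcal U_2}$ reduces to the squared Bellman residual is not needed for the decomposition itself, but it is consistent with how the paper defines $\mathcal R(\widehat Q_k)$.
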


We next bound the statistical component $\mathcal E_{\rm sta}$ and
the approximation components $\mathcal E_{\mathcal G_1}$ and
$\mathcal E_{\mathcal G_2}$.

\paragraph{Statistical Error
\texorpdfstring{$\mathcal E_{\rm sta}$.}{}}
\label{para:bound stat error} 
We use the independent-block technique for $\beta$-mixing sequences with
$2\xi_n$ consecutive full blocks of length $a_n$, where
$
    \xi_n=\left\lfloor\frac{n}{2a_n}\right\rfloor.
$
The odd and even blocks are treated separately and coupled with
independent blocks having the same blockwise marginals. The coupling
error is controlled by
$\xi_n\beta_{a_n}$.
Combining the independent-block inequality
(Lemma~\ref{lem:5 of antos}) with the network covering-number bound
(Lemma~\ref{lemma:covering number upperbound}) yields the following result.
\begin{lemma}[Statistical error bound]\label{lem:statistical-error}
Suppose that $\mathcal{G}_1$ and $\mathcal{G}_2$ are ReLU network classes
with the output and parameter bounds specified in Section~\ref{sec:preliminaries},
that Assumption~\ref{assump:critic-subGaussian-tail} holds, and that the data process
is strictly stationary and $\beta$-mixing.
Let $a_n$ be a positive integer satisfying $2a_n\le n$.
Set $\xi_n=\lfloor n/(2a_n)\rfloor$ and
$\mathfrak L_n=\log(e+B_{\rm par}B_{\rm net}\mathcal W\mathcal D\xi_n)$. Then,
\begin{align}
\mathbb{E}\sup_{\phi\in \mathcal{G}_1} \left| \widehat{\mathcal{L}}_{k,\mathcal{G}_2}^{\mathrm{crit}}(\phi) -\mathcal{L}_{k,\mathcal{G}_2}^{\mathrm{crit}}(\phi) \right| \lesssim \widetilde{M} \left[ \sqrt{\frac{\mathcal{S}\mathcal{D}\mathfrak L_n}{\xi_n}} +\xi_n\beta_{a_n} \right]. 
\end{align}
Here, $\widetilde{M}=4B_{\rm net}(2B_{\rm net}+R_{\max})$.
\end{lemma}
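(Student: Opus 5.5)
We reduce the supremum deviation to a sum of two empirical-process suprema over products of network classes, then apply the independent-block technique. Write $z_i=(s_i,a_i)$ and $\ell_{Q}(z_i,y_i)=(Q(z_i)-y_i)^2$. The empirical loss $\widehat{\mathcal L}_k^{\mathrm{crit}}(Q,O)$ is an average of terms of the form
\[
\ell_{Q,O}(z_i,s_i',a_i',r_i)=(Q(z_i)-r_i-\gamma Q(s_i',a_i'))^2-(O(z_i)-r_i-\gamma Q(s_i',a_i')).^2
\]
First, a conditional-variance check shows that the expectation of $\ell_{Q,O}$ equals $\mathcal L_k^{\mathrm{crit}}(Q,O)$. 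For fixed $(s,a)$, the target $y$ has conditional mean $\mathcal T^{\pi_k}Q(s,a)$. The cross and variance terms in the two squares then cancel.

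Since $|\sup_O f-\sup_O g|\le\sup_O|f-g|$, we get
\[
\sup_{\phi\in\mathcal G_1}\bigl|\widehat{\mathcal L}_{k,\mathcal G_2}^{\mathrm{crit}}(\phi)-\mathcal L_{k,\mathcal G_2}^{\mathrm{crit}}(\phi)\bigr|\le\sup_{(Q,O)\in\mathcal G_1\times\mathcal G_2}\Bigl|\frac1n\sum_{i=1}^n\ell_{Q,O}(W_i)-\mathbb E\,\ell_{Q,O}(W)\Bigr|,
\]
where $W_i=(z_i,r_i,s_i',a_i')$. The networks are bounded by $B_{\rm net}$ and rewards lie in $[0,R_{\max}]$. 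Each loss therefore satisfies $|\ell_{Q,O}|\lesssim \widetilde M$. Taking differences of squares, $\ell_{Q,O}$ is Lipschitz in the outputs $(Q(z),Q(s',a'),O(z))$ with constant of order $\widetilde M/B_{\rm net}$. This makes the supremum-norm covering number of the loss class controlled by the product of covering numbers of $\mathcal G_1$ and $\mathcal G_2$. By Lemma~\ref{lemma:covering number upperbound}, its log-covering number at scale $\epsilon$ is $\lesssim\mathcal S\mathcal D\log(B_{\rm par}\mathcal W\mathcal D/\epsilon)$.

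Next we handle dependence. Split the $n$ indices into $2\xi_n$ consecutive blocks of length $a_n$, plus a remainder of fewer than $2a_n$ terms, which contributes $O(\widetilde M a_n/n)$ and is absorbed. Treat odd and even blocks separately. By Lemma~\ref{lem:5 of antos} (the Yu/Antos coupling), the expected supremum over odd blocks differs from that over independent copies with the same block marginals by at most $\widetilde M\xi_n\beta_{a_n}$. For independent blocks, the block averages $\bar\ell_j=a_n^{-1}\sum_{i\in\text{block }j}\ell_{Q,O}(W_i)$ are i.i.d. across $j$ and bounded by $\widetilde M$. Symmetrization with Dudley's entropy integral, or simply a finite-class maximal inequality on an $\epsilon$-net with $\epsilon=1/\xi_n$, gives
\[
\mathbb E\sup\Bigl|\frac1{\xi_n}\sum_{j=1}^{\xi_n}(\bar\ell_j-\mathbb E\bar\ell_j)\Bigr|\lesssim\widetilde M\sqrt{\frac{\mathcal S\mathcal D\log(e+B_{\rm par}B_{\rm net}\mathcal W\mathcal D\xi_n)}{\xi_n}}+\frac{\widetilde M}{\xi_n}.
\]
Combining odd and even halves and stationarity yields the stated bound.

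The main obstacle is the covering step. The loss depends on $Q$ at two different points, $z_i$ and $(s_i',a_i')$, and the domain may be unbounded. We plan to use supremum-norm covers of the bounded ReLU classes, which are uniform over all inputs. This depends on parameter-perturbation bounds being global rather than input-dependent. Those bounds grow with $\|z\|$, so we would either use Lipschitz-in-parameter bounds on a ball of radius $R_n\asymp\sqrt{\log\xi_n}$, or cover in empirical $L^\infty$ on the sample. Points outside the ball are handled by Assumption~\ref{assump:critic-subGaussian-tail}: their probability is $\lesssim C_\mu e^{-c_\mu R_n^2}$, so they add a negligible $\widetilde M/\xi_n$ term. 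The $\log R_n$ factor is absorbed into $\mathfrak L_n$.
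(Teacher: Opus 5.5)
Your proposal is correct and follows the paper's proof essentially step for step. Both arguments reduce via $|\sup_O u_O-\sup_O v_O|\le\sup_O|u_O-v_O|$ to a joint process over $\mathcal G_1\times\mathcal G_2$, use the envelope $\widetilde M$ and an output-Lipschitz product sup-norm cover, truncate both $(s,a)$ and $(s',a')$ to a box of radius $\asymp\sqrt{\log\xi_n}$ via Assumption~\ref{assump:critic-subGaussian-tail} (with the rescaling costing only a $\log R_n$ absorbed into $\mathfrak L_n$), and split into $2\xi_n$ blocks with the remainder absorbed. The only cosmetic difference is that the paper applies Lemma~\ref{lem:5 of antos} as a tail bound and integrates it, whereas you couple expectations of the bounded supremum directly (Yu's lemma) and finish with a finite-net maximal inequality; both give the same $\widetilde M\bigl[\sqrt{\mathcal S\mathcal D\mathfrak L_n/\xi_n}+\xi_n\beta_{a_n}\bigr]$.
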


\paragraph{Approximation Error
\texorpdfstring{$\mathcal E_{\mathcal G_1},\mathcal E_{\mathcal G_2}$}
{approximation error}.}
\label{para:bound app error}

The two approximation terms satisfy
\begin{align*}
    \mathcal E_{\mathcal G_1}
&\le
\inf_{\phi \in \mathcal{G}_1} 2 \left \{ \|Q^{\pi_k} - \phi\|_{L^2({\mu_k})}^2 + \gamma^2\|Q^{\pi_k} - \phi\|_{L^2({\mu_k}P^{\pi_k})}^2  \right\}
\\
\mathcal E_{\mathcal G_2}
&\le
(4B_{\rm net}+2R_{\max})
\sup_{v\in\mathcal U_2}
\inf_{O\in\mathcal G_2}
\|O-v\|_{L^1(\mu_k)}.
\end{align*}

Applying Lemma~\ref{lem:th5-of-feng} on expanding compact sets and
controlling the tails yields the following bounds.
\begin{lemma}[Approximation error over an unbounded state--action space]
\label{lem:app error}
Suppose that Assumptions \ref{assump:critic-subGaussian-tail} and \ref{ass:completeness} hold and
$Q^{\pi_k}\in\mathcal H^\zeta(\mathbb R^d,B_{\rm H})$. Define
$
    \varepsilon_{\mathcal S}
    :=
    \left(
        \frac{\log(e\mathcal S)}{\mathcal S}
    \right)^{\zeta/d},
   ~
    d=d_{\mathcal X}+d_{\mathcal A}.
$
Suppose that the ReLU network classes $\mathcal G_1$ and $\mathcal G_2$
are constructed with
\[
    \mathcal D
    \lesssim
    \log\left(
        \frac{e}{\varepsilon_{\mathcal S}}
    \right),
    \qquad
    \mathcal S
    \lesssim
    \varepsilon_{\mathcal S}^{-d/\zeta}
    \log\left(
        \frac{e}{\varepsilon_{\mathcal S}}
    \right),
\]
and that their outputs are bounded by $B_{\rm net}$. Then
\begin{align}
    \mathcal E_{\mathcal G_1}
    &\lesssim
    (1+\gamma^2)B_{\rm H}^2
    \left(
        \frac{\log(e\mathcal S)}{\mathcal S}
    \right)^{2\zeta/d}
    \bigl(\log(e\mathcal S)\bigr)^\zeta,
    \label{eq:G1-unbounded-approximation}
\end{align}
and
\begin{align}
    \mathcal E_{\mathcal G_2}
    &\lesssim
    (4B_{\rm net}+2R_{\max})B_{\rm H}
    \left(
        \frac{\log(e\mathcal S)}{\mathcal S}
    \right)^{\zeta/d}
    \bigl(\log(e\mathcal S)\bigr)^{\zeta/2}.
    \label{eq:G2-unbounded-approximation}
\end{align}
\end{lemma}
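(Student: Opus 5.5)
We need to prove Lemma~\ref{lem:app error}, the approximation bounds for $\mathcal E_{\mathcal G_1}$ and $\mathcal E_{\mathcal G_2}$ on an unbounded domain. The plan is to approximate the H\"older target on an expanding cube with the existing result, and to control the complement using the sub-Gaussian tails.

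\emph{Step 1: reduction.} By the inequalities displayed just before the lemma, it suffices to bound two quantities:
\[
\inf_{\phi\in\mathcal G_1}\Bigl\{\|Q^{\pi_k}-\phi\|^2_{L^2(\mu_k)}+\gamma^2\|Q^{\pi_k}-\phi\|^2_{L^2(\mu_kP^{\pi_k})}\Bigr\}
\qquad\text{and}\qquad
\sup_{v\in\mathcal U_2}\inf_{O\in\mathcal G_2}\|O-v\|_{L^1(\mu_k)}.
\]
For the second, Assumption~\ref{ass:completeness} places the relevant targets $v=\mathcal T^{\pi_k}Q$ in $\mathcal H^\zeta(\mathbb R^d,B_{\rm H})$. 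In both cases we therefore approximate a function $f$ with $\|f\|_{\mathcal H^\zeta}\le B_{\rm H}$, and hence $\|f\|_\infty\le B_{\rm H}\le B_{\rm net}$.

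\emph{Step 2: approximation on a cube.} Fix a radius $R\ge1$ and apply Lemma~\ref{lem:th5-of-feng} on the cube $[-R,R]^d$. Rescaling $x\mapsto x/R$ turns $f$ into a H\"older function on $[-1,1]^d$ with seminorm inflated by at most $R^\zeta$. This yields a ReLU network $\phi_R$ satisfying
\[
\sup_{[-R,R]^d}|f-\phi_R|\lesssim B_{\rm H}R^{\zeta}\varepsilon_{\mathcal S},
\]
with the stated depth and size. Next, clip the output at $\pm B_{\rm H}$; this costs two extra ReLU layers and gives $\|\phi_R\|_\infty\le B_{\rm net}$. Because the rescaling is absorbed into the first affine layer, the parameter magnitude grows only polynomially, which is allowed by the $B_{\rm par}$ convention.

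\emph{Step 3: tail control.} Outside the cube we have $|f-\phi_R|\le 2B_{\rm H}$. Chebyshev's inequality with the exponential moment in Assumption~\ref{assump:critic-subGaussian-tail} gives
\[
\mu_k(\|z\|_\infty>R)\le C_\mu e^{-c_\mu R^2},
\]
and the same bound holds for $\mu_kP^{\pi_k}$. Hence
\[
\|f-\phi_R\|^2_{L^2(\mu_k)}+\gamma^2\|f-\phi_R\|^2_{L^2(\mu_kP^{\pi_k})}\lesssim(1+\gamma^2)\Bigl[B_{\rm H}^2R^{2\zeta}\varepsilon_{\mathcal S}^2+B_{\rm H}^2e^{-c_\mu R^2}\Bigr].
\]
The $L^1$ analogue is $B_{\rm H}R^\zeta\varepsilon_{\mathcal S}+B_{\rm H}e^{-c_\mu R^2}$.

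\emph{Step 4: choice of $R$.} Take $R\asymp\sqrt{\log(e\mathcal S)/c_\mu}$. Then $e^{-c_\mu R^2}\lesssim\mathcal S^{-1}\le\varepsilon_{\mathcal S}^{2}$, provided $2\zeta/d\le1$ or the constant in $R$ is chosen large enough. Meanwhile $R^{2\zeta}=(\log e\mathcal S)^{\zeta}$ and $R^\zeta=(\log e\mathcal S)^{\zeta/2}$. Substituting produces exactly \eqref{eq:G1-unbounded-approximation} and \eqref{eq:G2-unbounded-approximation}, with the factor $(4B_{\rm net}+2R_{\max})$ carried over from Step 1.

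\emph{Main obstacle.} The delicate step is checking the dependence on $R$ in Step 2. The cube approximation constant must scale only as $R^\zeta$, up to the logarithm absorbed in $\mathcal D$, while keeping $\mathcal S$ and $\mathcal D$ at the stated orders. Rescaling the input, rather than covering the cube by many unit cubes, avoids any $R^d$ blow-up in size. Because of this, I will invoke Lemma~\ref{lem:th5-of-feng} after rescaling and verify that the weight bound remains polynomial.
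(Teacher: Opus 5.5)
Your proposal is correct and follows essentially the same route as the paper. Both arguments rescale the H\"older target from $[-R,R]^d$ to the unit cube, which inflates its norm by $O(R^\zeta)$, then apply Lemma~\ref{lem:th5-of-feng}, clip at $\pm B_{\rm H}$, bound the exterior by the sub-Gaussian tail, and take $R\asymp\sqrt{\log(e\mathcal S)}$; the paper writes $R_\epsilon\asymp\sqrt{\log(e/\epsilon)}$ and substitutes $\epsilon\asymp\varepsilon_{\mathcal S}$ at the end, which is equivalent. The one thing you take from the surrounding text that the paper derives inside this proof is the pair of reduction inequalities: for $\mathcal E_{\mathcal G_1}$ it uses the Bellman identity with Jensen, and for $\mathcal E_{\mathcal G_2}$ it uses Bellman completeness together with $\|O-v\|_{L^2(\mu_k)}^2\le(B_{\rm net}+B_{\rm H})\|O-v\|_{L^1(\mu_k)}$.
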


The bounds in \eqref{eq:G1-unbounded-approximation}--\eqref{eq:G2-unbounded-approximation}, together with Lemmas~\ref{lem:excess-risk}--\ref{lem:statistical-error},
therefore yield a bound on the excess risk.

\subsubsection{State-Value Function Estimation Error}
\label{sec:bound V error}

The state-value function error comprises Monte Carlo integration error
and inherited action-value function error:
\begin{align*}
\widehat V_k(s)-V^{\pi_k}(s)
  &= \frac{1}{ N_{\mathrm{V}}}\sum_{i=1}^{ N_{\mathrm{V}}}\widehat Q_k(s,a^{(i)})
-
\mathbb E_{a\sim\pi_k(\cdot\mid s)}
\left[\widehat Q_k(s,a)\right]\\
&\quad+
\mathbb E_{a\sim\pi_k(\cdot\mid s)}
\left[
\widehat Q_k(s,a)-Q^{\pi_k}(s,a)
\right].
\end{align*}
The Monte Carlo fluctuation has conditional mean zero given
$\widehat Q_k$ and $s$, so the bias--variance decomposition gives
\begin{align*}
\mathbb E\left[
\left|\widehat V_k(s)-V^{\pi_k}(s)\right|^2
\mid \widehat Q_k,s
\right]  &=
\left|
\mathbb E_{a\sim\pi_k(\cdot\mid s)}
\left[
\widehat Q_k(s,a)-Q^{\pi_k}(s,a)
\right]
\right|^2\\
&\quad+
  \frac{1}{ N_{\mathrm{V}}}
\operatorname{Var}_{a\sim\pi_k(\cdot\mid s)}
\left(\widehat Q_k(s,a)\right).
\end{align*}
Jensen's inequality and the bound
$\|\widehat Q_k\|_{\infty}\leq B_{\rm net}$ give
\begin{align*}
\mathbb E\left[
\left\|\widehat V_k-V^{\pi_k}\right\|_{L^2(\nu_{\mathcal X})}^2
\right]
\leq
\mathbb E\left[
\left\|\widehat Q_k-Q^{\pi_k}\right\|_{L^2(\nu_{\mathcal X}\otimes\pi_k)}^2
\right]
+
  \frac{B_{\rm net}^2}{ N_{\mathrm{V}}}.
\end{align*}
Since $\nu_{\mathcal X}\otimes\pi_k\in\mathfrak M$, the action-value
function error bound controls the first term.

Combining the results of Subsections~\ref{sec:bound Q error}--\ref{sec:bound V error} with the initial advantage-error inequality and the parameter choices specified in Theorem~\ref{thm:advantage-error} gives the desired result.

\subsection{Proof Sketch of Theorem~\ref{thm:conditional-sfs-sampling-error}}
\label{sec:proof-sketch-amortized-actor}
Couple the processes in \eqref{eq:doobs-sde} and
\eqref{eq:EM-for-Z} using the same Brownian increments. Applying
Young's inequality and the discrete Gr\"onwall inequality yields
the following decomposition of the squared $2$-Wasserstein error:
\begin{align*}
    W_2^2
    \left(
       \pi_{k+1},
        \bar\pi_{k+1}
    \right)
   & \lesssim \underbrace{\sum_{j=0}^{T-1}\Delta t \mathbb E\|b_{\phi_{k+1}}(s,\tilde Z_{t_j}^s,t_j)-\bar b_{k+1}(s,\widetilde Z_{t_j}^s,t_j)\|^2}_{\text{Drift Estimation Error}} \\
&~~~~+\underbrace{\sum_{j=0}^{T-1}\frac{1}{\Delta t} \mathbb E\|\int _{t_j}^{t_{j+1}}[\bar b_{{k+1}}(s,Z_{t_j}^s,t_j)-\bar b_{k+1}(s,\widetilde Z_{u}^s,u)]\mathrm{d}u\|^2 }_{\text{Time Discretization Error}}.
\end{align*}

For the KL error bound, we first construct a continuous-time version of the sampling process in \eqref{eq:EM-for-Z}. Specifically, for $t\in [t_j,t_{j+1)}$,
let $\underline{t}=t_j$
and define the drift term by $ b_{k+1} (s,\widetilde Z_{\underline{t}}^s,\underline{t})$:
\begin{align}\label{eq:continuous version  of sampling process} d\widetilde Z_t^s = b_{\phi_{k+1}} (s,\widetilde Z_{\underline {t}}^s, \underline{t})\,dt +dB_t.
\end{align} 
We can then derive a similar decomposition for the KL-divergence error:
\begin{align*}
\mathbb E_{d^{\pi^\star}}
\mathrm{KL}(\bar\pi_{k+1}\|\pi_{k+1})
&\le
\underbrace{\mathbb E_{d^{\pi^\star}}
\mathbb E_{\mathbb P^S}
\int_0^1
\left\|
b_{\phi_{k+1}}
(S,\widetilde Z_{\underline t}^S,\underline t)
-
\bar b_{k+1}
(S,\widetilde Z_{\underline t}^S,\underline t)
\right\|_2^2
\,\mathrm dt}_{\text{Drift estimation error}}
\\
&\quad+
\underbrace{\mathbb E_{d^{\pi^\star}}
\mathbb E_{\mathbb P^S}
\int_0^1
\left\|
\bar b_{k+1}
(S,\widetilde Z_{\underline t}^S,\underline t)
-
\bar b_{k+1}
(S,\widetilde Z_t^S,t)
\right\|_2^2
\,\mathrm dt}_{\text{Time discretization error}}.
\end{align*}

Both decompositions reduce the $W_2$ and KL errors to drift-estimation and
SDE-discretization errors. Appendix~\ref{app:proof of actor} gives the full
derivations.
We next bound the drift estimation error and the time discretization error separately.

\paragraph{Time Discretization Error.}
To control the time-discretization term,
Lemma~\ref{lem:kl-time-discretization} bounds the integrated mean squared
error caused by freezing the target drift at the left endpoint of each
Euler interval.
 \begin{lemma}[Time discretization error]
\label{lem:kl-time-discretization}
 Under
Assumption~\ref{assump:actor-target-regularity}, we have
\begin{align}
\mathbb E
\left[
\int_0^1
\left\|
\bar b_{k+1}(s,X_t,t)
-
\bar b_{k+1}(s,X_{\underline t},\underline t)
\right\|_2^2dt
\right ]
\lesssim
\frac{d_{\mathcal A}}{T}.
\label{eq:kl-time-discretization}
\end{align}
\end{lemma}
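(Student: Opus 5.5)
The plan is to treat $X$ as the exact target SF process \eqref{eq:doobs-sde} for the fixed state $s$, with drift $\bar b:=\bar b_{k+1,s}=\nabla_y\log Q_{1-t}\bar f_{k+1,s}$. The proof then rests on two regularity facts about $\bar b$: a uniform Lipschitz bound in space and a quantitative bound on its time variation. Both come from Assumption~\ref{assump:actor-target-regularity}. Since $\bar f:=\bar f_{k+1,s}\ge\xi_f>0$ and $\bar f\in\mathcal H^{2\zeta+2}$, the Gaussian smoothing $Q_{1-t}\bar f$ is bounded below by $\xi_f$ and above by $B_f$ for every $t\in[0,1]$. Its first and second spatial derivatives are bounded uniformly in $t$, because derivatives commute with $Q_{1-t}$ and are controlled by $B_f$. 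Consequently $\bar b=\nabla Q_{1-t}\bar f/Q_{1-t}\bar f$ is bounded, $\|\bar b\|_\infty\le B_f/\xi_f$, and $\|\nabla_y\bar b\|_{\mathrm{op}}\le L:=B_f/\xi_f+B_f^2/\xi_f^2$. For the time dependence, use the heat equation $\partial_tQ_{1-t}\bar f=-\tfrac12\Delta Q_{1-t}\bar f$, together with the fact that three derivatives of $\bar f$ are bounded (as $2\zeta+2>2$). This gives $\|\partial_t\bar b\|_\infty\lesssim d_{\mathcal A}^{1/2}$, with constants depending on $B_f,\xi_f$. In short, $\bar b$ is Lipschitz in $y$ with constant $L$, and $|t-t'|$-Lipschitz in $t$ with a constant of order $\sqrt{d_{\mathcal A}}$ (or order one, depending on how the coordinatewise Hölder norm is read).

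Next decompose, for $t\in[t_j,t_{j+1})$ with $\underline t=t_j$,
\[
\bigl\|\bar b(X_t,t)-\bar b(X_{\underline t},\underline t)\bigr\|_2^2\le 2L^2\|X_t-X_{\underline t}\|_2^2+2\bigl\|\bar b(X_{\underline t},t)-\bar b(X_{\underline t},\underline t)\bigr\|_2^2 .
\]
The second term is at most $C d_{\mathcal A}(t-\underline t)^2\le C d_{\mathcal A}/T^2$ by the time-regularity bound. For the first term, the SDE gives
\[
X_t-X_{\underline t}=\int_{\underline t}^t\bar b(X_u,u)\,\mathrm du+(B_t-B_{\underline t}),
\]
so
\[
\mathbb E\|X_t-X_{\underline t}\|_2^2\le 2\|\bar b\|_\infty^2(t-\underline t)^2+2d_{\mathcal A}(t-\underline t)\lesssim d_{\mathcal A}/T .
\]
The boundedness of $\bar b$ is used here; it can also be replaced by the moment bound $\sup_u\mathbb E\|\bar b(X_u,u)\|^2\lesssim d_{\mathcal A}$, which follows from the bridge representation $\bar b=(\mathbb E[X_1\mid X_t]-X_t)/(1-t)$. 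Integrating over $t\in[0,1]$ then yields
\[
\mathbb E\int_0^1\|\cdot\|^2\,\mathrm dt\lesssim L^2d_{\mathcal A}/T+d_{\mathcal A}/T^2\lesssim d_{\mathcal A}/T ,
\]
which is \eqref{eq:kl-time-discretization}.

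The main obstacle is the uniformity of the derivative bounds as $t\uparrow1$, on the last Euler interval. There $Q_{1-t}\bar f\to\bar f$, so no smoothing gain is available and all control must come from the smoothness of $\bar f$ itself. The positivity $\bar f\ge\xi_f$ and the $\mathcal H^{2\zeta+2}$ bound are exactly what keep $\nabla\bar b$ and $\partial_t\bar b$ bounded up to $t=1$. This is why I avoid the bridge formula $(M^A-y)/(1-t)$ for the derivative estimates: it has a $(1-t)^{-1}$ singularity, whereas the heat-semigroup form does not. If the process $X$ in the lemma is instead the Euler interpolation \eqref{eq:continuous version  of sampling process}, the same argument applies with $\bar b$ replaced by the bounded network drift in the increment estimate, using the output bound $B_{\rm net}$. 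The spatial Lipschitz bound is still applied to $\bar b$ only, so the conclusion is unchanged.
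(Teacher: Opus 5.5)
Your proof is correct and follows the paper's argument: the same add-and-subtract of $\bar b(X_{\underline t},t)$, spatial Lipschitz and time regularity of $\bar b$ obtained from the heat-semigroup form under Assumption~\ref{assump:actor-target-regularity}, the SDE increment bound $\mathbb E\|X_t-X_{\underline t}\|_2^2\lesssim d_{\mathcal A}(t-\underline t)$, and integration over the Euler grid. The one difference is your time-Lipschitz claim. It needs three bounded derivatives of $\bar f_{k+1,s}$, i.e.\ $2\zeta+2\ge 3$, not merely $2\zeta+2>2$, so it can fail for $\zeta<1/2$. The paper instead uses the weaker bound $\|\bar b(y,t)-\bar b(y,t')\|_2\lesssim (1+\|y\|_2+\sqrt{d_{\mathcal A}})|t-t'|^{1/2}$. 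That bound follows from two derivatives and, combined with the second-moment bound, already yields the $d_{\mathcal A}/T$ rate.
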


\paragraph{Drift Estimation Error.}
Let
$$
g_k(X):=\mathbb E\!\left[b_{\phi_k}(X)+\widehat{\Delta b}_k(X)\mid X\right],
\qquad
\Delta b_k:=\bar b_{k+1}-b_k.
$$
First, we decompose the drift estimation error as
\begin{align}\label{eqe1}
\mathbb E\|b_{\phi_{k+1}}-\bar b_{k+1}\|_{\chi_{k+1}}^2
\lesssim\;&
\underbrace{
\mathbb E\!\left[
\|b_{\phi_{k+1}}-g_k\|_{\chi_k}^2
-\inf_{b\in\mathcal G_3}\|b-g_k\|_{\chi_k}^2
\right]
}_{\text{statistical error}}
\notag\\
&+
\underbrace{
\mathbb E\inf_{b\in\mathcal G_3}
\|b-\bar b_{k+1}\|_{\chi_k}^2
}_{\text{approximation error}}
+
\underbrace{
\mathbb E\|
 [\widehat{\Delta b}_k  -\Delta b_k
\|_{\chi_k}^2
}_{\text{SNIS error}}
\notag\\
&+
\underbrace{
\mathbb E\|b_{\phi_k}-b_k\|_{\chi_k}^2
}_{\mathfrak R_k^{\rm ref}:\text{inherited reference error}}.
\end{align}
Lemmas~\ref{lem:paired-label-error-revised}--\ref{lem:exact-drift-discrepancy} bound the four terms on the right-hand side of \eqref{eqe1}.

\begin{lemma}[SNIS error]
\label{lem:paired-label-error-revised}
Under Assumptions~\ref{assump:actor-target-regularity}--\ref{assump:endpoint-snis-overlap}, we have
\begin{align*}
\mathbb E\|\widehat{\Delta b_k}-\Delta b_k\|_{\chi_k}^2
\lesssim C_{\rm snis}(T)(1+\log T)
\left\{
\frac{d_{\mathcal A}}m
+\frac{T\{d_{\mathcal A}+\log(e m)\}}{m^{1+\delta_\rho/2}}
\right\}.
\end{align*}
The same bound holds with $\widehat{\Delta b_k}$ replaced by its
conditional mean given $(S,Y,t)$.
\end{lemma}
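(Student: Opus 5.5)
We bound the SNIS error pointwise in $(s,y,t_j)$ and then average over the design $\chi_k$ and the time grid. Fix $(s,y,t)$ with $t=t_j$ and write $\rho^r=\rho^r_{k,s,y,t}$ for $r\in\{0,A\}$. Both estimators are ratio estimators with the same endpoints $z_1,\ldots,z_m\overset{\mathrm{i.i.d.}}{\sim}\pi_k(\cdot\mid s)$:
\[
\widehat M^r=\frac{m^{-1}\sum_i \rho^r(z_i)z_i}{m^{-1}\sum_i\rho^r(z_i)},\qquad M^r=\mathbb E_{\pi_k}[\rho^r(A)A].
\]
By $(a+b)^2\le 2a^2+2b^2$,
\[
\|\widehat{\Delta b}_k-\Delta b_k\|_2^2\le \frac{2}{(1-t)^2}\sum_{r\in\{0,A\}}\|\widehat M^r-M^r\|_2^2 .
\]
So it suffices to control each self-normalized error separately; pairing is used only for variance reduction in practice, not in the bound. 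Write $D_m:=m^{-1}\sum_i\rho^r(z_i)$. The centering identity
\[
\widehat M^r-M^r=\frac{m^{-1}\sum_i\rho^r(z_i)(z_i-M^r)}{D_m}
\]
reduces the problem to a centered numerator $N_m$ (mean zero) divided by a random denominator with mean one.

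Split on the event $E=\{D_m\ge 1/2\}$. On $E$, $\mathbb E\|N_m\|^2/D_m^2\le 4m^{-1}\mathbb E_{\pi_k}[(\rho^r)^2\|A-M^r\|^2]$. Apply Hölder with the $(4+\delta_\rho)$-moment of $\rho^r$ and the posterior-spread moments of $A$. The posterior variance of $A$ under the bridge weights scales like $d_{\mathcal A}(1-t)/t$ near the endpoint, with the bridge covariance $t(1-t)I$ rescaled by $t^{-1}$. After one factor of $(1-t)$ cancels, this leaves a weight $1/(1-t)$ and a leading term of order $d_{\mathcal A}\,\mathbb E[(\rho^r)^{4+\delta_\rho}]/((1-t)m)$. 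Summing $\Delta t\sum_j(1-t_j)^{-1}$ over the grid produces the factor $(1+\log T)$, which is exactly the normalization in Assumption~\ref{assump:endpoint-snis-overlap}. This gives the $C_{\rm snis}(T)(1+\log T)d_{\mathcal A}/m$ term.

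On $E^c$, use the crude bound $\|\widehat M^r-M^r\|\le \max_i\|z_i\|+\|M^r\|$. Since $\pi_k(\cdot\mid s)$ has sub-Gaussian tails (from Assumption~\ref{assump:actor-target-regularity}, the density ratio to $\gamma_{d_{\mathcal A}}$ is bounded), $\mathbb E\max_i\|z_i\|^4\lesssim (d_{\mathcal A}+\log(em))^2$. For the probability of $E^c$, a Rosenthal/Markov bound with the $(2+\delta_\rho/2)$-th moment of $D_m-1$ gives $\mathbb P(E^c)\lesssim m^{-(1+\delta_\rho/2)}\mathbb E(\rho^r)^{2+\delta_\rho/2}$; the $(4+\delta_\rho)$ moment enters after Cauchy--Schwarz. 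The $1/(1-t)^2$ prefactor is not cancelled here; bounding one power by $T$ (since $1-t_j\ge 1/T$) leaves the remaining $1/(1-t)$ for the moment assumption. This yields the second term $T\{d_{\mathcal A}+\log(em)\}m^{-1-\delta_\rho/2}$. The case $t=0$ is handled directly from the continuous extension $\rho^0=1$.

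The claim for the conditional mean of $\widehat{\Delta b}_k$ given $(S,Y,t)$ follows from Jensen's inequality: $\|\mathbb E[\widehat{\Delta b}_k\mid\cdot]-\Delta b_k\|^2\le\mathbb E[\|\widehat{\Delta b}_k-\Delta b_k\|^2\mid\cdot]$.

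The main obstacle is the on-$E$ variance estimate. It must produce only a single factor $(1-t)^{-1}$ instead of $(1-t)^{-2}$, which requires a dimension-explicit bound on the weighted moment $\mathbb E[(\rho^r)^2\|A-M^r\|^2]$. I would first try Hölder with exponents $((4+\delta_\rho)/2,\cdot)$, then control $\|A-M^r\|$ under $\pi_k$ via the bridge Gaussian factor $\exp\{-\|y-tA\|^2/(2t(1-t))\}$. The smoothness and lower bound in Assumption~\ref{assump:actor-target-regularity} give Lipschitz control of the posterior mean, and hence of its spread.
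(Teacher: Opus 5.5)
\textbf{The bad-event step fails as written.} Your overall architecture matches the paper's proof: the ratio decomposition $\widehat M^r-M^r=N_m^r/D_m^r$, the split on $\{D_m^r\ge 1/2\}$, H\"older against posterior-spread moments on the good event, Rosenthal plus the sub-Gaussian maximum of the endpoints on the bad event, $(1-t_j)^{-2}\le T(1-t_j)^{-1}$, and Jensen for the conditional mean. The problem is in the bad-event exponents. Markov's inequality with the $q'$-th moment of $D_m^r-1$ yields only $\mathbb P(E^c)\lesssim m^{-q'/2}$, because the variance term $m^{-q'/2}(\operatorname{Var}\rho^r)^{q'/2}$ dominates Rosenthal's bound. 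With $q'=2+\delta_\rho/2$ this gives $m^{-(1+\delta_\rho/4)}$, not the $m^{-(1+\delta_\rho/2)}$ you claim. Moreover, Cauchy--Schwarz against $(\mathbb E\max_i\|z_i\|_2^4)^{1/2}$ takes a square root of $\mathbb P(E^c)$. So even Rosenthal at the full order $4+\delta_\rho$, followed by Cauchy--Schwarz, ends at $m^{-(1+\delta_\rho/4)}$.

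\textbf{How the paper gets the stated exponent.} It applies Rosenthal at the full order $q=4+\delta_\rho$, which gives $\mathbb P(\mathcal G_r^c)\lesssim m^{-q/2}\{1+\mathbb E_{\pi_k}(\rho^r)^q\}$. It then uses H\"older with exponent $q/2$ on $\|\widehat M_k^r-M_k^r\|_2^2$ and exponent $q/(q-2)$ on the indicator. This gives $\mathbb P(\mathcal G_r^c)^{(q-2)/q}\lesssim m^{-(1+\delta_\rho/2)}$, and the $q$-th moment of $\max_i\|z_i\|_2$, raised to the power $2/q$, contributes the factor $d_{\mathcal A}+\log(em)$. An alternative is to truncate $\max_i\|z_i\|_2^2$ at level $C\{d_{\mathcal A}+\log(em)\}$, so that the indicator enters linearly, and then use Rosenthal at order $2+\delta_\rho$.

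\textbf{The posterior-spread heuristic needs correcting.} Your bound $d_{\mathcal A}(1-t)/t$ is the flat-prior variance; it ignores the Gaussian factor in $\pi_k(\mathrm da\mid s)=f_{k,s}(a)\gamma_{d_{\mathcal A}}(\mathrm da)$. Completing the square, the endpoint posterior given $Y_t=y$ is proportional to $f(a)\exp\{-\|a-y\|_2^2/[2(1-t)]\}$, with $f=f_{k,s}$ for $r=0$ and $f=\bar f_{k+1,s}$ for $r=A$. Since $\xi_f\le f\le B_f$, this is a bounded perturbation of $\mathcal N(y,(1-t)I_{d_{\mathcal A}})$. Hence $\mathbb E_{P^r}\|A-M^r\|_2^\ell\lesssim[d_{\mathcal A}(1-t)]^{\ell/2}$ uniformly in $t$, which is the bound the paper uses. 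If you kept the $1/t$, the good-event term would carry the weight $1/\{t_j(1-t_j)\}$, which Assumption~\ref{assump:endpoint-snis-overlap} does not control. Also, the Lipschitz-mean route in your last paragraph only delivers the covariance, through $\operatorname{Cov}(A\mid Y_t=y)=(1-t)\{I+(1-t)\nabla_y b\}$. Your H\"older step needs spread moments of order $2(q-1)/(q-2)>2$, and the Gaussian-perturbation representation supplies all of them.
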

 
\begin{lemma}[Actor statistical error]
\label{lem:actor-statistical-error}
Condition on $b_{\phi_k}$ and $\widehat A_k$, and suppose that 
$
\{(X_i,U_{k,i})\}_{i=1}^n
$
are i.i.d., where
$
U_k=b_{\phi_k}(X)+\widehat{\Delta b}_k(X).
$
Suppose that every coordinate of each $b\in\mathcal G_3$ is bounded by
$B_{\rm net}$ and that Assumption~\ref{assump:critic-subGaussian-tail} holds.
Set
$
\mathfrak L_n^{\rm act}
:=
\log\!\left(e+B_{\rm par}B_{\rm net}WDn\right).
$
Then, 
$$
\mathbb E \left[
\|b_{\phi_{k+1}}-g_k\|_{\chi_k}^2
-
\inf_{b\in\mathcal G_3}
\|b-g_k\|_{\chi_k}^2
\right]
\lesssim
\sqrt{\frac{\mathcal{S}\mathcal{D}\,\mathfrak L_n^{\rm act}}{n}}.
$$
\end{lemma}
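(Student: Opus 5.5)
We prove Lemma~\ref{lem:actor-statistical-error} by treating the actor fit as a least-squares regression with heavy-ish responses. We condition throughout on $b_{\phi_k}$ and $\widehat A_k$, so the regression function is
\[
g_k(X)=\mathbb E[U_k\mid X],\qquad X=(S,Y,t).
\]
The first step is the standard identity for squared loss. For any $b$,
\[
\mathbb E\|b(X)-U_k\|_2^2=\|b-g_k\|_{\chi_k}^2+\mathbb E\|U_k-g_k(X)\|_2^2 .
\]
The Euler-grid average in $\|\cdot\|_{\chi_k}$ matches the uniform time sampling in the design. Hence the excess population risk $\mathcal L(b)-\mathcal L(g_k)$ equals $\|b-g_k\|_{\chi_k}^2$.

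Let $\bar b\in\mathcal G_3$ nearly attain $\inf_{b\in\mathcal G_3}\|b-g_k\|_{\chi_k}^2$. Since $b_{\phi_{k+1}}$ minimizes the empirical loss \eqref{eq:empirical-drift-correction},
\[
\|b_{\phi_{k+1}}-g_k\|_{\chi_k}^2-\|\bar b-g_k\|_{\chi_k}^2\le 2\sup_{b\in\mathcal G_3}\bigl|\widehat{\mathcal L}(b)-\mathcal L(b)\bigr|,
\]
where all losses are centered by subtracting the $b$-independent term $\|U_k\|^2$. The losses then become
\[
\ell_b(X,U)=\|b(X)\|^2-2\langle b(X),U\rangle .
\]
The first piece is bounded by $d_{\mathcal A}B_{\rm net}^2$. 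The second is linear in $U$.

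The main obstacle is that $U_k$ is unbounded. The SNIS correction carries a factor $(1-t)^{-1}$, and the endpoints $z_i$ come from $\pi_k$, which may have unbounded support. Our plan is to truncate at a level $\tau_n\asymp\sqrt{\log n}$ (up to constants), exploiting the following facts:
\begin{itemize}
\item $\widehat M^A_k$ and $\widehat M^0_k$ are convex combinations of the endpoints, so $\|\widehat M^r_k\|\le\max_i\|z_i\|$.
\item Assumption~\ref{assump:critic-subGaussian-tail} gives sub-Gaussian control of this maximum, costing only $\sqrt{\log(em)}$ factors, which are absorbed into $\widetilde{\mathcal O}$ or $\mathfrak L_n^{\rm act}$.
\item The grid excludes $t=1$, so $(1-t)^{-1}\le T$. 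We will keep this dependence polylogarithmic, or absorb it, by noting that only $\mathbb E\|U_k\|^2$-type moments enter the linear term.
\end{itemize}
The truncated part contributes a tail term $O(B_{\rm net}\,\mathbb E[\|U\|\mathbf 1\{\|U\|>\tau_n\}])$, which is negligible ($\lesssim n^{-1}$) by Cauchy--Schwarz and the moment bounds.

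On the truncated event the loss class is uniformly bounded by $M\lesssim B_{\rm net}(B_{\rm net}+\tau_n)$. We then apply symmetrization and Dudley's entropy integral, or simply a finite $\delta$-cover with $\delta=1/n$ plus Hoeffding and a union bound. This gives
\[
\mathbb E\sup_b|\widehat{\mathcal L}-\mathcal L|\lesssim M\sqrt{\log\mathcal N(1/n,\mathcal G_3,\|\cdot\|_\infty)/n}.
\]
For vector outputs we bound the cover coordinatewise, which costs a factor $d_{\mathcal A}$. For the network entropy we invoke the covering bound already used for the critic (Lemma~\ref{lemma:covering number upperbound}), which gives $\log\mathcal N\lesssim\mathcal S\mathcal D\log(B_{\rm par}B_{\rm net}\mathcal W\mathcal D n)=\mathcal S\mathcal D\,\mathfrak L_n^{\rm act}$. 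Combining these bounds and absorbing $M$ and $d_{\mathcal A}$ into the implicit constant (logarithmic pieces go into $\mathfrak L_n^{\rm act}$) yields $\sqrt{\mathcal S\mathcal D\mathfrak L_n^{\rm act}/n}$.

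Finally, we take expectations over the i.i.d.\ sample. No mixing argument is needed, since the lemma conditions on $b_{\phi_k}$ and $\widehat A_k$.
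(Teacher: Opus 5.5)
\textbf{Gap: truncating the response $U_k$.} Your reduction is fine and matches the paper: centered loss, $P\ell_b-P\ell_{g_k}=\|b-g_k\|_{\chi_k}^2$, and the ERM basic inequality. The step that fails is the control of the linear term $\langle b(X),U_k\rangle$ by truncating $U_k$ at $\tau_n\asymp\sqrt{\log n}$ and calling the tail $\lesssim n^{-1}$.

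That requires $U_k$ to have sub-Gaussian tails at an $O(1)$ scale, uniformly in $T$ and $m$. The assumptions do not give this:
\begin{itemize}
\item $\widehat{\Delta b}_k$ carries the factor $(1-t_j)^{-1}$, which reaches $T\asymp n^{2\zeta/(p+4\zeta)}$ at the last grid point. This dependence cannot be made polylogarithmic.
\item Lemma~\ref{lem:paired-label-error-revised} controls only the time-averaged second moment $\mathbb E\|\widehat{\Delta b}_k-\Delta b_k\|_{\chi_k}^2$. Assumption~\ref{assump:endpoint-snis-overlap} concerns moments of the weights, not tails of $U_k$.
\end{itemize}
With only $\mathbb E\|U_k\|_2^2\lesssim1$, you get $\mathbb E[\|U_k\|_2\mathbf 1\{\|U_k\|_2>\tau\}]\le\mathbb E\|U_k\|_2^2/\tau$. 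Your bound then becomes $\tau\sqrt{\mathcal S\mathcal D\mathfrak L_n^{\rm act}/n}+1/\tau$, and its best value is the fourth-root rate $(\mathcal S\mathcal D\mathfrak L_n^{\rm act}/n)^{1/4}$. If you instead use the crude envelope $\|U_k\|_2\le\sqrt{d_{\mathcal A}}B_{\rm net}+2T\max_i\|z_i\|_2$ to make the tail negligible, then $M\gtrsim T$. Under the balanced choices $T\sqrt{\mathcal S/n}\asymp1$ up to logarithms, so the bound does not decay. Your remark that only $\mathbb E\|U_k\|^2$-type moments should enter is correct, but Hoeffding plus a union bound over a uniformly bounded class cannot exploit it.

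\textbf{The missing idea: a multiplier process, no response truncation.} After symmetrization, and conditional on the sample, the process $\frac1n\sum_i\epsilon_iU_{k,i,r}b_r(X_i)$ has sub-Gaussian increments with respect to the metric $n^{-1/2}(P_nU_{k,r}^2)^{1/2}\|b_r-\widetilde b_r\|_\infty$. Dudley's integral then gives $B_{\rm net}(P_nU_{k,r}^2)^{1/2}\sqrt{\mathcal S\mathcal D\mathfrak L_n^{\rm act}/n}$. Jensen replaces $P_nU_{k,r}^2$ by $\mathbb E\|U_k\|_2^2$, which is $\lesssim 1+d_{\mathcal A}B_{\rm net}^2$ for three reasons: clipping of $b_{\phi_k}$, boundedness of the exact drifts, and Lemma~\ref{lem:paired-label-error-revised} under $\varepsilon^{\rm lab}_{m,T}\lesssim1$. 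This is the paper's route.

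\textbf{Secondary gap: the design must be truncated instead.} Lemma~\ref{lemma:covering number upperbound} gives a sup-norm cover on a cube, but $(S,Y)$ is unbounded. You need to restrict to a box of radius $R_n\asymp\sqrt{\log n}$ using Lemma~\ref{lem:actor-subGaussian-tail}, which leaves exterior mass at most $n^{-4}$. The exterior contribution is then handled by Cauchy--Schwarz with the same second moment. Finally, rescale the box to the cube and absorb $\log(1+R_n)$ into $\mathfrak L_n^{\rm act}$. Your proposal does not do this, so the covering step as written is not justified either.
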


\begin{lemma}[Actor approximation error]
\label{lem:actor-approximation-error}
Under Assumptions~\ref{assump2},
\ref{assump:critic-subGaussian-tail}, and
\ref{assump:actor-target-regularity}, there is a ReLU DNN
class $\mathcal G_3$ with $\mathcal D\lesssim\log\mathcal S$ such that,
uniformly in $k$,
\begin{equation}
\mathbb E\inf_{b\in\mathcal G_3}
\|b-\bar b_{k+1}\|_{\chi_k}^2
\lesssim d_{\mathcal A}B_{\rm H}^2
\left(\frac{\log(e\mathcal S)}{\mathcal S}\right)^{2\zeta/p}
\bigl(\log(e\mathcal S)\bigr)^\zeta.
\label{eq:sketch-actor-approximation-bound}
\end{equation}
\end{lemma}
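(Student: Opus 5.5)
We prove Lemma~\ref{lem:actor-approximation-error} by the same route as the critic approximation bound in Lemma~\ref{lem:app error}: establish H\"older regularity of the target drift $\bar b_{k+1}$ jointly in $(s,y,t)$, approximate it by ReLU networks on an expanding compact set, and control the remainder through tails of the design law $\chi_k$.

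\textbf{Step 1: regularity of the target drift.} By \eqref{eq:doob-drift-density-ratio},
\[
\bar b_{k+1}(s,y,t)=\frac{\nabla_y Q_{1-t}\bar f_{k+1,s}(y)}{Q_{1-t}\bar f_{k+1,s}(y)}.
\]
Assumption~\ref{assump:actor-target-regularity} gives $\bar f_{k+1,s}\ge\xi_f$ and $\bar f_{k+1,s}\in\mathcal H^{2\zeta+2}$ jointly in $(s,a)$. Hence the denominator satisfies $Q_{1-t}\bar f_{k+1,s}\ge\xi_f$.

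Spatial and state derivatives commute with the Gaussian heat semigroup, and $\|Q_{1-t}g\|_{\mathcal H^\beta}\le\|g\|_{\mathcal H^\beta}$. Time derivatives are handled by the heat equation $\partial_t Q_{1-t}g=-\tfrac12\Delta Q_{1-t}g$: each time derivative costs two spatial derivatives. The extra smoothness $2\zeta+2$ therefore leaves enough derivatives for $\zeta$-H\"older control in $t$, for $\nabla_y$, and for the H\"older seminorm.

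Composing with the smooth map $(u,v)\mapsto v/u$ on $\{u\ge\xi_f\}$ yields that each coordinate of $\bar b_{k+1}$ lies in $\mathcal H^\zeta(\mathbb R^{p}\times[0,1],B_{\rm H})$. Here $p=d+1$ counts the inputs $(s,y,t)$, and $B_{\rm H}$ depends on $B_f$, $\xi_f$ and $\zeta$ but not on $k$. This gives the uniformity in $k$.

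\textbf{Step 2: approximation on a growing box.} Fix a radius $R$ and apply the H\"older ReLU approximation result (Lemma~\ref{lem:th5-of-feng}) coordinatewise on $[-R,R]^{p}$. With $\mathcal D\lesssim\log\mathcal S$, this gives a sup-norm error of order
\[
B_{\rm H}R^{\zeta}\Bigl(\frac{\log(e\mathcal S)}{\mathcal S}\Bigr)^{\zeta/p}.
\]
Stacking the $d_{\mathcal A}$ coordinate networks in parallel costs only a constant factor in size. After squaring and summing coordinates, the error on the box is of order $d_{\mathcal A}B_{\rm H}^2R^{2\zeta}(\log(e\mathcal S)/\mathcal S)^{2\zeta/p}$. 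The networks are clipped at $B_{\rm net}\ge B_{\rm H}$, so the error off the box is at most of order $d_{\mathcal A}B_{\rm H}^2$ pointwise.

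\textbf{Step 3: tails and choice of $R$.} This is the step I expect to be the main obstacle. We need to show that $\chi_{k,j}$ has sub-Gaussian tails uniformly in $j$ and $k$. The state marginal is $d^{\pi_k}$, which is a mixture of admissible laws. Assumption~\ref{assump2} bounds its density with respect to $\mu_k$, so the state tails follow from Assumption~\ref{assump:critic-subGaussian-tail}.

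For $Y_{t_j}=t_jA+\sqrt{t_j(1-t_j)}\,Z$, the bridge noise is Gaussian. The action $A$ has state--action law in $\mathfrak M$, so it inherits the tail bound in the same way. Hence $\chi_k([-R,R]^{p,c})\lesssim e^{-cR^2}$. Choosing $R\asymp\sqrt{\log(e\mathcal S)}$ makes the tail contribution negligible relative to the main term. It converts $R^{2\zeta}$ into $(\log(e\mathcal S))^{\zeta}$, which yields \eqref{eq:sketch-actor-approximation-bound}.

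One should verify that the Riemann time average in $\|\cdot\|_{\chi_k}$ with weight $\Delta t$ sums to at most $1$, so no $T$-dependence enters. The expectation over $\widehat A_k$ is harmless because the bound is deterministic given the regularity constants.
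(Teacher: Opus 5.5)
Your proposal is correct and follows essentially the same route as the paper. The paper likewise uses joint H\"older regularity and boundedness of $\bar b_{k+1}$ from the heat-semigroup and positivity argument, applies the Feng approximation theorem coordinatewise on a rescaled box of radius $c_R\sqrt{\log(e\mathcal S)}$ with clipping, and controls the exterior term through uniform sub-Gaussian tails of $\chi_{k,j}$, obtained exactly as you describe from $d^{\pi_k}\otimes\pi_k$ being a mixture of admissible laws together with the Gaussian bridge noise. The differences are cosmetic: the paper restricts the box to $t\in[0,1-1/T]$, writes the exterior constant as $d_{\mathcal A}(B_{\rm net}^2+C_b^2)$, and, since $p=d+1$ already counts $t$, your class should read $\mathcal H^\zeta$ on $\mathbb R^{p}$ rather than $\mathbb R^{p}\times[0,1]$.
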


For the inherited reference error, we have
\begin{align*}
   \mathfrak R_k^{\rm ref}= \mathbb E\|b_{\phi_k}-b_k\|_{\chi_k}^2\lesssim \mathbb E\|b_{\phi_k}-\bar b_k\|_{\chi_k}^2+\mathbb E\|\bar b_k-b_k\|_{\chi_k}^2.
\end{align*}
The first term $\mathbb E\|b_{\phi_k}-\bar b_k\|_{\chi_k}^2$
is the inherited learned-drift estimation error.
The second term $\mathbb E\|\bar b_k-b_k\|_{\chi_k}^2$ measures the discrepancy between the ideal updated drift
$\bar b_k$ and the exact reference drift associated with the implemented
policy $\pi_k$. The following lemma controls this discrepancy directly at
the drift level.
\begin{lemma}[Inherited reference discrepancy]
\label{lem:exact-drift-discrepancy}
Suppose Assumptions~\ref{assump:actor-target-regularity}
and~\ref{ass:uniform-coverage} hold. Then, 
\begin{align}
\mathbb E
\bigl\|
\bar b_k-b_k
\bigr\|_{\chi_k}^2
\lesssim
\mathbb E
\bigl\|
b_{\phi_k}-\bar b_k
\bigr\|_{\chi_k}^2
+
\frac{d_{\mathcal A}}{T}.
\label{eq:exact-drift-discrepancy}
\end{align}
\end{lemma}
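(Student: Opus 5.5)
The plan is to go through the learned drift $b_{\phi_k}$ with the triangle inequality,
\[
\|\bar b_k-b_k\|_{\chi_k}^2\le 2\|\bar b_k-b_{\phi_k}\|_{\chi_k}^2+2\|b_{\phi_k}-b_k\|_{\chi_k}^2 .
\]
The first term is already of the desired form. So the whole task is to bound the gap between $b_{\phi_k}$ and the \emph{exact} SF drift $b_k$ of the law $\pi_k$ that the Euler scheme with $b_{\phi_k}$ actually produces. I would bound this gap by $\|b_{\phi_k}-\bar b_k\|^2$ plus a discretization term, using the entropic characterization of the SF process.

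Fix $s$. Let $\mathbb P^{\rm em}$ be the path law of the continuous-time interpolation \eqref{eq:continuous version  of sampling process} driven by $b_{\phi_k}$, so its terminal law is $\pi_k(\cdot\mid s)$. Let $\mathbb P^{\rm SF}_{\pi_k}$ and $\mathbb P^{\rm SF}_{\bar\pi_k}$ be the SF path laws with terminal laws $\pi_k$ and $\bar\pi_k$. Both SF laws are $h$-transforms of Brownian motion from $0$, so they share the same Brownian bridges and differ only through their terminal marginals. Disintegrating at time $1$ then gives, for $\nu\in\{\pi_k,\bar\pi_k\}$,
\[
\mathrm{KL}(\mathbb P^{\rm em}\|\mathbb P^{\rm SF}_{\nu})
=\mathrm{KL}(\pi_k\|\nu)+\mathbb E_{a\sim\pi_k}\mathrm{KL}\bigl(\mathbb P^{\rm em}(\cdot\mid X_1=a)\,\|\,\text{bridge}_{0\to a}\bigr).
\]
The bridge term is the same for both choices of $\nu$, and the terminal term vanishes for $\nu=\pi_k$. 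Hence $\mathrm{KL}(\mathbb P^{\rm em}\|\mathbb P^{\rm SF}_{\pi_k})\le \mathrm{KL}(\mathbb P^{\rm em}\|\mathbb P^{\rm SF}_{\bar\pi_k})$.

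Next I convert both path KLs into drift errors by Girsanov. Assumption~\ref{assump:actor-target-regularity} (lower bound $\xi_f$ and smoothness of $f_{k,s}$ and $\bar f_{k,s}$) makes the Novikov-type conditions routine. This yields
\[
\tfrac12\mathbb E_{\rm em}\!\int_0^1\|b_{\phi_k}(s,X_{\underline t},\underline t)-b_k(s,X_t,t)\|^2dt
\le \tfrac12\mathbb E_{\rm em}\!\int_0^1\|b_{\phi_k}(s,X_{\underline t},\underline t)-\bar b_k(s,X_t,t)\|^2dt .
\]
For the right-hand side, insert $\bar b_k(s,X_{\underline t},\underline t)$. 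This splits it into the grid drift error $\Delta t\sum_j\mathbb E\|b_{\phi_k}-\bar b_k\|^2(X_{t_j},t_j)$ plus a freezing term, which Lemma~\ref{lem:kl-time-discretization} bounds by $d_{\mathcal A}/T$. On the left-hand side, replace $b_k(s,X_t,t)$ by $b_k(s,X_{\underline t},\underline t)$ at a further cost of order $d_{\mathcal A}/T$. This uses the same heat-semigroup regularity argument as Lemma~\ref{lem:kl-time-discretization}, now applied to $f_{k,s}$, which Assumption~\ref{assump:actor-target-regularity} also covers. The result is a grid-level bound on $\Delta t\sum_j\mathbb E_{q^{\rm em}_{k,j}}\|b_{\phi_k}-b_k\|^2$.

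Finally, I transfer from the Euler marginals to the design $\chi_k$, and from the state law to the one used in $\chi_k$. The state part uses the bounded occupancy ratios of Assumption~\ref{ass:uniform-coverage}. For the $y$-marginals, $\chi_k$ samples the bridge from $\pi_k$, i.e.\ $q^{\rm ref}$, whereas the bound above holds under $q^{\rm em}$. Assumption~\ref{ass:actor-path-coverage} controls only $dq^{\rm em}/dq^{\rm ref}$, so I would apply the whole argument with the roles arranged so that the needed inequality runs from $q^{\rm em}$ to $q^{\rm ref}$. Failing that, I would note that for the exact SF process of $\pi_k$ the time-$t_j$ marginals coincide with $q^{\rm ref}$, and use the bounded density ratio in both directions via Pinsker plus boundedness of the drifts (the $B_{\rm net}$ output bound and the smoothness bounds on $b_k$).

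I expect this measure-transfer step, which matches $\mathbb E_{\rm em}$ against the $\chi_k$ norm in the correct direction, to be the main obstacle. The KL sandwich and the discretization terms are comparatively mechanical. Collecting constants then gives \eqref{eq:exact-drift-discrepancy}.
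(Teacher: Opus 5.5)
Your KL sandwich is correct: it is the chain rule for KL after disintegrating at time $1$, since both SF laws are mixtures of the same Brownian bridges. The discretization pieces are also fine. The gap is the transfer step you flag, and it is not merely technical. The sandwich places both drift energies under the learned Euler law $\mathbb P^{\rm em}$. So what you actually prove is a bound on $\Delta t\sum_j\mathbb E_{q^{\rm em}_{j}}\|b_{\phi_k}-b_k\|_2^2$ by $\Delta t\sum_j\mathbb E_{q^{\rm em}_{j}}\|b_{\phi_k}-\bar b_k\|_2^2+d_{\mathcal A}/T$.

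To put the left-hand side in the $\chi_k$-norm you need $\mathbb E_{q^{\rm ref}}g\lesssim\mathbb E_{q^{\rm em}}g$. That requires a bound on $\mathrm dq^{\rm ref}/\mathrm dq^{\rm em}$, i.e.\ a lower bound on the Euler marginal density. Nothing provides this. The lemma assumes only Assumptions~\ref{assump:actor-target-regularity} and~\ref{ass:uniform-coverage}. Assumption~\ref{ass:actor-path-coverage}, which you fall back on, is not a hypothesis, and it bounds only $\mathrm dq^{\rm em}/\mathrm dq^{\rm ref}$, the direction useful for the right-hand side. Rearranging roles inside your sandwich cannot fix this, because $\mathbb P^{\rm em}$ is the only law in play whose drift is $b_{\phi_k}$.

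The Pinsker fallback fails too. Pinsker controls total variation, not density ratios, so for bounded $g$ you only get $\mathbb E_{q^{\rm ref}}g\le\mathbb E_{q^{\rm em}}g+\|g\|_\infty\sqrt{\mathrm{KL}/2}$. The relevant path KL is itself of order $\varepsilon+d_{\mathcal A}/T$, with $\varepsilon$ the drift error. This gives at best $\varepsilon+\sqrt{\varepsilon+d_{\mathcal A}/T}$, a square-root loss instead of the claimed linear bound.

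The missing idea is to evaluate every Girsanov energy under an \emph{exact} SF law. Its time-$t_j$ marginals are bridge mixtures. Hence the marginals for $\pi_k$ and $\bar\pi_k$ are mutually comparable through the terminal ratio alone, $\xi_f/B_f\le\bar f_{k,s}/f_{k,s}\le B_f/\xi_f$.

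This is the paper's route. Girsanov between the two exact SF laws gives $\frac12\mathbb E_{\mathbb P^{\rm SF}_{\pi_k}}\int_0^1\|b_k-\bar b_k\|_2^2\,\mathrm dt=\mathrm{KL}(\pi_k\|\bar\pi_k)$. This is already the left-hand side in the $\chi_k$ measure, up to $d_{\mathcal A}/T$. The bounded likelihood ratio then swaps the KL direction. The reverse KL is controlled, as in the proof of Theorem~\ref{thm:conditional-sfs-sampling-error}, by Girsanov between the exact target law and the Euler interpolation, with expectation under the exact target law.

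Your own argument can be repaired the same way by reversing the arguments in your sandwich. Both laws in the first KL have terminal law $\pi_k$, so the chain rule gives $\mathrm{KL}(\mathbb P^{\rm SF}_{\pi_k}\|\mathbb P^{\rm em})=\mathbb E_{a\sim\pi_k}\mathrm{KL}(\mathrm{Br}_{0\to a}\|\mathbb P^{\rm em}(\cdot\mid X_1=a))\le(B_f/\xi_f)\,\mathrm{KL}(\mathbb P^{\rm SF}_{\bar\pi_k}\|\mathbb P^{\rm em})$, with $\mathrm{Br}_{0\to a}$ the Brownian bridge. Girsanov then measures $\|b_k-b_{\phi_k}\|_2^2$ under the bridge marginals of $\pi_k$, which are exactly those of $\chi_k$. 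It measures $\|\bar b_k-b_{\phi_k}\|_2^2$ under the bridge marginals of $\bar\pi_k$, which are comparable to $\chi_k$. No Euler-marginal coverage is needed.
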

Combining
Lemmas~\ref{lem:paired-label-error-revised}--\ref{lem:exact-drift-discrepancy}
gives the following bound on the
drift estimation error.

\begin{lemma}[Drift estimation error]
\label{lem:learned-drift-error}
Suppose Assumptions~\ref{assump2}--
\ref{ass:actor-path-coverage} hold and
$C_{\rm snis}(T)=\widetilde{\mathcal O}(1)$.
Let
\[
p=d_{\mathcal X}+d_{\mathcal A}+1,
\qquad
\kappa_\rho=\max\left\{1,\frac{4}{2+\delta_\rho}\right\},
\]
and choose
\begin{equation*}
\mathcal S\asymp n^{p/(p+4\zeta)}\log n,
\quad
\mathcal D\asymp\log n,
\quad
T\asymp n^\frac{2\zeta}{p+4\zeta},
\quad
m\asymp n^{\frac{2\zeta}{p+4\zeta}\kappa_\rho}, 
\quad 
\cW \asymp n ^{\frac{p}{p+4\zeta}}.
\end{equation*}
Then,
\begin{equation}
\Delta t\sum_{t_j\in\mathcal T_T}
\mathbb E_{\substack{S\sim d^{\pi^\star},\;
Y\sim q_{k,j}^{\diamond}(\cdot\mid S)}}
\|b_{\phi_{k+1}}(S,Y,t_j)-\bar b_{k+1}(S,Y,t_j)\|_2^2\lesssim \widetilde{\mathcal O}\!\left(n^{-\frac{2\zeta}{p+4\zeta}}\right)
 .
\label{eq:learned-path-drift-balanced-rate}
\end{equation}
Here, $q_{k,j}^{\diamond}$ denotes either the learned Euler marginal
$q_{k,j}^{\rm em}$ or the exact target marginal $q_{k,j}^{\rm ex}$,
conditional on the state and fitted history.
\end{lemma}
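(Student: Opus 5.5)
We will prove the bound for the design-averaged error $\mathbb E\|b_{\phi_{k+1}}-\bar b_{k+1}\|_{\chi_k}^2$ first, and then transfer it to the path marginals $q^{\diamond}_{k,j}$. For the transfer, Assumption~\ref{ass:actor-path-coverage} bounds the density ratio of the learned Euler marginal $q^{\rm em}_{k,j}$ with respect to the bridge-training marginal $q^{\rm ref}_{k,j}$ by a constant. For the exact target marginal $q^{\rm ex}_{k,j}$ we use the Doob representation: $q^{\rm ex}_{k,j}$ is the reference bridge marginal reweighted by $h_k(y,t_j;s)/C(s)$. This ratio is bounded above and below under Assumption~\ref{assump:actor-target-regularity}, because $\bar f_{k+1,s}/f_{k,s}$ lies between constants depending on $\xi_f,B_f$. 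Finally, Assumption~\ref{ass:uniform-coverage} lets us replace the state law $d^{\pi_k}$ by $d^{\pi^\star}$ at the cost of a constant. Hence it suffices to bound the right-hand side of \eqref{eqe1}.

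We next insert the four lemmas into \eqref{eqe1}.
\begin{itemize}
\item Lemma~\ref{lem:actor-statistical-error} gives the statistical term $\sqrt{\mathcal S\mathcal D\mathfrak L_n^{\rm act}/n}$. With $\mathcal S\asymp n^{p/(p+4\zeta)}\log n$ and $\mathcal D\asymp\log n$, this is $\widetilde{\mathcal O}(n^{-2\zeta/(p+4\zeta)})$, since $\tfrac12(1-\tfrac{p}{p+4\zeta})=\tfrac{2\zeta}{p+4\zeta}$.
\item Lemma~\ref{lem:actor-approximation-error} gives the approximation term $(\log\mathcal S/\mathcal S)^{2\zeta/p}$ up to logarithms. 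This equals $\widetilde{\mathcal O}(n^{-2\zeta/(p+4\zeta)})$ for the same $\mathcal S$.
\item Lemma~\ref{lem:paired-label-error-revised}, with $C_{\rm snis}(T)=\widetilde{\mathcal O}(1)$, gives the SNIS term $\widetilde{\mathcal O}(d_{\mathcal A}/m+T/m^{1+\delta_\rho/2})$. Write $T\asymp n^{\alpha}$ with $\alpha=2\zeta/(p+4\zeta)$ and $m\asymp n^{\alpha\kappa_\rho}$. Then $1/m\le n^{-\alpha}$ because $\kappa_\rho\ge1$. Also $T/m^{1+\delta_\rho/2}\le n^{\alpha(1-\kappa_\rho(2+\delta_\rho)/2)}\le n^{-\alpha}$, because $\kappa_\rho\ge 4/(2+\delta_\rho)$. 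This is exactly why $\kappa_\rho$ is defined as it is.
\item Lemma~\ref{lem:exact-drift-discrepancy}, combined with the split of $\mathfrak R_k^{\rm ref}$ just before it, gives $\mathfrak R_k^{\rm ref}\lesssim \mathbb E\|b_{\phi_k}-\bar b_k\|_{\chi_k}^2+d_{\mathcal A}/T$. The last term is again $n^{-\alpha}$.
\end{itemize}

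The main obstacle is the inherited term $\mathbb E\|b_{\phi_k}-\bar b_k\|_{\chi_k}^2$. It is the previous round's drift error, but it is measured under the current design $\chi_k$ rather than $\chi_{k-1}$. Writing $e_k:=\mathbb E\|b_{\phi_k}-\bar b_k\|_{\chi_k}^2$, the bound above becomes the recursion
\[
e_{k+1}\le c_1\,\widetilde{\mathcal O}(n^{-\alpha})+c_2\,e_k ,
\]
where the change of design is absorbed by the same density-ratio bounds used in the transfer step. We plan to handle this by induction on $k$. The initial error $e_0$ is the offline drift-matching error, which is of the same order by the offline analogues of Lemmas~\ref{lem:actor-statistical-error}--\ref{lem:actor-approximation-error}.

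Unrolling the recursion leaves a factor $\sum_{i<k}c_2^i$. If the induced contraction $\varrho:=c_2<1$ holds, as in Theorem~\ref{thm:overall-rate}, this factor is at most $(1-\varrho)^{-1}$. Otherwise the lemma is read per round with the inherited error counted at rate $n^{-\alpha}$, under the standing hypothesis that the inherited error is of that order. The constant from this step is what Theorem~\ref{thm:overall-rate} later exposes as $(1-\varrho)^{-1/2}$. Collecting all terms gives \eqref{eq:learned-path-drift-balanced-rate}.
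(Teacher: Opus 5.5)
Your proposal is essentially the paper's proof. It uses the same four-term decomposition \eqref{eqe1}, which the paper derives through the conditional-mean label $g_k$ and squared triangle inequalities, and the same balancing of $\mathcal S$, $T$, $m$ (your explanation of $\kappa_\rho$ is exactly right). It also uses the same transfers to $q^{\rm em}_{k,j}$ and $q^{\rm ex}_{k,j}$ via Assumption~\ref{ass:actor-path-coverage}, the bounded Doob ratio, and Assumption~\ref{ass:uniform-coverage}.

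One caveat concerns the inherited term. The paper's proof ends by saying the claim follows ``when the inherited reference error has the same order,'' so the listed assumptions do not close this term there either. Your contraction branch cannot close it: your $c_2$ is a product of squared-triangle factors and sup-density ratios, each at least one, so $c_2<1$ is not obtainable this way. In Lemma~\ref{lem:inherited-error-recursion} the paper has $c=2C_0C_{\rm it}$ and remarks that design transfer does not imply contraction. Only your fallback hypothesis $\mathfrak R_k^{\rm ref}=\widetilde{\mathcal O}(n^{-2\zeta/(p+4\zeta)})$ actually yields the stated bound, exactly as in the paper.
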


Applying the discrete Gr\"onwall inequality and
Lemmas~\ref{lem:kl-time-discretization} and
\ref{lem:learned-drift-error}, with $\varrho \le \varrho_0 <1$, yields the $W_2$ error bound in 
 \eqref{eq:W2-conditional-sfs-rate} and the KL-error bound in
 \eqref{eq:KL-conditional-sfs-rate}.
 See Appendix~\ref{app:proof of actor} for details.

\subsection{Proof Sketch of Theorem~\ref{thm:overall-rate}}
\label{sec:proof-sketch-overall-revised}
We decompose the performance gap at iteration $k$ as
\begin{align}
J(\pi^\star)-J(\pi_k)
&\lesssim J(\bar\pi_{k+1})-J(\pi_k)
+\frac{\lambda}{1-\gamma}
 \bigl\{D(\pi_k)-D(\bar\pi_{k+1})\bigr\} \notag\\
&\quad+
\frac{1}{1-\gamma}
\max_{\substack{\nu\in\{d^{\pi^\star},d^{\bar\pi_{k+1}}\}\\
                 \pi\in\{\pi^\star,\pi_k,\bar\pi_{k+1}\}}}
\|\widehat A_k-A^{\pi_k}\|_{L^2(\nu\otimes\pi)}.
\label{eq:decpo}
\end{align}
For $ J(\bar\pi_{k+1})-J(\pi_k)$ and $D(\pi_k)-D(\bar\pi_{k+1})$, we have
\begin{align*}
J(\bar\pi_{k+1})-J(\pi_k)
&=J(\pi_{k+1})-J(\pi_k)
+J(\bar\pi_{k+1})-J(\pi_{k+1}),\\
D(\pi_k)-D(\bar\pi_{k+1})
&=D(\pi_k)-D(\pi_{k+1})
+D(\pi_{k+1})-D(\bar\pi_{k+1}).
\end{align*}
Summing from $k=0$ to $K-1$ gives two telescoping sums:
\begin{align*}
\sum_{k=0}^{K-1}
\{J(\pi_{k+1})-J(\pi_k)\}
&=J(\pi_K)-J(\pi_0)
\le\frac{R_{\max}}{1-\gamma},\\
\sum_{k=0}^{K-1}
\{D(\pi_k)-D(\pi_{k+1})\}
&=D(\pi_0)-D(\pi_K)
\le D_0.
\end{align*}
Thus, we have
\begin{align*}
&\mathbb E\left[ \frac1K\sum_{k=0}^{K-1} \bigl(J(\pi^\star)-J(\pi_k)\bigr) \right]\\
&\lesssim \frac{R_{\max}+\lambda D_0}
     {K(1-\gamma)}
+
\frac{1}{1-\gamma}
\frac1K
\sum_{k=0}^{K-1}
\mathbb E \max_{\substack{\nu\in\{d^{\pi^\star},d^{\bar\pi_{k+1}}\}\\
                 \pi\in\{\pi^\star,\pi_k,\bar\pi_{k+1}\}}}
\|\widehat A_k-A^{\pi_k}\|_{L^2(\nu\otimes\pi)}
\\&~~~~+
\frac{1}{1-\gamma}
\frac1K
\sum_{k=0}^{K-1}
\mathbb E[D(\pi_{k+1})-D(\bar\pi_{k+1})]+
\frac{1}{1-\gamma}
\frac1K
\sum_{k=0}^{K-1}
\mathbb E[J(\pi_{k+1})-J(\bar\pi_{k+1})].
\end{align*}
We bound the second term on the right-hand side
using Theorem~\ref{thm:advantage-error}. Under
Assumption~\ref{ass:uniform-coverage}, the third and fourth terms can be
controlled by the following Wasserstein and KL bounds, respectively: 
\[
\left|
J(\bar\pi_{k+1})-J(\pi_{k+1})
\right|
\lesssim
\frac{1}{1-\gamma}
\left\{
\mathbb E_{S\sim d^{\pi^\star}}
W_2^2\!\left(
\bar\pi_{k+1}(\cdot\mid S),
\pi_{k+1}(\cdot\mid S)
\right)
\right\}^{1/2}
\]
and 
\begin{align*}
D(\pi_{k+1})-D(\bar\pi_{k+1})
\lesssim 
\mathbb E_{s\sim d^{\pi^\star}}
\mathrm{KL}(\bar\pi_{k+1}\|\pi_{k+1}) +\sqrt{\frac12
\mathbb E_{s\sim d^{\pi^\star}}
\mathrm{KL}(\bar\pi_{k+1}\|\pi_{k+1})}.
\end{align*}
Appendix~\ref{app:proof_overall_rate} gives the details.
Combining Theorems~\ref{thm:advantage-error}--\ref{thm:conditional-sfs-sampling-error}
gives the rate in Theorem~\ref{thm:overall-rate}.

\section{Numerical Experiments}
\label{sec:experiment}

This section evaluates the offline-to-online performance and sampling
accuracy of SFAC. Section~\ref{sec:experimental-setup} describes the tasks,
reference methods, model configuration, and evaluation protocol.
Section~\ref{sec:offline-to-online-results} compares the offline
initialization with online performance on D4RL continuous-control tasks
under a fixed interaction protocol. Section~\ref{sec:sampling-sensitivity}
uses controlled synthetic problems to assess approximation to prescribed
advantage-tilted distributions and sensitivity to the diffusion-step count
$T$ and endpoint budget $m$.

\subsection{Experimental Setup}
\label{sec:experimental-setup}

\paragraph{Tasks and metrics.}
We evaluate SFAC on six D4RL continuous-control tasks
\citep{fu2020d4rl}: the medium (M) and medium-replay (MR) variants of
HalfCheetah, Hopper, and Walker2d. We report D4RL normalized returns. The three
environments are illustrated in Figure~\ref{fig:evaluation-environments}, and
the evaluated dataset variants are listed in Table~\ref{tab:main-compare}.

\begin{figure}[H]
    \centering
    \subfigure[Walker2d]{%
        \includegraphics[height=3cm]{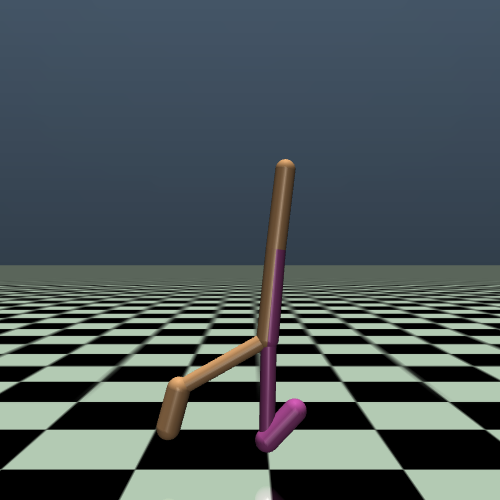}}
    \hfill
    \subfigure[Hopper]{%
        \includegraphics[height=3cm]{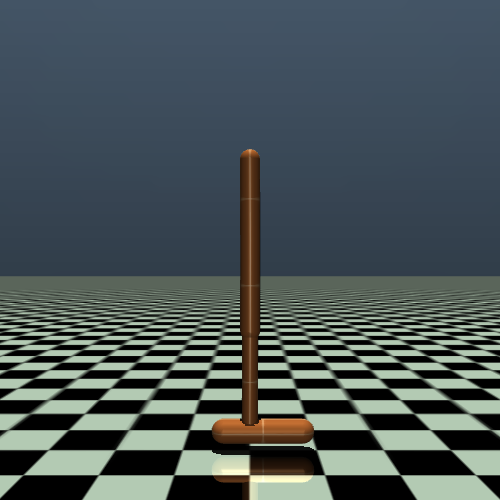}}
    \hfill
    \subfigure[HalfCheetah]{%
        \includegraphics[height=3cm]{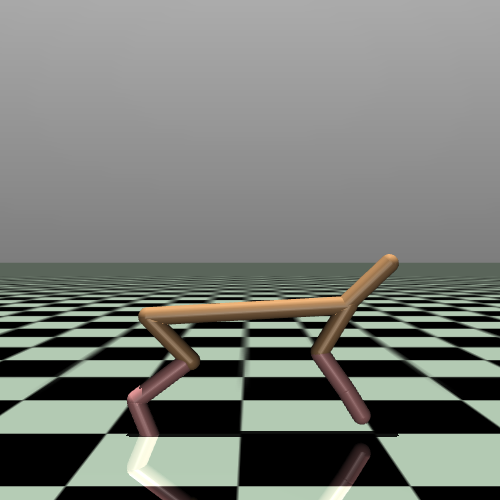}}
    \caption{Representative D4RL evaluation environments.}
    \label{fig:evaluation-environments}
\end{figure}

\paragraph{Reference methods.}
For context, Table~\ref{tab:main-compare} includes published results for
Cal-QL, a method developed for offline-to-online fine-tuning
\citep{nakamoto2023calql,liu2024energy,ball2023efficient}, together with the
offline final-policy scores of IQL, AWAC, DQL, and IDQL
\citep{kostrikov2022offline,nair2020awac,wang2022diffusion,hansen2023idql}.
Because these values were obtained under the interaction budgets and
evaluation protocols of their respective studies, they provide contextual
reference points rather than a budget-matched ranking. Our controlled
comparison is therefore within SFAC: its offline initialization versus its
online performance under the protocol specified below.

\paragraph{Model and optimization.}
The conditional SF actor is an MLP with two hidden layers of width $256$ and
ReLU activations. Actions are generated by simulating the resulting SDE with
$T=8$ Euler--Maruyama steps. The critic is an ensemble of $10$ action-value
functions trained with a learning rate of $3\times10^{-4}$ and minibatches of
size $1024$; the discount factor is $\gamma=0.99$. Unless stated otherwise,
the drift estimator uses $m=256$ endpoint candidates per diffusion step, and
the state-value estimate uses $N_{\mathrm{V}}=32$ policy samples. We set the KL
temperature to $\lambda=3.0$ in all main experiments.
\paragraph{Online evaluation protocol.}
Each main experiment uses $500K$ online environment steps. We evaluate the
policy every $10K$ steps over $10$ episodes and run each configuration with
three random seeds. To reduce sensitivity to a single noisy checkpoint, the
primary online metric is the average of the final five evaluations. For seed
$r\in\{1,2,3\}$, define
\begin{align*}
    \widehat J_r^{\mathrm{final}}
    &:=
    \frac{1}{5}\sum_{j=K-4}^{K}\widehat J_{r,j},
    &
    \widehat J_{r,j}
    &:=
    \frac{1}{10}\sum_{e=1}^{10}R_{r,j,e},
\end{align*}
where $R_{r,j,e}$ is the normalized return of evaluation episode $e$ at
checkpoint $j$. We report the mean and standard deviation of
$\widehat J_r^{\mathrm{final}}$ across seeds. The maximum online score is
reported only as a secondary diagnostic because it is more sensitive to
checkpoint selection.

\subsection{Offline-to-Online Performance}
\label{sec:offline-to-online-results}

Table~\ref{tab:main-compare} reports the primary within-method comparison.
The SFAC final-window score exceeds the score of its offline initialization on all six
tasks. The average score increases from $34.09$ to $83.81$, with the largest
absolute gains on Hopper-MR and Walker2d-MR. Because the online score is the average of
the last five evaluations for each seed, these improvements are not based on
a single selected checkpoint. The published baselines remain contextual: the
table does not control for differences in interaction budgets or evaluation
protocols across studies.

\begin{table}[H]
\centering
\caption{Offline-to-online performance on D4RL tasks.}
\label{tab:main-compare}
\scriptsize
\setlength{\tabcolsep}{2.4pt}
\renewcommand{\arraystretch}{1.10}

\resizebox{\linewidth}{!}{%
\begin{tabular}{llccccccc}
\toprule
& & \multicolumn{5}{c}{baselines}
& \multicolumn{2}{c}{SFAC} \\
\cmidrule(lr){3-7}
\cmidrule(lr){8-9}
Suite & Dataset
& IQL
& AWAC
& Cal-QL
& DQL
& IDQL
& Offline init.
& \textbf{Final window} \\
\midrule

\multirow{6}{*}{MuJoCo}
& HC-M
& $47.89\pm0.37$
& $49.58\pm0.36$
& $61.85\pm1.61$
& $49.04\pm0.77$
& $46.35 \pm 1.57$
& $42.58$
& $\mathbf{54.46\pm1.38}$ \\

& Hop-M
& $63.18\pm7.30$
& $64.20\pm5.61$
& $97.19\pm1.85$
& $70.86\pm19.07$
& $59.49 \pm 6.14$
& $54.54$
& $\mathbf{104.35\pm2.84}$ \\

& Wal-M
& $77.75\pm5.18$
& $77.40\pm12.65$
& $80.38\pm14.56$
& $81.79\pm12.12$
& $67.23 \pm 2.72$
& $65.95$
& $\mathbf{94.95\pm7.61}$ \\

& HC-MR
& $42.70\pm1.12$
& $45.36\pm0.95$
& $58.70\pm0.93$
& $47.59\pm0.60$
& $41.86 \pm 0.51$
& $24.65$
& $\mathbf{52.71\pm2.13}$ \\

& Hop-MR
& $83.61\pm14.89$
& $98.83\pm1.32$
& $96.36\pm5.87$
& $95.11\pm19.11$
& $51.45 \pm 10.79$
& $3.06$
& $\mathbf{104.97\pm0.44}$ \\

& Wal-MR
& $82.39\pm10.06$
& $77.23\pm6.57$
& $92.52\pm1.38$
& $73.86\pm23.61$
& $69.80 \pm 4.96$
& $13.74$
& $\mathbf{91.44\pm5.04}$ \\

\midrule
& \textbf{Average}
& $66.25$
& $68.77$
& $81.17$
& $69.71$
& $56.03$
& $34.09$
& $\mathbf{83.81}$ \\

\bottomrule
\end{tabular}%
}
\end{table}
\subsection{Sampling Accuracy and Sensitivity}
\label{sec:sampling-sensitivity}

\paragraph{Synthetic construction.}
To separate sampling error from critic learning and environment noise, we
consider two single-state problems with two- and four-component Gaussian-mixture
reference policies. The first reference policy is the two-component mixture
\begin{align*}
\pi_{\mathrm{ref}}^{(2)}(a)
    = \frac{1}{2}\mathcal N(a;\mu_1,\Sigma_1)
       +\frac{1}{2}\mathcal N(a;\mu_2,\Sigma_2),
    \end{align*}
    where $ \mu_1 =(-1.2,-0.6)^\top,  \mu_2 =(1.0,0.8)^\top$, and $\Sigma_1=
    \begin{bmatrix}0.40&0.10\\0.10&0.25\end{bmatrix}, ~\Sigma_2=
    \begin{bmatrix}0.30&-0.08\\-0.08&0.45\end{bmatrix}$.
We also use a four-component mixture with four separated modes to
test a reference distribution with more modes. The temperature is
$\lambda=1$, and both constructions use the bounded smooth advantage
\begin{align*}
  A^{(2)}(a)
    &=1.2\exp\!\left(-\frac{\|a-c_+\|_2^2}{2(0.55)^2}\right)
      -0.8\exp\!\left(-\frac{\|a-c_-\|_2^2}{2(0.70)^2}\right),
    \end{align*}
    where $c_+=(0.9,-0.8)^\top,
    c_-=(-0.7,0.7)^\top$.
    
 For the four-component environment, the reference policy is
\begin{equation*}
    \pi_{\mathrm{ref}}^{(4)}(a)
    =
    \frac{1}{4}
    \sum_{k=1}^{4}
    \mathcal{N}(a;\mu_k,\Sigma_k),
\end{equation*}
where
\begin{equation*}
\begin{aligned}
    \mu_1 &= (-2.0,-1.0)^\top,
    &
    \mu_2 &= (-1.8,1.2)^\top,\\
    \mu_3 &= (1.8,-1.1)^\top,
    &
    \mu_4 &= (1.7,1.3)^\top,
\end{aligned}
\end{equation*}
and
\begin{equation*}
\begin{aligned}
    \Sigma_1 &=
    \begin{pmatrix}
        0.24 & 0.03\\
        0.03 & 0.30
    \end{pmatrix},
    &
    \Sigma_2 &=
    \begin{pmatrix}
        0.30 & -0.04\\
        -0.04 & 0.22
    \end{pmatrix},\\[1ex]
    \Sigma_3 &=
    \begin{pmatrix}
        0.26 & 0.05\\
        0.05 & 0.32
    \end{pmatrix},
    &
    \Sigma_4 &=
    \begin{pmatrix}
        0.34 & -0.06\\
        -0.06 & 0.24
    \end{pmatrix}.
\end{aligned}
\end{equation*}
The advantage function is
\begin{equation*}
    A^{(4)}(a)
    =
    1.4
    \exp\left(
        -\frac{\|a-c_{+}^{(4)}\|_2^2}
        {2(0.48)^2}
    \right)
    -
    0.9
    \exp\left(
        -\frac{\|a-c_{-}^{(4)}\|_2^2}
        {2(0.62)^2}
    \right),
\end{equation*}
where
\begin{equation*}
    c_{+}^{(4)}
    =
    \begin{pmatrix}
        1.25\\
        -1.15
    \end{pmatrix},
    \qquad
    c_{-}^{(4)}
    =
    \begin{pmatrix}
        -1.2\\
        1.0
    \end{pmatrix}.
\end{equation*} 

The corresponding tilted target is
\begin{align*}
 \bar\pi^{(L)}(a)
=
\frac{
    \pi_{\mathrm{ref}}^{(L)}(a)
    \exp\left(A^{(L)}(a)/\lambda\right)
}{
    \displaystyle
    \int_{\mathbb R^2}
    \pi_{\mathrm{ref}}^{(L)}(u)
    \exp\left(A^{(L)}(u)/\lambda\right)
    \,\mathrm du
},
\qquad
L\in\{2,4\}.
\end{align*}
The normalizing integral is approximated on a fine numerical grid.
Figure~\ref{fig:synthetic-fit} compares the reference policy, the resulting
tilted target, and terminal SFAC samples at $(T,m)=(8,32)$. Its top and bottom
rows correspond to the two- and four-component mixtures, respectively, while
the columns show the reference, target, and sampled distributions. In the
right column, red points represent target samples and blue points represent
SFAC samples. In both constructions, SFAC recovers the target's multimodal
support and the shift in relative mode mass. The remaining visual discrepancy
motivates the quantitative sensitivity analysis below.
\begin{figure}[H]
    \centering
\includegraphics[width=0.90\linewidth]{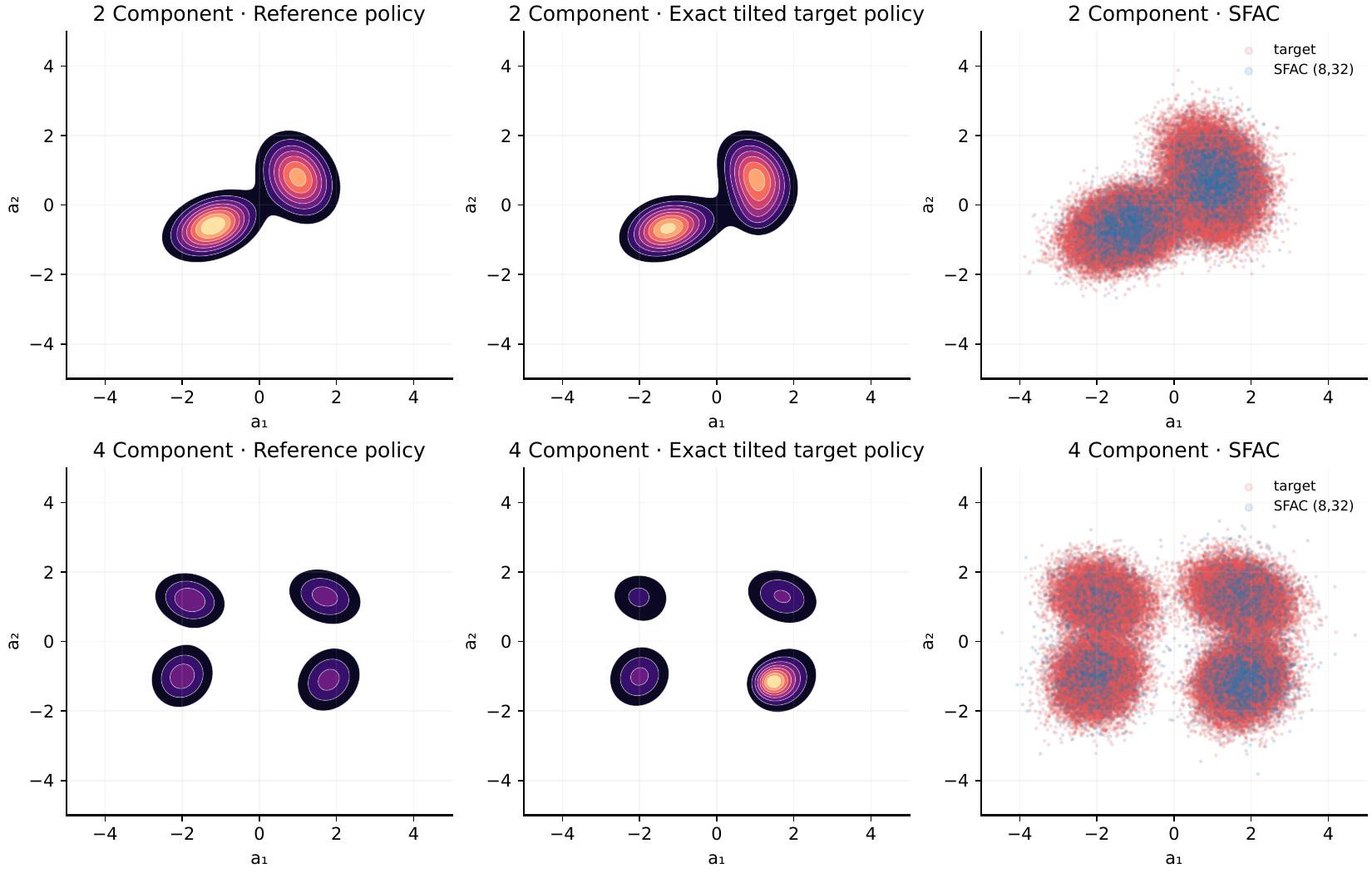}
    \caption{Synthetic advantage-tilted targets and SFAC samples.}
    \label{fig:synthetic-fit}
\end{figure}

\paragraph{Sensitivity protocol.}
We measure sampling accuracy by
$W_2(\widehat\pi_{T,m},\bar\pi_{k+1})$, where
$\widehat\pi_{T,m}$ is the terminal empirical distribution produced by SFAC.
We vary one computational parameter at a time:
\begin{align*}
    &\text{$T$ sensitivity:} &&m=32,
      &T&\in\{4,8,16,32\},\\
    &\text{$m$ sensitivity:} &&T=8,
      &m&\in\{8,16,32,64\}.
\end{align*}
Each setting is repeated independently $20$ times.
Figure~\ref{fig:synthetic-sensitivity} reports the mean and standard
deviation of $W_2$ separately for the two targets; the insets in the endpoint
budget panel provide an enlarged view of the regime $m\ge16$. The largest discretization
improvement occurs between $T=4$ and
$T\in\{8,16\}$. For the two-component target, the error is essentially flat
after $T=16$; for the four-component target, it continues to decrease up to
$T=32$. Increasing $m$ from $8$ to $16$ produces the largest Monte Carlo
improvement in both problems, after which the curves plateau. Small
non-monotonic changes at the largest values are compatible with noise due to the finite
number of repetitions and the interaction of discretization and Monte Carlo
errors. Overall, $(T,m)=(8,32)$ provides a reasonable accuracy--computation
trade-off and is therefore used in the main experiments.

\begin{figure}[H]
    \centering
    \subfigure[Diffusion-step sensitivity at $m=32$.]{%
        \includegraphics[width=1\linewidth]{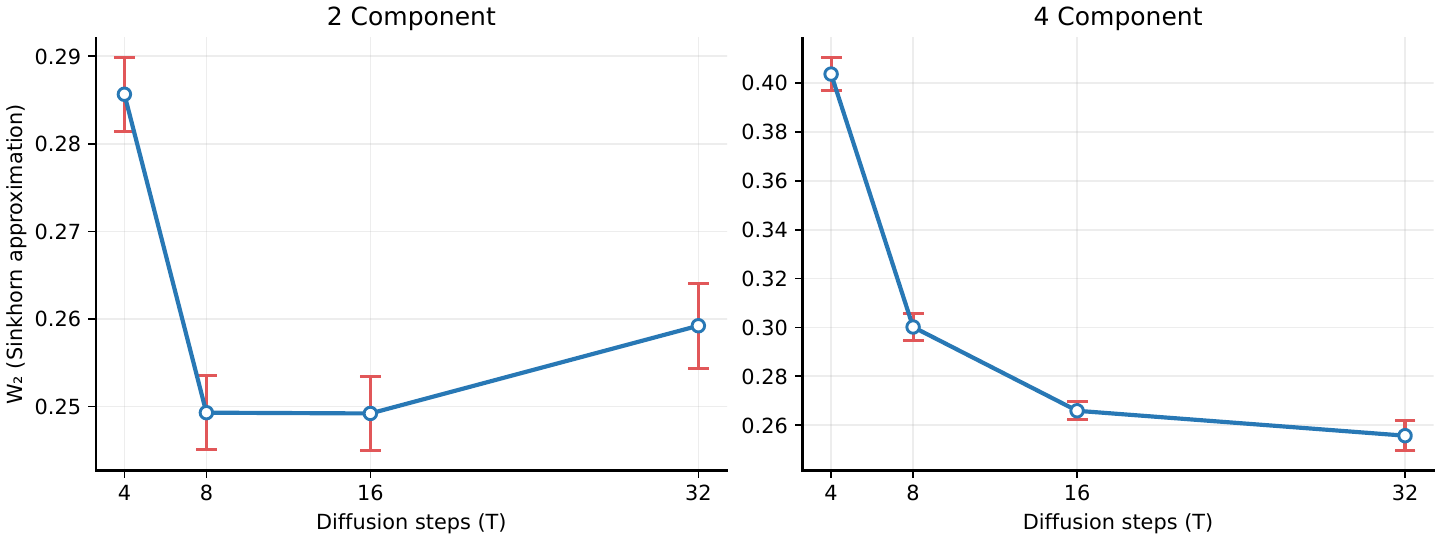}}
    \par\vspace{0.6em}
    \subfigure[Endpoint-candidate sensitivity at $T=8$.]{%
        \includegraphics[width=1\linewidth]{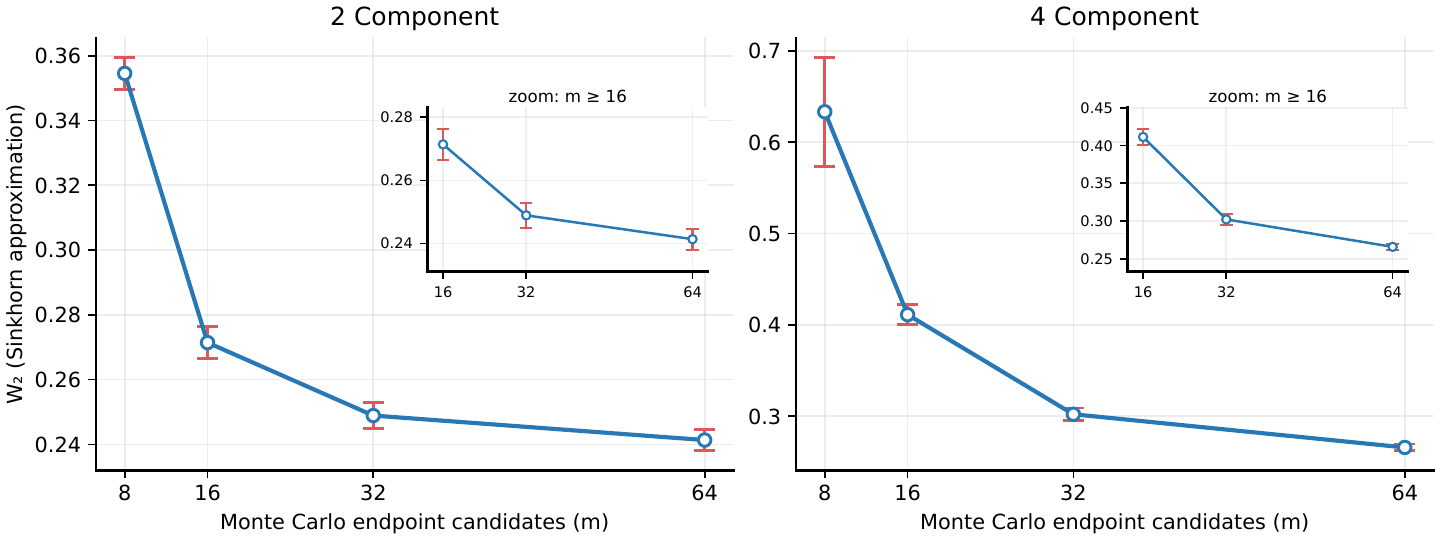}}
    \caption{Synthetic sampler sensitivity to $T$ and $m$.}
    \label{fig:synthetic-sensitivity}
\end{figure}
\clearpage
\section{Conclusion}
\label{sec:conclusion}

We formulated KL-regularized policy improvement for diffusion actors using
conditional Schr\"odinger--F\"ollmer dynamics. A Doob $h$-transform converts
the exponential advantage tilt into a reference-drift correction, whose
posterior-mean form supports sample-based estimation and supervised actor
refitting without critic action gradients. The update follows the natural
policy-gradient direction locally. Our finite-sample bound separates the
effects of critic estimation, drift regression, SNIS, Euler--Maruyama
discretization, and inherited actor error on expected average policy
suboptimality. Synthetic experiments measure approximation to prescribed
tilted targets and sensitivity to sampling budgets. On the evaluated D4RL
tasks, online SFAC improves the final-window return of its offline
initialization under the reported protocol.
Future work may broaden the theoretical guarantees for diffusion-based
policy improvement. Extending the framework to more complex control tasks and
real-world decision problems would help assess its practical scope and
the trade-offs between policy expressiveness and statistical accuracy.

\section*{Appendix} \label{append}
The appendix provides the derivations and proofs supporting the main text. Appendix~\ref{sec:npg-connection} illustrates the interpretation of SFAC in terms of a diffusion-based natural policy-gradient direction.
Appendix~\ref{sec:proof of Section conditional-sfs-method} derives the
conditional SF drift and its estimator.
Appendices~\ref{sec:proof of Advantage-Function Learning}
and~\ref{sec:proofs-amortized-actor} analyze advantage estimation and actor
approximation, respectively. Appendix~\ref{sec:proof-overall-revised}
combines these bounds in the policy-performance analysis under its stated
conditions. Appendix~\ref{app:auxiliary_lemmas} collects the
auxiliary results, and Appendix~\ref{sec:offline_alg} gives the offline
pretraining algorithm.

\appendix
 
 \renewcommand{\theequation}{\thesection.\arabic{equation}}
\numberwithin{equation}{section}
 
 \section{Connection to Natural Policy Gradient}
\label{sec:npg-connection}
The policy update in Section~\ref{sec:conditional-sfs-method}
is defined as a finite KL-regularized improvement step rather than as a
gradient step in the actor parameters. However, its local behavior
admits a direct actor--critic interpretation. The first-order variation of the ideal policy update follows
the natural policy-gradient direction. The corresponding Doob correction provides
a diffusion-based representation of the same update.

We next express the ideal policy improvement in
\eqref{eq:ideal_policy_improvement} as a natural policy-gradient update.
\begin{proposition}[Natural policy-gradient form of ideal policy improvement]
\label{prop:local-npg}

Suppose that $\pi_k^*=\pi_{\theta_k^*}, ~\pi_{k+1}^*=\pi_{\theta_{k+1}^*}$ for some $\theta_k^*,\theta_{k+1}^*\in\Theta$, with
$\theta_{k+1}^*\to\theta_k^*$ as $\lambda^{-1}\to0$, where
$\{\pi_\theta:\theta\in\Theta\}$ is a smooth parametric policy family.
Assume that $\theta\mapsto\log\pi_\theta(a\mid s)$ is $C^3$ in a
neighborhood of $\theta_k^*$, with the required derivatives
integrable under $d^{\pi_k^*}(s)\pi_k^*(a\mid s)$.
Assume further that
\[
F_k^*
:=
\mathbb E_{d^{\pi_k^*}\pi_k^*}
\left[
\nabla_\theta\log\pi_{\theta_k^*}(A\mid S)
\nabla_\theta\log\pi_{\theta_k^*}(A\mid S)^\top
\right]
\]
is positive definite. Then, as $\lambda^{-1}\to0$,
\[
\theta_{k+1}^*
=
\theta_k^*
+
\frac{1}{\lambda}
(F_k^*)^{-1}
\mathbb E_{d^{\pi_k^*}\pi_k^*}
\left[
A^{\pi_k^*}(S,A)
\nabla_\theta
\log\pi_{\theta_k^*}(A\mid S)
\right]
+
O(\lambda^{-2}).
\]
\end{proposition}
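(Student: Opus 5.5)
The plan is to expand the defining identity of the tilted policy in $\varepsilon:=\lambda^{-1}$ and project the result onto the score directions of the parametric family. Write $\theta_k:=\theta_k^*$, $\theta(\varepsilon):=\theta_{k+1}^*$ and $\delta:=\theta(\varepsilon)-\theta_k$, and set $\ell_\theta(a,s):=\log\pi_\theta(a\mid s)$. By \eqref{eq:ideal_policy_improvement},
\[
\ell_{\theta(\varepsilon)}(a,s)-\ell_{\theta_k}(a,s)=\varepsilon A^{\pi_k^*}(s,a)-\log C_\varepsilon(s),
\qquad
C_\varepsilon(s)=\mathbb E_{\pi_k^*(\cdot\mid s)}\bigl[e^{\varepsilon A^{\pi_k^*}(s,A)}\bigr].
\]
Since $|A^{\pi_k^*}|\le R_{\max}/(1-\gamma)$, the exponential can be expanded uniformly. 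This gives $\log C_\varepsilon(s)=\varepsilon\,\mathbb E_{\pi_k^*}[A^{\pi_k^*}(s,A)]+O(\varepsilon^2)=O(\varepsilon^2)$ uniformly in $s$, because the advantage has zero mean under its own policy. Hence the right-hand side equals $\varepsilon A^{\pi_k^*}(s,a)+O(\varepsilon^2)$ with a uniform remainder.

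For the left-hand side, I would Taylor-expand in $\theta$ around $\theta_k$, using the $C^3$ assumption:
\[
\ell_{\theta_k+\delta}-\ell_{\theta_k}=\delta^\top\nabla_\theta\ell_{\theta_k}+\tfrac12\delta^\top\nabla^2_\theta\ell_{\tilde\theta}\,\delta .
\]
Multiply both sides by the score $\nabla_\theta\ell_{\theta_k}(A,S)$ and take $\mathbb E_{d^{\pi_k^*}\pi_k^*}$. The linear term becomes $F_k^*\delta$. The quadratic term is bounded by $C\|\delta\|^2$, using the integrability of the derivatives of orders one to three under $d^{\pi_k^*}\pi_k^*$ in a neighbourhood, justified by dominated convergence. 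The constant $\log C_\varepsilon(s)$ drops out at this order, and in fact exactly, because $\mathbb E_{\pi_k^*(\cdot\mid s)}\nabla_\theta\ell_{\theta_k}=0$; this is the score identity, which needs differentiation under the integral and is licensed by the same integrability. The result is
\[
F_k^*\delta=\varepsilon\,\mathbb E_{d^{\pi_k^*}\pi_k^*}\bigl[A^{\pi_k^*}\nabla_\theta\ell_{\theta_k}\bigr]+O(\varepsilon^2)+O(\|\delta\|^2).
\]

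The main obstacle is closing the order bookkeeping, that is, showing $\|\delta\|=O(\varepsilon)$ rather than only $\delta\to0$. I would use positive definiteness: $\|F_k^*\delta\|\ge c\|\delta\|$. Since $\delta\to0$, the $O(\|\delta\|^2)$ term is eventually at most $(c/2)\|\delta\|$ and can be absorbed, giving $\|\delta\|\le C'\varepsilon$. Substituting back turns both remainders into $O(\varepsilon^2)$, and inverting $F_k^*$ yields the stated expansion with an $O(\lambda^{-2})$ error.

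A second subtlety is that the remainder bounds must hold after integration; I would state them explicitly as consequences of the integrability hypothesis on the derivatives in a fixed neighbourhood of $\theta_k^*$, which applies because $\theta(\varepsilon)$ lies in that neighbourhood for small $\varepsilon$. No earlier result of the paper is needed beyond the closed form \eqref{eq:ideal_policy_improvement} and boundedness of rewards.
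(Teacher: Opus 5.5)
Your proof is correct, but it takes a different route from the paper's.

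\textbf{The paper's route.} It starts from the moment condition $\mathbb E_{d^{\pi_k^*}\pi_{k+1}^*}[\nabla_\theta\log\pi_{\theta_{k+1}^*}(A\mid S)]=0$. It expands the score at $\theta_{k+1}^*$ to first order around $\theta_k^*$, so its remainder uses third derivatives. It then moves the expectation from $\pi_{k+1}^*$ back to $\pi_k^*$ through the tilt expansion $\pi_{k+1}^*/\pi_k^*=1+\lambda^{-1}A^{\pi_k^*}+O(\lambda^{-2})$. Finally it invokes the information identity $\mathbb E[\nabla_\theta^2\log\pi_{\theta_k^*}]=-F_k^*$. This produces remainders of order $\lambda^{-2}$ and $\lambda^{-1}\|\delta\|$ in addition to $O(\|\delta\|^2)$.

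\textbf{Your route.} You project the exact log-ratio identity onto the old score under the old measure.
\begin{itemize}
\item $F_k^*$ appears directly by its definition $\mathbb E[gg^\top]$.
\item The normalizer $\log C_\varepsilon(S)$ is annihilated exactly by the score identity. Your separate estimate $\log C_\varepsilon=O(\varepsilon^2)$, and even the zero-mean property of the advantage, are therefore not needed, and the $O(\varepsilon^2)$ you carry is in fact zero.
\item The only remainder is the quadratic Taylor term. Use its integral form to avoid measurability questions about $\tilde\theta$.
\end{itemize}
So you need only second derivatives, a single differentiation under the integral sign, and the mixed moment $\mathbb E[\|g\|\sup_{\theta\in N}\|\nabla_\theta^2\log\pi_\theta\|]<\infty$. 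That is how the integrability hypothesis should be read for your argument. You also make explicit the absorption step that gives $\|\delta\|=O(\lambda^{-1})$, which the paper states in one line.

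\textbf{What the paper's route buys.} Its starting equation is exactly the stationarity condition of the M-projection of the tilted policy $\pi_k^*e^{A^{\pi_k^*}/\lambda}/C$ onto the family, namely $\min_\theta\mathbb E_{d^{\pi_k^*}}\mathrm{KL}(\pi_k^*e^{A^{\pi_k^*}/\lambda}/C\,\|\,\pi_\theta)$. The same expansion therefore still works when the tilt is not realizable and $\theta_{k+1}^*$ is only its weighted-likelihood projection. Your pointwise identity $\log\pi_{\theta_{k+1}^*}-\log\pi_{\theta_k^*}=\lambda^{-1}A^{\pi_k^*}-\log C_\varepsilon$ requires exact membership of the tilted policy in the family. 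The proposition assumes this, so your proof is valid as stated.
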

Therefore, the finite KL-regularized update has the natural
policy-gradient direction as its first-order limit.
The preceding result interprets Doob's correction term as an approximation to the policy gradient.
\begin{proposition}[Local form of the Doob policy update]
\label{prop:local-doob-gradient}
Fix an iteration $k$ and a state $s$, and
suppose that the
conditional moments below are finite. Then, for every $t<1$,
as $\lambda^{-1}\to0$,
\[
\bar b_{k+1}(y,t;s)-b_k(y,t;s)
=
\frac{1}{\lambda(1-t)}
\operatorname{Cov}_{\mathbb P_k^s}
\left(
Y_1,\widehat A_k(s,Y_1)
\mid Y_t=y
\right)
+
O(\lambda^{-2}).
\]
Whenever differentiation under the conditional expectation is valid,
\[
\bar b_{k+1}(y,t;s)-b_k(y,t;s)
=
\frac{1}{\lambda}
\nabla_y
\mathbb E_{\mathbb P_k^s}
\left[
\widehat A_k(s,Y_1)
\mid Y_t=y
\right]
+
O(\lambda^{-2}).
\]
\end{proposition}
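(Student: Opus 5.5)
The plan is to work from the posterior-mean representation of the Doob correction derived in Section~\ref{sec:conditional-sfs-method}:
\[
\bar b_{k+1}(y,t;s)-b_k(y,t;s)=\frac{M_k^A(s,y,t)-M_k^0(s,y,t)}{1-t}.
\]
We then expand $M_k^A$ in $\varepsilon:=\lambda^{-1}$.

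Write $\mathbb E_0[\cdot]$ for the conditional expectation $\mathbb E_{\mathbb P_k^s}[\cdot\mid Y_t=y]$. This is the law of $Y_1$ under the reference posterior, whose density is proportional to $\pi_k(a\mid s)\exp\{-\|y-ta\|_2^2/[2t(1-t)]\}$. By the definition of $w_k$,
\[
M_k^A=\frac{\mathbb E_0[Y_1e^{\varepsilon \widehat A}]}{\mathbb E_0[e^{\varepsilon\widehat A}]},
\qquad
\widehat A:=\widehat A_k(s,Y_1),
\]
and $M_k^0=\mathbb E_0[Y_1]$.

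\textbf{First claim.} Define $\varphi(\varepsilon)=\mathbb E_0[Y_1e^{\varepsilon\widehat A}]/\mathbb E_0[e^{\varepsilon\widehat A}]$, so that $\varphi(0)=M_k^0$. Differentiating under the expectation gives
\[
\varphi'(\varepsilon)=\operatorname{Cov}_\varepsilon(Y_1,\widehat A),
\]
where $\operatorname{Cov}_\varepsilon$ is the covariance under the tilted posterior. Hence $\varphi'(0)=\operatorname{Cov}_{\mathbb P_k^s}(Y_1,\widehat A\mid Y_t=y)$. Taylor's theorem with remainder gives $\varphi(\varepsilon)=\varphi(0)+\varepsilon\varphi'(0)+\tfrac{\varepsilon^2}{2}\varphi''(\tilde\varepsilon)$. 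The second derivative $\varphi''$ is a third-order mixed cumulant $\mathbb E_\varepsilon[(Y_1-\mathbb E_\varepsilon Y_1)(\widehat A-\mathbb E_\varepsilon\widehat A)^2]$. Dividing by $1-t$ yields the first display.

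\textbf{Main obstacle: the remainder.} We must show that $\sup_{|\tilde\varepsilon|\le\varepsilon_0}|\varphi''(\tilde\varepsilon)|<\infty$; this is the only genuine work. The plan is to use the assumed finiteness of conditional moments, specifically of $\mathbb E_0[\|Y_1\|e^{\varepsilon_0|\widehat A|}(1+\widehat A^2)]$, together with the lower bound $\mathbb E_0[e^{\varepsilon\widehat A}]\ge e^{-\varepsilon \mathbb E_0|\widehat A|}$ from Jensen. Together these bound the tilted moments uniformly for small $\varepsilon$. If $\widehat A$ is bounded (for instance, because $\widehat Q_k$ is a network bounded by $B_{\rm net}$), this bound is immediate. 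Dominated convergence then justifies differentiating under the integral.

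\textbf{Second claim.} Here we identify the covariance with a gradient. With $q(a\mid y)\propto\pi_k(a\mid s)\exp\{-\|y-ta\|^2/[2t(1-t)]\}$, we have
\[
\nabla_y\log q(a\mid y)=\frac{ta-y}{t(1-t)}-\mathbb E_0\!\left[\frac{tY_1-y}{t(1-t)}\right]=\frac{a-\mathbb E_0Y_1}{1-t}.
\]
Therefore, when differentiation under the integral is valid,
\[
\nabla_y\mathbb E_0[\widehat A]=\mathbb E_0[\widehat A\,\nabla_y\log q]=\frac{1}{1-t}\operatorname{Cov}_0(Y_1,\widehat A).
\]
Substituting this into the first expansion gives the second display with the same $O(\lambda^{-2})$ remainder. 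Equivalently, one can expand $\nabla_y\log h_k=\nabla_y\log \mathbb E_0[e^{\varepsilon\widehat A}]$ directly, which gives $\varepsilon\nabla_y\mathbb E_0\widehat A+O(\varepsilon^2)$ and serves as a consistency check.
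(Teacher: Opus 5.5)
Your proposal is correct and follows essentially the same route as the paper. Both arguments start from the posterior-mean form $\bar b_{k+1}-b_k=(M_k^A-M_k^0)/(1-t)$, expand the tilted posterior mean to first order in $\lambda^{-1}$ to obtain the conditional covariance, and then turn that covariance into $\nabla_y\mathbb E[\widehat A_k(s,Y_1)\mid Y_t=y]$ using the posterior score identity $\nabla_y\log p(a\mid y)=(a-\mathbb E[Y_1\mid Y_t=y])/(1-t)$. The one difference is how the $O(\lambda^{-2})$ term is controlled: you use Taylor's theorem with the third mixed central moment as remainder (apply it componentwise or in integral form, since $\varphi$ is vector-valued), while the paper expands $e^{\widehat A/\lambda}$ and $(1+x)^{-1}$ directly, relying on boundedness of the advantage.
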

Proposition~\ref{prop:local-npg} shows that the
natural policy-gradient direction induced by the exact KL update is
represented in the SF process through a first-order Doob drift
correction.
\subsection{Proofs of the Connections to Natural Policy Gradient}
\begin{proof}[Proof of Proposition~\ref{prop:local-npg}]
Fix an iteration $k$, and recall the ideal policy improvement
\[
\pi_{k+1}^*(a\mid s)
=
\frac{
\pi_k^*(a\mid s)
\exp\{A^{\pi_k^*}(s,a)/\lambda\}
}{
Z_{k+1}^*(s)
},
\]
where
\[
Z_{k+1}^*(s)
=
\mathbb E_{A\sim\pi_k^*(\cdot\mid s)}
\left[
\exp\{A^{\pi_k^*}(s,A)/\lambda\}
\right].
\]
Write
$
A_k^*(a):=A^{\pi_k^*}(s,a).
$
Since $A_k^*$ is uniformly bounded,
\[
\exp\{A_k^*(a)/\lambda\}
=
1+\lambda^{-1}A_k^*(a)+O(\lambda^{-2}).
\]
Moreover,
\[
\mathbb E_{A\sim\pi_k^*(\cdot\mid s)}
[A^{\pi_k^*}(s,A)]
=0,
\]
so that
\[
Z_{k+1}^*(s)
=
1+O(\lambda^{-2}).
\]
Therefore,
\[
\frac{
\pi_{k+1}^*(a\mid s)
}{
\pi_k^*(a\mid s)
}
=
1+\lambda^{-1}A^{\pi_k^*}(s,a)
+O(\lambda^{-2}),
\]
and hence
\[
\pi_{k+1}^*(a\mid s)
=
\pi_k^*(a\mid s)
\left[
1+\lambda^{-1}A^{\pi_k^*}(s,a)
+O(\lambda^{-2})
\right].
\]

Let
\[
g_k^*(s,a)
:=
\nabla_\theta
\log\pi_{\theta_k^*}(a\mid s),
\qquad
\delta_k^*
:=
\theta_{k+1}^*-\theta_k^*.
\]
The first-order condition for the KL projection gives
\[
\mathbb E_{d^{\pi_k^*}\pi_{k+1}^*}
\left[
\nabla_\theta
\log\pi_{\theta_{k+1}^*}(A\mid S)
\right]
=0.
\]
Expanding around $\theta_k^*$ gives
\[
\nabla_\theta
\log\pi_{\theta_k^*+\delta_k^*}(A\mid S)
=
g_k^*(S,A)
+
H_k^*(S,A)\delta_k^*
+
O(\|\delta_k^*\|_2^2),
\]
where
\[
H_k^*(S,A)
:=
\nabla_\theta^2
\log\pi_{\theta_k^*}(A\mid S).
\]

The expansion of $\pi_{k+1}^*$ gives
\[
\mathbb E_{d^{\pi_k^*}\pi_{k+1}^*}
[g_k^*(S,A)]
=
\lambda^{-1}
\mathbb E_{d^{\pi_k^*}\pi_k^*}
\left[
A^{\pi_k^*}(S,A)g_k^*(S,A)
\right]
+
O(\lambda^{-2}),
\]
since
\[
\mathbb E_{\pi_k^*(\cdot\mid s)}
[g_k^*(s,A)]
=0.
\]
Similarly,
\[
\mathbb E_{d^{\pi_k^*}\pi_{k+1}^*}
[H_k^*(S,A)]
=
\mathbb E_{d^{\pi_k^*}\pi_k^*}
[H_k^*(S,A)]
+
O(\lambda^{-1}).
\]
By the information identity,
\[
\mathbb E_{d^{\pi_k^*}\pi_k^*}
[H_k^*(S,A)]
=
-F_k^*.
\]
Hence,
\[
\begin{aligned}
0
={}&
\lambda^{-1}
\mathbb E_{d^{\pi_k^*}\pi_k^*}
\left[
A^{\pi_k^*}(S,A)g_k^*(S,A)
\right]
-
F_k^*\delta_k^*
\\
&\quad
+
O(\lambda^{-2})
+
O(\lambda^{-1}\|\delta_k^*\|_2)
+
O(\|\delta_k^*\|_2^2).
\end{aligned}
\]
Since $F_k^*$ is positive definite and
$\delta_k^*\to0$, the local expansion yields
\[
\|\delta_k^*\|_2=O(\lambda^{-1}).
\]
Therefore,
\[
F_k^*\delta_k^*
=
\lambda^{-1}
\mathbb E_{d^{\pi_k^*}\pi_k^*}
\left[
A^{\pi_k^*}(S,A)
\nabla_\theta\log\pi_{\theta_k^*}(A\mid S)
\right]
+
O(\lambda^{-2}),
\]
and thus
\[
\delta_k^*
=
\lambda^{-1}
(F_k^*)^{-1}
\mathbb E_{d^{\pi_k^*}\pi_k^*}
\left[
A^{\pi_k^*}(S,A)
\nabla_\theta\log\pi_{\theta_k^*}(A\mid S)
\right]
+
O(\lambda^{-2}).
\]

Finally, since $d^{\pi_k^*}$ is the normalized discounted occupancy
measure, the policy-gradient theorem gives
\[
\nabla_\theta J(\pi_{\theta_k^*})
=
\frac{1}{1-\gamma}
\mathbb E_{d^{\pi_k^*}\pi_k^*}
\left[
A^{\pi_k^*}(S,A)
\nabla_\theta\log\pi_{\theta_k^*}(A\mid S)
\right].
\]
Substituting this identity yields
\[
\theta_{k+1}^*
=
\theta_k^*
+
\frac{1-\gamma}{\lambda}
(F_k^*)^{-1}
\nabla_\theta J(\pi_{\theta_k^*})
+
O(\lambda^{-2}),
\]
which proves the claim.
\end{proof}
\begin{proof}[Proof of Proposition~\ref{prop:local-doob-gradient}]
Fix $(s,y,t)$ with $t<1$, and let
\[
P_{k}^{s,y,t}(da)
:=
\mathbb P_k^s(Y_1\in da\mid Y_t=y)
\]
denote the reference posterior law of the terminal action. Write
\[
\mu_0
:=
\mathbb E_{P_k^{s,y,t}}[Y_1],
\qquad
A_k(Y_1)
:=
A^{\pi_k}(s,Y_1).
\]
Under the Doob transform, the posterior law of $Y_1$ is exponentially
tilted:
\[
P_{k,\lambda}^{s,y,t}(da)
=
\frac{
    \exp\{A^{\pi_k}(s,a)/\lambda\}
}{
    \mathbb E_{P_k^{s,y,t}}
    [\exp\{A_k(Y_1)/\lambda\}]
}
P_k^{s,y,t}(da).
\]
Let
\[
\mu_\lambda
:=
\mathbb E_{P_{k,\lambda}^{s,y,t}}[Y_1].
\]
By the posterior-mean representation of the SF drift,
\[
b_k(y,t;s)
=
\frac{\mu_0-y}{1-t},
\qquad
b_{k,\lambda}(y,t;s)
=
\frac{\mu_\lambda-y}{1-t}.
\]
Therefore,
\[
b_{k,\lambda}(y,t;s)-b_k(y,t;s)
=
\frac{\mu_\lambda-\mu_0}{1-t}.
\]

We now expand $\mu_\lambda$. Since $A^{\pi_k}$ is uniformly bounded,
\[
e^{A_k(Y_1)/\lambda}
=
1+\lambda^{-1}A_k(Y_1)+O(\lambda^{-2}),
\]
and consequently
\[
\mathbb E_{P_k^{s,y,t}}
\left[
Y_1 e^{A_k(Y_1)/\lambda}
\right]
=
\mu_0
+
\lambda^{-1}
\mathbb E_{P_k^{s,y,t}}
\left[
Y_1 A_k(Y_1)
\right]
+
O(\lambda^{-2}).
\]
Similarly,
\[
\mathbb E_{P_k^{s,y,t}}
\left[
e^{A_k(Y_1)/\lambda}
\right]
=
1
+
\lambda^{-1}
\mathbb E_{P_k^{s,y,t}}
[A_k(Y_1)]
+
O(\lambda^{-2}).
\]
Using
\[
(1+x)^{-1}=1-x+O(x^2),
\]
we obtain
\[
\begin{aligned}
\mu_\lambda
&=
\frac{
\mu_0
+
\lambda^{-1}
\mathbb E[Y_1A_k(Y_1)]
+
O(\lambda^{-2})
}{
1
+
\lambda^{-1}
\mathbb E[A_k(Y_1)]
+
O(\lambda^{-2})
}
\\
&=
\mu_0
+
\lambda^{-1}
\left\{
\mathbb E[Y_1A_k(Y_1)]
-
\mu_0\,
\mathbb E[A_k(Y_1)]
\right\}
+
O(\lambda^{-2})
\\
&=
\mu_0
+
\lambda^{-1}
\operatorname{Cov}
\left(
Y_1,A_k(Y_1)
\mid Y_t=y
\right)
+
O(\lambda^{-2}),
\end{aligned}
\]
where all conditional expectations are under $\mathbb P_k^s$.
Substituting into the drift identity gives
\[
b_{k,\lambda}(y,t;s)-b_k(y,t;s)
=
\frac{\lambda^{-1}}{1-t}
\operatorname{Cov}_{\mathbb P_k^s}
\left(
Y_1,
A^{\pi_k}(s,Y_1)
\mid Y_t=y
\right)
+
O(\lambda^{-2}).
\]

It remains to identify the covariance with the gradient of the
conditional advantage. Under the Brownian-bridge representation,
the posterior density of $Y_1$ given $Y_t=y$ is proportional to
\[
\pi_k(a\mid s)
\exp\left\{
-\frac{\|y-ta\|_2^2}{2t(1-t)}
\right\}.
\]
Let $p_k(a\mid y,t,s)$ denote this posterior density. Differentiating
its logarithm gives
\[
\nabla_y
\log p_k(a\mid y,t,s)
=
\frac{
a-\mathbb E[Y_1\mid Y_t=y,s]
}{1-t}.
\]
Hence, for any integrable scalar function $\varphi$,
\[
\nabla_y
\mathbb E[
\varphi(Y_1)\mid Y_t=y,s
]
=
\frac{1}{1-t}
\operatorname{Cov}
\left(
Y_1,\varphi(Y_1)
\mid Y_t=y,s
\right),
\]
provided differentiation under the integral is valid. Taking
\[
\varphi(a)=A^{\pi_k}(s,a)
\]
yields
\[
\nabla_y
\mathbb E
\left[
A^{\pi_k}(s,Y_1)
\mid Y_t=y
\right]
=
\frac{1}{1-t}
\operatorname{Cov}
\left(
Y_1,
A^{\pi_k}(s,Y_1)
\mid Y_t=y
\right),
\]
which proves the equivalent representation.
\end{proof}

\section{Proofs for Conditional Schr\"odinger--F\"ollmer Policy Improvement}
\label{sec:proof of Section conditional-sfs-method}

\subsection{Derivation of the Reference SDE Drift Term in \texorpdfstring{\eqref{eq:reference-sf}}{(\ref{eq:reference-sf})}}
\label{sec:proof of sf-sde1}
\begin{proof}
Fix $k$ and $s$, and let $f_{k,s}$ be the Gaussian density ratio of
$\pi_k(\cdot\mid s)$. The heat-semigroup representation of the reference
drift is
\[
b_k(y,t;s)=\nabla_y\log Q_{1-t}f_{k,s}(y),
\qquad 0<t<1,
\]
where
\[
Q_u f_{k,s}(y)
=\mathbb E_{Z\sim\mathcal N(0,I_{d_{\mathcal A}})}
\left[f_{k,s}(y+\sqrt{u}Z)\right].
\]
Let $\varphi_{y,u}$ denote the density of
$\mathcal N(y,uI_{d_{\mathcal A}})$ for $u>0$. The Brownian-bridge
representation \citep{mikulincer2024brownian} gives
\begin{align*}
p_t(y\mid s)
&=\varphi_{0,t}(y)Q_{1-t}f_{k,s}(y)\\
&=\int (2\pi t(1-t))^{-d_{\mathcal A}/2}
\exp\!\left(-\frac{\|y-ty_1\|_2^2}{2t(1-t)}\right)
\pi_k(y_1\mid s)\,\mathrm dy_1.
\end{align*}
This is the density of $tY_1+\sqrt{t(1-t)}Z$, where
$Y_1\sim\pi_k(\cdot\mid s)$ and $Z$ are independent. Taking the
logarithmic gradient of the first equality yields
\[
\nabla_y\log p_t(y\mid s)
=-\frac{y}{t}+\nabla_y\log Q_{1-t}f_{k,s}(y),
\]
which proves
\eqref{eq:reference-sf}.
\end{proof}
\subsection{Derivation of the Doob h-transform SDE Drift Term in \texorpdfstring{\eqref{eq:doob-drift-density-ratio}}{(\ref{eq:doob-drift-density-ratio})}}
\label{sec:proof of lemma doobs-drift}
\begin{proof}
The advantage-tilted policy has the Gaussian density ratio
\begin{equation*}
    \bar f_{k+1,s}(y)
    =
    \frac{\mathrm d\bar\pi_{k+1}(\cdot\mid s)}{\mathrm d \gamma_{d_{\mathcal{A}}}}(y)
    =
    \frac{\omega_k(s,y)f_{k,s}(y)}{C},
\end{equation*}
where $\omega_k(s,y)=\exp\{\widehat A_k(s,y)/\lambda\}$ and $C$ is the
normalizing constant.
For $0<t<1$, the conditional expectation defining $h_k$ can therefore
be written as
\begin{align*}
h_k(y,t;s)
&=\frac{Q_{1-t}\!\left[\omega_k(s,\cdot)f_{k,s}\right](y)}
{Q_{1-t}f_{k,s}(y)}\\
&=C
\frac{Q_{1-t}\bar f_{k+1,s}(y)}{Q_{1-t}f_{k,s}(y)}.
\end{align*}
Since $C$ does not depend on $y$, taking logarithmic
gradients gives
\[
\bar b_{k+1,s}(y,t)
=\nabla_y\log Q_{1-t}\bar f_{k+1,s}(y)
=b_k(y,t;s)+\nabla_y\log h_k(y,t;s),
\]
which proves \eqref{eq:doob-drift-density-ratio}.
\end{proof}
\subsection{Derivation of the Drift Estimator \texorpdfstring{\eqref{eq:cv-drift-correction}}{(\ref{eq:cv-drift-correction})}}
\label{sec:proof of lemma drift mc}
\begin{proof}
Fix $s$ and $0<t<1$. Let $Y_1\sim\pi_k(\cdot\mid s)$ be the
reference endpoint. The Brownian-bridge representation gives
\[
Y_t\mid Y_1=y_1,s
\sim\mathcal N(ty_1,t(1-t)I_{d_{\mathcal A}}).
\]
Write its likelihood, up to a factor independent of $y_1$, as
\[
K(y,y_1)
:=\exp\!\left(-\frac{\|y-ty_1\|_2^2}{2t(1-t)}\right).
\]
Bayes' rule gives the reference endpoint posterior
\[
p(y_1\mid Y_t=y,s)
=\frac{K(y,y_1)\pi_k(y_1\mid s)}
{\int K(y,u)\pi_k(u\mid s)\,\mathrm du}.
\]
Below, $\mathbb E_p$ denotes expectation under this posterior, with
$s,y,t$ fixed. Differentiating the Gaussian likelihood yields
\[
\nabla_y\log K(y,y_1)
=-\frac{y-ty_1}{t(1-t)},
\qquad
\nabla_y\log p_t(y\mid s)
=\mathbb E_p[\nabla_y\log K(y,Y_1)].
\]
Consequently,
\begin{equation}
b_k(y,t;s)
=\frac{y}{t}+\nabla_y\log p_t(y\mid s)
=\frac{\mathbb E_p[Y_1]-y}{1-t}.
\label{eq:reference-tweedie}
\end{equation}

For the advantage-tilted posterior, define
\[
q(y_1\mid y,s)
:=\frac{\omega_k(s,y_1)p(y_1\mid Y_t=y,s)}
{\mathbb E_p[\omega_k(s,Y_1)]},
\]
and write $\mathbb E_q$ for expectation under $q$.
Since $h_k(y,t;s)=\mathbb E_p[\omega_k(s,Y_1)]$,
\begin{equation}
\nabla_y\log h_k(y,t;s)
=\frac{\int\omega_k(s,y_1)
\nabla_y p(y_1\mid Y_t=y,s)\,\mathrm dy_1}
{\mathbb E_p[\omega_k(s,Y_1)]}.
\label{nabla h_1}
\end{equation}
Differentiating the posterior density gives
\[
\nabla_y p(y_1\mid Y_t=y,s)
=p(y_1\mid Y_t=y,s)
\left\{\nabla_y\log K(y,y_1)
-\mathbb E_p[\nabla_y\log K(y,Y_1)]\right\}.
\]
Substitution into \eqref{nabla h_1} yields
\begin{align*}
\nabla_y\log h_k(y,t;s)
&=\mathbb E_q[\nabla_y\log K(y,Y_1)]
-\mathbb E_p[\nabla_y\log K(y,Y_1)]\\
&=\frac{\mathbb E_q[Y_1]-\mathbb E_p[Y_1]}{1-t}.
\end{align*}
Combining this identity with \eqref{eq:reference-tweedie} gives
\begin{equation*}
\begin{aligned}
\bar b_{k+1,s}(y,t)
&=\frac{\mathbb E_p[Y_1]-y}{1-t}
+\frac{\mathbb E_q[Y_1]-\mathbb E_p[Y_1]}{1-t}\\
&=\frac{\mathbb E_q[Y_1]-y}{1-t}.
\end{aligned}
\end{equation*}
Thus, the Doob $h$-transform replaces the reference posterior mean
by its advantage-tilted counterpart.
Bayes' rule expresses the two posterior means as
\begin{align*}
\mathbb E_q[Y_1]
&=\frac{\mathbb E_{Y_1\sim\pi_k(\cdot\mid s)}
[Y_1\omega_k(s,Y_1)K(y,Y_1)]}
{\mathbb E_{Y_1\sim\pi_k(\cdot\mid s)}
[\omega_k(s,Y_1)K(y,Y_1)]},\\
\mathbb E_p[Y_1]
&=\frac{\mathbb E_{Y_1\sim\pi_k(\cdot\mid s)}
[Y_1K(y,Y_1)]}
{\mathbb E_{Y_1\sim\pi_k(\cdot\mid s)}
[K(y,Y_1)]}.
\end{align*}
For independent endpoints
$Y_1^{(1)},\ldots,Y_1^{(m)}\sim\pi_k(\cdot\mid s)$,
replace these expectations by empirical averages over the same sample:
\begin{align*}
\widehat M_k^A(s,y,t)
&=\frac{\sum_{i=1}^mY_1^{(i)}
\omega_k(s,Y_1^{(i)})K(y,Y_1^{(i)})}
{\sum_{i=1}^m\omega_k(s,Y_1^{(i)})K(y,Y_1^{(i)})},\\
\widehat M_k^0(s,y,t)
&=\frac{\sum_{i=1}^mY_1^{(i)}K(y,Y_1^{(i)})}
{\sum_{i=1}^mK(y,Y_1^{(i)})}.
\end{align*}
These are the paired SNIS estimates defined in
Section~\ref{sec:conditional-sfs-method}. Subtracting them and dividing by
$1-t$ gives \eqref{eq:cv-drift-correction}.

At $t=0$, the bridge state is $y=0$, and the reference endpoint
posterior is $\pi_k(\cdot\mid s)$; its advantage tilt is
$\bar\pi_{k+1}(\cdot\mid s)$. The likelihood factor extends continuously
to one, so the reference weights are $1/m$ and the
advantage-tilted weights are proportional to $\omega_k(s,Y_1^{(i)})$.
The estimator is therefore well defined at the initial grid point;
the endpoint $t=1$, where the bridge likelihood degenerates, is excluded
from the Euler grid.
\end{proof}

\section{Proofs for Advantage-Function Learning}\label{sec:proof of Advantage-Function Learning}
\subsection{Proof of Lemma~\ref{lem:excess-risk}}
\begin{proof}
For any $\phi \in \mathcal{G}_1$, decompose the excess risk as
\begin{align*}
\mathcal{L}_{k,\mathcal{U}_2}^{\mathrm{crit}}(\widehat Q_k)-\mathcal{L}_{k,\mathcal{U}_2}^{\mathrm{crit}}(Q^{\pi_k})
=\;&
\bigl[\mathcal{L}_{k,\mathcal{U}_2}^{\mathrm{crit}}(\widehat Q_k)-\mathcal{L}_{k,\mathcal{G}_2}^{\mathrm{crit}}(\widehat Q_k)\bigr]
+\bigl[\mathcal{L}_{k,\mathcal{G}_2}^{\mathrm{crit}}(\widehat Q_k)-\widehat{\mathcal{L}}_{k,\mathcal{G}_2}^{\mathrm{crit}}(\widehat Q_k)\bigr] \\
&+\bigl[\widehat{\mathcal{L}}_{k,\mathcal{G}_2}^{\mathrm{crit}}(\widehat Q_k)-\widehat{\mathcal{L}}_{k,\mathcal{G}_2}^{\mathrm{crit}}(\phi)\bigr]
+\bigl[\widehat{\mathcal{L}}_{k,\mathcal{G}_2}^{\mathrm{crit}}(\phi)-\mathcal{L}_{k,\mathcal{G}_2}^{\mathrm{crit}}(\phi)\bigr] \\
&+\bigl[\mathcal{L}_{k,\mathcal{G}_2}^{\mathrm{crit}}(\phi)-\mathcal{L}_{k,\mathcal{U}_2}^{\mathrm{crit}}(\phi)\bigr]
+\bigl[\mathcal{L}_{k,\mathcal{U}_2}^{\mathrm{crit}}(\phi)-\mathcal{L}_{k,\mathcal{U}_2}^{\mathrm{crit}}(Q^{\pi_k})\bigr].
\end{align*}
Here, the third term is nonpositive by empirical minimax optimality,
and the second and fourth terms are each bounded by
\[
\sup_{\psi\in \mathcal{G}_1}
\left|
\widehat{\mathcal{L}}_{k,\mathcal{G}_2}^{\mathrm{crit}}(\psi)-\mathcal{L}_{k,\mathcal{G}_2}^{\mathrm{crit}}(\psi)
\right|,
\]
while the first and fifth terms are each bounded by
\[
\sup_{\psi\in \mathcal{G}_1}
\left|
\mathcal{L}_{k,\mathcal{U}_2}^{\mathrm{crit}}(\psi)-\mathcal{L}_{k,\mathcal{G}_2}^{\mathrm{crit}}(\psi)
\right|.
\]
Taking the infimum over $\phi\in\mathcal G_1$ therefore gives
\begin{align*}
\mathcal{L}_{k,\mathcal{U}_2}^{\mathrm{crit}}(\widehat Q_k)-\mathcal{L}_{k,\mathcal{U}_2}^{\mathrm{crit}}(Q^{\pi_k})
\le\;&
2\sup_{\psi\in \mathcal{G}_1}
\left|
\widehat{\mathcal{L}}_{k,\mathcal{G}_2}^{\mathrm{crit}}(\psi)-\mathcal{L}_{k,\mathcal{G}_2}^{\mathrm{crit}}(\psi)
\right| \\
&+
2\sup_{\psi\in \mathcal{G}_1}
\left|
\mathcal{L}_{k,\mathcal{U}_2}^{\mathrm{crit}}(\psi)-\mathcal{L}_{k,\mathcal{G}_2}^{\mathrm{crit}}(\psi)
\right| \\
&+
\inf_{\phi\in \mathcal{G}_1}
\left\{
\mathcal{L}_{k,\mathcal{U}_2}^{\mathrm{crit}}(\phi)-\mathcal{L}_{k,\mathcal{U}_2}^{\mathrm{crit}}(Q^{\pi_k})
\right\},
\end{align*}
which is the claimed decomposition.
\end{proof}
\subsection{Proof of Lemma~\ref{lem:statistical-error}}
\begin{proof}
Write $Z_i:=(s_i,a_i,r_i,s_i',a_i')$, where
$a_i'\sim\pi_k(\cdot\mid s_i')$ is sampled independently conditional on
the observed transitions. These independent conditional draws preserve
stationarity and do not increase the mixing coefficients. Define
\[
\ell_{Q,O}(s,a,r,s',a')
:=
\bigl(Q(s,a)-r-\gamma Q(s',a')\bigr)^2
-
\bigl(O(s,a)-r-\gamma Q(s',a')\bigr)^2.
\]
Set
 
\[
\mathbb{Z}_n:=
\sup_{Q\in\mathcal G_1,\,O\in\mathcal G_2}
\left|
\frac1n\sum_{i=1}^n\ell_{Q,O}(Z_i)
-\mathbb E\ell_{Q,O}(Z_1)
\right|.
\]
The inequality
$|\sup_O u_O-\sup_O v_O|\le\sup_O|u_O-v_O|$ gives
\[
\sup_{\phi\in\mathcal G_1}
\left|
\widehat{\mathcal L}_{k,\mathcal G_2}^{\mathrm{crit}}(\phi)
-\mathcal L_{k,\mathcal G_2}^{\mathrm{crit}}(\phi)
\right|
\le \mathbb{Z}_n.
\]
The loss envelope is
$\widetilde M=4B_{\rm net}(2B_{\rm net}+R_{\max})$. Moreover, with
$\lambda_{\rm lip}=12B_{\rm net}+6R_{\max}$,
\[
\|\ell_{Q_1,O_1}-\ell_{Q_2,O_2}\|_\infty
\le
\lambda_{\rm lip}
\bigl(\|Q_1-Q_2\|_\infty+\|O_1-O_2\|_\infty\bigr).
\]
To apply the compact-domain covering bound in
Lemma~\ref{lemma:covering number upperbound}, let
$\mathcal Z_R=[-R,R]^d$ and truncate the loss:
\[
\ell^R_{Q,O}(Z)
=
\ell_{Q,O}(Z)
\boldsymbol 1\{(s,a),(s',a')\in\mathcal Z_R\}.
\]
Write $\mathcal L_R:=\{\ell^R_{Q,O}:Q\in\mathcal G_1,\ O\in\mathcal G_2\}$.
Assumption~\ref{assump:critic-subGaussian-tail} and a union bound give
\[
\mathbb P\bigl((s,a)\notin\mathcal Z_R
\ \text{or}\ (s',a')\notin\mathcal Z_R\bigr)
\le 2C_\mu e^{-c_\mu R^2}.
\]
Choose
$
R_n=
\max\left\{1,\,
\left[c_\mu^{-1}\log(2eC_\mu\xi_n^2)\right]^{1/2}\right\}.
$
Thus the truncation probability is at most $\xi_n^{-2}$.

Set $m=2a_n\xi_n$ and $r=n-m$, so that $0\le r<2a_n$.
For
\[
Z_{m,R}
:=
\sup_{Q,O}
\left|
\frac1m\sum_{i=1}^m\ell^R_{Q,O}(Z_i)
-\mathbb E\ell^R_{Q,O}(Z_1)
\right|,
\]
boundedness and stationarity imply
\begin{align*}
\mathbb E \mathbb{Z}_n
&\le
\mathbb E Z_{m,R_n}
+\frac{2\widetilde M r}{n}
+2\widetilde M\xi_n^{-2}\le
\mathbb E Z_{m,R_n}
+\frac{2\widetilde M}{\xi_n}
+\frac{2\widetilde M}{\xi_n^2}.
\end{align*}
The last two terms account for the incomplete block and the truncated tails.

Rescaling $\mathcal Z_R$ to $[0,1]^d$ multiplies the first-layer
parameter bound by at most $C_dR$. The Lipschitz bound for the loss and a product
cover for the two network classes therefore yield
\begin{align*}
\log\mathcal N(\varepsilon,\mathcal L_R,\|\cdot\|_\infty)
&\le
C_0\mathcal S\mathcal D
\log\left(
e+\frac{C_1R B_{\rm par}\mathcal W\mathcal D
\lambda_{\rm lip}}{\varepsilon}
\right).
\end{align*}
Here, the supremum norm is global for the truncated losses: outside the
common truncation set, every member is zero. 

Apply Lemma~\ref{lem:5 of antos} to the $2\xi_n$ complete blocks.
For $0<\varepsilon\le2\widetilde M$,
\begin{align}
\mathbb P(Z_{m,R_n}>\varepsilon)
&\le
16\exp\Biggl\{
C_0\mathcal S\mathcal D
\log\left(
e+\frac{C_1R_n B_{\rm par}\mathcal W\mathcal D
\lambda_{\rm lip}}{\varepsilon}
\right)
-\frac{c_0\xi_n\varepsilon^2}{\widetilde M^2}
\Biggr\}
+2\xi_n\beta_{a_n}.
\label{eq:blocking-tail-app}
\end{align}
Let
$
\delta_n=C_2\widetilde M
\sqrt{\frac{\mathcal S\mathcal D\mathfrak L_n}{\xi_n}},
$
where $\mathfrak L_n=
\log(e+B_{\rm par}B_{\rm net}\mathcal W\mathcal D\xi_n)$.
Since $R_n\lesssim\sqrt{\log(e\xi_n)}$, the entropy logarithm is bounded
by $C\mathfrak L_n$ whenever $\varepsilon\ge\delta_n$.
If $\delta_n\ge2\widetilde M$, the result follows from the envelope.
Otherwise, choosing $C_2$ sufficiently large makes the exponent in
\eqref{eq:blocking-tail-app} at most
$-c\xi_n\varepsilon^2/\widetilde M^2$ for
$\varepsilon\ge\delta_n$. Integrating the tail bound gives
\begin{align*}
\mathbb E Z_{m,R_n}
&\le
\delta_n+
16\int_{\delta_n}^{2\widetilde M}
e^{-c\xi_n\varepsilon^2/\widetilde M^2}\,\mathrm d\varepsilon
+4\widetilde M\xi_n\beta_{a_n}\\
&\le
C\widetilde M
\left[
\sqrt{\frac{\mathcal S\mathcal D\mathfrak L_n}{\xi_n}}
+\xi_n\beta_{a_n}
\right].
\end{align*}
The truncation and incomplete-block terms are absorbed by the first
term because $\mathcal S,\mathcal D,\mathfrak L_n\ge1$.
The initial supremum inequality now proves the claim.
\end{proof}

\subsection{Proof of Lemma~\ref{lem:app error}}
\begin{proof}
Since
$\mathcal T^{\pi_k}\phi\in\mathcal U_2$ for $\phi\in\mathcal G_1$
and $Q^{\pi_k}\in\mathcal U_2$, the population risk is the squared
Bellman residual. The Bellman identity and Jensen's inequality give
\begin{align*}
\mathcal E_{\mathcal G_1}
&=\inf_{\phi\in\mathcal G_1}
\|\phi-\mathcal T^{\pi_k}\phi\|_{L^2(\mu_k)}^2\\
&\le
2\inf_{\phi\in\mathcal G_1}
\left\{
\|Q^{\pi_k}-\phi\|_{L^2(\mu_k)}^2+
\gamma^2\|P^{\pi_k}(Q^{\pi_k}-\phi)\|_{L^2(\mu_k)}^2
\right\}\\
&\le
2\inf_{\phi\in\mathcal G_1}
\left\{
\|Q^{\pi_k}-\phi\|_{L^2(\mu_k)}^2+
\gamma^2\|Q^{\pi_k}-\phi\|_{L^2(\mu_kP^{\pi_k})}^2
\right\}.
\end{align*}
Writing
$\widetilde\mu_k:=(\mu_k+\gamma^2\mu_kP^{\pi_k})/(1+\gamma^2)$,
we obtain
\begin{equation}
\mathcal E_{\mathcal G_1}
\le
2(1+\gamma^2)
\inf_{\phi\in\mathcal G_1}
\|Q^{\pi_k}-\phi\|_{L^2(\widetilde\mu_k)}^2.
\label{eq:G1-mixture-final}
\end{equation}
For $R\ge1$, let $\mathcal Z_R=[-R,R]^d$.
Under the rescaling $z=2Ru-R\boldsymbol 1_d$, $u\in[0,1]^d$,
the H\"older norm of the rescaled action-value function is at most
$C_{\zeta,d}B_{\rm H}R^\zeta$. Indeed, a derivative of order $r$
contributes $(2R)^r$, and the highest-order H\"older seminorm
contributes the additional factor $(2R)^\alpha$, where
$\zeta=r_\zeta+\alpha$. %

For $\epsilon\in(0,1)$, Lemma~\ref{lem:th5-of-feng} therefore gives
a ReLU network $\overline\phi_{k,R,\epsilon}$ satisfying
\[
\sup_{z\in\mathcal Z_R}
|Q^{\pi_k}(z)-\overline\phi_{k,R,\epsilon}(z)|
\le C_{\zeta,d}B_{\rm H}R^\zeta\epsilon,
\]
with depth and size bounded by
$
\mathcal D\lesssim\log(e/\epsilon)$
and 
$
\mathcal S\lesssim\epsilon^{-d/\zeta}\log(e/\epsilon).
$
Define
\[
\phi_{k,R,\epsilon}
=
\operatorname{clip}_{[-B_{\rm H},B_{\rm H}]}
(\overline\phi_{k,R,\epsilon}),
\]
where
\[
\operatorname{clip}_{[-B_{\rm H},B_{\rm H}]}(u)
=
-B_{\rm H}+\sigma(u+B_{\rm H})-\sigma(u-B_{\rm H}).
\]
Clipping requires one additional ReLU layer and a constant number of
parameters. Since $\|Q^{\pi_k}\|_\infty\le B_{\rm H}$, it cannot
increase the approximation error and ensures
$|Q^{\pi_k}-\phi_{k,R,\epsilon}|\le2B_{\rm H}$ globally.
The network belongs to $\mathcal G_1$ when its size and parameter
bounds are chosen as stipulated in the lemma.

Assumption~\ref{assump:critic-subGaussian-tail} implies
\[
\widetilde\mu_k(\mathcal Z_R^c)
\le
e^{-c_\mu R^2}
\int_{\mathbb R^d}e^{c_\mu\|z\|_2^2}\widetilde\mu_k(\mathrm dz)
\le C_\mu e^{-c_\mu R^2}.
\]
Splitting the approximation error over $\mathcal Z_R$ and its
complement gives
\begin{equation}
\|Q^{\pi_k}-\phi_{k,R,\epsilon}\|_{L^2(\widetilde\mu_k)}^2
\le
C_{\zeta,d}B_{\rm H}^2R^{2\zeta}\epsilon^2
+4B_{\rm H}^2C_\mu e^{-c_\mu R^2}.
\label{eq:Q-truncation-bound}
\end{equation}
Choose
$
R_\epsilon
=
\max\left\{1,\,
\left(\frac{2}{c_\mu}\log\frac e\epsilon\right)^{1/2}\right\}.
$
Then
\[
e^{-c_\mu R_\epsilon^2}\le(\epsilon/e)^2,
\qquad
R_\epsilon^{2\zeta}\lesssim\bigl(\log(e/\epsilon)\bigr)^\zeta.
\]
{ 
Substituting into \eqref{eq:Q-truncation-bound} and  
\eqref{eq:G1-mixture-final} yields
}
\begin{equation}
\mathcal E_{\mathcal G_1}
\lesssim
(1+\gamma^2)B_{\rm H}^2\epsilon^2
\bigl(\log(e/\epsilon)\bigr)^\zeta.
\label{eq:G1-epsilon-rate}
\end{equation}
Bellman completeness expresses the auxiliary risk gap as a nonnegative
approximation error:
\[
\mathcal L_{k,\mathcal U_2}^{\mathrm{crit}}(\phi)
-\mathcal L_{k,\mathcal G_2}^{\mathrm{crit}}(\phi)
=
\inf_{O\in\mathcal G_2}
\|O-\mathcal T^{\pi_k}\phi\|_{L^2(\mu_k)}^2.
\]
For $v\in\mathcal U_2$ and $O\in\mathcal G_2$,
\[
\|O-v\|_{L^2(\mu_k)}^2
\le
(B_{\rm net}+B_{\rm H})\|O-v\|_{L^1(\mu_k)}.
\]
Taking the infimum over $O$ and the supremum over $\phi$, enlarging the
target set to $\mathcal U_2$, and using $B_{\rm H}\le B_{\rm net}$ give
\begin{equation}
\mathcal E_{\mathcal G_2}
\le
(4B_{\rm net}+2R_{\max})
\sup_{v\in\mathcal U_2}
\inf_{O\in\mathcal G_2}
\|O-v\|_{L^1(\mu_k)}.
\label{eq:G2-L1-reduction}
\end{equation}
For each $v\in\mathcal U_2$, the same rescaling and clipping
construction gives $O_{v,R,\epsilon}\in\mathcal G_2$ such that
\begin{align*}
\sup_{z\in\mathcal Z_R}|v(z)-O_{v,R,\epsilon}(z)|
&\le C_{\zeta,d}B_{\rm H}R^\zeta\epsilon,\\
|v-O_{v,R,\epsilon}|&\le2B_{\rm H}\quad\text{on }\mathbb R^d.
\end{align*}
Consequently,
\[
\|v-O_{v,R,\epsilon}\|_{L^1(\mu_k)}
\le
C_{\zeta,d}B_{\rm H}R^\zeta\epsilon
+2B_{\rm H}C_\mu e^{-c_\mu R^2}.
\]
Taking $R=R_\epsilon$ yields
\begin{equation*}
\sup_{v\in\mathcal U_2}
\inf_{O\in\mathcal G_2}
\|O-v\|_{L^1(\mu_k)}
\lesssim
B_{\rm H}\epsilon
\bigl(\log(e/\epsilon)\bigr)^{\zeta/2}.
\end{equation*}
Together with \eqref{eq:G2-L1-reduction}, this gives
\begin{equation}
\mathcal E_{\mathcal G_2}
\lesssim
(4B_{\rm net}+2R_{\max})B_{\rm H}\epsilon
\bigl(\log(e/\epsilon)\bigr)^{\zeta/2}.
\label{eq:G2-epsilon-rate}
\end{equation}
Finally, choosing
$
\epsilon\asymp
\left(\frac{\log(e\mathcal S)}{\mathcal S}\right)^{\zeta/d}
$
with a sufficiently large multiplicative constant meets the size
constraint above. Substituting into
\eqref{eq:G1-epsilon-rate} and \eqref{eq:G2-epsilon-rate}
proves \eqref{eq:G1-unbounded-approximation}
and \eqref{eq:G2-unbounded-approximation}.
\end{proof}
\subsection{Proof of Theorem~\ref{thm:advantage-error}}

We first prove Lemma~\ref{lem:policy-evaluation-residual}, which relates
the Bellman residual to the action-value error, and then prove
Theorem~\ref{thm:advantage-error}.
\begin{proof}[Proof of Lemma~\ref{lem:policy-evaluation-residual}]
Let $\delta=\widehat Q-\mathcal T^\pi\widehat Q$.
The Bellman identity \eqref{eq:policy-bellman-fixed-points} gives
\[
\widehat Q-Q^\pi=\delta+\gamma P^\pi(\widehat Q-Q^\pi).
\]
Since both action-value functions are bounded and $\gamma\in(0,1)$,
iteration yields the uniformly convergent series
\[
\widehat Q-Q^\pi
=
\sum_{t=0}^\infty\gamma^t(P^\pi)^t\delta.
\]
For each $t\ge0$, Jensen's inequality  %
and concentrability in Assumption~\ref{assump2}
imply
\[
\|(P^\pi)^t\delta\|_{L^2(\nu)}^2
\le
\|\delta\|_{L^2(\nu(P^\pi)^t)}^2
\le
C_{\rm eval}\|\delta\|_{L^2(\mu)}^2.
\]
The triangle inequality and the geometric-series identity therefore give
\begin{align*}
\|\widehat Q-Q^\pi\|_{L^2(\nu)}
\le
\sum_{t=0}^\infty\gamma^t\|(P^\pi)^t\delta\|_{L^2(\nu)}
\le
\frac{\sqrt{C_{\rm eval}}}{1-\gamma}
\|\widehat Q-\mathcal T^\pi\widehat Q\|_{L^2(\mu)}.
\end{align*}
Squaring proves the lemma.
\end{proof}

\begin{proof}[Proof of Theorem~\ref{thm:advantage-error}]
By Lemmas~\ref{lem:excess-risk}, \ref{lem:statistical-error}, and
\ref{lem:app error}, we have
\begin{equation*}
\begin{aligned}
\mathbb E\Bigl[
\mathcal L_{k,\mathcal U_2}^{\mathrm{crit}}(\widehat Q_k)
-\mathcal L_{k,\mathcal U_2}^{\mathrm{crit}}(Q^{\pi_k})
\Bigr]
&\lesssim
\widetilde M
\left[
\sqrt{\frac{\mathcal S\mathcal D\mathfrak L_n}{\xi_n}}
+\xi_n\beta_{a_n}
\right]\\
&\qquad+
(8B_{\rm net}+4R_{\max})B_{\rm H}
\left(\frac{\log(e\mathcal S)}{\mathcal S}\right)^{\zeta/d}
\bigl(\log(e\mathcal S)\bigr)^{\zeta/2}\\
&\qquad+
(1+\gamma^2)B_{\rm H}^2
\left(\frac{\log(e\mathcal S)}{\mathcal S}\right)^{2\zeta/d}
\bigl(\log(e\mathcal S)\bigr)^\zeta.
\end{aligned}
\end{equation*}
Choose
$\mathcal S\asymp\xi_n^{d/(d+2\zeta)}\log \xi_n $ and
$\mathcal D\asymp\log \xi_n$.
The statistical square-root term and the auxiliary approximation term
then have polynomial order $\xi_n^{-\zeta/(d+2\zeta)}$, whereas the
$\mathcal G_1$ approximation term has the smaller order
$\xi_n^{-2\zeta/(d+2\zeta)}$. The prescribed polynomial growth of
$\mathcal W$ and $B_{\rm par}$ affects only logarithmic factors.
Consequently,
\[
\mathbb E\Bigl[
\mathcal L_{k,\mathcal U_2}^{\mathrm{crit}}(\widehat Q_k)
-\mathcal L_{k,\mathcal U_2}^{\mathrm{crit}}(Q^{\pi_k})
\Bigr]
\le
\widetilde{\mathcal O}\left(
B_{\rm H}\xi_n^{-\frac{\zeta}{d+2\zeta}}
+\xi_n\beta_{a_n}\right).
\]
The population excess risk equals
$\|\widehat Q_k-\mathcal T^{\pi_k}\widehat Q_k\|_{L^2(\mu_k)}^2$.
Lemma~\ref{lem:policy-evaluation-residual} therefore yields, for every
$\nu\in\mathfrak M$,
\[
\mathbb E\|\widehat Q_k-Q^{\pi_k}\|_{L^2(\nu)}^2
\lesssim
\frac{1}{(1-\gamma)^2}
\widetilde{\mathcal O}\left(
B_{\rm H}\xi_n^{-\frac{\zeta}{d+2\zeta}}
+\xi_n\beta_{a_n}\right).
\]

For the state-value error, let $\nu_{\mathcal X}$ be the state marginal of
$\nu$.
The law $\nu_{\mathcal X}\otimes\pi_k$, defined by
$\nu_{\mathcal X}(\mathrm ds)\pi_k(\mathrm da\mid s)$, is admissible:
retain the policy history generating the state and use $\pi_k$ for
the final action. Conditional on $\widehat Q_k$ and $s$, the
independent Monte Carlo draws give
\begin{align*}
&\mathbb E\left[
|\widehat V_k(s)-V^{\pi_k}(s)|^2
\,\middle|\,\widehat Q_k,s\right]\\
&\quad=
\left|\mathbb E_{a\sim\pi_k(\cdot\mid s)}
[\widehat Q_k(s,a)-Q^{\pi_k}(s,a)]\right|^2
+\frac1{N_{\rm V}}\operatorname{Var}_{a\sim\pi_k(\cdot\mid s)}
(\widehat Q_k(s,a)).
\end{align*}
The cross term vanishes because the Monte Carlo fluctuation has
conditional mean zero. Jensen's inequality and
$\|\widehat Q_k\|_\infty\le B_{\rm net}$ imply
\begin{align*}
&\mathbb E\|\widehat V_k-V^{\pi_k}\|_{L^2(\nu_{\mathcal X})}^2
\le
\mathbb E\|\widehat Q_k-Q^{\pi_k}\|_{L^2(\nu_{\mathcal X}\otimes\pi_k)}^2
+\frac{B_{\rm net}^2}{N_{\rm V}}.
\end{align*}
Since $\widehat A_k=\widehat Q_k-\widehat V_k$ and
$A^{\pi_k}=Q^{\pi_k}-V^{\pi_k}$, we conclude that
\begin{align*}
\mathbb E\|\widehat A_k-A^{\pi_k}\|_{L^2(\nu)}^2
&\le
2\mathbb E\|\widehat Q_k-Q^{\pi_k}\|_{L^2(\nu)}^2+
2\mathbb E\|\widehat Q_k-Q^{\pi_k}\|_{L^2(\nu_{\mathcal X}\otimes\pi_k)}^2
+\frac{2B_{\rm net}^2}{N_{\rm V}}\\
&\lesssim
\frac{1}{(1-\gamma)^2}
\left[
\widetilde{\mathcal O}\left(
B_{\rm H}\xi_n^{-\frac{\zeta}{d+2\zeta}}\right)
+\xi_n\beta_{a_n}
+\frac{B_{\rm net}^2}{N_{\rm V}}
\right].
\end{align*}
Under $\beta_m\le\bar\beta e^{-bm^\eta}$ and
$a_n=\lceil(2\log n/b)^{1/\eta}\rceil$, %
\[
\xi_n\asymp\frac{n}{(\log n)^{1/\eta}},
\qquad
\xi_n\beta_{a_n}\lesssim n^{-1}.
\]
With $N_{\rm V}\asymp n$, the mixing remainder and Monte Carlo error
are of smaller order. Thus
\[
\mathbb E\|\widehat A_k-A^{\pi_k}\|_{L^2(\nu)}^2
\leq
\frac{1}{(1-\gamma)^2}
\widetilde{\mathcal O}\left(n^{-\frac{\zeta}{d+2\zeta}}\right).
\]
\end{proof}

\section{Proofs for the Amortized Conditional SF Actor}
\label{sec:proofs-amortized-actor}
\subsection{Proof of Lemma~\ref{lem:kl-time-discretization}}

We first introduce two preliminary lemmas and then prove Lemma
\ref{lem:kl-time-discretization}.

\begin{lemma}%
\label{lem:conditional-drift-regularity-revised}
Under Assumption~\ref{assump:actor-target-regularity}, each of the drifts
$b_{k,s}$ and $\bar b_{k+1,s}$, denoted below by $b_s$, belongs
coordinatewise to $\mathcal H^\zeta$ on the actor approximation domain and
satisfies, uniformly in $k,s$,
\begin{align*}
\|b_s(y,t)\|_2&\le C ,\\
\|b_s(x,t)-b_s(y,t)\|_2
&\le C\|x-y\|_2,\\
\|b_s(y,t)-b_s(y,t')\|_2
&\le C(1+\|y\|_2+\sqrt{d_{\mathcal A}})|t-t'|^{1/2}.
\end{align*}
If $Z^s$ solves
$\mathrm dZ_t^s=\bar b_{k+1,s}(Z_t^s,t)\mathrm dt+\mathrm dB_t$
with $Z_0^s=0$, then
\begin{align*}
\sup_{t\le1}\mathbb E\|Z_t^s\|_2^2
&\le C(1+d_{\mathcal A}),&
\mathbb E\|Z_v^s-Z_u^s\|_2^2
&\le C d_{\mathcal A}(v-u).
\end{align*}
\end{lemma}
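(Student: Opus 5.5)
The plan is to reduce everything to derivative bounds on the log heat semigroup. For $b_s(y,t)=\nabla_y\log Q_{1-t}f_s(y)$ with $f_s\in\{f_{k,s},\bar f_{k+1,s}\}$, Assumption~\ref{assump:actor-target-regularity} gives $\xi_f\le f_s$ and $\|f_s\|_{\mathcal H^{2\zeta+2}}\le B_f$. Since $Q_u$ is an average against a probability measure, it preserves both facts: $Q_{1-t}f_s\ge\xi_f$, and every derivative of $Q_{1-t}f_s$ up to order $2\zeta+2$ equals $Q_{1-t}$ applied to the matching derivative of $f_s$, so it is bounded by $B_f$. This holds uniformly in $t\in[0,1]$ and in the state variable $s$, because the $s$-derivatives commute with $Q_{1-t}$ as well.

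\textbf{Spatial bounds.} Write $b_s=\nabla Q_{1-t}f_s/Q_{1-t}f_s$. This gives $\|b_s\|_2\le \sqrt{d_{\mathcal A}}B_f/\xi_f$. The Jacobian is
\[
\nabla^2 Q f/Qf-\nabla Qf\,\nabla Qf^\top/(Qf)^2 ,
\]
which is bounded by $C(B_f,\xi_f)$, and this yields the Lipschitz bound in $y$. Higher derivatives of $b_s$ are quotients of derivatives of $Q_{1-t}f_s$ by powers of $Q_{1-t}f_s\ge\xi_f$. By the Faà di Bruno formula applied to $\log$ on $[\xi_f,\infty)$, every derivative up to order roughly $2\zeta+1$ in $(s,y)$ is bounded. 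This gives membership of each coordinate in $\mathcal H^\zeta$ in the $(s,y)$ variables.

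\textbf{Regularity in $t$.} For the time direction I would use the heat equation $\partial_u Q_uf=\tfrac12\Delta Q_uf$. Then $\partial_t Q_{1-t}f_s=-\tfrac12 Q_{1-t}\Delta f_s$ is bounded by $C B_f$. The same holds for $\partial_t\nabla Q_{1-t}f_s=-\tfrac12 Q_{1-t}\nabla\Delta f_s$, which needs three derivatives of $f_s$ and is available since $2\zeta+2\ge3$. So $b_s$ is in fact Lipschitz in $t$ uniformly in $y$, which is stronger than the claimed $C(1+\|y\|_2+\sqrt{d_{\mathcal A}})|t-t'|^{1/2}$ since $|t-t'|\le |t-t'|^{1/2}$ on $[0,1]$.

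If only fewer derivatives were available, I would fall back on the representation $Q_{1-t}f(y)=\mathbb E f(y+\sqrt{1-t}Z)$. Using $|\sqrt{1-t}-\sqrt{1-t'}|\le|t-t'|^{1/2}$ and the Lipschitz property of $\nabla f$ gives a $\sqrt{d_{\mathcal A}}|t-t'|^{1/2}$ bound, which accounts for the form stated. The $t$-derivatives of order up to $\lceil\zeta\rceil$ that are needed for the joint Hölder norm in $(s,y,t)$ follow by iterating the heat equation, which consumes $2$ spatial derivatives per time derivative. This is exactly why $2\zeta+2$ smoothness is assumed.

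\textbf{Moment estimates for $Z^s$.} Since $\bar b_{k+1,s}$ is bounded by $C$ and globally Lipschitz, the SDE has a unique strong solution on $[0,1]$. From $Z_t=\int_0^t\bar b\,\mathrm du+B_t$ we get
\[
\mathbb E\|Z_t\|_2^2\le 2C^2t^2+2d_{\mathcal A}t\le C(1+d_{\mathcal A}).
\]
Similarly,
\[
\mathbb E\|Z_v-Z_u\|_2^2\le 2C^2(v-u)^2+2d_{\mathcal A}(v-u)\le C d_{\mathcal A}(v-u),
\]
using $v-u\le1$ and $d_{\mathcal A}\ge1$.

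\textbf{Main obstacle.} The delicate step is the uniform control of the $t$-derivatives near $t=1$ in the Hölder norm, since $Q_{1-t}$ loses its smoothing effect there. The plan handles this by placing all smoothness on $f_s$ itself rather than relying on heat-kernel smoothing, so no factor $(1-t)^{-1}$ ever appears.
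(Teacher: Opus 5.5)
Your proposal is correct and follows the paper's proof essentially step by step: positivity and derivative commutation under $Q_{1-t}$, the heat equation $\partial_t Q_{1-t}f_s=-\tfrac12 Q_{1-t}\Delta f_s$ for the time direction, the quotient rule for boundedness and the spatial Lipschitz bound, and bounded drift plus $\mathbb E\|B_v-B_u\|_2^2=d_{\mathcal A}(v-u)$ for the SDE moments. The one slip is that $2\zeta+2\ge 3$ holds only when $\zeta\ge 1/2$; for smaller $\zeta$, the time-$1/2$ bound comes from your fallback $\sqrt{1-t}$-increment argument, which needs only a Lipschitz $\nabla f_s$ and is the paper's ``heat-semigroup increment estimate.''
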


\begin{proof}
Write $u_s(y,t):=Q_{1-t}f_s(y)$ for either density ratio.
Positivity preservation gives $u_s\ge\xi_f$. For multi-indices
$\alpha,\beta$ and nonnegative integers $j$ within the assumed smoothness
range,
\[
\partial_t^j\partial_s^\alpha\partial_y^\beta u_s
=(-1/2)^jQ_{1-t}
\Delta_a^j\partial_s^\alpha\partial_a^\beta f_s.
\]
The drift $b_s=\nabla_yu_s/u_s$ uses one additional action derivative.
The $\mathcal H^{2\zeta+2}$ bound therefore supplies the mixed
derivatives and H\"older increments required for joint
$\mathcal H^\zeta$ regularity in $(s,y,t)$, including the endpoint
$t=1$. The quotient rule gives boundedness and spatial Lipschitz
continuity; the heat-semigroup increment estimate gives time-$1/2$
continuity. Bounded drift and the Brownian second moment imply the
moment bound. For $0\le u<v\le1$, we have
\[
Z_v^s-Z_u^s=\int_u^v \bar b_{k+1,s}(Z_r^s,r)\,\mathrm dr+B_v-B_u.
\]
Using
Jensen's inequality, the moment bound, and
$\mathbb E\|B_v-B_u\|_2^2=d_{\mathcal A}(v-u)$, we obtain the increment estimate.
\end{proof}
\begin{lemma}%
\label{lem:exact-path-coverage}
Under Assumption~\ref{assump:actor-target-regularity}, Assumption~\ref{ass:uniform-coverage}, and the bounded
learned-drift condition, for every $t_j\in\mathcal T_T$,
\[
\left\|
\frac{\mathrm d\{d^{\pi^\star}(s)q_{k,j}^{\rm ex}(\mathrm dy\mid s)\}}
{\mathrm d\chi_{k,j}(s,y)}
\right\|_\infty\lesssim B_f/\xi_f.
\]
Novikov's condition for the comparison between the exact and learned paths also holds, and
the contribution of $[1-1/T,1]$ to both the drift energy and the
synchronous squared error is at most $Cd_{\mathcal A}/T$.
\end{lemma}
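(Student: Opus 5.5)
We prove Lemma~\ref{lem:exact-path-coverage} in three parts: the density-ratio bound, Novikov's condition, and the terminal-interval estimate.

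\emph{Density-ratio bound.} Write the ratio as a product of a state part and a conditional action-path part:
\[
\frac{\mathrm d\{d^{\pi^\star}q^{\rm ex}_{k,j}\}}{\mathrm d\chi_{k,j}}(s,y)
=\frac{\mathrm d d^{\pi^\star}}{\mathrm d d^{\pi_k}}(s)\cdot\frac{q^{\rm ex}_{k,j}(y\mid s)}{p_{t_j}(y\mid s)} .
\]
Here $\chi_{k,j}$ has state marginal $d^{\pi_k}$ and conditional law $p_{t_j}(\cdot\mid s)$, the bridge marginal of the reference SF process. The state factor is bounded by $C$ through Assumption~\ref{ass:uniform-coverage}.

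For the action factor, $q^{\rm ex}_{k,j}(\cdot\mid s)$ is the time-$t_j$ marginal of the Doob-transformed process \eqref{eq:doobs-sde}. Its density with respect to the reference process is the martingale $h_k(y,t_j;s)/C(s)$. Using the heat-semigroup form from Section~\ref{sec:proof of lemma doobs-drift},
\[
\frac{h_k(y,t;s)}{C(s)}=\frac{Q_{1-t}\bar f_{k+1,s}(y)}{Q_{1-t}f_{k,s}(y)} .
\]
Assumption~\ref{assump:actor-target-regularity} gives $\|\bar f_{k+1,s}\|_\infty\le B_f$. Since the heat semigroup preserves both bounds, the numerator is at most $B_f$ and the denominator is at least $\xi_f$. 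The ratio is therefore at most $B_f/\xi_f$ uniformly in $(y,t,s)$, and multiplying the two factors gives the stated $\lesssim B_f/\xi_f$. For $t_j=0$ both marginals equal $\delta_0$, so the claim is trivial there.

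\emph{Novikov's condition.} The exact drift $\bar b_{k+1,s}$ is bounded by Lemma~\ref{lem:conditional-drift-regularity-revised}. The learned drift is bounded coordinatewise by $B_{\rm net}$, including its frozen continuation on $[1-1/T,1]$. Their difference is thus bounded by $C(1+\sqrt{d_{\mathcal A}}B_{\rm net})$, so
\[
\mathbb E\exp\Bigl(\tfrac12\int_0^1\|b_{\phi_{k+1}}-\bar b_{k+1}\|_2^2\,\mathrm dt\Bigr)<\infty .
\]
Girsanov's theorem then applies to the comparison between the exact path law and the learned (continuous-interpolated) Euler path law. This justifies the KL chain rule used in the decomposition of Section~\ref{sec:proof-sketch-amortized-actor}.

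\emph{Terminal interval.} On $[1-1/T,1]$ both drifts are bounded uniformly. The drift energy $\int_{1-1/T}^1\|b\|^2\,\mathrm dt$ is then at most $C d_{\mathcal A}/T$, with the factor $d_{\mathcal A}$ coming from the coordinatewise bound. For the synchronous squared error, couple the exact and learned processes with the same Brownian motion. Their difference on this interval is the integral of a bounded drift difference, whose squared norm contributes $O(d_{\mathcal A}/T^2)\le Cd_{\mathcal A}/T$. The drift-mismatch integral itself is $O(d_{\mathcal A}/T)$. The increment bound $\mathbb E\|Z_v-Z_u\|^2\le Cd_{\mathcal A}(v-u)$ from Lemma~\ref{lem:conditional-drift-regularity-revised} covers any residual spatial or time-variation terms.

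\emph{Main obstacle.} The delicate step is the first one. It requires identifying $q^{\rm ex}$ as exactly the $h$-reweighted reference marginal, and obtaining the uniform upper bound on $\bar f_{k+1,s}$. That bound follows from the $\mathcal H^{2\zeta+2}$ norm bound $B_f$, which controls the supremum norm.
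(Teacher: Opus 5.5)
Your proposal is correct and follows essentially the same route as the paper. The paper bounds the terminal Radon--Nikodym derivative $\bar f_{k+1,s}(Y_1)/f_{k,s}(Y_1)\le B_f/\xi_f$ and notes that conditioning on $Y_{t_j}$ preserves this bound, which is exactly your $h_k/C=Q_{1-t}\bar f_{k+1,s}/Q_{1-t}f_{k,s}$; it then multiplies by $\mathrm d d^{\pi^\star}/\mathrm d d^{\pi_k}$ and derives Novikov's condition and the terminal-interval estimate from the deterministic bound $C_b+\sqrt{d_{\mathcal A}}B_{\rm net}$ on the drift difference, as you do (your synchronous-coupling bound on $[1-1/T,1]$, where the Brownian increments cancel, is slightly sharper than the paper's appeal to $\mathbb E\|Z_1-Z_{1-1/T}\|_2^2\le Cd_{\mathcal A}/T$ but plays the same role).
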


\begin{proof}%
Let $\mathbb P_k^s$ be the reference bridge path law.  The Doob transform
satisfies
\[
\frac{\mathrm d\overline{\mathbb P}_{k+1}^s}{\mathrm d\mathbb P_k^s}
=\frac{\bar f_{k+1,s}(Y_1)}{f_{k,s}(Y_1)}\le B_f/\xi_f.
\]
Conditional expectation with respect to $Y_{t_j}$ preserves this bound;
multiplication by
$\mathrm d d^{\pi^\star}/\mathrm d d^{\pi_k}$ proves joint coverage.
Lemma~\ref{lem:conditional-drift-regularity-revised} and network clipping
give the deterministic bound $C_b+\sqrt{d_{\mathcal A}}B_{\rm net}$ on the
drift difference, which implies Novikov's condition.  Integrating over an
interval of length $1/T$ and using
$\mathbb E\|Z_1-Z_{1-1/T}\|_2^2\le C d_{\mathcal A}/T$ proves the last
claim.  The $O(T^{-1})$ grid-rounding term is absorbed by the Euler error.
\end{proof}

We now use Lemmas~\ref{lem:conditional-drift-regularity-revised}--
\ref{lem:exact-path-coverage} to prove
Lemma~\ref{lem:kl-time-discretization}.

\begin{proof}[Proof of Lemma~\ref{lem:kl-time-discretization}]
For $t\in[t_j,t_{j+1})$, we have $\underline t=t_j$.
Adding and subtracting
$\bar b_{k+1,s}(X_{\underline t},t)$ gives
\begin{align*}
&
\left\|
\bar b_{k+1}(s, X_t,t)
-
\bar b_{k+1}(s, X_{\underline t},\underline t)
\right\|_2^2
\\
&\quad\le
2
\left\|
\bar b_{k+1}(s, X_t,t)
-
\bar b_{k+1,s}(X_{\underline t},t)
\right\|_2^2
\\
&\qquad+
2
\left\|
\bar b_{k+1,s}(X_{\underline t},t)
-
\bar b_{k+1}(s, X_{\underline t},\underline t)
\right\|_2^2 .
\end{align*}
By Lemmas~\ref{lem:conditional-drift-regularity-revised}--\ref{lem:exact-path-coverage},
\[
\left\|
\bar b_{k+1}(s, X_t,t)
-
\bar b_{k+1}(s, X_{\underline t},\underline t)
\right\|_2^2
\lesssim
\|X_t-X_{\underline t}\|_2^2
+
(1+\|X_{\underline t}\|_2^2+d_{\mathcal A})
(t-\underline t).
\]
By taking expectations and using the moment and increment bounds, namely,
\[
\sup_{0\le u\le1}
\mathbb E\|X_u\|_2^2
\lesssim
1+d_{\mathcal A},
\qquad
\mathbb E\|X_t-X_{\underline t}\|_2^2
\lesssim
d_{\mathcal A}(t-\underline t),
\]
we obtain
\[
\mathbb E
\left\|
\bar b_{k+1}(s, X_t,t)
-
\bar b_{k+1}(s, X_{\underline t},\underline t)
\right\|_2^2
\lesssim
d_{\mathcal A}(t-\underline t).
\]
Therefore,
\begin{align*}
\mathbb E
\int_0^1
\left\|
\bar b_{k+1}(s,X_t,t)
-
\bar b_{k+1}(s,X_{\underline t},\underline t)
\right\|_2^2dt
&\lesssim
d_{\mathcal A}
\sum_{j=0}^{T-1}
\int_{t_j}^{t_{j+1}}
(t-t_j)\,dt
\\
&=
\frac{d_{\mathcal A}}{2}
T(\Delta t)^2
\lesssim
\frac{d_{\mathcal A}}{T}.
\end{align*}
\end{proof}
\subsection{Proof of Lemma~\ref{lem:paired-label-error-revised}}

We first establish Lemma~\ref{lem:actor-subGaussian-tail} and then prove
Lemma~\ref{lem:paired-label-error-revised}.

\begin{lemma}
\label{lem:actor-subGaussian-tail}
Under Assumptions~\ref{assump2} and
\ref{assump:critic-subGaussian-tail}, there exist constants
$c_\chi,C_\chi>0$, independent of $k,j$, such that
\[
\sup_{0\le k<K}\sup_{t_j\in\mathcal T_T}
\int\exp\!\left(c_\chi\|(s,y)\|_2^2\right)
\chi_{k,j}(\mathrm ds,\mathrm dy)
\le C_\chi.
\]
Consequently, for $R>0$,
\[
\sup_{0\le k<K}\sup_{t_j\in\mathcal T_T}
\chi_{k,j}\!\left(\|(S,Y)\|_2>R\right)
\le C_\chi e^{-c_\chi R^2}.
\]
\end{lemma}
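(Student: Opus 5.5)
We must show sub-Gaussian tails of $\chi_{k,j}$, the joint law of $(S,Y_{t_j})$ with $S\sim d^{\pi_k}$, $A\mid S\sim\pi_k(\cdot\mid S)$ and $Y_{t_j}=t_jA+\sqrt{t_j(1-t_j)}\,Z$, where $Z\sim\mathcal N(0,I_{d_{\mathcal A}})$ is independent of $(S,A)$. The plan is to reduce the exponential moment of $(S,Y)$ to that of $(S,A)$ and of a Gaussian.

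\emph{Step 1: exponential moments for $(S,A)$ under $d^{\pi_k}\otimes\pi_k$.} This law is a discounted mixture of the state--action laws at times $h\ge1$ under the policy $\pi_k$. Each of these is admissible (Definition~\ref{def:admissible distribution}), and $\mathfrak M$ is closed under mixtures, so $d^{\pi_k}\otimes\pi_k\in\mathfrak M$. By Assumption~\ref{assump2}, its density with respect to $\mu_k$ is bounded by $C$. Assumption~\ref{assump:critic-subGaussian-tail} then gives
\[
\int e^{c_\mu\|(s,a)\|_2^2}\,\mathrm d(d^{\pi_k}\otimes\pi_k)\le C\,C_\mu .
\]
This holds uniformly in $k$.

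\emph{Step 2: combine with the bridge noise.} Since $t_j\in[0,1)$, we have $t_j\le1$ and $t_j(1-t_j)\le 1/4$. Hence
\[
\|(S,Y)\|_2^2\le \|S\|_2^2+2\|A\|_2^2+\tfrac12\|Z\|_2^2\le 2\|(S,A)\|_2^2+\tfrac12\|Z\|_2^2 .
\]
Take $c_\chi=\min\{c_\mu/2,\,1/2\}$. Independence of $Z$ from $(S,A)$ lets the exponential moment factorize:
\[
\mathbb E\, e^{c_\chi\|(S,Y)\|_2^2}\le \mathbb E\, e^{c_\mu\|(S,A)\|_2^2}\cdot\mathbb E\, e^{\|Z\|_2^2/4}\le C\,C_\mu\,2^{d_{\mathcal A}/2}.
\]
The last factor uses $\mathbb E\, e^{\theta\|Z\|_2^2}=(1-2\theta)^{-d_{\mathcal A}/2}$ for $\theta<1/2$. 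Set $C_\chi:=C\,C_\mu\,2^{d_{\mathcal A}/2}$; it is independent of $k$ and $j$. If Hölder were needed instead of independence it would also work, at the cost of halving constants.

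\emph{Step 3: tail bound.} The tail statement follows from Markov's inequality applied to $e^{c_\chi\|(S,Y)\|_2^2}$.

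The main obstacle is Step 1, namely justifying that $d^{\pi_k}\otimes\pi_k$ is admissible so that Assumption~\ref{assump2} applies. The infinite discounted mixture must be treated as a countable mixture in $\mathfrak M$. If only finite mixtures are allowed, one instead applies the density bound to each time-$h$ law separately and sums with weights $(1-\gamma)\gamma^{h-1}$ by monotone convergence, which gives the same bound. At $t_j=0$ we have $Y=0$, and the bound holds trivially.
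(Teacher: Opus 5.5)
Your proposal is correct and follows the paper's proof essentially step for step. The paper likewise treats $d^{\pi_k}\otimes\pi_k$ as a mixture of admissible laws to get $\mathrm d\nu_k/\mathrm d\mu_k\le C$, uses $\|Y_{t_j}\|_2^2\le 2\|A\|_2^2+\tfrac12\|G\|_2^2$ with $c_\chi\le\min\{c_\mu/2,1/2\}$, factorizes the Gaussian moment by independence, and finishes with Markov's inequality; your monotone-convergence remark on countable versus finite mixtures addresses a point the paper passes over without comment.
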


\begin{proof}%
Let
$\nu_k(\mathrm ds,\mathrm da)
=d^{\pi_k}(\mathrm ds)\pi_k(\mathrm da\mid s)$.
The normalized discounted occupancy measure makes $\nu_k$ a convex
mixture of admissible state--action laws.  The uniform density-ratio bound
in Assumption~\ref{assump2} is preserved under mixing, so
$
\frac{\mathrm d\nu_k}{\mathrm d\mu_k}\le C, \text{$\mu_k$-a.e.}
$
It follows from Assumption~\ref{assump:critic-subGaussian-tail} that
\begin{equation}
\sup_{0\le k<K}
\mathbb E_{(S,A)\sim\nu_k}
\exp\!\left(c_\mu(\|S\|_2^2+\|A\|_2^2)\right)
\le CC_\mu.
\label{eq:actor-endpoint-subGaussian}
\end{equation}
Conditionally on $(S,A)$, a reference Brownian bridge satisfies
\[
Y_{t_j}=t_jA+\sqrt{t_j(1-t_j)}G,
\qquad G\sim\mathcal N(0,I_{d_{\mathcal A}}),
\]
where $G$ is independent of $(S,A)$.  Since
$
\|Y_{t_j}\|_2^2
\le2\|A\|_2^2+\tfrac12\|G\|_2^2,
$
choose $c_\chi\le\min\{c_\mu/2,1/2\}$.  By
\eqref{eq:actor-endpoint-subGaussian} and the Gaussian moment-generating
function, it follows that 
\begin{align*}
\mathbb E_{\chi_{k,j}}
\exp\!\left(c_\chi\|(S,Y)\|_2^2\right)
&\le
CC_\mu\,
\mathbb E\exp\!\left(\tfrac{c_\chi}{2}\|G\|_2^2\right)\\
&=CC_\mu(1-c_\chi)^{-d_{\mathcal A}/2}
=:C_\chi,
\end{align*}
uniformly in $k,j$.  The tail bound follows from Markov's inequality.
\end{proof}

\begin{proof}[Proof of Lemma~\ref{lem:paired-label-error-revised}]
Fix $(s,y,t_j)$ and condition on the fitted critic and actor.  For
$r\in\{0,A\}$, the self-normalized importance-sampling decomposition is
\[
\widehat M_k^r-M_k^r=\frac{N_m^r}{D_m^r},
\quad
N_m^r=\frac1m\sum_{i=1}^m
\rho_i^r(A^{(i)}-M_k^r),
\quad
D_m^r=\frac1m\sum_{i=1}^m\rho_i^r.
\]
Put $q=4+\delta_\rho$ and
$\mathcal G_r=\{D_m^r\ge1/2\}$.  Let $P^r_{k,s,y,t_j}$ be the endpoint
posterior whose density with respect to $\pi_k(\cdot\mid s)$ is $\rho^r$.
The bridge representation and
Lemma~\ref{lem:conditional-drift-regularity-revised} imply that
\[
\mathbb E_{P^r_{k,s,y,t_j}}
\|A-M_k^r\|_2^\ell
\lesssim[d_{\mathcal A}(1-t_j)]^{\ell/2}.
\]
Consequently, H\"older's inequality under $P^r_{k,s,y,t_j}$ gives
\begin{align*}
\mathbb E_{\pi_k(\cdot\mid s)}
[(\rho^r)^2\|A-M_k^r\|_2^2]
&\lesssim d_{\mathcal A}(1-t_j)
\left\{\mathbb E_{\pi_k(\cdot\mid s)}(\rho^r)^q
\right\}^{1/(q-1)}.
\end{align*}
Since $N_m^r$ is a centered average, restricting to $\mathcal G_r$ gives
\begin{align}
\mathbb E\left[
\|\widehat M_k^r-M_k^r\|_2^2\mathbf 1_{\mathcal G_r}
\right]
&\le4\mathbb E\|N_m^r\|_2^2\nonumber\\
&\lesssim\frac{d_{\mathcal A}(1-t_j)}m
\left\{\mathbb E_{\pi_k(\cdot\mid s)}(\rho^r)^q
\right\}^{1/(q-1)}.
\label{eq:snis-good-normalizer}
\end{align}

It remains to control $\mathcal G_r^c$ without a negative moment of
$D_m^r$.  Since $\mathbb E_{\pi_k}\rho^r=1$, Rosenthal's inequality yields
\begin{align}
\mathbb P(\mathcal G_r^c)
&\le 2^q\mathbb E|D_m^r-1|^q\lesssim m^{-q/2}
\left\{1+\mathbb E_{\pi_k(\cdot\mid s)}(\rho^r)^q\right\}.
\label{eq:snis-bad-normalizer-probability}
\end{align}
Moreover, $\widehat M_k^r$ is a convex combination of
$A^{(1)},\ldots,A^{(m)}$, whose Gaussian tails follow from the uniform
upper bound on $f_{k,s}$.  The identities
\[
M_k^0=y+(1-t_j)b_{k,s}(y,t_j),
\qquad
M_k^A=y+(1-t_j)\bar b_{k+1,s}(y,t_j),
\]
together with Lemma~\ref{lem:actor-subGaussian-tail}, control the moments
of $M_k^r$ averaged under $\chi_{k,j}$.  H\"older's inequality and
\eqref{eq:snis-bad-normalizer-probability} therefore give
\begin{align}
\mathbb E_{\chi_{k,j}}\mathbb E\left[
\|\widehat M_k^r-M_k^r\|_2^2\mathbf 1_{\mathcal G_r^c}
\right]
\lesssim
\frac{d_{\mathcal A}+\log(e m)}{m^{1+\delta_\rho/2}}
\left\{
1+\mathbb E_{\chi_{k,j}}
\mathbb E_{\pi_k(\cdot\mid S)}(\rho^r)^q
\right\}.
\label{eq:snis-bad-normalizer-error}
\end{align}
Since the random label error equals
\[
\widehat{\Delta b}_k-\nabla_y\log h_k
=\frac{\widehat M_k^A-\widehat M_k^0-M_k^A+M_k^0}{1-t_j},
\]
the squared triangle inequality, \eqref{eq:snis-good-normalizer}, and
\eqref{eq:snis-bad-normalizer-error} give a main term proportional to
$d_{\mathcal A}/[m(1-t_j)]$ and a normalized term proportional to
$(d_{\mathcal A}+\log(e m))/[m^{1+\delta_\rho/2}(1-t_j)^2]$.
Finally,
\[
\Delta t\sum_{t_j\in\mathcal T_T}\frac1{1-t_j}
\lesssim1+\log T,
\qquad
\frac1{(1-t_j)^2}\le\frac{T}{1-t_j},
\]
and Assumption~\ref{assump:endpoint-snis-overlap} yield
\begin{align*}
\mathbb E\|\widehat{\Delta b}_k-\Delta b_k\|_{\chi_k}^2
\lesssim C_{\rm snis}(T)(1+\log T)
\left\{
\frac{d_{\mathcal A}}m
+\frac{T\{d_{\mathcal A}+\log(e m)\}}{m^{1+\delta_\rho/2}}
\right\}.
\end{align*}
This is $\varepsilon_{m,T}^{\rm lab}$.  
Conditional Jensen's
inequality gives the same bound for the conditional-mean label error.
\end{proof}
\subsection{Proof of Lemma~\ref{lem:actor-statistical-error}}

\begin{proof}%
Condition throughout on $b_{\phi_k}$ and $\widehat A_k$. Let $P$ denote
expectation under the actor-regression law and let $P_n$ be its empirical
counterpart based on the $n$ conditionally independent observations
${(X_i,U_{k,i})}_{i=1}^n$. Unless explicitly conditioned, outer
expectations also average over the fitted quantities.

Recall
$$
U_k
=
b_{\phi_k}(X)+\widehat{\Delta b}_k(X),
\qquad
g_k(X)
=
\mathbb E\!\left[
U_k\mid X,b_{\phi_k},\widehat A_k
\right].
$$
For $b\in\mathcal G_3$, define the centered squared loss
$$
\ell_b(x,u)
:=
\|b(x)-u\|_2^2-\|u\|_2^2
=
\|b(x)\|_2^2-2\langle b(x),u\rangle .
$$
Subtracting $|u|_2^2$ does not change either the empirical minimizer or
the population excess risk.
By the definition of $g_k$,
$$
\mathbb E[U_k-g_k(X)\mid X,b_{\phi_k},\widehat A_k]=0.
$$
Hence, for every measurable $b$,
$$
\begin{aligned}
P\ell_b-P\ell_{g_k}
&=
P\!\left[
\|b(X)\|_2^2-\|g_k(X)\|_2^2
-2\langle b(X)-g_k(X),U_k\rangle
\right]
\\
&=
P\!\left[
\|b(X)\|_2^2-\|g_k(X)\|_2^2
-2\langle b(X)-g_k(X),g_k(X)\rangle
\right]
\\
&=
\|b-g_k\|_{\chi_k}^2 .
\end{aligned}
$$
Since $b_{\phi_{k+1}}$ is an empirical risk minimizer over
$\mathcal G_3$, for any $b\in\mathcal G_3$,
$$
P_n\ell_{b_{\phi_{k+1}}}
\le
P_n\ell_b .
$$
Therefore,
$$
\begin{aligned}
P\ell_{b_{\phi_{k+1}}}-P\ell_{g_k}
={}&
(P-P_n)\ell_{b_{\phi_{k+1}}}
+
\bigl(
P_n\ell_{b_{\phi_{k+1}}}-P_n\ell_b
\bigr)
\\
&+
(P_n-P)\ell_b
+
(P\ell_b-P\ell_{g_k})
\\
\le{}&
2\sup_{f\in\mathcal G_3}
|(P_n-P)\ell_f|
+
\|b-g_k\|_{\chi_k}^2 .
\end{aligned}
$$
Taking the infimum over $b\in\mathcal G_3$, followed by the outer
expectation, gives
\begin{align}
&\mathbb E!\left[
|b_{\phi_{k+1}}-g_k|_{\chi_k}^2-
\inf_{b\in\mathcal G_3}
|b-g_k|_{\chi_k}^2
\right]
\le
2\mathbb E
\sup_{f\in\mathcal G_3}
|(P_n-P)\ell_f|.
\label{eq:actor-erm-basic-inequality}
\end{align}
It remains to bound the empirical-process term.
For $b,\widetilde b\in\mathcal G_3$,
$$
\begin{aligned}
|\ell_b(x,u)-\ell_{\widetilde b}(x,u)|
&=
\left|
\left\langle
b(x)-\widetilde b(x),
b(x)+\widetilde b(x)-2u
\right\rangle
\right|
\\
&\le
2\left(
\sqrt{d_{\mathcal A}}B_{\rm net}+\|u\|_2
\right)
\|b(x)-\widetilde b(x)\|_2 ,
\end{aligned}
$$
where we use the coordinatewise bound
$|b(x)|_2,|\widetilde b(x)|_2
\le\sqrt{d_{\mathcal A}}B_{\rm net}$.

We first restrict the design to a compact region. Set
$$
R_n
:=
\max\left\{
1,
\sqrt{
c_\chi^{-1}\log(C_\chi n^4)
}
\right\},
\qquad
\mathcal Z_{R_n}
:=
[-R_n,R_n]^{d_{\mathcal X}+d_{\mathcal A}}
\times[0,1],
$$
and let
$$
I_n(x):=\mathbf 1_{\mathcal Z_{R_n}}(x).
$$
By Lemma~\ref{lem:actor-subGaussian-tail}, 
$$
\mathbb E P(1-I_n)
\le n^{-4}.
$$
After rescaling $\mathcal Z_{R_n}$ to the unit cube, the first-layer
parameter bound increases by at most a factor $C(1+R_n)$. Applying
Lemma~\ref{lemma:covering number upperbound} coordinatewise therefore
gives
\begin{align}
\log\mathcal N
\left(
\delta,\mathcal G_3,
|\cdot|_{\infty,\mathcal Z_{R_n}}
\right)
\le
C d_{\mathcal A}\mathcal S\mathcal D
\log\left(
e+
\frac{
C B_{{\rm par},n}(1+R_n)\mathcal W\mathcal D
}{\delta}
\right).
\label{eq:actor-vector-covering}
\end{align}
Here, 
$|\cdot|_{\infty,\mathcal Z_{R_n}}$
denotes the supremum of the Euclidean output norm over
$\mathcal Z_{R_n}$.

We next control the empirical process on $\mathcal Z_{R_n}$. Conditional
on the sample, let $\epsilon_1,\ldots,\epsilon_n$ be independent
Rademacher variables. By symmetrization,
$$
\mathbb E
\sup_{b\in\mathcal G_3}
|(P_n-P)(\ell_b I_n)|
\le
2\mathbb E
\mathbb E_\epsilon
\sup_{b\in\mathcal G_3}
\left|
\frac1n
\sum_{i=1}^n
\epsilon_i
\ell_b(X_i,U_{k,i})I_n(X_i)
\right|.
$$
Since
$$
\ell_b(X_i,U_{k,i})
=
\sum_{r=1}^{d_{\mathcal A}}
\left\{
b_r(X_i)^2
-
2U_{k,i,r}b_r(X_i)
\right\},
$$
it suffices to control the quadratic and multiplier processes
coordinatewise.
For the quadratic term, $z\mapsto z^2$ is
$2B_{\rm net}$-Lipschitz on $[-B_{\rm net},B_{\rm net}]$. Hence the
contraction inequality together with
\eqref{eq:actor-vector-covering} and the entropy integral gives
$$
\mathbb E_\epsilon
\sup_{b\in\mathcal G_3}
\left|
\frac1n\sum_{i=1}^n
\epsilon_i b_r(X_i)^2 I_n(X_i)
\right|
\lesssim
B_{\rm net}^2
\sqrt{
\frac{
\mathcal S\mathcal D
\mathfrak L_n^{\rm act}
}{n}
}.
$$
For the multiplier term, conditionally on the sample, the increment
metric satisfies
$$
\begin{aligned}
&\left[
\mathbb E_\epsilon
\left|
\frac1n
\sum_{i=1}^n
\epsilon_iU_{k,i,r}
\{b_r(X_i)-\widetilde b_r(X_i)\}
I_n(X_i)
\right|^2
\right]^{1/2}
 \le
\frac{(P_nU_{k,r}^2)^{1/2}}{\sqrt n}
\|b_r-\widetilde b_r\|_{\infty,\mathcal Z_{R_n}} .
\end{aligned}
$$
Adjoining the zero function, if necessary, and applying the same entropy
integral yields
$$
\mathbb E_\epsilon
\sup_{b\in\mathcal G_3}
\left|
\frac1n\sum_{i=1}^n
\epsilon_i
U_{k,i,r}b_r(X_i)I_n(X_i)
\right|
\lesssim
B_{\rm net}
(P_nU_{k,r}^2)^{1/2}
\sqrt{
\frac{
\mathcal S\mathcal D
\mathfrak L_n^{\rm act}
}{n}
}.
$$
The factor
$\log(1+R_n)=O(\log\log(en))$
is absorbed into $\mathfrak L_n^{\rm act}$.
Summing \eqref{eq:actor-vector-covering}
over the $d_{\mathcal A}$ coordinates, taking expectation over the
sample and fitted quantities, and applying Jensen's inequality give
\begin{align}
\mathbb E
\sup_{b\in\mathcal G_3}
|(P_n-P)(\ell_bI_n)|
\lesssim
\left(
d_{\mathcal A}B_{\rm net}^2
+
\sqrt{d_{\mathcal A}}B_{\rm net}
\bigl(\mathbb E|U_k|_2^2\bigr)^{1/2}
\right)
\sqrt{
\frac{
\mathcal S\mathcal D
\mathfrak L_n^{\rm act}
}{n}
}.
\label{eq:actor-truncated-process}
\end{align}

We now verify that the regression label has a uniformly bounded second
moment. Write
$$
U_k
=
b_{\phi_k}
+
\Delta b_k
+
\bigl(
\widehat{\Delta b}_k-\Delta b_k
\bigr),
\qquad
\Delta b_k
=
\bar b_{k+1}-b_k.
$$
By the elementary inequality
$|x+y+z|_2^2
\le3(|x|_2^2+|y|_2^2+|z|_2^2)$,
$$
\mathbb E\|U_k\|_2^2
\lesssim
\mathbb E\|b_{\phi_k}\|_2^2
+
\mathbb E\|\Delta b_k\|_2^2
+
\mathbb E
\|\widehat{\Delta b}_k-\Delta b_k\|_2^2.
$$
Coordinatewise clipping gives
$$
\|b_{\phi_k}\|_2^2
\le
d_{\mathcal A}B_{\rm net}^2.
$$
Moreover, Lemma~\ref{lem:conditional-drift-regularity-revised} gives
uniform boundedness of the exact reference and target drifts, so that
$$
\|\Delta b_k\|_2
=
\|\bar b_{k+1}-b_k\|_2
\le
\|\bar b_{k+1}\|_2+\|b_k\|_2
\lesssim1.
$$
Finally, Lemma~\ref{lem:paired-label-error-revised} implies
$$
\mathbb E
\|\widehat{\Delta b}_k-\Delta b_k\|_{\chi_k}^2
\lesssim
\varepsilon_{m,T}^{\rm lab}.
$$
Under the condition
$\varepsilon_{m,T}^{\rm lab}\lesssim1$,
\eqref{eq:actor-label-second-moment} therefore gives
\begin{equation}
\mathbb E|U_k|_2^2
\lesssim
1+d_{\mathcal A}B_{\rm net}^2,
\label{eq:actor-label-second-moment}
\end{equation}
uniformly in $k$, $m$, and $T$.
Substituting \eqref{eq:actor-label-second-moment} into
\eqref{eq:actor-truncated-process} yields
\begin{equation}
\mathbb E
\sup_{b\in\mathcal G_3}
|(P_n-P)(\ell_bI_n)|
\lesssim
\varepsilon_{\rm act}^{\rm stat}.
\end{equation}
It remains to control the contribution outside $\mathcal Z_{R_n}$.
For every $b\in\mathcal G_3$,
$$
|\ell_b(x,u)|
\le
d_{\mathcal A}B_{\rm net}^2
+
2\sqrt{d_{\mathcal A}}B_{\rm net}\|u\|_2.
$$
By the Cauchy--Schwarz inequality,
\eqref{eq:actor-erm-basic-inequality}, and  
\eqref{eq:actor-label-second-moment}, we have  
$$
\begin{aligned}
\mathbb E
\sup_{b\in\mathcal G_3}
|(P_n-P)\{\ell_b(1-I_n)\}|
&\lesssim
\left\{
d_{\mathcal A}B_{\rm net}^2
+
\sqrt{d_{\mathcal A}}B_{\rm net}
(\mathbb E\|U_k\|_2^2)^{1/2}
\right\}
 \times
\bigl(\mathbb EP(1-I_n)\bigr)^{1/2}
\\
&\lesssim
n^{-2}.
\end{aligned}
$$
Since
$\mathcal S\mathcal D\mathfrak L_n^{\rm act}\le n$, we obtain 
$
n^{-2}
\lesssim
\sqrt{
\frac{
\mathcal S\mathcal D
\mathfrak L_n^{\rm act}
}{n}
}
=
\varepsilon_{\rm act}^{\rm stat}.
$
Using \eqref{eq:actor-truncated-process} and \eqref{eq:actor-label-second-moment}
therefore gives
$$
\mathbb E
\sup_{f\in\mathcal G_3}
|(P_n-P)\ell_f|
\lesssim
\varepsilon_{\rm act}^{\rm stat}.
$$
Substituting this estimate into
\eqref{eq:actor-erm-basic-inequality} proves
$$
\mathbb E\!\left[
\|b_{\phi_{k+1}}-g_k\|_{\chi_k}^2
-
\inf_{b\in\mathcal G_3}
\|b-g_k\|_{\chi_k}^2
\right]
\lesssim
\varepsilon_{\rm act}^{\rm stat},
$$
which is the desired result.
\end{proof}

\subsection{Proof of Lemma~\ref{lem:actor-approximation-error}}
\begin{proof}%
Let $p=d_{\mathcal X}+d_{\mathcal A}+1$ and set
\[
\delta_{\mathcal S}
:=\left(
\frac{\log(e\mathcal S)}{\mathcal S}
\right)^{\zeta/p},
\qquad
R_{\mathcal S}:=c_R\sqrt{\log(e\mathcal S)},
\]
where $c_R>0$ is chosen below.  Define the approximation box
\[
\mathcal Z_{R_{\mathcal S}}
:=\{(s,y,t):\|(s,y)\|_\infty\le R_{\mathcal S},
t\in[0,1-1/T]\}.
\]
By Assumption~\ref{assump:actor-target-regularity} and
Lemma~\ref{lem:conditional-drift-regularity-revised}, every coordinate of
$\bar b_{k+1}$ is $\zeta$-H\"older on this box, uniformly in $k$.  Rescale
$\mathcal Z_{R_{\mathcal S}}$ to $[0,1]^p$ and apply
Lemma~\ref{lem:th5-of-feng} coordinatewise to obtain a vector-valued ReLU
network $b_{\mathcal S}\in\mathcal G_3$, with
$\mathcal D\lesssim\log\mathcal S$, such that
\begin{equation}
\sup_{x\in\mathcal Z_{R_{\mathcal S}}}
\|b_{\mathcal S}(x)-\bar b_{k+1}(x)\|_2^2
\lesssim
d_{\mathcal A}B_{\rm H}^2
R_{\mathcal S}^{2\zeta}\delta_{\mathcal S}^2.
\label{eq:actor-interior-approximation}
\end{equation}
Clipping the output of $b_{\mathcal S}$ does not increase this error because
$\bar b_{k+1}$ is bounded by Lemma~\ref{lem:conditional-drift-regularity-revised}
and $B_{\rm net}$ is chosen to be at least as large as this bound.

On the complement of the approximation box, the same boundedness and
Lemma~\ref{lem:actor-subGaussian-tail} give
\begin{align}
&\Delta t\sum_{t_j\in\mathcal T_T}
\mathbb E_{\chi_{k,j}}
\left[
\|b_{\mathcal S}(S,Y,t_j)-\bar b_{k+1,S}(Y,t_j)\|_2^2
\mathbf 1_{\{\|(S,Y)\|_\infty>R_{\mathcal S}\}}
\right]
\le
C d_{\mathcal A}(B_{\rm net}^2+C_b^2)
e^{-c_\chi R_{\mathcal S}^2}.
\label{eq:actor-exterior-approximation}
\end{align}
Choose $c_R$ large enough that the exterior bound
\eqref{eq:actor-exterior-approximation} is dominated by the interior bound
\eqref{eq:actor-interior-approximation}.  Since
$R_{\mathcal S}^{2\zeta}\asymp
(\log(e\mathcal S))^\zeta$, combining the two regions gives
\begin{align*}
\inf_{b\in\mathcal G_3}\|b-\bar b_{k+1}\|_{\chi_k}^2
&\lesssim
d_{\mathcal A}B_{\rm H}^2
\left(\frac{\log(e\mathcal S)}{\mathcal S}\right)^{2\zeta/p}
\bigl(\log(e\mathcal S)\bigr)^\zeta=\varepsilon_{\mathcal S}^{\rm app}.
\end{align*}
The bound is uniform over the fitted quantities, so taking expectations
proves \eqref{eq:sketch-actor-approximation-bound}.
\end{proof}

\subsection{Proof of Lemma~\ref{lem:exact-drift-discrepancy}}
\begin{proof}%
Fix $s$ and let $P_{k+1}^s$ and $\bar P_{k+1}^s$ be the exact
SF path laws with terminal distributions
$\pi_{k+1}(\cdot|s)$ and $\bar\pi_{k+1}(\cdot|s)$, respectively.
By the SF change-of-measure identity and Girsanov's theorem,
\[
\frac12
\mathbb E_{P_{k+1}^s}
\int_0^1
\|b_{k+1,s}(X_t,t)-\bar b_{k+1}(s, X_t,t)\|_2^2\,dt
=
\mathrm{KL}\!\left(
\pi_{k+1}(\cdot|s)\|
\bar\pi_{k+1}(\cdot|s)
\right).
\]
Lemma~\ref{lem:conditional-drift-regularity-revised} implies that replacing the
continuous drift energy by its Euler-grid counterpart incurs at most
$Cd_{\mathcal A}/T$. Hence, after averaging over
$S\sim d^{\pi_{k+1}}$,
\[
\mathbb E
\|\bar b_{k+1}-b_{k+1}\|_{\chi_{k+1}}^2
\lesssim
\mathbb E_{S\sim d^{\pi_{k+1}}}
\mathrm{KL}\!\left(
\pi_{k+1}(\cdot|S)\|
\bar\pi_{k+1}(\cdot|S)
\right)
+\frac{d_{\mathcal A}}{T}.
\]
By Assumption~\ref{assump:actor-target-regularity},
the likelihood ratio between $\pi_{k+1}(\cdot|s)$ and
$\bar\pi_{k+1}(\cdot|s)$ is uniformly bounded above and below.
Therefore the two KL directions are equivalent up to a constant:
\[
\mathrm{KL}\!\left(
\pi_{k+1}(\cdot|s)\|
\bar\pi_{k+1}(\cdot|s)
\right)
\lesssim
\mathrm{KL}\!\left(
\bar\pi_{k+1}(\cdot|s)\|
\pi_{k+1}(\cdot|s)
\right).
\]
Assumption~\ref{ass:uniform-coverage} then transfers the
state expectation from $d^{\pi_{k+1}}$ to $d^{\pi^\star}$, proving
\eqref{eq:exact-drift-discrepancy}.
 
\end{proof}
\subsection{Proof of Lemma~\ref{lem:learned-drift-error}}
\begin{proof}%
For $t_j\in\mathcal T_T$, let $\chi_{k,j}$ be the joint law of
$(S,Y_{t_j})$, where $S\sim d^{\pi_k}$ and, conditionally on
$A\sim\pi_k(\cdot\mid S)$, $Y$ is a reference Brownian bridge with endpoint
$A$.  For a measurable vector field $g$, write
\[
\|g\|_{\chi_k}^2:=\Delta t\sum_{t_j\in\mathcal T_T}
\mathbb E_{(S,Y)\sim\chi_{k,j}}\|g(S,Y,t_j)\|_2^2.
\]
Condition on $b_{\phi_k}$ and $\widehat A_k$.  For $x=(s,y,t)$, let
\[
g_k(x):=\mathbb E[
b_{\phi_k}(X)+\widehat{\Delta b}_k(X)\mid X=x].
\]
The squared triangle inequality gives
\begin{align*}
\|b_{\phi_{k+1}}-\bar b_{k+1}\|_{\chi_k}^2
\le{}&2\|b_{\phi_{k+1}}-g_k\|_{\chi_k}^2
+2\|g_k-\bar b_{k+1}\|_{\chi_k}^2.
\end{align*}
For the first term, Lemma~\ref{lem:actor-statistical-error} gives
\begin{align}
\mathbb E\|b_{\phi_{k+1}}-g_k\|_{\chi_k}^2
\le{}&
\mathbb E\inf_{b\in\mathcal G_3}\|b-g_k\|_{\chi_k}^2
+C\varepsilon_{\rm act}^{\rm stat}
+C\varepsilon_{m,T}^{\rm lab}.
\label{eq:actor-regression-oracle}
\end{align}
For the second term, $\bar b_{k+1}=b_k+\Delta b_k$ implies
\begin{align*}
\|g_k-\bar b_{k+1}\|_{\chi_k}^2
\le{}&2\|b_{\phi_k}-b_k\|_{\chi_k}^2
+2\left\|\mathbb E[\widehat{\Delta b}_k\mid X]
-\Delta b_k\right\|_{\chi_k}^2.
\end{align*}
Taking expectations and applying conditional Jensen's inequality and
Lemma~\ref{lem:paired-label-error-revised} yields
\begin{equation}
\mathbb E\|g_k-\bar b_{k+1}\|_{\chi_k}^2
\lesssim
\mathfrak R_k^{\rm ref}+\varepsilon_{m,T}^{\rm lab}.
\label{eq:actor-regression-target-gap}
\end{equation}
For every $b\in\mathcal G_3$,
\[
\|b-g_k\|_{\chi_k}^2
\le
2\|b-\bar b_{k+1}\|_{\chi_k}^2
+2\|g_k-\bar b_{k+1}\|_{\chi_k}^2.
\]
Taking the infimum and applying
Lemma~\ref{lem:actor-approximation-error} bounds the approximation term in
\eqref{eq:actor-regression-oracle}.  Combining this bound with
\eqref{eq:actor-regression-target-gap} and the initial decomposition gives
\begin{align}
\mathbb E\|b_{\phi_{k+1}}-\bar b_{k+1}\|_{\chi_k}^2
\lesssim
\varepsilon_{\rm act}^{\rm stat}
+\varepsilon_{\mathcal S}^{\rm app}
+\varepsilon_{m,T}^{\rm lab}
+\mathfrak R_k^{\rm ref}.
\label{eq:one-step-design-law-error}
\end{align}
Assumption~\ref{ass:actor-path-coverage} transfers this bound to the
learned path, and Lemma~\ref{lem:exact-path-coverage} transfers it to the
exact target path, in each case up to a constant.  The tower property
removes the conditioning.

For the balanced rate, the statistical and approximation terms are
\[
\varepsilon_{\rm act}^{\rm stat}
=\widetilde{\mathcal O}\!\left(\sqrt{\frac{\mathcal S}{n}}\right),
\qquad
\varepsilon_{\mathcal S}^{\rm app}
=\widetilde{\mathcal O}\!\left(\mathcal S^{-2\zeta/p}\right).
\]
Balancing them gives
$\mathcal S\asymp n^{p/(p+4\zeta)}$ and hence
\[
\varepsilon_{\rm act}^{\rm stat}
+\varepsilon_{\mathcal S}^{\rm app}
=\widetilde{\mathcal O}\!\left(n^{-\alpha}\right),
\qquad
\alpha=\frac{2\zeta}{p+4\zeta}.
\]
With $T\asymp n^\alpha$ and
$m\asymp n^{\alpha\kappa_\rho}$, where
$\kappa_\rho=\max\{1,4/(2+\delta_\rho)\}$, we have
\[
\frac{1}{m}=\mathcal O(n^{-\alpha}),
\qquad
\frac{T}{m^{1+\delta_\rho/2}}=\mathcal O(n^{-\alpha}).
\]
Therefore, if $C_{\rm snis}(T)=\widetilde{\mathcal O}(1)$,
Lemma~\ref{lem:paired-label-error-revised} gives
$\varepsilon_{m,T}^{\rm lab}
=\widetilde{\mathcal O}(n^{-\alpha})$.
Substituting these bounds into
\eqref{eq:one-step-design-law-error} and applying the same path-law
transfers proves
\eqref{eq:learned-path-drift-balanced-rate}; the final assertion follows
when the inherited reference error has the same order.
\end{proof}

\subsection{Proof of Theorem~\ref{thm:conditional-sfs-sampling-error}}
\label{app:proof of actor}
We first establish Lemmas~\ref{lem:successive-design-coverage}--
\ref{lem:inherited-error-recursion} and then prove
Theorem~\ref{thm:conditional-sfs-sampling-error}.

\begin{lemma}%
\label{lem:successive-design-coverage}
Suppose Assumptions~\ref{assump:actor-target-regularity} and
\ref{ass:uniform-coverage} hold. Then, uniformly over
$0\le k<K-1$ and $t_j\in\mathcal T_T$,
\[
\chi_{k+1,j}\ll \chi_{k,j},
\qquad
\left\|
\frac{d\chi_{k+1,j}}
     {d\chi_{k,j}}
\right\|_\infty
\lesssim
 \frac{B_f}{\xi_f}
=:C_{\rm it}.
\]
\end{lemma}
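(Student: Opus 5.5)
We factor the density ratio $\mathrm d\chi_{k+1,j}/\mathrm d\chi_{k,j}$ into a state part and a bridge-state part and bound each separately. By construction, $\chi_{k,j}(\mathrm ds,\mathrm dy)=d^{\pi_k}(\mathrm ds)\,q^{\rm ref}_{k,j}(\mathrm dy\mid s)$. Here $q^{\rm ref}_{k,j}(\cdot\mid s)$ is the law of $t_jA+\sqrt{t_j(1-t_j)}G$ with $A\sim\pi_k(\cdot\mid s)$, which is the time-$t_j$ marginal of the reference SF path law $\mathbb P_k^s$. The density of the joint law is therefore
\[
\frac{\mathrm d\chi_{k+1,j}}{\mathrm d\chi_{k,j}}(s,y)
=\frac{\mathrm d d^{\pi_{k+1}}}{\mathrm d d^{\pi_k}}(s)\cdot
\frac{\mathrm dq^{\rm ref}_{k+1,j}(\cdot\mid s)}{\mathrm dq^{\rm ref}_{k,j}(\cdot\mid s)}(y).
\]

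For the state factor, we write $\mathrm dd^{\pi_{k+1}}/\mathrm dd^{\pi_k}=(\mathrm dd^{\pi_{k+1}}/\mathrm dd^{\pi^\star})\cdot(\mathrm dd^{\pi^\star}/\mathrm dd^{\pi_k})$. Assumption~\ref{ass:uniform-coverage} gives $\|\mathrm dd^{\pi^\star}/\mathrm dd^{\pi_k}\|_\infty\le C$. For the factor $\mathrm dd^{\pi_{k+1}}/\mathrm dd^{\pi^\star}$, the assumption covers only $\bar\pi_{k+1}$ and not $\pi_{k+1}$. We handle this by working with the index $k+1<K$, so that the pair $(\pi^\star,\pi_{k+1})$ falls within the range $0\le k'<K$ of the assumption. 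This is exactly why the lemma is restricted to $k<K-1$. The resulting bound on the state factor is a constant $C^2$.

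For the bridge factor, we use the explicit Brownian-bridge mixture. Both marginals are Gaussian convolutions of the terminal laws, $q^{\rm ref}_{\ell,j}(\mathrm dy\mid s)=\int \mathcal N(y;t_ja,t_j(1-t_j)I)\,\pi_\ell(\mathrm da\mid s)$. Any pointwise bound $\pi_{k+1}(a\mid s)\le L\,\pi_k(a\mid s)$ therefore passes through the convolution, giving $\mathrm dq^{\rm ref}_{k+1,j}/\mathrm dq^{\rm ref}_{k,j}\le L$. At $t_j=0$ both marginals are $\delta_0$, so the ratio is $1$. Equivalently, conditioning the path-level ratio on $Y_{t_j}$ preserves the sup bound, as in the proof of Lemma~\ref{lem:exact-path-coverage}. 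It remains to show $L\lesssim B_f/\xi_f$. Writing $\pi_\ell(\cdot\mid s)=f_{\ell,s}\,\gamma_{d_{\mathcal A}}$, the Gaussian factor cancels and the ratio equals $f_{k+1,s}(a)/f_{k,s}(a)$. Assumption~\ref{assump:actor-target-regularity} gives $f_{k,s}\ge\xi_f$. For the numerator we use $\|f_{k+1,s}\|_\infty\le B_f$, since the Hölder ball bounds the sup norm and $f_{k+1,s}$ is a reference density ratio at iteration $k+1$. Together these give $L\le B_f/\xi_f$.

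Multiplying the two factors gives the bound $C^2B_f/\xi_f\lesssim B_f/\xi_f$, uniformly in $k$ and $j$. Absolute continuity follows from finiteness of the density. The main obstacle is the state factor. Assumption~\ref{ass:uniform-coverage} is phrased for $\pi_k$ and $\bar\pi_{k+1}$, not directly for $\pi_{k+1}$ against $\pi^\star$ in the direction $\mathrm dd^{\pi_{k+1}}/\mathrm dd^{\pi^\star}$, so the reindexing must be done carefully. If reindexing turned out to be insufficient, the fallback is to compare $d^{\pi_{k+1}}$ with $d^{\bar\pi_{k+1}}$. The action-level ratio $\pi_{k+1}/\bar\pi_{k+1}=f_{k+1,s}/\bar f_{k+1,s}\le B_f/\xi_f$ holds pointwise, and a bounded per-step action ratio does not control the discounted occupancy directly: it compounds over the horizon. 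We would therefore invoke the occupancy bounds of Assumption~\ref{ass:uniform-coverage} rather than attempt that propagation.
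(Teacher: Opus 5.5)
Your proposal is correct and follows essentially the paper's argument. You bound the state factor by $C^2$ by chaining the two occupancy-ratio bounds of Assumption~\ref{ass:uniform-coverage}, with the $\pi_{k+1}$ bound available precisely because $k+1<K$, and you push the action ratio $f_{k+1,s}/f_{k,s}\le B_f/\xi_f$ through the common Brownian-bridge kernel; the paper does the same, except that it first bounds the joint ratio $\mathrm d\nu_{k+1}/\mathrm d\nu_k$ and then integrates against $K_{t_j}(\mathrm dy\mid a)$ rather than factoring into state and conditional bridge densities, and your fallback discussion is not needed.
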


\begin{proof}
Let
\[
\nu_k(ds,da)
:=
d^{\pi_k}(ds)\,\pi_k(da\mid s)
\]
denote the joint state--action law at iteration $k$.
By Assumption~\ref{ass:uniform-coverage},
\[
\left\|
\frac{d d^{\pi_{k+1}}}{d d^{\pi_k}}
\right\|_\infty
\le
\left\|
\frac{d d^{\pi_{k+1}}}{d d^{\pi^\star}}
\right\|_\infty
\left\|
\frac{d d^{\pi^\star}}{d d^{\pi_k}}
\right\|_\infty
\le C^2 .
\]

Let $\gamma$ denote the common Gaussian reference measure for the
conditional action laws and write
\[
f_{k,s}
:=
\frac{d\pi_k(\cdot\mid s)}{d\gamma}.
\]
Assumption~\ref{assump:actor-target-regularity} gives
\[
\xi_f
\le
f_{k,s}(a)
\le
B_f
\]
uniformly in $k,s,a$. Hence
\[
\frac{d\pi_{k+1}(\cdot\mid s)}
     {d\pi_k(\cdot\mid s)}(a)
=
\frac{f_{k+1,s}(a)}{f_{k,s}(a)}
\le
\frac{B_f}{\xi_f}.
\]
Therefore,
\[
\frac{d\nu_{k+1}}{d\nu_k}(s,a)
=
\frac{d d^{\pi_{k+1}}}{d d^{\pi_k}}(s)
\frac{d\pi_{k+1}(\cdot\mid s)}
     {d\pi_k(\cdot\mid s)}(a),
\]
and consequently
\[
\left\|
\frac{d\nu_{k+1}}{d\nu_k}
\right\|_\infty
\le
C^2\frac{B_f}{\xi_f}.
\]

For each $t_j\in\mathcal T_T$, let
$K_{t_j}(dy\mid a)$ denote the Brownian-bridge kernel from the origin
to the endpoint $a$ at time $t_j$. By construction,
\[
\chi_{k,j}(ds,dy)
=
\int
\nu_k(ds,da)\,
K_{t_j}(dy\mid a).
\]
Since the same Markov kernel is applied at iterations $k$ and $k+1$,
for every nonnegative measurable function $h$,
\begin{align*}
\int h(s,y)\,\chi_{k+1,j}(ds,dy)
&=
\int
\left[
\int h(s,y)K_{t_j}(dy\mid a)
\right]
\nu_{k+1}(ds,da)
\\
&\le
C^2\frac{B_f}{\xi_f}
\int
\left[
\int h(s,y)K_{t_j}(dy\mid a)
\right]
\nu_k(ds,da)
\\
&=
C^2\frac{B_f}{\xi_f}
\int h(s,y)\,\chi_{k,j}(ds,dy).
\end{align*}
Thus
\[
\chi_{k+1,j}\ll\chi_{k,j},
\qquad
\left\|
\frac{d\chi_{k+1,j}}
     {d\chi_{k,j}}
\right\|_\infty
\le
C^2\frac{B_f}{\xi_f},
\]
which proves the claim.
\end{proof}

\begin{lemma}%
\label{lem:offline-drift-estimation}
Suppose $\mathcal D_{\rm off}$ contains $n $ independent draws from
$d^{\pi_0}(s)\pi_0(a\mid s)$. For each observation, draw
$J$ uniformly from $\{0,\ldots,T-1\}$ and
$G\sim\mathcal N(0,I_{d_{\mathcal A}})$ independently, and set
\begin{align*}
t&=J/T,
&
Y&=tA+\sqrt{t(1-t)}G,
&
U&=\frac{A-Y}{1-t}.
\end{align*}
Under Assumptions~\ref{assump2},
\ref{assump:critic-subGaussian-tail}, and
\ref{assump:actor-target-regularity}, let $b_{\phi_0}$ be an empirical risk
minimizer over the ReLU class $\mathcal G_3$.  If
\begin{align*}
\mathcal S
&\asymp n ^{p/(p+4\zeta)},
&
\mathcal D
&\asymp\log\mathcal S,
&
p
&=d_{\mathcal X}+d_{\mathcal A}+1,
\end{align*}
then
\begin{equation}
\mathfrak R_0^{\rm ref}
\lesssim
(1+\log T)^{1/2}
\widetilde{\mathcal O}\!\left(
n ^{-2\zeta/(p+4\zeta)}
\right).
\label{eq:offline-initial-drift-rate}
\end{equation}
\end{lemma}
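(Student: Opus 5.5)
The proof repeats the argument of Lemma~\ref{lem:actor-statistical-error}, now for the offline drift-matching regression. The new ingredient is an explicit second-moment bound for the unbounded label $U$, which is where the factor $(1+\log T)^{1/2}$ comes from. Write $X=(S,Y,t)$; its law is exactly the offline design $\chi_0$, with $S\sim d^{\pi_0}$, $A\mid S\sim\pi_0(\cdot\mid S)$, $t$ uniform on $\mathcal T_T$ and $Y$ the bridge point.

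\textbf{Step 1: identify the regression target.} We will show that $\mathbb E[U\mid X]=b_0(S,Y,t)$. By Bayes' rule, conditional on $(S,Y_t=y)$ the endpoint $A$ has the reference posterior law. The posterior-mean identity \eqref{eq:reference-tweedie} then gives
\[
\mathbb E\!\left[\frac{A-Y}{1-t}\,\middle|\,S,Y,t\right]=\frac{M_0^0(S,Y,t)-Y}{1-t}=b_0(Y,t;S).
\]
At $t=0$ we have $Y=0$ and $U=A$, so $\mathbb E[U\mid S,0,0]=\mathbb E_{\pi_0}[A]=b_0(0,0;S)$. Consequently the centered-loss identity $P\ell_b-P\ell_{b_0}=\|b-b_0\|_{\chi_0}^2$ holds exactly as in the proof of Lemma~\ref{lem:actor-statistical-error}. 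The ERM basic inequality \eqref{eq:actor-erm-basic-inequality} then yields
\[
\mathbb E\|b_{\phi_0}-b_0\|_{\chi_0}^2\le \inf_{b\in\mathcal G_3}\|b-b_0\|_{\chi_0}^2+2\,\mathbb E\sup_{b\in\mathcal G_3}|(P_n-P)\ell_b|.
\]

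\textbf{Step 2: approximation term.} Assumption~\ref{assump:actor-target-regularity} applies to $f_{0,s}$, so Lemma~\ref{lem:conditional-drift-regularity-revised} makes $b_0$ bounded and coordinatewise $\zeta$-H\"older. The construction in Lemma~\ref{lem:actor-approximation-error}, applied with $b_0$ in place of $\bar b_{k+1}$, gives $\widetilde{\mathcal O}(\mathcal S^{-2\zeta/p})$. That construction uses a box of radius $\sqrt{\log(e\mathcal S)}$ together with the design tails from Lemma~\ref{lem:actor-subGaussian-tail}, which follow from Assumptions~\ref{assump2}--\ref{assump:critic-subGaussian-tail}.

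\textbf{Step 3: statistical term and the label moment (main obstacle).} The truncation, covering bound \eqref{eq:actor-vector-covering}, symmetrization and contraction steps carry over verbatim. They give \eqref{eq:actor-truncated-process} with $U_k$ replaced by $U$, and the exterior part is controlled through Cauchy--Schwarz by $(\mathbb E\|U\|_2^2)^{1/2}n^{-2}$. Unlike $U_k$, the label $U$ has no uniform bound, because it blows up as $t\to1$. To handle this, write $U=A-\sqrt{t/(1-t)}\,G$, so that
\[
\mathbb E\|U\|_2^2\lesssim \mathbb E\|A\|_2^2+d_{\mathcal A}\,\Delta t\sum_{t_j\in\mathcal T_T}\frac{t_j}{1-t_j}\lesssim 1+d_{\mathcal A}(1+\log T).
\]
The first moment is finite by \eqref{eq:actor-endpoint-subGaussian}. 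For the grid sum, $\Delta t\sum_{j<T}(1-j/T)^{-1}=\sum_{i=1}^T i^{-1}\le 1+\log T$; this grid restriction, which excludes $t=1$, is essential. Substituting, the statistical term becomes $(1+\log T)^{1/2}\sqrt{\mathcal S\mathcal D\mathfrak L_n^{\rm act}/n}$.

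\textbf{Step 4: balance.} Choose $\mathcal S\asymp n^{p/(p+4\zeta)}$ and $\mathcal D\asymp\log\mathcal S$. Then $\sqrt{\mathcal S/n}$ and $\mathcal S^{-2\zeta/p}$ are both $\widetilde{\mathcal O}(n^{-2\zeta/(p+4\zeta)})$. Since $\mathfrak R_0^{\rm ref}=\mathbb E\|b_{\phi_0}-b_0\|_{\chi_0}^2$, this gives \eqref{eq:offline-initial-drift-rate}.
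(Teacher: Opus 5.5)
Your proposal is correct and follows essentially the same route as the paper. Both identify $b_0=\mathbb E[U\mid X]$ via the bridge posterior mean, reuse the centered-loss, truncation and entropy argument of Lemma~\ref{lem:actor-statistical-error} with the label moment bound $\mathbb E\|U\|_2^2\lesssim 1+\log T$ from the harmonic grid sum, and apply the approximation construction of Lemma~\ref{lem:actor-approximation-error} to $b_0$ before balancing $\mathcal S$ (your explicit check at $t=0$ is a small detail the paper leaves implicit).
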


\begin{proof}
Let $X=(S,Y,t)$.  Bayes' rule for the Brownian bridge gives
\[
b_0(X)=\mathbb E(U\mid X).
\]
Consequently, for every candidate drift $b$,
\begin{align*}
\mathbb E\|b(X)-U\|_2^2
={}&\mathbb E\|b(X)-b_0(X)\|_2^2
+\mathbb E\|U-b_0(X)\|_2^2.
\end{align*}
The second term is independent of $b$, reducing the problem to
conditional-mean regression.

The grid cutoff controls the second moment of the regression label:
\[
U=A-\sqrt{\frac{t}{1-t}}G,
\qquad
\Delta t\sum_{j=0}^{T-1}\frac{t_j}{1-t_j}
=\sum_{\ell=1}^{T}\frac1\ell-1
\le \log T.
\]
Assumption~\ref{assump:critic-subGaussian-tail} and
Lemma~\ref{lem:actor-subGaussian-tail} thus bound the averaged second
moments of the design and regression label by $C(1+\log T)$.
The centered-loss and truncation argument in the proof of
Lemma~\ref{lem:actor-statistical-error} requires only a finite
second moment of the response. Applied with response $U$ and sample size
$n$, it gives
\[
\mathbb E\|b_{\phi_0}-b_0\|_{\chi_0}^2
\le
\inf_{b\in\mathcal G_3}\|b-b_0\|_{\chi_0}^2
+C d_{\mathcal A}B_{\rm net}^2(1+\log T)^{1/2}
\sqrt{\frac{\mathcal S\mathcal D
\log(e+B_{{\rm par},n}B_{\rm net}
\mathcal W\mathcal D n )}
{n }}.
\]
By Assumption~\ref{assump:actor-target-regularity} and
Lemma~\ref{lem:conditional-drift-regularity-revised}, $b_0$ is
coordinatewise $\zeta$-H\"older.  Approximation on a box of radius
$O(\{\log(e\mathcal S)\}^{1/2})$, followed by the sub-Gaussian tail bound
of Lemma~\ref{lem:actor-subGaussian-tail}, gives the same approximation
term $\varepsilon_{\mathcal S}^{\rm app}$ as in the online actor proof.
The stated network choices then give
\eqref{eq:offline-initial-drift-rate}.
\end{proof}

\begin{lemma}%
\label{lem:inherited-error-recursion}
Suppose the conditions of Lemma~\ref{lem:learned-drift-error} and
Assumption~\ref{ass:uniform-coverage} hold. Let $C_0$ be a
uniform constant in \eqref{eq:one-step-design-law-error}, and set
$c=2C_0C_{\rm it}$. For $0\le k<K-1$, define
\[
u_k
:=c\left(
\varepsilon_{\rm act}^{\rm stat}
+\varepsilon_{\mathcal S}^{\rm app}
+\varepsilon_{m,T}^{\rm lab}
\right)
+2\mathbb E\|\bar b_{k+1}-b_{k+1}\|_{\chi_{k+1}}^2.
\]
Here $\bar b_{k+1}$ is the exact SF drift of the plug-in target
$\bar\pi_{k+1}$, whereas $b_{k+1}$ is the exact SF drift of the
practical policy $\pi_{k+1}$ used as the next reference.
Then
\begin{equation}
\mathfrak R_{k+1}^{\rm ref}
\le c\mathfrak R_k^{\rm ref}+u_k.
\label{eq:inherited-error-recursion}
\end{equation}
Consequently, for every finite $K$,
\begin{equation}
\frac1K\sum_{k=0}^{K-1}\mathfrak R_k^{\rm ref}
\le
\frac{\mathfrak R_0^{\rm ref}}K\sum_{k=0}^{K-1}c^k
+\frac1K\sum_{k=1}^{K-1}\sum_{j=0}^{k-1}c^{k-1-j}u_j,
\label{eq:average-inherited-general}
\end{equation}
where empty sums are zero. The design-transfer condition alone neither
controls the target-to-reference drift discrepancy nor implies
contraction.
\end{lemma}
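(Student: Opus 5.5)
The plan is to derive the recursion \eqref{eq:inherited-error-recursion} from the one-step bound \eqref{eq:one-step-design-law-error}, a change of design measure, and the decomposition of $b_{\phi_{k+1}}-b_{k+1}$ through the plug-in target drift. The averaged bound \eqref{eq:average-inherited-general} then follows by unrolling a linear inequality.

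The starting point is the definition $\mathfrak R_{k+1}^{\rm ref}=\mathbb E\|b_{\phi_{k+1}}-b_{k+1}\|_{\chi_{k+1}}^2$. Inserting $\bar b_{k+1}$ and using the squared triangle inequality gives
\[
\mathfrak R_{k+1}^{\rm ref}\le 2\,\mathbb E\|b_{\phi_{k+1}}-\bar b_{k+1}\|_{\chi_{k+1}}^2+2\,\mathbb E\|\bar b_{k+1}-b_{k+1}\|_{\chi_{k+1}}^2 .
\]
The second term is kept as it stands; it is precisely the last summand of $u_k$. The first term is still measured under $\chi_{k+1}$, while \eqref{eq:one-step-design-law-error} is stated under $\chi_k$.

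To move between the two, I will use Lemma~\ref{lem:successive-design-coverage}. Since $d\chi_{k+1,j}/d\chi_{k,j}\le C_{\rm it}$ for every grid time $t_j$, the time-averaged norms satisfy $\|g\|_{\chi_{k+1}}^2\le C_{\rm it}\|g\|_{\chi_k}^2$ pointwise in the fitted history. Taking expectations and applying \eqref{eq:one-step-design-law-error} with its uniform constant $C_0$ gives
\[
2\,\mathbb E\|b_{\phi_{k+1}}-\bar b_{k+1}\|_{\chi_{k+1}}^2\le 2C_0C_{\rm it}\bigl(\varepsilon_{\rm act}^{\rm stat}+\varepsilon_{\mathcal S}^{\rm app}+\varepsilon_{m,T}^{\rm lab}+\mathfrak R_k^{\rm ref}\bigr)=c\,\mathfrak R_k^{\rm ref}+c\bigl(\varepsilon_{\rm act}^{\rm stat}+\varepsilon_{\mathcal S}^{\rm app}+\varepsilon_{m,T}^{\rm lab}\bigr).
\]
Combining this with the kept term yields \eqref{eq:inherited-error-recursion}.

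The point requiring care, and the main obstacle, is the order of conditioning. The bound \eqref{eq:one-step-design-law-error} is proved conditionally on $(b_{\phi_k},\widehat A_k)$. The density-ratio bound from Lemma~\ref{lem:successive-design-coverage} is deterministic given the history, so it can be applied inside the conditional expectation before the tower property is used. The laws $\chi_k$ and $\chi_{k+1}$ are themselves history dependent, but both are measurable with respect to the fitted history at step $k+1$, so this ordering is legitimate. One also has to check that $C_0$ does not depend on $k$, which holds because every constant in Lemmas~\ref{lem:paired-label-error-revised}--\ref{lem:actor-approximation-error} is uniform in $k$.

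For \eqref{eq:average-inherited-general}, I will show by induction on $k$ that
\[
\mathfrak R_k^{\rm ref}\le c^k\,\mathfrak R_0^{\rm ref}+\sum_{j=0}^{k-1}c^{k-1-j}u_j ,
\]
where all quantities are nonnegative and the sum is empty when $k=0$. Averaging over $k=0,\dots,K-1$ gives the stated bound. No contraction $c<1$ is assumed, so the bound holds for every finite $K$, as the closing remark of the lemma indicates.
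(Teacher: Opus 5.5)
Your proposal is correct and follows the paper's proof step for step. The paper inserts $\bar b_{k+1}$, applies the squared triangle inequality, and moves the first term from $\chi_{k+1}$ to $\chi_k$ using the density-ratio constant $C_{\rm it}$ of Lemma~\ref{lem:successive-design-coverage}; it then applies \eqref{eq:one-step-design-law-error} with constant $C_0$, unrolls the resulting linear recursion, and averages over $k$. Your extra remarks, on applying the density-ratio bound pointwise in the fitted history before the tower property and on nonnegativity (needed to multiply the induction hypothesis by $c$), are correct details that the paper leaves implicit.
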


\begin{proof}
Insert the exact target drift between the learned actor and the next
reference drift. The squared triangle inequality and the cross-round
density-ratio bound give
\begin{align*}
\mathfrak R_{k+1}^{\rm ref}
&=\mathbb E\|b_{\phi_{k+1}}-b_{k+1}\|_{\chi_{k+1}}^2\\
&\le
2\mathbb E\|b_{\phi_{k+1}}-\bar b_{k+1}\|_{\chi_{k+1}}^2
+2\mathbb E\|\bar b_{k+1}-b_{k+1}\|_{\chi_{k+1}}^2\\
&\le
2C_{\rm it}\mathbb E
\|b_{\phi_{k+1}}-\bar b_{k+1}\|_{\chi_k}^2
+2\mathbb E\|\bar b_{k+1}-b_{k+1}\|_{\chi_{k+1}}^2.
\end{align*}
Applying \eqref{eq:one-step-design-law-error} to the first term proves
\eqref{eq:inherited-error-recursion}. Iteration yields
\[
\mathfrak R_k^{\rm ref}
\le c^k\mathfrak R_0^{\rm ref}
+\sum_{j=0}^{k-1}c^{k-1-j}u_j.
\]
Summing over $k=0,\ldots,K-1$ gives
\eqref{eq:average-inherited-general}.
\end{proof}

Using Lemmas~\ref{lem:successive-design-coverage}--
\ref{lem:inherited-error-recursion}, we now prove
Theorem~\ref{thm:conditional-sfs-sampling-error}.
\begin{proof}[Proof of Theorem~\ref{thm:conditional-sfs-sampling-error}]
For each $s$, let $Z^s$ solve the exact target SDE with drift $\bar b_{k+1,s}$ and
terminal law $\bar\pi_{k+1}(\cdot\mid s)$.  Couple it synchronously with
the learned Euler scheme on $[0,1-1/T]$, and use a clipped continuation of
the learned drift on the terminal interval.  Lemma~\ref{lem:exact-path-coverage}
bounds the contribution of this interval by
$\varepsilon_{\rm trunc}(T)$.  With
$e_j=\widetilde Z_{t_j}^s-Z_{t_j}^s$, subtract the updates and insert
$\bar b_{k+1,s}(\widetilde Z_{t_j}^s,t_j)$ to obtain
\begin{align*}
e_{j+1}=e_j
&+\Delta t[\bar b_{k+1,s}(\widetilde Z_{t_j}^s,t_j)
-\bar b_{k+1,s}(Z_{t_j}^s,t_j)]\\
&+\Delta t[b_{\phi_{k+1}}(s,\widetilde Z_{t_j}^s,t_j)
-\bar b_{k+1,s}(\widetilde Z_{t_j}^s,t_j)]\\
&+\int_{t_j}^{t_{j+1}}
[\bar b_{k+1,s}(Z_{t_j}^s,t_j)-\bar b_{k+1,s}(Z_u^s,u)]\,\mathrm du.
\end{align*}
Lemma~\ref{lem:conditional-drift-regularity-revised}, Young's inequality,
and the increment estimate imply
\begin{align*}
\mathbb E\|e_{j+1}\|_2^2
\le{}&(1+C\Delta t)\mathbb E\|e_j\|_2^2
+C\Delta t\,
\mathbb E\|b_{\phi_{k+1}}(s,\widetilde Z_{t_j}^s,t_j)
-\bar b_{k+1,s}(\widetilde Z_{t_j}^s,t_j)\|_2^2\\
&+C d_{\mathcal A}\Delta t^2.
\end{align*}
The discrete Gr\"onwall inequality on the truncated grid, integration over
$S\sim d^{\pi^\star}$, and the continuation bound prove the Wasserstein
part of \eqref{eq:W2-conditional-sfs-rate}.

For the KL bound, augment the canonical space by the fitted actor and let
$\mathbb P^s$ be the exact target path law.  Let
$\widehat{\mathbb P}^s$ be the path law of the continuous interpolation
\[
\mathrm dX_t=b_{\phi_{k+1}}(s,X_{\underline t},\underline t)\,\mathrm dt
+\mathrm dB_t,
\qquad X_0=0,\qquad
\underline t=\min\{\lfloor Tt\rfloor/T,
(T-1)/T\}.
\]
Lemma~\ref{lem:exact-path-coverage} verifies Novikov's condition, so Girsanov's theorem in
the direction $\mathbb P^s\|\widehat{\mathbb P}^s$ gives
\begin{align*}
\mathbb E_{S\sim d^{\pi^\star}}
\mathrm{KL}(\mathbb P^S\|\widehat{\mathbb P}^S)
&=\frac12\mathbb E_{S\sim d^{\pi^\star}}
\mathbb E_{\mathbb P^S}\int_0^1
\|\bar b_{k+1}(s, X_t,t)
-b_{\phi_{k+1}}(S,X_{\underline t},\underline t)\|_2^2\,\mathrm dt\\
&\lesssim\frac{d_{\mathcal A}}T+\varepsilon_{\rm trunc}(T)\\
&\quad+\Delta t\sum_{t_j\in\mathcal T_T}
\mathbb E_{\substack{S\sim d^{\pi^\star},\;
Y\sim q_{k,j}^{\rm ex}(\cdot\mid S)}}
\|b_{\phi_{k+1}}(S,Y,t_j)-\bar b_{k+1,S}(Y,t_j)\|_2^2.
\end{align*}
The last inequality inserts $\bar b_{k+1}(s, X_{\underline t},\underline t)$ and
uses Lemma~\ref{lem:conditional-drift-regularity-revised}; the final sum is
bounded by Lemma~\ref{lem:learned-drift-error}.  Data processing under the
terminal projection gives
\eqref{eq:KL-conditional-sfs-rate}.  Averaging over the actor fit completes
the path-law bound.  Finally,
$\mathcal S\asymp n^{p/(p+4\zeta)}$ and
$\mathcal D\asymp\log\mathcal S$ give
$\varepsilon_{\rm act}^{\rm stat}=\widetilde{\mathcal O}
(n^{-2\zeta/(p+4\zeta)})$ and
$\varepsilon_{\mathcal S}^{\rm app}=\widetilde{\mathcal O}
(n^{-2\zeta/(p+4\zeta)})$, where
$p=d_{\mathcal X}+d_{\mathcal A}+1$.  This proves the stated rate.
\end{proof}

\section{Proof of the Overall Convergence Rate}
\label{sec:proof-overall-revised}

We derive a one-step performance bound for the KL-regularized update,
control the sampling errors, and sum over iterations to obtain the
average-performance bound.

\begin{lemma}[KL projection identity]
\label{lem:kl-projection}
For a fixed state $s$ and any $\pi(\cdot\mid s)\ll\pi_k(\cdot\mid s)$
with finite KL divergence,
\begin{align*}
\langle\pi,\widehat A_k\rangle
-\lambda\mathrm{KL}(\pi\|\pi_k)
={}&\langle\bar\pi_{k+1},\widehat A_k\rangle
-\lambda\mathrm{KL}(\bar\pi_{k+1}\|\pi_k)
-\lambda\mathrm{KL}(\pi\|\bar\pi_{k+1}).
\end{align*}
\end{lemma}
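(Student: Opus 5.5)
The identity is the standard Gibbs variational (Donsker--Varadhan) equality, and I would verify it by expanding the log-likelihood ratio. Fix $s$ and suppress it from the notation. Write $Z:=\mathbb E_{A\sim\pi_k}\exp\{\widehat A_k(A)/\lambda\}$ for the normalizer $C$ in \eqref{eq:empirical-target-policy}. Then
\[
\log\frac{\mathrm d\bar\pi_{k+1}}{\mathrm d\pi_k}(a)=\frac{\widehat A_k(a)}{\lambda}-\log Z .
\]
The first step is to confirm that everything is finite and that the densities can be chained. Because $\widehat A_k$ is built from the clipped network $\widehat Q_k$ and the average $\widehat V_k$, it is bounded by $2B_{\rm net}$. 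Hence $Z\in(0,\infty)$, and $\bar\pi_{k+1}$ and $\pi_k$ are mutually absolutely continuous with a bounded log-ratio. It follows that $\pi\ll\pi_k$ implies $\pi\ll\bar\pi_{k+1}$, and that all pairings $\langle\cdot,\widehat A_k\rangle$ are finite.

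Next I would use the chain rule for Radon--Nikodym derivatives, $\log\frac{\mathrm d\pi}{\mathrm d\pi_k}=\log\frac{\mathrm d\pi}{\mathrm d\bar\pi_{k+1}}+\log\frac{\mathrm d\bar\pi_{k+1}}{\mathrm d\pi_k}$, and integrate it against $\pi$. Since the second summand is bounded, the finiteness of $\mathrm{KL}(\pi\|\pi_k)$ transfers to $\mathrm{KL}(\pi\|\bar\pi_{k+1})$. This gives
\[
\mathrm{KL}(\pi\|\pi_k)=\mathrm{KL}(\pi\|\bar\pi_{k+1})+\lambda^{-1}\langle\pi,\widehat A_k\rangle-\log Z .
\]
Multiplying by $\lambda$ and rearranging yields
\[
\langle\pi,\widehat A_k\rangle-\lambda\mathrm{KL}(\pi\|\pi_k)=\lambda\log Z-\lambda\mathrm{KL}(\pi\|\bar\pi_{k+1}).
\]

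Finally, I would apply the same identity with $\pi=\bar\pi_{k+1}$. The last KL term then vanishes, so
\[
\lambda\log Z=\langle\bar\pi_{k+1},\widehat A_k\rangle-\lambda\mathrm{KL}(\bar\pi_{k+1}\|\pi_k).
\]
Substituting this into the previous display gives the claimed equality.

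The only delicate point is the integrability bookkeeping: splitting $\int\log\frac{\mathrm d\pi}{\mathrm d\pi_k}\,\mathrm d\pi$ into two integrals is legitimate only because one of the two pieces is bounded, so no $\infty-\infty$ can occur. The boundedness of $\widehat A_k$ supplies exactly this, and the rest is routine algebra.
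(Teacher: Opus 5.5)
Your proposal is correct and follows the paper's route: the paper's proof writes $\log(\mathrm d\bar\pi_{k+1}/\mathrm d\pi_k)=\widehat A_k/\lambda-\log\widehat Z_k$ and integrates it under $\pi$ and under $\bar\pi_{k+1}$, which is exactly your argument. You also supply the integrability bookkeeping that the paper leaves implicit: $|\widehat A_k|\le 2B_{\rm net}$ gives a bounded log-ratio, mutual absolute continuity, and no $\infty-\infty$ when splitting the KL integral.
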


\begin{proof}
The target definition gives
\[
\log\frac{\mathrm d\bar\pi_{k+1}}{\mathrm d\pi_k}
=\frac{\widehat A_k}{\lambda}-\log\widehat Z_k.
\]
Integrating under $\pi$ and under $\bar\pi_{k+1}$ gives the identity.
\end{proof}
 \subsection{Proof of Theorem~\ref{thm:overall-rate}}
 \label{app:proof_overall_rate}
\begin{proof}
For brevity, define
\[
\mathcal E_k
:=
\max_{\substack{
\nu\in\{d^{\pi^\star},d^{\bar\pi_{k+1}}\}\\
\pi\in\{\pi^\star,\pi_k,\bar\pi_{k+1}\}}}
\|\widehat A_k-A^{\pi_k}\|_{L^2(\nu\otimes\pi)},
\]
and
\[
\mathcal A_k
:=
\mathbb E_{S\sim d^{\pi^\star}}
\left[
W_2^2\!\left(
\pi_{k+1}(\cdot\mid S),
\bar\pi_{k+1}(\cdot\mid S)
\right)
+
\mathrm{KL}\!\left(
\bar\pi_{k+1}(\cdot\mid S)
\middle\|
\pi_{k+1}(\cdot\mid S)
\right)
\right].
\]
Fix the fitted critic and let $\Delta_k=\widehat A_k-A^{\pi_k}$.
Write $c_k(s)=\langle\pi_k,\widehat A_k\rangle
=\langle\pi_k,\Delta_k\rangle$.
Subtracting $c_k(s)$ from the advantage estimate leaves its exponential
tilt unchanged. Optimality of $\bar\pi_{k+1}$ therefore gives
\[
g_k(s):=\langle\bar\pi_{k+1},\widehat A_k\rangle-c_k(s)\ge0.
\]
The performance-difference lemma and Lemma~\ref{lem:kl-projection} imply
\begin{align*}
(1-\gamma)\{J(\pi^\star)-J(\pi_k)\}
&\le \mathbb E_{d^{\pi^\star}}g_k
+\lambda\{D(\pi_k)-D(\bar\pi_{k+1})\}+\mathbb E_{d^{\pi^\star}}c_k
-\mathbb E_{d^{\pi^\star}\otimes\pi^\star}\Delta_k.
\end{align*}
Since $g_k\ge0$, the occupancy-ratio bound yields
$\mathbb E_{d^{\pi^\star}}g_k
\le C\mathbb E_{d^{\bar\pi_{k+1}}}g_k$.
Also,
\[
\mathbb E_{d^{\bar\pi_{k+1}}}g_k
=(1-\gamma)\{J(\bar\pi_{k+1})-J(\pi_k)\}
+\mathbb E_{d^{\bar\pi_{k+1}}\otimes\bar\pi_{k+1}}\Delta_k
-\mathbb E_{d^{\bar\pi_{k+1}}\otimes\pi_k}\Delta_k.
\]
Substitute this identity and
$\mathbb E_{d^{\pi^\star}}c_k
=\mathbb E_{d^{\pi^\star}\otimes\pi_k}\Delta_k$.
The Cauchy--Schwarz inequality bounds each of the four error integrals by a norm
appearing in \eqref{eq:decpo}:
\[
\begin{aligned}
J(\pi^\star)-J(\pi_k)
\lesssim\;&
J(\bar\pi_{k+1})-J(\pi_k)
+
\frac{\lambda}{1-\gamma}
\bigl\{
D(\pi_k)-D(\bar\pi_{k+1})
\bigr\}
 +
\frac{1}{1-\gamma}\mathcal E_k .
\end{aligned}
\]
Insert the practical policy $\pi_{k+1}$ into the first two differences:
\[
\begin{aligned}
J(\bar\pi_{k+1})-J(\pi_k)
&=
J(\pi_{k+1})-J(\pi_k)
+
J(\bar\pi_{k+1})-J(\pi_{k+1}),
\\
D(\pi_k)-D(\bar\pi_{k+1})
&=
D(\pi_k)-D(\pi_{k+1})
+
D(\pi_{k+1})-D(\bar\pi_{k+1}).
\end{aligned}
\]
 
By the performance-difference lemma,
\[
J(\bar\pi_{k+1})-J(\pi_{k+1})
=
\frac{1}{1-\gamma}
\mathbb E_{S\sim d^{\bar\pi_{k+1}}}
\left[
\mathbb E_{A\sim\bar\pi_{k+1}(\cdot\mid S)}A^{\pi_{k+1}}(S,A)
-
\mathbb E_{A\sim{\pi_{k+1}}(\cdot\mid S)}A^{\pi_{k+1}}(S,A)
\right],
\]
where the second expectation is zero by the definition of the
advantage function.

Since $Q^{\pi_{k+1}}\in H^\zeta(\mathbb R^d,B_H)$ with $\zeta\ge1$ and
$V^{\pi_{k+1}}(S)$ does not depend on the action,
$A^{\pi_{k+1}}(S,\cdot)$ is uniformly Lipschitz. Hence,
\[
\left|
\mathbb E_{\bar\pi_{k+1}(\cdot\mid S)}A^{\pi_{k+1}}(S,A)
-
\mathbb E_{{\pi_{k+1}}(\cdot\mid S)}A^{\pi_{k+1}}(S,A)
\right|
\lesssim
W_2\!\left(
\bar\pi_{k+1}(\cdot\mid S),{\pi_{k+1}}(\cdot\mid S)
\right).
\]
Assumption~\ref{ass:uniform-coverage} and the Cauchy--Schwarz inequality give
\[
\left|
J(\bar\pi_{k+1})-J(\pi_{k+1})
\right|
\lesssim
\frac{1}{1-\gamma}
\mathcal A_k^{1/2},
\]
while, for $ D(\pi_{k+1})-D(\bar\pi_{k+1})$,
Assumption~\ref{ass:uniform-coverage} yields
\begin{align*}
 D(\pi_{k+1})-D(\bar\pi_{k+1})
&\le C 
\mathbb E_{d^{\pi^\star}}\int \bar\pi_{k+1}\max\{\log(\bar\pi_{k+1}/\pi_{k+1}),0\}\\&=C \mathbb E_{d^{\pi^\star}}
\left(\mathrm{KL}(\bar\pi_{k+1}\|\pi_{k+1})+\int \bar\pi_{k+1}\max\{-\log(\bar\pi_{k+1}/\pi_{k+1}),0\}\right).
\end{align*}
The inequality $\log x\le x-1$ gives
$\int \bar\pi_{k+1}\max\{-\log(\bar\pi_{k+1}/\pi_{k+1}),0\}\le\mathrm{TV}(\bar\pi_{k+1},\pi_{k+1})$, where $\mathrm{TV}$ denotes total
variation distance. Pinsker's inequality gives
$\mathrm{TV}(\bar\pi_{k+1},\pi_{k+1})\le\sqrt{\mathrm{KL}(\bar\pi_{k+1}\|\pi_{k+1})/2}$.
Applying Jensen's inequality to the $d^{\pi^\star}$-expectation gives
\[
D(\pi_{k+1})-D(\bar\pi_{k+1})
\lesssim
\mathcal A_k+\mathcal A_k^{1/2}.
\]
Under the uniform regularity assumptions,
$\mathcal A_k\lesssim 1$, so that
$\mathcal A_k\lesssim \mathcal A_k^{1/2}$.
Therefore
\begin{equation}
\begin{aligned}
J(\pi^\star)-J(\pi_k)
\lesssim\;&
J(\pi_{k+1})-J(\pi_k)
+
\frac{\lambda}{1-\gamma}
\bigl\{
D(\pi_k)-D(\pi_{k+1})
\bigr\}
\\
&+
\frac{1}{1-\gamma}\mathcal E_k
+
\frac{1}{1-\gamma}\mathcal A_k^{1/2}.
\end{aligned}
\label{eq:overall-one-step}
\end{equation}
Summing \eqref{eq:overall-one-step} over
$k=0,\ldots,K-1$ yields
\[
\sum_{k=0}^{K-1}
\bigl\{
J(\pi_{k+1})-J(\pi_k)
\bigr\}
=
J(\pi_K)-J(\pi_0)
\le
\frac{R_{\max}}{1-\gamma},
\]
and
\[
\sum_{k=0}^{K-1}
\bigl\{
D(\pi_k)-D(\pi_{k+1})
\bigr\}
=
D(\pi_0)-D(\pi_K)
\le D_0.
\]
Hence
\begin{align}
&\mathbb E\left[
\frac1K
\sum_{k=0}^{K-1}
\bigl\{
J(\pi^\star)-J(\pi_k)
\bigr\}
\right]
 \lesssim
\frac{R_{\max}+\lambda D_0}
     {K(1-\gamma)}
+
\frac{1}{1-\gamma}
\frac1K
\sum_{k=0}^{K-1}
\mathbb E\mathcal E_k
+
\frac{1}{1-\gamma}
\frac1K
\sum_{k=0}^{K-1}
\mathbb E\mathcal A_k^{1/2}.
\label{eq:overall-summed}
\end{align}
By Theorem~\ref{thm:advantage-error} and Cauchy--Schwarz,
uniformly over $k$,
\[
\mathbb E\mathcal E_k
\le
\bigl(
\mathbb E\mathcal E_k^2
\bigr)^{1/2}
\lesssim
\frac{1}{1-\gamma}
\widetilde{\mathcal O}
\left(
n^{-\frac{\zeta}{2d+4\zeta}}
\right).
\]
Thus
\begin{equation}
\frac{1}{1-\gamma}
\frac1K
\sum_{k=0}^{K-1}
\mathbb E\mathcal E_k
\lesssim
\frac{1}{(1-\gamma)^2}
\widetilde{\mathcal O}
\left(
n^{-\frac{\zeta}{2d+4\zeta}}
\right).
\label{eq:overall-critic}
\end{equation}
For the actor term, Jensen's inequality gives
\[
\frac1K
\sum_{k=0}^{K-1}
\mathbb E\mathcal A_k^{1/2}
\le
\left(
\frac1K
\sum_{k=0}^{K-1}
\mathbb E\mathcal A_k
\right)^{1/2}.
\]
By Theorems~\ref{thm:advantage-error}--\ref{thm:conditional-sfs-sampling-error}, 
\[
\begin{aligned}
\mathbb E\left[
\frac1K
\sum_{k=0}^{K-1}
\bigl\{
J(\pi^\star)-J(\pi_k)
\bigr\}
\right]
&\lesssim
\frac{R_{\max}+\lambda D_0}
     {K(1-\gamma)}
+
\frac{1}{(1-\gamma)^2}
\widetilde{\mathcal O}
\left(
n^{-\frac{\zeta}{2d+4\zeta}}
\right)
\\
&~~~+
\frac{1}
{(1-\gamma)\sqrt{1-\varrho}}
\widetilde{\mathcal O}
\left(
n^{-\frac{\zeta}{d+1+4\zeta}}
+
\frac{
n ^{-\frac{\zeta}{d+1+4\zeta}}
}{
\sqrt K
}
\right).
\end{aligned}
\]
Finally, choosing
\[
\varrho\le\varrho_0<1,
\qquad
K\asymp
n^{\frac{\zeta}{2d+4\zeta}},
\]
ensures that the policy-iteration term is of order 
$
\widetilde{\mathcal O}
\left(
n^{-\frac{\zeta}{2d+4\zeta}}
\right).
$
Moreover, since
$
d+1\le 2d
~(d\ge1),
$
the actor term decays no more slowly than the critic term, while the inherited
initialization term is of smaller order. Therefore,
\[
\mathbb E\left[
\frac1K
\sum_{k=0}^{K-1}
\bigl\{
J(\pi^\star)-J(\pi_k)
\bigr\}
\right]
\lesssim
\frac{1}{(1-\gamma)^2}
\widetilde{\mathcal O}
\left(
n^{-\frac{\zeta}{2d+4\zeta}}
\right).
\]
\end{proof}

\section{Auxiliary Results}
\label{app:auxiliary_lemmas}
We collect the covering, concentration, approximation, and
change-of-measure results used above. The neural network results are
stated on compact domains. Their use on larger boxes requires rescaling;
unbounded designs additionally require tail control.

\begin{lemma}[Lemma 20 of \cite{feng2024deep}]
\label{lemma:covering number upperbound}
Let $\mathcal F$ be a class of ReLU DNNs on $[0,1]^d$ with width
$\mathcal W$, depth $\mathcal D$, size $\mathcal S$, and parameters bounded
in absolute value by $B_{\rm par}$. For sufficiently small $\delta>0$,
\begin{equation*}
    \log \mathcal{N}(\mathcal{F}, \delta, \|\cdot\|_{\infty})
    \lesssim \mathcal S\mathcal D
    \log\!\left(e+B_{\rm par}\mathcal W\mathcal D/\delta\right).
\end{equation*}
\end{lemma}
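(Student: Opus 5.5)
The plan is the standard parameter-discretization argument: reduce the sup-norm cover of $\mathcal F$ to a cover of its parameter vectors, using Lipschitz continuity of the map $\theta\mapsto f_\theta$ on $[0,1]^d$. The output clipping at level $B_{\rm net}$, if present, is a $1$-Lipschitz post-composition. It therefore cannot increase sup-norm distances, and it suffices to cover the unclipped realizations. Set $P:=\sum_{i=0}^{\mathcal D}d_{i+1}(d_i+1)\lesssim \mathcal W^2\mathcal D$, the total number of weights and biases; treat $d\le\mathcal W$, or absorb $d$ into constants.

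\textbf{Step 1 (sparsity patterns).} Every $\theta$ has at most $\mathcal S$ nonzero coordinates among $P$ slots. The number of admissible support sets is therefore at most
\[
\sum_{j\le\mathcal S}\binom{P}{j}\le (eP)^{\mathcal S},
\]
which contributes $\mathcal S\log(e\mathcal W^2\mathcal D)$ to the log-covering number. From here on I fix a support pattern.

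\textbf{Step 2 (Lipschitz dependence on parameters).} This is the main technical step. For $x\in[0,1]^d$, let $h_\ell(\theta)$ be the output of layer $\ell$. Since $\sigma$ is $1$-Lipschitz and $\sigma(0)=0$, induction gives
\[
\|h_\ell\|_\infty\le(\mathcal W B_{\rm par}+1)^{\ell}.
\]
For $\theta,\theta'$ with the same pattern, I would telescope the difference layer by layer. This gives
\[
\|h_{\ell+1}(\theta)-h_{\ell+1}(\theta')\|_\infty\le \mathcal W B_{\rm par}\,\|h_\ell(\theta)-h_\ell(\theta')\|_\infty+(\mathcal W\|h_\ell(\theta')\|_\infty+1)\|\theta-\theta'\|_\infty .
\]
Unrolling over the $\mathcal D+1$ affine maps yields
\[
\|f_\theta-f_{\theta'}\|_{\infty,[0,1]^d}\le \Lambda\|\theta-\theta'\|_\infty,\qquad \Lambda\lesssim(\mathcal D+1)(\mathcal W B_{\rm par}+1)^{\mathcal D+1}.
\]
The obstacle here is to keep the constant single-exponential in $\mathcal D$. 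After taking the logarithm it must give only $\mathcal D\log(\mathcal W B_{\rm par})+\log\mathcal D$, with no extra powers.

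\textbf{Step 3 (grid and count).} Within a fixed pattern, the free parameters lie in $[-B_{\rm par},B_{\rm par}]^{\mathcal S}$. I cover this box by a grid of mesh $\delta/\Lambda$, which gives an $\ell_\infty$-net of cardinality at most $(2B_{\rm par}\Lambda/\delta+1)^{\mathcal S}$. By Step 2 the grid points induce a $\delta$-cover of the pattern's realizations in $\|\cdot\|_\infty$. Combining with Step 1,
\[
\log\mathcal N(\mathcal F,\delta,\|\cdot\|_\infty)\le \mathcal S\log(e\mathcal W^2\mathcal D)+\mathcal S\log\!\Bigl(1+\frac{2B_{\rm par}(\mathcal D+1)(\mathcal W B_{\rm par}+1)^{\mathcal D+1}}{\delta}\Bigr).
\]
For small $\delta$ the right-hand side is $\lesssim \mathcal S\mathcal D\log(e+B_{\rm par}\mathcal W\mathcal D/\delta)$. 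This uses $B_{\rm par}\ge1$ and absorbs both $(\mathcal D+1)\log(\mathcal W B_{\rm par})$ and $\log(1/\delta)$ into $\mathcal D\log(e+B_{\rm par}\mathcal W\mathcal D/\delta)$. Finally, I would pass from an external net to an internal one by the usual halving of $\delta$, which only changes constants.
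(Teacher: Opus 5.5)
The paper gives no proof of this lemma; it imports it from \citet{feng2024deep}. Your parameter-discretization argument (count sparsity patterns, prove a single-exponential Lipschitz bound for $\theta\mapsto f_\theta$ on $[0,1]^d$, then grid the parameter box) is the standard route to bounds of this form, and it is correct. One small slip does not affect the conclusion: the forward bound $\|h_\ell\|_\infty\le(\mathcal W B_{\rm par}+1)^\ell$ fails at the first layer, where $\|h_1\|_\infty$ can equal $(d+1)B_{\rm par}$ and hence exceed $\mathcal W B_{\rm par}+1$ when $d=\mathcal W$ and $B_{\rm par}>1$. Using $((\mathcal W+1)B_{\rm par})^\ell$ instead fixes this and changes only constants inside the logarithm.
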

 
\begin{lemma}[Lemma 5 of \cite{antos2008learning}]
\label{lem:5 of antos}
Let $(Z_t)$ be a strictly stationary $\beta$-mixing sequence with mixing
coefficients $(\beta_m)_{m\ge1}$ and let $n=2a_n\xi_n$, where
$a_n,\xi_n$ are positive integers. Let
$\{\tilde Z_i\}_{i\in H}$ denote its block-independent counterpart,
with $H$ indexing the odd blocks in the independent-block construction.
If $\mathcal F_{\overline M}$ is a
measurable function class whose elements are bounded in absolute value by
$\overline M$, then
\begin{align*}
&\mathbb{P}\Bigg(
\sup_{f \in \mathcal{F}_{\overline M}} \Bigg| \frac{1}{n} \sum_{t=1}^n f(Z_t) - \mathbb{E}[f(Z_1)] \Bigg| > \varepsilon
\Bigg) \notag \\
&\qquad \leq 16 \, \mathbb{E}\Bigl[ \mathcal{N}\Bigl( \varepsilon/8, \mathcal{F}_{\overline{M}}, d_{\{\tilde{Z}_i\}_{i \in H},1} \Bigr) \Bigr] 
\exp\left( -\frac{\xi_n  \varepsilon^2}{128 \overline{M}^2} \right) + 2 \xi_n \beta_{a_n}\\
&\qquad \leq 16   \mathcal{N}\Bigl( \varepsilon/8, \mathcal{F}_{\overline{M}}, \|\cdot\|_{\infty} \Bigr)  
\exp\left( -\frac{\xi_n  \varepsilon^2}{128 \overline{M}^2} \right) + 2 \xi_n \beta_{a_n},
\end{align*}
where the empirical semimetric is
\[
d_{\{x_1,\dots,x_n\},1}(f,g) := \frac{1}{n} \sum_{i=1}^n |f(x_i) - g(x_i)|.
\]
\end{lemma}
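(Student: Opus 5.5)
The statement just above is Lemma~\ref{lem:5 of antos}, the independent-block tail inequality of \citet{antos2008learning}. I would prove it by the classical blocking argument. Split $Z_1,\dots,Z_n$ into $2\xi_n$ consecutive blocks of length $a_n$, alternating between odd blocks ($H$) and even blocks ($H'$). Write
\[
\frac1n\sum_{t=1}^n f(Z_t)-\mathbb E f(Z_1)
=\frac12\Bigl(\frac{1}{n/2}\sum_{t\in H}\{f(Z_t)-\mathbb Ef\}\Bigr)+\frac12\Bigl(\frac{1}{n/2}\sum_{t\in H'}\{f(Z_t)-\mathbb Ef\}\Bigr).
\]
A union bound then reduces the claim to bounding, for each parity class, the probability that the supremum of the half-sample deviation exceeds $\varepsilon$. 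This doubling accounts for the leading factor and for the factor $2$ in front of $\xi_n\beta_{a_n}$.

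\textbf{Decoupling.} Consider the odd blocks. Let $\{\tilde Z_i\}_{i\in H}$ consist of $\xi_n$ mutually independent blocks, each having the same law as the corresponding original block; this is possible by stationarity. The Eberlein--Yu lemma controls the difference between the joint laws of the original and decoupled odd blocks: for every bounded measurable event $E$ depending on the odd blocks,
\[
|\mathbb P(E)-\tilde{\mathbb P}(E)|\le(\xi_n-1)\beta_{a_n},
\]
because consecutive odd blocks are separated by a gap of $a_n$. I would apply this to the event $\{\sup_f|\cdots|>\varepsilon\}$. The result is the additive term $\xi_n\beta_{a_n}$ per parity, hence $2\xi_n\beta_{a_n}$ in total.

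\textbf{Independent-block concentration.} The $\xi_n$ decoupled blocks are independent. For each $f$, set $F(\text{block})=\frac1{a_n}\sum_{t\in\text{block}}f(\tilde Z_t)$, which is bounded by $\overline M$. The half-sample average is then an i.i.d.\ mean of $\xi_n$ such variables, so the standard Pollard-type uniform deviation bound applies. It goes through symmetrization with a ghost sample (valid once $\xi_n\varepsilon^2\gtrsim\overline M^2$; otherwise the bound is trivial because $16\exp(\cdot)>1$), then Rademacher signs, then an $\varepsilon/8$ cover in the empirical $L^1$ semimetric on the block points, and finally Hoeffding's inequality for each cover element. This gives
\[
8\,\mathbb E\bigl[\mathcal N(\varepsilon/8,\mathcal F,d_{\{\tilde Z_i\}_{i\in H},1})\bigr]\exp\bigl(-\xi_n\varepsilon^2/(128\overline M^2)\bigr)
\]
per parity. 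The block average is dominated in $L^1$ by pointwise averages, so a cover for $f$ in $d_{\{\tilde Z_i\},1}$ induces one for the block functions $F$. Adding the two parities yields the factor $16$.

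The second displayed inequality follows immediately: any $\|\cdot\|_\infty$ cover of radius $\varepsilon/8$ is also a $d_{\cdot,1}$ cover, so the random covering number is dominated by the deterministic sup-norm one. I expect the main obstacle to be bookkeeping the constants. One has to track the $\tfrac12$ splitting, the ghost-sample step (which halves $\varepsilon$ twice), and the Hoeffding range $2\overline M$, and confirm that together they produce exactly $128$ and $16$. In practice I would follow \citet{antos2008learning} and Yu (1994) for these constants, since the paper's later uses of the lemma only require them up to absolute factors.
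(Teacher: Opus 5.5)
Your proposal is correct and is essentially the standard argument of \citet{antos2008learning}, which the paper imports by citation without reproving: Yu's independent-block decoupling for the odd and even blocks separately, costing $(\xi_n-1)\beta_{a_n+1}\le\xi_n\beta_{a_n}$ each. This is followed by the ghost-sample, Rademacher, $\varepsilon/8$-cover and Hoeffding bound for the $\xi_n$ i.i.d.\ block averages, with the $L^1$ domination of block averages by pointwise averages transferring the cover from $\mathcal F_{\overline M}$. The constants come out exactly as you expect ($2\cdot2\cdot2=8$ per parity, and Hoeffding range $2\overline M$ gives $128\overline M^2$); the only step to state explicitly is that, by stationarity, the even-block covering number has the same expectation as the odd-block one, which is why both terms can be written with $\{\tilde Z_i\}_{i\in H}$.
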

\begin{lemma}[Theorem 5 of \cite{feng2024deep}]\label{lem:th5-of-feng}
Suppose that $f \in \mathcal{H}^{\varsigma}([0,1]^d,B)$. For any
$\epsilon \in (0,1)$, there exists a ReLU DNN $\psi$ with
depth
$
D \leq O\!\left(\log(1/\epsilon)\right),
$
size
$
S \leq O\!\left(\epsilon^{-d/\varsigma}\log(1/\epsilon)\right),
$
and parameters bounded in absolute value by
$
O\!\left(B\epsilon^{-d/\varsigma}\right),
$
such that
\[
\lVert f-\psi\rVert_{\infty} \leq B\epsilon.
\]
\end{lemma}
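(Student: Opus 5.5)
We prove this by the classical Yarotsky-type construction: partition $[0,1]^d$ into a uniform grid, replace $f$ locally by its Taylor polynomial, glue the pieces with a ReLU-representable partition of unity, and emulate all products by deep ReLU subnetworks. Throughout, write $\varsigma=r+\alpha$ with $r\in\mathbb N_0$ and $\alpha\in(0,1]$, and choose a grid resolution $N\asymp\epsilon^{-1/\varsigma}$. The target is a total error of order $B N^{-\varsigma}\asymp B\epsilon$. Constants depending on $d,\varsigma$ are absorbed in $O(\cdot)$; the final accuracy $B\epsilon$ is reached by shrinking $\epsilon$ by a constant factor.

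\textbf{Step 1 (partition of unity).} For each multi-index $m\in\{0,\dots,N\}^d$, define the hat function
\[
\phi_m(x)=\prod_{i=1}^d\psi(Nx_i-m_i),\qquad \psi(u)=\sigma(u+1)-2\sigma(u)+\sigma(u-1).
\]
Each univariate factor is exactly a small ReLU network, the family satisfies $\sum_m\phi_m\equiv1$ on $[0,1]^d$, and each $\phi_m$ is supported on a cube of side $2/N$. Consequently at most $2^d$ of the $\phi_m$ are nonzero at any point.

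\textbf{Step 2 (local Taylor approximation).} Let $P_m$ be the degree-$r$ Taylor polynomial of $f$ at the grid point $m/N$. Since $f\in\mathcal H^\varsigma([0,1]^d,B)$, the Taylor remainder on the support of $\phi_m$ satisfies $|f-P_m|\lesssim B N^{-\varsigma}$. Combining this with the properties of Step 1 gives
\[
\Bigl\|f-\sum_m\phi_mP_m\Bigr\|_\infty\lesssim 2^dBN^{-\varsigma}.
\]
Each $P_m$ is a linear combination of monomials $(x-m/N)^\beta$ with $|\beta|\le r$, and its coefficients $D^\beta f(m/N)/\beta!$ are bounded by $B$.

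\textbf{Step 3 (emulating products).} It remains to realize the terms $\phi_m(x)(x-m/N)^\beta$, each a product of at most $d+r$ bounded factors. We use Yarotsky's sawtooth approximation of $u\mapsto u^2$ on $[0,1]$, which reaches accuracy $\delta$ with depth and size $O(\log(1/\delta))$. Polarization, $uv=\tfrac12\{(u+v)^2-u^2-v^2\}$, turns this into a multiplication gadget, and a binary tree of $O(d+r)$ gadgets evaluates each product term to accuracy $\delta$.

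\textbf{Step 4 (error and size bookkeeping).} Take $\delta\asymp\epsilon$. Since only $2^d$ terms are active at any point, the product errors add up to at most $O(2^d\binom{d+r}{r}B\delta)$. Together with Step 2 this gives $\|f-\psi\|_\infty\le B\epsilon$ after adjusting constants.
\begin{itemize}
\item Size: there are $(N+1)^d$ cells, each contributing $O(\log(1/\epsilon))$ weights, so the total size is $O(\epsilon^{-d/\varsigma}\log(1/\epsilon))$.
\item Depth: all subnetworks run in parallel, so the depth is $O(\log(1/\epsilon))$.
\item Parameter magnitudes: the scalings $N$ inside $\psi$, the Taylor coefficients (bounded by $B$), and the outer summation weights are each $O(BN^{d})$ at worst. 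This lies within the claimed $O(B\epsilon^{-d/\varsigma})$.
\end{itemize}

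\textbf{Main obstacle.} The step I expect to require the most care is keeping the multiplication gadgets well defined and accurate. The sawtooth approximation of $u^2$ is only valid on a bounded interval, so the inputs to each gadget must stay in a fixed range. To guarantee this, I would first clip intermediate quantities to $[-1,1]$ using the ReLU representation $\operatorname{clip}(u)=-1+\sigma(u+1)-\sigma(u-1)$ before every multiplication. This prevents the errors of earlier products in the tree from pushing later inputs out of range, so the errors propagate only additively through the tree rather than compounding.
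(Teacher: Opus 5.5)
The paper does not prove this lemma; it imports it as Theorem~5 of \cite{feng2024deep} and uses it as a black box. The right comparison is therefore with the standard construction behind such results. That is exactly what you give: a grid partition of unity, local Taylor polynomials, and Yarotsky-style sawtooth multiplication. Your outline is correct and gives the stated depth, size, and error orders.

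One step needs an explicit justification. In Step~4 you use that at most $2^d$ of the \emph{approximate} product terms are nonzero at each $x$. Step~1 only gives this for the exact $\phi_m$, and a generic product approximation would leave small nonzero errors from all $(N+1)^d$ cells.

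It does hold for your gadget. The sawtooth interpolant of $u\mapsto u^2$ vanishes at $0$, so its symmetrized, rescaled version $\widetilde s$ on $[-2,2]$ also satisfies $\widetilde s(0)=0$. Hence the polarized multiplier $\frac12\{\widetilde s(u+v)-\widetilde s(u)-\widetilde s(v)\}$ returns exactly $0$ whenever $u=0$ or $v=0$. Clipping preserves $0$, and each hat factor $\psi(Nx_i-m_i)$ is computed exactly. So the whole product tree outputs exactly $0$ off $\operatorname{supp}\phi_m$, which justifies your $2^d$ count.

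If you prefer not to rely on this, sum the errors of all $(N+1)^d$ cells and take $\delta\asymp\epsilon N^{-d}$. This still works, since it only multiplies $\log(1/\delta)$ by the constant $1+d/\varsigma$.

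Two smaller remarks. First, your parameters are actually bounded by $O(\max\{N,B,1\})$. This is $O(B\epsilon^{-d/\varsigma})$ only when $B$ is bounded below, which the statement's $O(\cdot)$ implicitly assumes. Second, rename your hat function, since $\psi$ already denotes the approximating network in the statement.
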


\begin{theorem}[Girsanov's theorem]
\label{thm:girsanov}
Let $(B_t)_{0\le t\le T}$ be a $d_{\mathcal A}$-dimensional Brownian
motion on a filtered probability space
$(\Omega,\mathcal F,(\mathcal F_t)_{0\le t\le T},\mathbb P)$.
Suppose the progressively measurable process $u_t\in\mathbb R^{d_{\mathcal A}}$
satisfies Novikov's condition
\begin{align*}
    \mathbb E_{\mathbb P}
    \left[
        \exp\left(
            \frac12\int_0^T \|u_t\|_2^2\,\mathrm dt
        \right)
    \right]
    <\infty.
\end{align*}
Then
\begin{align*}
    M_t
    :=
    \exp\left(
        \int_0^t u_r^\top \mathrm dB_r
        -
        \frac12\int_0^t \|u_r\|_2^2\,\mathrm dr
    \right),\qquad 0\le t\le T,
\end{align*}
is a martingale with $\mathbb E_{\mathbb P}M_t=1$.
Under the measure $\mathbb Q$ defined by
$\mathrm d\mathbb Q/\mathrm d\mathbb P=M_T$, the process
\begin{align*}
    \widetilde B_t
    :=
    B_t-\int_0^t u_r\,\mathrm dr,
    \qquad 0\le t\le T,
\end{align*}
is a $d_{\mathcal{A}}$-dimensional Brownian motion under $\mathbb Q$.
\end{theorem}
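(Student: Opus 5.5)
The plan is the classical three-step argument: identify $M$ as a stochastic exponential, upgrade it from a local to a true martingale using Novikov's condition, and identify the law of $\widetilde B$ under $\mathbb Q$ through Lévy's characterization. Since this is a standard auxiliary result, I would give a self-contained sketch and point to textbook accounts for the exponential-martingale estimates.

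\emph{Step 1: $M$ is a local martingale.} Set $L_t:=\int_0^t u_r^\top\mathrm dB_r$. This is a continuous local martingale with quadratic variation $\langle L\rangle_t=\int_0^t\|u_r\|_2^2\,\mathrm dr$, which is finite almost surely under Novikov's condition. Apply Itô's formula to $M_t=\exp(L_t-\tfrac12\langle L\rangle_t)$. The second-order Itô term cancels the drift from $-\tfrac12\langle L\rangle$, so
\[
\mathrm dM_t=M_t\,u_t^\top\mathrm dB_t .
\]
Hence $M$ is a positive continuous local martingale. By Fatou's lemma it is a supermartingale, so $\mathbb E_{\mathbb P}M_t\le1$.

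\emph{Step 2: $M$ is a true martingale.} This is the main obstacle. It suffices to show $\mathbb E_{\mathbb P}M_T=1$, because a supermartingale with constant expectation is a martingale.

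I would first prove Kazamaki's criterion from Novikov's condition. By Cauchy--Schwarz,
\[
\mathbb E\exp\bigl(\tfrac12L_T\bigr)\le\bigl(\mathbb E M_T\bigr)^{1/2}\bigl(\mathbb E e^{\frac12\langle L\rangle_T}\bigr)^{1/2}<\infty .
\]
The same bound at bounded stopping times shows that $\exp(\tfrac12 L)$ is uniformly integrable.

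Next, for $a\in(0,1)$, consider $M^{(a)}=\exp(aL-\tfrac{a^2}{2}\langle L\rangle)$. Writing
\[
M^{(a)}=\bigl(\exp(L-\tfrac12\langle L\rangle)\bigr)^{a^2}\exp\bigl(a(1-a)L\bigr)
\]
and applying Hölder's inequality with exponents $1/a^2$ and $1/(1-a^2)$ shows that $M^{(a)}$ is uniformly integrable along a localizing sequence. Therefore $\mathbb E M^{(a)}_T=1$.

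Applying Hölder's inequality again yields
\[
1\le(\mathbb E M_T)^{a^2}\bigl(\mathbb E e^{\frac12L_T}\bigr)^{2a(1-a)} .
\]
Letting $a\uparrow1$ gives $\mathbb E M_T\ge1$, which completes Step 2.

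\emph{Step 3: $\widetilde B$ is a Brownian motion under $\mathbb Q$.} With $\mathrm d\mathbb Q=M_T\,\mathrm d\mathbb P$ a probability measure, I would verify Lévy's characterization. The process $\widetilde B$ is continuous, and its quadratic covariation is $\langle\widetilde B^i,\widetilde B^j\rangle_t=\delta_{ij}t$, because it differs from $B$ by a finite-variation term and quadratic variation is invariant under equivalent changes of measure.

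It remains to show that each $\widetilde B^i$ is a $\mathbb Q$-local martingale. By Bayes' rule this is equivalent to $M\widetilde B^i$ being a $\mathbb P$-local martingale. Itô's product rule gives
\[
\mathrm d(M\widetilde B^i)=M\,\mathrm dB^i-Mu^i\,\mathrm dt+\widetilde B^i\,\mathrm dM+\mathrm d\langle M,B^i\rangle .
\]
Since $\mathrm d\langle M,B^i\rangle=Mu^i\,\mathrm dt$, the drift terms cancel. What remains are stochastic integrals against $B$, which are local martingales.

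Lévy's theorem then identifies $\widetilde B$ as a standard $d_{\mathcal A}$-dimensional $\mathbb Q$-Brownian motion on $[0,T]$.
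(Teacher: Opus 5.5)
Your proof is correct. It is the standard textbook argument: Novikov implies Kazamaki, which implies that the stochastic exponential is a true martingale, followed by Bayes' rule and L\'evy's characterization, as in Revuz and Yor, Chapter~VIII.

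The paper gives no proof of its own. Theorem~\ref{thm:girsanov} is listed among the auxiliary results as a known fact. It is invoked, e.g.\ in Lemma~\ref{lem:exact-path-coverage} and the KL part of the proof of Theorem~\ref{thm:conditional-sfs-sampling-error}, only for integrands $u$ that are drift differences with a deterministic bound, $C_b+\sqrt{d_{\mathcal A}}B_{\rm net}$.

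Your write-up therefore buys a self-contained justification of the fully general statement. For the paper's purposes a shorter Step~2 would suffice. If $\|u_t\|_2\le C$, then $M_t^2=\exp(2L_t-2\langle L\rangle_t)\,e^{\langle L\rangle_t}$. Since $\exp(2L-2\langle L\rangle)$ is a positive local martingale, $\mathbb E M_t^2\le e^{C^2t}$. Hence $\mathbb E\int_0^T M_t^2\|u_t\|_2^2\,\mathrm dt<\infty$, and $M=1+\int_0^\cdot M_r u_r^\top\mathrm dB_r$ is a square-integrable martingale with no Kazamaki step.

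There is one small imprecision in your Step~2. The Cauchy--Schwarz bound at bounded stopping times yields only $\sup_{\tau\le T}\mathbb E\exp(\tfrac12 L_\tau)<\infty$, i.e.\ $L^1$-boundedness, not uniform integrability. This is harmless. For $a<1$ the family $\exp\bigl(\tfrac{a}{1+a}L_\tau\bigr)$ is then bounded in $L^{(1+a)/(2a)}$, hence uniformly integrable, and that is all your H\"older step needs. Alternatively, Novikov makes $L$ an $L^2$-bounded martingale on $[0,T]$, so Jensen gives $\exp(\tfrac12L_\tau)\le\mathbb E[\exp(\tfrac12L_T)\mid\mathcal F_\tau]$ and hence genuine uniform integrability.
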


\section{Offline Pretraining Algorithm}\label{sec:offline_alg}
The algorithm estimates the reference SF drift by regressing the scaled
Brownian-bridge terminal displacement on the bridge inputs.
\begin{algorithm}[H]
\caption{Offline Pretraining of the SF Drift}
\label{alg:offline-pretraining}
\begin{algorithmic}[1]
\STATE \textbf{Input:} offline dataset
$\mathcal D_{\mathrm{off}}
=\{(s_i,a_i)\}_{i=1}^{n}$,
drift network $b_\phi$, Euler-step count $T$, minibatch size $B$

\FOR{each training step}
    \STATE Sample a minibatch
    $\mathcal I\subset\{1,\ldots,n \}$,
    $|\mathcal I|=B$.
    \FOR{$i\in\mathcal I$}
        \STATE Sample
        $J_i\sim\mathrm{Unif}\{0,\ldots,T-1\}$ and
        $\varepsilon_i\sim\mathcal N(0,I_{d_{\mathcal A}})$,
        and set $t_i=J_i/T$.
        \STATE Construct
        \[
        y_i=t_i a_i+\sqrt{t_i(1-t_i)}\,\varepsilon_i,
        \qquad
        u_i=\frac{a_i-y_i}{1-t_i}.
        \]
    \ENDFOR
    \STATE Update $\phi$ using
    \[
    \frac{1}{B}\sum_{i\in\mathcal I}
    \left\|
    b_\phi(s_i,y_i,t_i)-u_i
    \right\|_2^2.
    \]
\ENDFOR
\STATE \textbf{Output:} pretrained drift
$b_{\phi_k}\leftarrow b_\phi$.
\end{algorithmic}
\end{algorithm}

\bibliographystyle{spbasic}  %
\bibliography{ref_bib}   %

\end{document}